\newtheorem{lemma}{Lemma}
\newtheorem{theorem}{Theorem}
\newtheorem{assumption}{Assumption}
\newtheorem{corollary}{Corollary}
\newtheorem{definition}{Definition}
\newtheorem{claim}{Claim}
\newcommand{\cE}{\mathcal{E}}
\newcommand{\bP}{\mathbb{P}}
\newcommand{\Tr}{\mathsf{Tr}}
\newcommand{\vm}{\mathbf{m}}
\newcommand{\vM}{\mathbf{M}}
\newcommand{\vA}{\mathbf{A}}
\newcommand{\va}{\mathbf{a}}
\newcommand{\vB}{\mathbf{B}}
\newcommand{\cC}{\mathcal{C}}
\newcommand{\vC}{\mathbf{C}}
\newcommand{\Cov}{\mathsf{Cov}}
\newcommand{\xopt}{\mathbf{x}^*}
\newcommand{\RTd}{R_{T,\delta}}
\newcommand{\iidsim}{\stackrel{i.i.d.}{\sim}}
\newcommand{\dotp}[2]{\left\langle #1, #2 \right\rangle}
\newcommand{\vG}{\mathbf{G}}
\newcommand{\clip}{\mathsf{clip}}
\newcommand{\UP}{\mathsf{UP}}
\newcommand{\bR}{\mathbb{R}}
\newcommand{\deff}{d_{\mathsf{eff}}}
\newcommand{\ind}[1]{\mathbbm{1}\left\{#1\right\}}
\newcommand{\sgn}{\mathsf{sgn}}
\newcommand{\vd}{\mathbf{d}}
\newcommand{\cB}{\mathcal{B}}
\newcommand{\KSD}[3]{\mathsf{KSD}_{#1}(#2||#3)}
\newcommand{\KSDsq}[3]{\mathsf{KSD}^2_{#1}(#2||#3)}
\newcommand{\cF}{\mathcal{F}}
\newcommand{\bE}{\mathbb{E}}
\newcommand{\vx}{\mathbf{x}}
\newcommand{\vg}{\mathbf{g}}
\newcommand{\vw}{\mathbf{w}}
\newcommand{\dd}{\mathsf{d}}
\newcommand{\hvx}{\hat{\vx}}
\newcommand{\bN}{\mathbb{N}}
\newcommand{\vv}{\mathbf{v}}
\newcommand{\tilvv}{\Tilde{\mathbf{v}}}
\newcommand{\tilvb}{\Tilde{\mathbf{b}}}
\newcommand{\vb}{\mathbf{b}}
\newcommand{\ve}{\mathbf{e}}
\newcommand{\vS}{\mathbf{S}}
\newcommand{\tilvz}{\Tilde{\vz}}
\newcommand{\tilvm}{\Tilde{\vm}}
\newcommand{\tilvS}{\Tilde{\vS}}
\newcommand{\cN}{\mathcal{N}}
\newcommand{\vI}{\mathbf{I}}
\newcommand{\vz}{\mathbf{z}}
\newcommand{\bvY}{\bar{\mathbf{Y}}}
\newcommand{\vY}{\mathbf{Y}}
\newcommand{\tilSigma}{\Tilde{\Sigma}}
\newcommand{\wass}[3]{\mathcal{W}_{#1}\left(#2,#3\right)}
\newcommand{\vn}{\mathbf{n}}
\newcommand{\tr}{\mathsf{Tr}}
\newcommand{\vc}{\mathbf{c}}
\newcommand{\cM}{\mathcal{M}}
\newcommand{\cS}{\mathcal{S}}
\newcommand{\vy}{\mathbf{y}}
\newcommand{\cH}{\mathcal{H}}
\newcommand{\cA}{\mathcal{A}}
\newcommand{\KL}[2]{\mathsf{KL}\left(#1\pmb{||}#2\right)}
\newcommand{\muhat}{\hat{\mu}}
\newcommand{\bref}[1]{\textcolor{blue}{\ref{#1}}}
\newcommand{\mubar}{\bar{\mu}}
 \newcommand{\ad}[1]{}
 \newcommand{\dn}[1]{}
 \newcommand{\as}[1]{}
\title{Near-Optimal Streaming Heavy-Tailed Statistical Estimation with Clipped SGD}
\author{%
  Aniket Das\thanks{Work done while at Google} \\
  Stanford University\\
  \texttt{aniketd@cs.stanford.edu} \\
  \And
  Dheeraj Nagaraj \\
  Google DeepMind \\
  \texttt{dheerajnagaraj@google.com} \\
  \And
  Soumyabrata Pal$^{*}$ \\
  Adobe Research \\
  \texttt{soumyabratap@adobe.com} \\
  \And
  Arun Sai Suggala \\
  Google DeepMind \\
  \texttt{arunss@google.com}
  \And
  Prateek Varshney$^{*}$ \\
  Stanford University \\
  \texttt{vprateek@stanford.edu}
}
\begin{document}

\maketitle

\begin{abstract}
We consider the problem of high-dimensional heavy-tailed statistical estimation in the streaming setting, which is much harder than the traditional batch setting due to memory constraints. We cast this problem as stochastic convex optimization with heavy tailed stochastic gradients, and prove that the widely used Clipped-SGD algorithm attains near-optimal sub-Gaussian statistical rates whenever the second moment of the stochastic gradient noise is finite. More precisely, with $T$ samples, we show that Clipped-SGD, for smooth and strongly convex objectives, achieves an error of $\sqrt{\frac{\Tr(\Sigma)+\sqrt{\Tr(\Sigma)\|\Sigma\|_2}\ln(\nicefrac{\ln(T)}{\delta})}{T}}$ with probability $1-\delta$, where $\Sigma$ is the covariance of the clipped gradient. Note that the fluctuations (depending on $\nicefrac{1}{\delta}$) are of lower order than the term $\Tr(\Sigma)$.
This improves upon the current best rate of
$\sqrt{\frac{\Tr(\Sigma)\ln(\nicefrac{1}{\delta})}{T}}$ for Clipped-SGD, known \emph{only} for smooth and strongly convex objectives. Our results also extend to smooth convex and lipschitz convex objectives. Key to our result is a novel iterative refinement strategy for martingale concentration, improving upon the PAC-Bayes approach of \citet{catoni2018dimension}. 

\end{abstract}

\section{Introduction}
A fundamental problem in machine learning and statistics is the estimation of an unknown parameter of a probability distribution, given samples from that distribution. This can be expressed as the minimization of the expected loss: $\min_{\vx}F(\vx)\coloneqq\bE_{\xi \sim P}[f(\vx; \xi)],$ where $\vx$ represents the parameter to be estimated, $P$ is the underlying probability distribution which can only be accessed through samples, and $f(\vx; \xi)$ is a function which quantifies the loss incurred at a point $\xi$ by parameter $\vx$. In this paper, we focus on the setting where $P$ is a heavy-tailed distribution for which the extreme values are more likely than in distributions like the Gaussian, $f(\cdot; \cdot)$ is convex and the learner only has access to $O(d)$ memory. 

The heavy-tailed statistical estimation problem has received increased attention of late because of the prevalence of heavy-tailed distributions in many statistical applications dealing with real world data~\citep{fan2021shrinkage, srinivasan2020efficient, zhou2018new, gurbuzbalaban2021heavy}. The presence of such heavy-tailed distributions can significantly degrade the performance of statistical estimation and testing procedures designed under Gaussian (or sub-Gaussian) tail assumptions~\cite{huber1964robust,hampel1986robust,tukey1975mathematics,hampel1986robust}. This has spurred recent research efforts towards developing  estimators specifically tailored for heavy-tailed settings (e.g., \cite{cherapanamjeri2020algorithms, prasad2018robust, diakonikolas2023algorithmic,lugosi2019mean}; see Section~\ref{sec:prior_works} for a more detailed literature review). Despite substantial progress on this problem in recent years, much of the existing work has concentrated on batch learning, where the entire dataset is available upfront, and the learner can revisit data points multiple times, without memory constraints. However, the streaming setting, where data arrives sequentially and must be processed with limited memory, is increasingly pertinent in the era of large-scale models. Consequently, in this work, we focus on understanding estimators for statistical estimation under heavy-tailed distributions, in the streaming setting.

A popular approach to study heavy-tailed streaming statistical estimation casts it as a stochastic convex optimization (SCO) problem with heavy-tailed gradients~\citep{diakonikolas2019sever, prasad2018robust, tsai2022heavy, sadiev2023high} - with Clipped-SGD as the favored solution due to its simplicity~\citep{pascanu2013difficulty}. Indeed, clipping has become a standard component in the training of modern deep neural networks and thus, the properties of Clipped-SGD have been studied widely in the literature \cite{abadi2016deep,zhang2020adaptive,mai2021stability,sadiev2023high,tsai2022heavy} in various contexts. Specifically, several works have shown that Clipped-SGD has sub-Exponential or sub-Gaussian tails despite the presence of heavy tailed noise in the gradient  \cite{prasad2020robust,gorbunov2020stochastic,tsai2022heavy,srinivasan2020efficient}. Despite this progress, the best known rates for Clipped-SGD with smooth and strongly convex losses, under a bounded $2^{\text{nd}}$ moment assumption on gradient distribution, are of the order  $\sqrt{\frac{\Tr(\Sigma)\ln(\nicefrac{1}{\delta})}{T}}$, where $\delta$ is the failure probability~\citep{tsai2022heavy}. Note that this is still far from the optimal sub-Gaussian rates of  $\sqrt{\frac{\Tr(\Sigma) + \|\Sigma\|\ln(\nicefrac{1}{\delta})}{T}}$. In this work, we bridge this gap with a sharper analysis of Clipped-SGD for SCO problems, achieving nearly sub-Gaussian rates (see Section~\ref{subsec:subg_estimation}). Our approach leverages a novel technique obtained by bootstrapping the Donsker-Varadhan Variational Principle to Freedman's inequality, yielding tighter concentration inequalities for vector martingales compared to those in~\cite{catoni2018dimension}. This enables us to derive more refined rates for a variety of settings than a direct application of Freedman's inequality as in~\cite{tsai2022heavy}.

\subsection{Sub-Gaussian Error Guarantees for Statistical Estimation}
\label{subsec:subg_estimation}

\paragraph{Mean Estimation } We motivate our style of results with the case of mean estimation. The Central Limit Theorem (CLT) posits that that the empirical mean of $T$ independent and identically distributed (i.i.d) random variables with a finite covariance, behaves roughly like the empirical mean of Gaussian random variables with the same covariance, as $T \to \infty$. That is, the empirical mean $\hat{\mu}$, the true mean $\mu$ and the covariance $\Sigma$ are such that  $\lim_{T \to \infty}\mathbb{P}\left(\sqrt{T}\|\hat{\mu}-\mu\| > \sqrt{\tr(\Sigma) + \|\Sigma\|_2 \log(\tfrac{1}{\delta})}\right) \leq \delta$. However, these asymptotic rates need not hold with a practical number of samples. Therefore, recent works on heavy-tailed high dimensional mean estimation consider algorithms and non-asymptotic guarantees which move beyond the empirical mean  (see \cite{lugosi2019mean, cherapanamjeri2020algorithms, cherapanamjeri2019fast, hopkins2020robust, hopkins2020mean, depersin2022robust}). Estimators such as the clipped mean estimator~\citep{catoni2018dimension,wang2023catoni}, trimmed mean estimator~\citep{prasad2020robust}, and the geometric median-of-means estimator \cite{minsker2015geometric,hsu2016loss} achieve an error of at-most $\sqrt{\nicefrac{\tr(\Sigma) \log(\tfrac{1}{\delta})}{T}}$  with probability $1-\delta$ with a finite covariance assumption. 
Recent ground breaking works~\cite{lugosi2019sub,hopkins2020mean,catoni2018dimension,lugosi2019mean} further improve upon these results to construct estimators which can achieve the CLT convergence rates of $C\sqrt{\nicefrac{\tr(\Sigma) + \|\Sigma\|_2 \log(\tfrac{1}{\delta})}{T}}$ for every $T$ and $\delta$. Some of these estimators work under just the assumption that the second moment is bounded~\cite{lugosi2019sub,hopkins2020mean,cherapanamjeri2019fast} and some even provide a nearly linear time algorithm~\citep{depersin2022robust}. 

\textbf{General Statistical Estimation } In this work, we are interested in  the general statistical estimation problem.
Among the various approaches, framing this problem as SCO with heavy-tailed gradients has gained traction recently (see \cite{tsai2022heavy} and references there in). While one obvious candidate is to use SGD with state-of-the-art \emph{optimal} mean estimators for robust gradient estimation, such methods can face significant challenges.
First, most optimal mean estimators aren't designed for the streaming data setting with batch-size being $1$. Second, these estimators can be complex, frequently relying on semi-definite programming or other demanding techniques. Third, and perhaps most importantly, they don't typically provide guarantees on the bias of their estimates. This lack of bias control is problematic because SGD-style algorithms, even when equipped with accurate gradient estimates, can perform poorly if those estimates are systematically biased (See \cite[Theorem 4]{ajalloeian2020convergence}, where bias does not cancel across iterations). Given these challenges, the clipped mean estimator of~\cite{catoni2018dimension} has emerged as a popular choice for gradient estimation in SCO, due mainly to is simplicity.
Several recent works analyze the performance of SGD with clipped mean estimator for the gradients (i.e, Clipped SGD). However, as previously 
 mentioned, the best known analysis for clipped SGD achieves a sub-optimal rate of $\sqrt{\Tr(\Sigma)\ln(\nicefrac{1}{\delta})/T}$, under bounded $2^{\text{nd}}$ moment assumption. 
 In this work, we improve upon these rates and show that with $T$ samples, clipped-SGD obtains
a sharper rate of $\frac{\Tr(\Sigma)+\sqrt{\Tr(\Sigma)\|\Sigma\|}\ln(\frac{\ln(T)}{\delta})}{T}$ with probability $1-\delta$, which is closer to the truly sub-Gaussian rates.

\subsection{Related Work}
\label{sec:prior_works}
\begin{table*}[t]
\caption{Sample complexity bounds (for converging to an $\epsilon$ approximate solution) of various algorithms for SCO under heavy tailed stochastic gradients. Results are instantiated for smooth and strongly convex losses, and for the case where the gradient noise has bounded covariance equal to the Identity matrix. $D_1$ is the distance of the initial iterate from the optimal solution. For readability, we ignore the dependence of rates on the condition number.  Observe all prior works have $d\log{\delta^{-1}}$ dependence in the sample complexity.}
\label{tab:comparison}
\centering
\resizebox{1\linewidth}{!}{\begin{tabular}{||c c c c||} 
 \hline
 Method & Sample Complexity & Batchsize & Domain \\ [0.5ex] 
 \hline\hline
 Clipped SGD \cite{gorbunov2020stochastic} & $\frac{d}{\epsilon}\Big(\log \frac{D_1^2}{\epsilon}\Big(\log \delta^{-1}+\log\log \frac{D_1^2}{\epsilon} \Big)\Big)$ & $O\Big(\frac{d}{\epsilon}\log\Big(\frac{D_1^2}{\epsilon}\Big)\log\Big(\frac{1}{\delta}\log \frac{D_1^2}{\epsilon}\Big)\Big) $ & Unbounded \\
 \hline
 R-Clipped SGD \cite{gorbunov2020stochastic} & $\Big(\frac{d}{\epsilon}+\log\frac{D_1^2}{\epsilon}\Big)\Big(\log \delta^{-1}+\log\log \frac{D_1^2}{\epsilon} \Big)$  & $O\Big(\frac{d}{\epsilon}\log\Big(\frac{1}{\delta}\log \frac{D_1^2}{\epsilon}\Big)\Big) $ & Unbounded \\
 \hline
 R-Clipped SSTM \cite{gorbunov2020stochastic} & $\Big(\frac{d}{\epsilon}+\log\frac{D_1^2}{\epsilon}\Big)\Big(\log \delta^{-1}+\log\log \frac{D_1^2}{\epsilon} \Big)$ & $O\Big(\frac{d}{\epsilon}\log\Big(\frac{1}{\delta}\log \frac{D_1^2}{\epsilon}\Big)\Big) $ & Unbounded \\
 \hline
 RobustGD \cite{prasad2020robust} & $O\Big(\frac{d\Phi}{\epsilon}\log \frac{\Phi}{\delta}\Big)$ with $\Phi=\log\frac{D_1^2}{\epsilon}$ & $O\Big(\frac{d}{\epsilon}\log \frac{\Phi}{\delta}\Big)$ & Unbounded \\
 \hline
 proxBoost \cite{davis2021low} & $\Big(\frac{d}{\epsilon}+\log\frac{D_1^2}{\epsilon}\Big)\log\delta^{-1}$ & $O\Big(\frac{d}{\epsilon}\log\frac{1}{\delta}\Big)$ & Unbounded \\
 \hline
 restarted-RSMD \cite{nazin2019algorithms} & $\Big(\frac{d}{\epsilon}+\log\frac{D_1^2}{\epsilon}\Big)\Big(\log \delta^{-1}+\log\log \frac{D_1^2}{\epsilon} \Big)$ & $O\Big(\frac{d}{\epsilon}\Big(\log \delta^{-1}+\log\log \frac{D_1^2}{\epsilon} \Big)\Big)$ & Bounded \\
 \hline
 Clipped SGD \cite{tsai2022heavy} & $\Big(\frac{d}{\epsilon}+\frac{D_1}{\sqrt{\epsilon}}\Big)\log\delta^{-1}$ & $1$ & Unbounded \\ 
 \hline
   Clipped SGD (\textbf{Ours}) & $\frac{d + \sqrt{d}\log\delta^{-1}}{\epsilon} + \frac{D_1\log^2(\delta^{-1}\log{T})}{\sqrt{\epsilon}}$  & $\mathbf{1}$ & \textbf{Unbounded} \\
 \hline

\end{tabular}}
\end{table*}

\textbf{Clipped SGD } Clipped SGD and it's variants have been studied under a variety of settings including convex, strongly-convex, non-convex losses, with various assumptions on the moments of stochastic gradients. The estimators of \cite{gorbunov2020stochastic, prasad2020robust, davis2021low, nazin2019algorithms} work under the assumption of bounded $2^{nd}$ moments, but require $O(1/\epsilon)$ batch size, to converge to an $\epsilon$-approximate solution. Consequently, they are not suitable for streaming setting. The recent work of \cite{tsai2022heavy}, which is closest to our work, addresses this issue by analysing Clipped-SGD for batch size $1$ for smooth, strongly convex losses. But they achieve a sub-optimal rate of  $\sqrt{\Tr(\Sigma)\ln(\nicefrac{1}{\delta})/T}$. These rates are improved in our work (see Table~\ref{tab:comparison} for a detailed comparison).  Additionally, our work provides convergence rates for convex objectives that are not strongly convex.  Recent works
\cite{sadiev2023high,puchkin2024breaking,nguyen2023improved,liu2023stochastic,cutkosky2021high} have studied Clipped-SGD with the assumption that the stochastic gradient has a finite $p$-th moment for some $p \in (1,2]$. They derive fine-grained near optimal results in terms of dependence of $T$ and $p$ (but their dependence on $\log\delta^{-1}$ is sub-optimal). In contrast, our work specifically the case considers $p=2$ with a focus on improving the sub-Gaussian dependence in the high probability bounds in these works from $\Tr(\Sigma)\log(1/\delta)$ and approaching the truly sub-Gaussian rates for estimation~\ref{subsec:subg_estimation}. 


\textbf{Heavy-tailed Estimation} Heavy-tailed estimation has a rich history in statistics and we only review some of the recent advances. Several recent works have studied the problem of heavy-tailed mean estimation, and have derived estimators that achieve sub-Gaussian rates under the bounded $2^{\text{nd}}$ moment assumption~\cite{lugosi2019mean, cherapanamjeri2020algorithms, cherapanamjeri2019fast, hopkins2020robust, hopkins2020mean, depersin2022robust, prasad2020robust}. Among these, the works of \cite{depersin2022robust, lee2022optimal} are particularly relevant to our work. The algorithm of \cite{depersin2022robust} runs in linear time while requiring $O(d\log{\delta^{-1}})$ memory. But it is not immediately clear how to use their estimator in the framework of SGD. \cite{lee2022optimal} study the trimmed mean estimator (an estimator that is closely related to clipped mean estimator, where outliers are removed instead of being clipped) and show that when $T = \omega(\log^3{\delta^{-1}}), d = \omega(\log^2(\delta^{-1}))$, the estimator achieves the optimal rates. We not that our analysis of clipped SGD, when instantiated for mean estimation, leads to similar rates. But unlike \cite{lee2022optimal} which is primarily focused on mean estimation, we focus on the more general SCO problem. 

Heavy-tailed linear regression has been widely studied, with classical estimators based on Huber regression~\cite{huber1964robust, sun2020adaptive,livariance}  known to provide optimal rates when the response variables are heavy-tailed, but the covariates are light-tailed. Recently, there has been a surge of interest in developing estimators when both covariates and response variables are heavy tailed~\citep{bakshi2021robust, prasad2018robust, diakonikolas2019sever, pensia2020robust}. However, most of these works are in the batch setting. Another line of work has considered streaming algorithms in the Huber-contamination model, which is a much harder contamination model than heavy-tails~\cite{diakonikolas2022streaming}. However, these algorithms when adapted to heavy-tailed setting, do not provide optimal rates.
\subsection{Contributions}
\textbf{Iteratively Refined Martingale Concentration via PAC Bayes} Our key technical result obtains fine-grained concentration guarantees for vector-valued martingales by using the Donsker-Varadhan Variational Principle to iteratively refine baseline concentration inequalities. This allows us to sharpen the PAC Bayes bounds of \citet{catoni2018dimension} (and its martingale based extensions like \cite{chugg2023time}), which were used to analyze the clipped mean estimator. We believe these iterative refinement arguments could be of independent interest for developing fine-grained concentration bounds.

\textbf{Sharp Analysis of Clipped SGD} Leveraging these fine-grained concentration results, We perform a fine-grained analysis of clipped SGD for heavy-tailed SCO problem obtain \emph{nearly} subgaussian performance guarantees in the streaming setting with a batchsize of $1$ and $O(d)$ space complexity. In particular, we demonstrate that the sub-optimality gap after $T$ steps scales as $\Tr(\Sigma) + \sqrt{\|\Sigma\|_2\Tr(\Sigma)}\log(\nicefrac{1}{\delta})$, improving upon the best known scaling of $\Tr(\Sigma)\log(\nicefrac{1}{\delta})$ obtained by prior works \cite{tsai2022heavy} only for smooth strongly convex problems. To the best of our knowledge, we derive the first such guarantees for smooth convex and lipschitz convex problems in the streaming setting. 

\textbf{Streaming Heavy Tailed Statistical Estimation} We use the above results to develop streaming estimators for various heavy-tailed statistical estimation problems including heavy-tailed mean estimation as well as linear, logistic and Least Absolute Deviation (LAD) regression with heavy tailed covariates, all of which exhibit nearly subgaussian performance. Our mean estimation results improve upon the previous best known guarantees for trimmed mean based estimators \cite{catoni2018dimension, tsai2022heavy, lee2022optimal} (either in performance or in generality) For heavy-tailed linear regression under the assumption of bounded $4^{\mathsf{th}}$ moments for the covariates and bounded $2^{\mathsf{nd}}$ moments for the response, our rates significantly improve upon that of the previous best known streaming estimator \citep{tsai2022heavy}. To the best of our knowledge, we develop the first known streaming estimators for heavy-tailed logistic regression and LAD regression which attain nearly subgaussian rates
\section{Notation and Organization}
We work with Euclidean spaces $\bR^d$ equipped with the standard inner product $\dotp{\cdot}{\cdot}$ and the induced $\ell_2$ norm $\| \cdot \|$. For any matrix $A \in \bR^{m \times n}$, we use $\|\vA\|_2$ to denote its Euclidean operator norm $\|\vA\| = \sup_{\vx \neq 0} \nicefrac{\|\vA \vx\|}{\|\vx\|}$. For $A \in \bR^{d \times d}$, we denote its trace as $\Tr(A)$. For any random vector $\vx$, we denote its covariance matrix as $\Cov[\vx]$. We use $\lesssim, \gtrsim$ and $\asymp$ to denote $\leq, \geq$ and $=$ respectively, upto universal multiplicative constants. We use $\nabla f(\vx)$ to denote the gradient of a differentiable function  For any convex function $f$, we use $\partial f(\vx)$ to denote an arbitrary subgradient of $f$ at $\vx$. 

\section{Background and Problem Formulation}
\label{sec:background}
Our work studies the Stochastic Convex Optimization (SCO) problem, described as follows: Let $\cC$ denote a closed convex subset of $\bR^d$ and let $F: \cC \to \bR$ be a convex function. We aim to solve:
\begin{align}
\tag{SCO}
    \min_{\vx \in \cC} F(\vx),
\label{eqn:SCO}
\end{align}
assuming access to a convex projection oracle $\Pi_\cC$ and a \emph{ stochastic gradient oracle}, which we define as follows: Let $P$ denote a probability measure supported on an arbitrary domain $\Xi$ from which we can draw samples. A stochastic gradient oracle for $F$ is a function $g : \cC \times \cC$, which, given a point $\vx \in \cC$ and a sample $\xi \sim P$ returns an unbiased estimate $g(\vx;\xi)$ of $\nabla F(\vx)$ i.e., $\bE_{\xi \sim P}\left[g(\vx;\xi)\right] = \nabla F(\vx)$. If $F$ is nondifferentiable, $\bE_{\xi \sim P}\left[g(\vx;\xi)\right] = \partial F(\vx)$. Note that we do not assume direct access to $\nabla F(\vx)$, which may be expensive or intractable to compute. Our objective is to (approximately) solve \bref{eqn:SCO} subject to a constraint on the number of samples we can draw from $P$. 

This is an alternative formulation of the statistical estimation problem by recognizing $P$ as the data distribution, $\cC$ as the parameter space and defining the population risk $F(\vx) := \bE_{\xi \sim P}[f(\vx; \xi)]$, where $f$ denotes the sample-level loss function. The associated stochastic gradient oracle is $g(\vx;\xi) := \nabla f(\vx;\xi), \ \xi \sim P$, which is usually easy to compute.  As we shall discuss in Section \ref{sec:applications}, several statistical estimation problems such as mean estimation, linear regression, logistic regression and least absolute deviation regression naturally fit into the \bref{eqn:SCO} framework. 

We use $\vn(\vx; \xi) = \vg(\vx; \xi) - \nabla F(\vx)$ to denote the \emph{stochastic gradient noise} and assume it has \emph{finite second moment}, i.e., $\Sigma(\vx) = \bE_{\xi \sim P}[\vn(\vx;\xi) \vn(\vx;\xi)^T]$ exists for every $\vx \in \cC$. Our results make use of either of the following assumptions on $\Sigma(\vx)$. 
\begin{assumption}[Bounded Second Moment]
The exists a positive semidefinite matrix $\Sigma$ such that:
\begin{align}
\tag{Bdd. $2^{\mathsf{nd}}$ Moment}
\Sigma(\vx) \preceq \Sigma \quad \forall \ \vx \in \cC   
\label{as:second_moment}
\end{align}
\end{assumption}
Similar assumption has been made by several prior works  \cite{gorbunov2020stochastic, nazin2019algorithms, davis2021low, prasad2020robust}. We also consider the following generalized assumption, which is as a refinement of the one made in \citet{tsai2022heavy}.
\begin{assumption}[Second Moment with Quadratic Growth]
There exist constants $\alpha, \beta \geq 0$ and $1 \leq \deff d$ such that the following holds for every $\vx \in \cC$
\begin{align}
\tag{QG $2^{\mathsf{nd}}$ Moment}
    \|\Sigma(\vx)\|_2 \leq \alpha \| \vx - \xopt \|^2 + \beta ;\qquad \Tr(\Sigma(\vx)) \leq \deff \left(\alpha  \| \vx - \xopt \|^2 + \beta \right)
\label{as:second_moment_generalized}
\end{align}
where $\xopt$ denotes any arbitrary minimizer of $F$.
\end{assumption}
Since we consider streaming statistical estimators that are robust to heavy tailed data, we only assume the existence of the second moment of the stochastic gradient noise and \emph{allow its higher moments to be infinite}. That is, our results hold even when $\bE_{\xi \sim P}[|\dotp{\vn(\vx;\xi)}{\vv}|^{2 + \epsilon}] = \infty$ for every $\epsilon > 0, \vv \in \bR^d$

Our work analyzes \bref{eqn:SCO} under either of the following structural assumptions assumptions on $F$
\begin{assumption}[Convexity]
$F : \bR^d \to \bR$ is a convex function if the following holds for any $t \in [0,1]$\small
\begin{align}
\tag{Convexity}
    F(t\vx + (1-t)\vy) \leq t F(\vx) + (1-t)F(\vy) \quad \, \forall \ \vx, \vy \in \bR^d
\label{as:convexity}
\end{align}\normalsize
\end{assumption}
\begin{assumption}[$\mu$-Strong Convexity]
$F : \bR^d \to \bR$ is a $\mu$-strongly convex function for $\mu \geq 0$ if the following holds for every $t \in [0,1]$
\small
\begin{align}
\tag{$\mu$-Strong Convexity}
    F(t\vx + (1-t)\vy) \leq t F(\vx) + (1-t)F(\vy) - t(1-t)\cdot\tfrac{\mu}{2} \|\vx - \vy\|^2 \quad \, \forall \ \vx, \vy \in \bR^d
\label{as:strong-convexity}
\end{align}\normalsize
\end{assumption}
In addition, we also consider either of the two regularity assumptions on $F$
\begin{assumption}[$L$-smoothness]
$F : \bR^d \to \bR$ is $L$-smooth for some $L \geq 0$ if $F$ is continuously differentiable and satisfies the following:\small
\begin{align}
\tag{$L$-smoothness}
    \| \nabla F(\vx) - \nabla F(\vy) \| \leq L \|\vx - \vy\| \quad \forall \ \vx, \vy \in \bR^d
\label{as:smoothness}
\end{align}\normalsize
\end{assumption}
\begin{assumption}[$G$-Lipschitzness]
$F : \bR^d \to \bR$ is $G$-Lipschitz for some $G \geq 0$, i.e., $F$ is continuous and satisfies the following:\small
\begin{align}
\tag{$G$-Lipschitzness}
    \|  F(\vx) -  F(\vy) \| \leq G \|\vx - \vy\| \quad \forall \ \vx, \vy \in \bR^d
\label{as:lipschitzness}
\end{align}\normalsize
\end{assumption}
\section{Results}
\label{sec:results}
Under the \bref{as:second_moment} and \bref{as:second_moment_generalized} assumptions, streaming algorithms for \bref{eqn:SCO} such as Stochastic Gradient Descent (SGD) typically convergence bounds guarantees that hold in expectation \cite{zhang2020adaptive, hazan2014beyond, gower2019sgd}. However, high probability guarantees require strong assumptions on the tail behavior of the stochastic gradients (e.g. boundedness or subgaussianity) \cite{harvey2019tight,rakhlin2012making,jain2021making}. Our work analyzes \bref{eqn:SCO} under heavy tailed stochastic gradients, which typically exhibit large fluctuations from their expected value due to its higher order moments being potentially infinite. Clipped SGD mitigates the large fluctuations typically observed in the heavy tailed stochastic gradient $g(\vx; \xi)$ by thresholding its norm as follows. The full algorithm is described in Algorithm \ref{alg:SGDcl}. 
\small
\begin{align*}
    \clip_\Gamma(g(\vx;\xi)) := \frac{g(\vx;\xi)}{\|g(\vx;\xi)\|} \cdot \min\{\Gamma, \|g(\vx;\xi)\|\}
\end{align*}
\normalsize
\begin{algorithm}[ht] 
\caption{Clipped Stochastic Gradient Descent} \label{alg:SGDcl}
 \textbf{Input}:   Initialization $\vx_1$, Horizon $T$, Step Sizes $(\eta_t)_{t \in [T]}$, Clipping Level $\Gamma$
 \begin{algorithmic}[1] 
 \FOR{$t \in [T]$}
 \STATE $\vg_t \leftarrow g(\vx_t; \xi_t), \quad \xi_t \sim P$
 \STATE $\vx_{t+1} \leftarrow \Pi_{\mathcal{C}}(\vx_t - \eta_t \cdot \clip_\Gamma(\vg_t))$
 \ENDFOR
 \STATE \textbf{Last Iterate :} Output $\vx_{T+1}$
 \STATE \textbf{Average Iterate :} Output $\hvx_T = \tfrac{1}{T} \sum_{t=1}^{T} \vx_t$
\end{algorithmic}
\end{algorithm}
We now present our performance guarantees for clipped SGD for streaming heavy tailed \bref{eqn:SCO}, wherein Algorithm \ref{alg:SGDcl} is subject to an $O(d)$ memory constraint and can access only one stochastic gradient sample per iteration. For the remainder, of this section, we use $\xopt \in \cC$ to denote an arbitrary minimizer of $F$, which is assumed to always exist, and guaranteed to be unique if $F$ satisfies~\ref{as:strong-convexity}. We use $\vx_1$ to denote the initialization of Algorithm \ref{alg:SGDcl} and let $D_1 = \|\vx - \xopt\|$.
\subsection{Smooth Strongly Convex Objectives}
Theorems~\ref{thm:SGDcl-smooth-str-cvx} and~\ref{thm:SGDcl-smooth-str-cvx-gen}, proved in, Appendix \ref{app-sec:smooth-str-cvx-analysis} and \ref{app-sec:general-smooth-str-cvx-analysis} respectively, derive high probability convergence bounds for smooth and strongly convex objectives with second moment assumption. 
\begin{theorem}[Smooth Strongly Convex Objectives]
\label{thm:SGDcl-smooth-str-cvx}
Let the \bref{as:smoothness}, \bref{as:strong-convexity} and \bref{as:second_moment} assumptions be satisfied. Then, for any $\delta \in (0, \nicefrac{1}{2})$, the last iterate of Algorithm \ref{alg:SGDcl} run for $T \gtrsim \ln(\ln(d))$ iterations with stepsize $\eta_t = \tfrac{4}{\mu (t + \gamma)}$ and clipping level  \small $\Gamma = \tfrac{\mu}{\ln(\nicefrac{\ln(T)}{\delta})} \sqrt{(\gamma + 1)^2 D^2_1 + \tfrac{(T + \gamma)}{\mu^2} (\Tr(\Sigma) + \sqrt{\Tr(\Sigma)\|\Sigma\|_2} \ln(\nicefrac{\ln(T)}{\delta}))}$\normalsize satisfies the following with probability at least $1 - \delta$
\small
\begin{equation}
\label{eqn:sgd-cl-smooth-strong-cvx-rate-base}
    \|\vx_{T+1} - \xopt \| \lesssim \frac{\gamma D_1}{T + \gamma} + \frac{1}{\mu} \sqrt{\frac{\Tr(\Sigma) + \sqrt{\Tr(\Sigma)\|\Sigma\|_2} \ln(\nicefrac{\ln(T)}{\delta})}{T + \gamma}}
\end{equation}
\normalsize
where $\gamma \asymp \max\{ \tfrac{\|\Sigma\|_2\kappa^2 \ln(\nicefrac{\ln(T)}{\delta})^2}{\Tr(\Sigma)}, \kappa^{\nicefrac{3}{2}} \ln(\nicefrac{\ln(T)}{\delta}), \kappa \ln(\nicefrac{\ln(T)}{\delta})^2 \}$ 
\end{theorem}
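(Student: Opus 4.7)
The plan is to combine a standard recursive contraction analysis of Clipped SGD with the paper's new iteratively refined martingale concentration inequality.

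First, I would establish the one-step descent inequality. Let $\hat{\vg}_t = \clip_\Gamma(\vg_t)$, let $\vb_t = \bE[\hat{\vg}_t \mid \cF_{t-1}] - \nabla F(\vx_t)$ be the clipping bias, and let $\vxi_t = \hat{\vg}_t - \bE[\hat{\vg}_t \mid \cF_{t-1}]$ be the centered clipped noise. Nonexpansiveness of the projection $\Pi_\cC$ gives
\[
\|\vx_{t+1}-\xopt\|^2 \leq \|\vx_t-\xopt\|^2 - 2\eta_t \langle \nabla F(\vx_t)+\vb_t+\vxi_t,\vx_t-\xopt\rangle + \eta_t^2 \Gamma^2,
\]
since $\|\hat{\vg}_t\|\leq\Gamma$. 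Using \bref{as:strong-convexity} to lower bound $\langle \nabla F(\vx_t),\vx_t-\xopt\rangle\geq\mu\|\vx_t-\xopt\|^2$ and absorbing the bias via Young's inequality yields a recursion of the form $U_{t+1} \leq (1-\eta_t\mu)U_t + \eta_t^2\Gamma^2 + 2\eta_t\|\vb_t\|\|\vx_t-\xopt\| - 2\eta_t\langle\vxi_t,\vx_t-\xopt\rangle$, where $U_t=\|\vx_t-\xopt\|^2$.

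Next, I would dispense with the deterministic pieces. A standard clipping-bias calculation using \bref{as:second_moment} and \bref{as:smoothness} gives $\|\vb_t\|\lesssim \Tr(\Sigma)/\Gamma$ whenever $\Gamma\gtrsim\|\nabla F(\vx_t)\|$, which the prescribed choice of $\Gamma$ is engineered to enforce on the tail event we track (this is why $\Gamma$ scales with $\sqrt{T/\mu^2}$ and is inflated by $D_1$). Unrolling multiplicatively with the stepsize $\eta_t=4/(\mu(t+\gamma))$ gives weights $w_t = \prod_{s=t+1}^T (1-\eta_s\mu)\asymp ((t+\gamma)/(T+\gamma))^{4}$, a deterministic budget $\sum_t w_t \eta_t^2 \Gamma^2 \lesssim \Gamma^2/(\mu^2(T+\gamma))$, and a leftover martingale $M_T = -2\sum_t w_t\eta_t\langle \vxi_t,\vx_t-\xopt\rangle$ that is the sole remaining source of randomness.

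The central obstacle, and the place where the paper's novel concentration is needed, is controlling $M_T$ with the right constants. A direct Freedman application gives conditional variance proxy $V_T = \sum_t w_t^2\eta_t^2 \bE[\langle\vxi_t,\vx_t-\xopt\rangle^2 \mid \cF_{t-1}]$, which one can bound either by $\|\Sigma\|_2 \sum_t w_t^2\eta_t^2\|\vx_t-\xopt\|^2$ (too weak in the leading in-expectation term) or by $\Tr(\Sigma) \sum_t w_t^2\eta_t^2 \|\vx_t-\xopt\|^2$ (too weak in the fluctuation term, giving the suboptimal $\Tr(\Sigma)\ln(1/\delta)$ of \citet{tsai2022heavy}). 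I would instead invoke the paper's iteratively refined Donsker–Varadhan bound: the leading-order term scales with $\Tr(\Sigma)$ while the $\ln(1/\delta)$-dependent fluctuation scales only with $\sqrt{\Tr(\Sigma)\|\Sigma\|_2}$, which after substitution into $M_T$ reproduces the fluctuation term of the theorem.

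Finally, I would close the argument by induction on an envelope $U_t \leq R_t$ of the claimed functional form in \eqref{eqn:sgd-cl-smooth-strong-cvx-rate-base}. The induction is necessary because the conditional variance feeding the concentration inequality itself depends on $U_t$. Assuming the envelope holds for all $s\leq t$, substituting into the refined tail bound gives $U_{t+1}\leq R_{t+1}$; union bounding over a logarithmic grid of checkpoints supplies the $\ln(\ln T)$ factor and accounts for the requirement $T\gtrsim\ln(\ln d)$. The prescribed form of $\gamma$ arises from ensuring simultaneously (i) that the deterministic $\Gamma^2/\mu^2(T+\gamma)$ term does not dominate the claimed rate, (ii) that the clipping bias stays subdominant, and (iii) that the induction hypothesis strictly tightens from one checkpoint to the next. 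The most delicate step will be carrying out the PAC-Bayes refinement so that the $\Tr(\Sigma)$ contribution is not multiplicatively inflated by $\ln(1/\delta)$; everything else is a bookkeeping exercise around the standard Clipped-SGD contraction.
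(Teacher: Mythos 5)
Your high-level skeleton (one-step contraction, bias/variance split, Freedman plus the paper's refined concentration, induction on an envelope) matches the paper, but two of your concrete bounding steps are too lossy to yield the stated rate, and you place the new concentration inequality in the wrong spot. First, replacing $\eta_t^2\|\clip_\Gamma(\vg_t)\|^2$ by the deterministic budget $\eta_t^2\Gamma^2$ cannot work: with the prescribed $\Gamma = \mu\sqrt{\RTd}/\ln(\nicefrac{\ln T}{\delta})$, your accumulated term $\sum_t w_t\eta_t^2\Gamma^2 \asymp \Gamma^2/(\mu^2(T+\gamma)) \asymp \RTd/((T+\gamma)\ln(\nicefrac{\ln T}{\delta})^2)$ contains a piece of order $(\Tr(\Sigma)+\sqrt{\Tr(\Sigma)\|\Sigma\|_2}\ln(\nicefrac{\ln T}{\delta}))/(\mu^2\ln(\nicefrac{\ln T}{\delta})^2)$ that does not decay in $T$, so no admissible $\gamma$ makes it subdominant to the claimed $O(1/(T+\gamma))$ statistical error. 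The paper instead keeps the centered noise $\vv_t$ and bias $\vb_t$ separate and controls the weighted quadratic variation $\sum_s(\cdot)\|\vv_s\|^2$ via the iteratively refined inequality (Corollary~\ref{cor:pac-bayes-quad-variation}, used in Lemma~\ref{lem:sgdcl-str-cvx-variance}), whose per-step scale is $\Tr(\Sigma)$ rather than $\Gamma^2$; that quadratic-variation term is where the new concentration is genuinely needed. The linear cross term $\sum_s\langle\vv_s,\vx_s-\xopt\rangle$ is handled by plain scalar Freedman (Lemma~\ref{lem:scalar-freedman}) with variance proxy $\|\Sigma\|_2\|\vx_s-\xopt\|^2$ — contrary to your remark, this proxy is not "too weak": the leading $\Tr(\Sigma)$ term of the rate never originates from the cross term, so applying the refined bound to $M_T$ while crudely bounding the quadratic term has the logic backwards.

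Second, your bias estimate $\|\vb_t\|\lesssim\Tr(\Sigma)/\Gamma$ is the crude Catoni--Giulini bound and by itself reinstates the suboptimal rate you are trying to beat: with the prescribed $\Gamma$, a per-step bias of order $\Tr(\Sigma)/\Gamma$ contributes roughly $\tfrac{1}{\mu}\sqrt{\Tr(\Sigma)\ln(\nicefrac{\ln T}{\delta})^2/(T+\gamma)}$ to the error, which dominates the claimed $\tfrac1\mu\sqrt{(\Tr(\Sigma)+\sqrt{\Tr(\Sigma)\|\Sigma\|_2}\ln(\nicefrac{\ln T}{\delta}))/(T+\gamma)}$ whenever $\deff\gg\ln(\nicefrac{1}{\delta})^2$. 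The proof requires the sharper moment bound of Lemma~\ref{lem:clipped-moment-control}, whose leading term is $\sqrt{\|\Sigma\|_2\Tr(\Sigma)}/\Gamma + \|\nabla F(\vx_t)\|\sqrt{\|\Sigma\|_2}/\Gamma$ with only $\Gamma^{-2}$ corrections, combined with the induction envelope to control $\|\nabla F(\vx_t)\|$ (Lemma~\ref{lem:sgdcl-str-cvx-bias}); the several constraints defining $\gamma$ come from making each of these bias and variance pieces subdominant. Minor points: with $A=4$ the unrolled weights scale as $((t+\gamma)/(T+\gamma))^{8}$, not the fourth power, and the $\ln\ln T$ in the confidence level comes from the refinement depth and dyadic union bound inside the concentration results, not from gridding your induction — but these are cosmetic next to the two gaps above.
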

We use Theorem \ref{thm:SGDcl-smooth-str-cvx} to derive sharp rates for streaming heavy tailed mean estimation in Section \ref{sec:mean-estimation-application} and the following result to derive sharp rates for streaming heavy tailed linear regression in section \ref{sec:linear-regression-application}
\begin{theorem}[Smooth Strongly Convex Objectives with Quadratic Growth Noise Model]
\label{thm:SGDcl-smooth-str-cvx-gen}
Let Assumptions \bref{as:strong-convexity}, \bref{as:smoothness} and \bref{as:second_moment_generalized} be satisfied and let $\kappa = \nicefrac{L}{\mu}$. For any $\delta \in (0, \nicefrac{1}{2})$, the last iterate of Algorithm \ref{alg:SGDcl} run for $T \gtrsim \ln(\ln(d))$ iterations with step-size $\eta_t = \tfrac{4}{\mu (t + \gamma)}$ and clipping level $\Gamma = \tfrac{\mu}{\ln(\nicefrac{\ln(T)}{\delta})} \sqrt{(\gamma + 1)^2 D^2_1 + \tfrac{\beta}{\mu^2} \cdot (T + \gamma) (\deff + \sqrt{\deff} \ln(\nicefrac{\ln(T)}{\delta}))}$ satisfies the following with probability at least $1 - \delta$ 
\small
\begin{equation}
\label{eqn:sgd-cl-smooth-strong-cvx-rate}
    \|\vx_{T+1} - \xopt \| \lesssim \frac{\gamma D_1}{T + \gamma} + \frac{1}{\mu} \sqrt{\frac{\beta(\deff + \sqrt{\deff} \ln(\nicefrac{\ln(T)}{\delta}))}{T + \gamma}}
\end{equation}
\begin{align*}
\text{ where }  \gamma \asymp \max &\{ \frac{\alpha \deff}{\mu^2}, \frac{\alpha \sqrt{\deff}}{\mu^2} \ln(\nicefrac{\ln(T)}{\delta}), \frac{\kappa \sqrt{\alpha}}{\mu} \ln(\nicefrac{\ln(T)}{\delta}), \frac{\sqrt{\kappa \alpha \deff}}{\mu} \ln(\nicefrac{\ln(T)}{\delta}),\\
    & \frac{\kappa^{\nicefrac{2}{3}} \alpha^{\nicefrac{1}{3}} \deff^{\nicefrac{1}{3}}}{\mu^{\nicefrac{2}{3}}} \ln(\nicefrac{\ln(T)}{\delta}),  \kappa^{\nicefrac{3}{2}} \ln(\nicefrac{\ln(T)}{\delta}), \kappa \ln(\nicefrac{\ln(T)}{\delta})^2, \frac{\kappa^2}{\deff} \ln(\nicefrac{\ln(T)}{\delta}) \}
\end{align*}\normalsize
\end{theorem}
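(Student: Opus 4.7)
The plan is to generalize the argument behind Theorem~\ref{thm:SGDcl-smooth-str-cvx} to accommodate the state-dependent noise bound: the skeleton of the proof is identical, but every term that previously depended on the constants $\Tr(\Sigma)$ and $\|\Sigma\|_2$ must now be controlled using the surrogates $\deff(\alpha\|\vx_t - \xopt\|^2 + \beta)$ and $\alpha\|\vx_t - \xopt\|^2 + \beta$, and an inductive argument is needed to decouple the iterate norm from the noise variance. I would begin from the one-step identity for the (non-expansive) projected clipped iterate,
\begin{equation*}
    \|\vx_{t+1} - \xopt\|^2 \leq \|\vx_t - \xopt\|^2 - 2\eta_t\langle \clip_\Gamma(\vg_t), \vx_t - \xopt\rangle + \eta_t^2 \|\clip_\Gamma(\vg_t)\|^2,
\end{equation*}
and split $\clip_\Gamma(\vg_t) = \nabla F(\vx_t) + \vm_t + \vb_t$ into the true gradient, a martingale difference $\vm_t$, and the deterministic clipping bias $\vb_t = \bE_t[\clip_\Gamma(\vg_t) - \vg_t]$. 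The \ref{as:smoothness} plus \ref{as:strong-convexity} identities applied to $\langle \nabla F(\vx_t), \vx_t - \xopt\rangle$ produce the standard contraction $(1 - c\eta_t\mu)$, which under $\eta_t = 4/(\mu(t+\gamma))$ telescopes to give the $\gamma^2 D_1^2/(T+\gamma)^2$ initialization piece in~\eqref{eqn:sgd-cl-smooth-strong-cvx-rate}.

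The next step is to bound the aggregated noise sum that appears after unfolding the recursion. This is precisely where the paper's iteratively refined PAC-Bayes/Freedman-type inequality enters: rather than the crude $\Tr(\Sigma(\vx_t))\ln(\nicefrac{1}{\delta})$ control, it delivers a time-uniform bound of order $\Tr(\Sigma(\vx_t)) + \sqrt{\|\Sigma(\vx_t)\|_2\,\Tr(\Sigma(\vx_t))}\ln(\nicefrac{\ln T}{\delta})$. Substituting \ref{as:second_moment_generalized} splits this into a $\beta$-piece, which is what contributes the leading $\beta(\deff + \sqrt{\deff}\ln(\nicefrac{\ln T}{\delta}))/(T+\gamma)$ rate, and an $\alpha\|\vx_t - \xopt\|^2$-piece that must be absorbed into the contraction. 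In parallel, the clipping bias admits the elementary estimate $\|\vb_t\| \leq \bE_t[\|\vn_t\|\ind{\|\vg_t\| > \Gamma}] \leq \Tr(\Sigma(\vx_t))/\Gamma$, which for the stated $\Gamma$ decays fast enough to be dominated by the martingale contribution.

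The main obstacle, and the reason the $\gamma$ expression carries so many terms in its $\max$, is the \emph{coupling} between the iterate norm and the state-dependent noise variance: the concentration bound one derives at step $t$ depends on $\|\vx_t - \xopt\|^2$, which is exactly the quantity being bounded. I would close this loop by a time-uniform induction on the event
\begin{equation*}
    \cE_T := \Big\{ \|\vx_t - \xopt\|^2 \leq \tfrac{\gamma^2 D_1^2}{(t+\gamma)^2} + \tfrac{C\beta(\deff + \sqrt{\deff}\ln(\nicefrac{\ln T}{\delta}))}{\mu^2(t+\gamma)} \ \text{for all } t \leq T \Big\},
\end{equation*}
showing $\bP(\cE_T) \geq 1 - \delta$ by a union bound that applies the refined Freedman inequality conditionally on $\cE_{t-1}$. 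Each term in the $\max$ defining $\gamma$ is the minimum inflation required to absorb a specific cross-term from this induction: the $\alpha\deff/\mu^2$ and $\alpha\sqrt{\deff}\ln(\nicefrac{\ln T}{\delta})/\mu^2$ terms neutralize the quadratic-growth piece of the martingale variance, the $\kappa\sqrt{\alpha}/\mu$ and $\sqrt{\kappa\alpha\deff}/\mu$ terms arise from the interaction of the clipping bias with smoothness, and the $\kappa^{3/2}\ln$ and $\kappa\ln^2$ terms ensure the contraction $(1 - c\eta_t\mu)$ overpowers the remaining smoothness-induced cross-terms. The bulk of the remaining work is a meticulous but mechanical book-keeping exercise: verifying, term by term, that the stated $\gamma$ is large enough for all such cross-terms to be subsumed while preserving the leading-order rate of~\eqref{eqn:sgd-cl-smooth-strong-cvx-rate}, and then passing from the squared bound on $\cE_T$ to the statement via $\sqrt{\,\cdot\,}$.
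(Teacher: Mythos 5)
Your overall architecture matches the paper's: decompose $\clip_\Gamma(\vg_t)$ into gradient, martingale part and clipping bias, unroll the contraction from $\eta_t = \nicefrac{4}{\mu(t+\gamma)}$, control the martingale terms with the iteratively refined Freedman/PAC--Bayes inequality, and close the loop between the state-dependent variance $\alpha\|\vx_t-\xopt\|^2+\beta$ and the iterate norm by induction over a high-probability event, with $\gamma$ absorbing the cross terms. However, there is a genuine gap in your bias control, and it is not a bookkeeping detail: it destroys exactly the improvement the theorem claims. You bound $\|\vb_t\| \leq \bE_t[\|\vn_t\|\ind{\|\vg_t\|>\Gamma}] \leq \Tr(\Sigma(\vx_t))/\Gamma$. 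The paper instead uses the Catoni-style estimate (Lemma~\ref{lem:clipped-moment-control}), whose leading term is $\sqrt{\|\Sigma(\vx_t)\|_2}\bigl(\|\nabla F(\vx_t)\| + \sqrt{\Tr(\Sigma(\vx_t))}\bigr)/\Gamma$, i.e.\ of order $\sqrt{\|\Sigma(\vx_t)\|_2\Tr(\Sigma(\vx_t))}/\Gamma$ for the state-independent piece --- a factor $\sqrt{\deff}$ smaller than your bound. With $\Gamma \asymp \mu\sqrt{\RTd}/\ln(\nicefrac{\ln T}{\delta})$ and $\mu^2\RTd \asymp \beta(T+\gamma)(\deff+\sqrt{\deff}\ln(\nicefrac{\ln T}{\delta}))$, the induction requires $\|\vb_t\| \lesssim \mu\sqrt{\RTd}/(T+\gamma)$ (the $B_1$ budget in the paper's Lemma~\ref{lem:sgdcl-gen-str-cvx-bias}); your estimate gives $\beta\deff\ln(\nicefrac{\ln T}{\delta})/(\mu\sqrt{\RTd})$, which exceeds that budget by a factor of order $\min\{\sqrt{\deff},\ln(\nicefrac{\ln T}{\delta})\}$. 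The only way to salvage the induction is then to inflate $\RTd$, and the resulting rate degenerates back toward $\sqrt{\beta\deff\ln(\nicefrac{1}{\delta})/T}$, i.e.\ the prior bound of \cite{tsai2022heavy} rather than \eqref{eqn:sgd-cl-smooth-strong-cvx-rate}. In other words, the sharp clipping-bias (and clipped-covariance) lemma is the second pillar of the result, on equal footing with the refined martingale concentration, and your proposal is missing it.

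Two secondary issues. First, your induction event $\cE_T$ demands $\|\vx_t-\xopt\|^2 \lesssim \gamma^2 D_1^2/(t+\gamma)^2 + \beta(\deff+\sqrt{\deff}\ln(\nicefrac{\ln T}{\delta}))/(\mu^2(t+\gamma))$ \emph{at every} $t$, which is stronger than the paper's envelope $C\RTd/(t+\gamma-1)^2$; with a horizon-calibrated clipping level the Freedman additive term $\Gamma\ln(\nicefrac{1}{\delta})$ and the (constant-in-$t$) bias do not fit inside this tighter early-time envelope, so the induction as you set it up is unlikely to close --- the horizon-dependent envelope is what makes the paper's induction work while still giving the claimed rate at $t=T$. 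Second, "apply the refined Freedman inequality conditionally on $\cE_{t-1}$" is not legitimate as stated, since conditioning on such events breaks the martingale property; the paper's device is to multiply the increments by the $\cF_{s-1}$-measurable indicators $\ind{E_s}$ (producing $\tilvv_s$, $\tilvb_s$, $\vd_s$), apply the concentration results unconditionally to the truncated process, and only then run the deterministic induction on the single good event. You would need the same truncation to make your union-bound step rigorous.
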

\textbf{Comparison to Prior Works} To the best of our knowledge, the result closest to Theorem \ref{thm:SGDcl-smooth-str-cvx-gen} is \cite[Theorem 1]{tsai2022heavy} which analyzes streaming strongly convex \bref{eqn:SCO} and obtains a $\tfrac{\zeta D_1}{T + \zeta} + \tfrac{1}{\mu} \sqrt{\tfrac{\beta\deff \ln(\nicefrac{1}{\delta})}{T + \zeta}}$ rate for $\zeta \asymp \tfrac{\alpha \deff \log(\nicefrac{1}{\delta})}{\mu^2}$. We note that Theorem \ref{thm:SGDcl-smooth-str-cvx-gen} obtains a significantly better confidence bound which is closer to the optimal subgaussian rate compared \cite[Theorem 1]{tsai2022heavy}.

\textbf{Extra $\log \log T$ term:} Our bounds for the statistical error is of the form $\frac{1}{\mu} \sqrt{\frac{\beta(\deff + \sqrt{\deff} \ln(\nicefrac{\ln(T)}{\delta}))}{T + \gamma}}$  which has an extra $\log \log T$ factor in the lower order term. This is still sharper than prior works with bounds of the form $\frac{1}{\mu} \sqrt{\frac{\beta\deff \ln(\nicefrac{1}{\delta})}{T + \gamma}}$ as long as $\log \log T \ll \sqrt{\deff}\log(\frac{1}{\delta})$. 
\subsection{Beyond Strongly Convex Objectives}
Moving beyond strong convexity, we present Theorems~\ref{thm:SGDcl-smooth-cvx} for smooth convex functions and~\ref{thm:SGDcl-lip-cvx} for Lipschitz convex function, proved in Appendix~\ref{app-sec:smooth-cvx-analysis} and~\ref{app-sec:lip-cvx-analysis} respectively. To the best of our knowledge, these are the first results for streaming heavy-tailed convex SCO that exhibits near-subgaussian concentration without strong convexity.  
\begin{theorem}[Smooth Convex Objectives]
\label{thm:SGDcl-smooth-cvx}
Let \bref{as:convexity}, \bref{as:smoothness} and \bref{as:second_moment} be satisfied. Then, for any $\delta \in (0, \nicefrac{1}{2})$ and $T \geq \ln(\ln(d))$, there exists an $\eta \in (0, \nicefrac{1}{2L}]$ such that the average iterate of Algorithm \ref{alg:SGDcl} run for $T$ iterations with step-size $\eta_t = \eta$ and clipping level $\Gamma = \sqrt{\tfrac{T \sqrt{\|\Sigma\|_2} (\sqrt{\Tr(\Sigma)} + L D_1)}{\ln(\nicefrac{\ln(T)}{\delta})}}$ satisfies the following with probability at least $1 - \delta$: \small
\begin{align*}
    F(\hvx_T) - F(\xopt) &\lesssim \frac{LD^2_1}{T} + D_1\sqrt{\frac{\Tr(\Sigma) + \sqrt{\|\Sigma\|_2}\left(\sqrt{\Tr(\Sigma)} + LD_1\right)\ln(\nicefrac{\ln(T)}{\delta})}{T}} + o_T(L, D_1, \Sigma)
\end{align*}\normalsize
where $o_T(L, D_1, \Sigma)$ represents terms that are of lower order in $T$ (explicated in Appendix \ref{app-sec:smooth-cvx-analysis})
\end{theorem}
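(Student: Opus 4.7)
The plan is to start from the projected descent inequality and telescope over the $T$ iterations. Using non-expansiveness of $\Pi_\cC$ we obtain
\begin{align*}
\|\vx_{t+1}-\xopt\|^2 \le \|\vx_t-\xopt\|^2 - 2\eta\dotp{\clip_\Gamma(\vg_t)}{\vx_t-\xopt} + \eta^2 \|\clip_\Gamma(\vg_t)\|^2.
\end{align*}
Writing $\bE[\clip_\Gamma(\vg_t)\mid\cF_{t-1}] = \nabla F(\vx_t) - \vb_t$ for the (predictable) clipping bias $\vb_t$, invoking convexity $\dotp{\nabla F(\vx_t)}{\vx_t-\xopt}\ge F(\vx_t)-F(\xopt)$, summing over $t$, and applying Jensen's inequality to the left side converts the per-step average into
\begin{align*}
F(\hvx_T)-F(\xopt) \lesssim \tfrac{D_1^2}{\eta T} + \tfrac{\eta}{T}\sum_{t=1}^T \|\clip_\Gamma(\vg_t)\|^2 + \tfrac{1}{T}\sum_{t=1}^T \dotp{\vb_t}{\vx_t-\xopt} + M_T,
\end{align*}
where $M_T := \tfrac{1}{T}\sum_t \dotp{\clip_\Gamma(\vg_t)-\bE[\clip_\Gamma(\vg_t)\mid\cF_{t-1}]}{\vx_t-\xopt}$ is a vector-valued martingale.

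Next I would control each piece separately. The bias satisfies $\|\vb_t\| \lesssim \bE[\|\vg_t\|^2\mid\cF_{t-1}]/\Gamma \lesssim (\|\nabla F(\vx_t)\|^2 + \Tr(\Sigma))/\Gamma$ by a Markov-style truncation estimate, and smoothness plus optimality give $\|\nabla F(\vx_t)\|^2 \le 2L(F(\vx_t)-F(\xopt))$, so the bias contribution (after Cauchy-Schwarz against $\|\vx_t-\xopt\|$) can be absorbed into the left-hand suboptimality once $\Gamma$ is chosen sufficiently large. For the quadratic term we use $\bE\|\clip_\Gamma(\vg_t)\|^2 \le \bE\|\vg_t\|^2 \le 2\Tr(\Sigma)+ 4L(F(\vx_t)-F(\xopt))$, and choosing $\eta \le 1/(8L)$ lets the function-value part be absorbed as well. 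The decisive step is $M_T$: a direct Freedman bound would cost $D_1\sqrt{\Tr(\Sigma)\ln(1/\delta)/T}$, whereas the iteratively refined Donsker--Varadhan/Freedman concentration inequality developed earlier in the paper yields $M_T \lesssim D_1 \sqrt{(\Tr(\Sigma) + \sqrt{\Tr(\Sigma)\|\Sigma\|_2}\ln(\ln T/\delta))/T}$ by separating the bulk variance $\Tr(\Sigma)$ from the worst-direction $\|\Sigma\|_2$ term.

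The main obstacle, specific to the non-strongly-convex case, is that the refined martingale concentration needs a predictable upper bound on $\max_{t\le T}\|\vx_t-\xopt\|$, but unlike the strongly convex setting there is no contraction: only the function values (in expectation) are decreasing, not the iterate distances. I would close this loop with a bootstrap argument: inductively assume $\max_{s\le t}\|\vx_s-\xopt\| \le c D_1$ on a high-probability event, under which the descent inequality together with the refined concentration forces $\|\vx_{t+1}-\xopt\|^2 \le c^2 D_1^2$, self-consistently propagating the hypothesis. This step requires a time-uniform (anytime) version of the refined concentration bound, and the peeling union bound over $t\in[T]$ is what produces the characteristic $\ln\ln T$ factor in the final rate.

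Finally, plugging in the prescribed $\Gamma = \sqrt{T\sqrt{\|\Sigma\|_2}(\sqrt{\Tr(\Sigma)}+LD_1)/\ln(\ln T/\delta)}$ balances the bias (of order $T/\Gamma^2$ times geometric factors) against the martingale fluctuation, and selecting $\eta \in (0, 1/(2L)]$ optimally trades the deterministic $LD_1^2/T$ term against the statistical term, yielding the stated expression. The residual contributions from absorbed smoothness cross-terms, the bootstrap inflation constant, and the bias's coupling with $\|\vx_t-\xopt\|$ comprise the lower-order $o_T(L,D_1,\Sigma)$ term.
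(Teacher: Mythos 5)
Your skeleton (average-iterate decomposition, separate bias/variance/martingale control, a bootstrap induction to keep $\max_{t\le T}\|\vx_t-\xopt\|\lesssim D_1$, then balancing $\eta$ and the prescribed $\Gamma$) matches the paper's proof, but two of your key estimates would not deliver the stated rate. First, your bias bound is too crude. The Markov-style estimate $\|\vb_t\|\lesssim \bE[\|\vg_t\|^2\mid\cF_{t-1}]/\Gamma$ gives a leading contribution $\Tr(\Sigma)/\Gamma$, and with the prescribed $\Gamma=\sqrt{T\sqrt{\|\Sigma\|_2}(\sqrt{\Tr(\Sigma)}+LD_1)/\ln(\nicefrac{\ln T}{\delta})}$ the resulting error $D_1\Tr(\Sigma)/\Gamma$ exceeds the claimed main term by roughly a factor $(\Tr(\Sigma)/\|\Sigma\|_2)^{1/4}$ — it is not lower order, and you cannot fix this by "choosing $\Gamma$ sufficiently large'' since $\Gamma$ is fixed by the theorem. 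The paper instead uses the Catoni-style directional bound (Lemma \ref{lem:clipped-moment-control}), which yields $\|\vb_t\|\lesssim \sqrt{\|\Sigma\|_2}\,(\sqrt{\Tr(\Sigma)}+\|\nabla F(\vx_t)\|)/\Gamma+\cdots$; it is precisely this $\sqrt{\|\Sigma\|_2\Tr(\Sigma)}/\Gamma$ scaling (rather than $\Tr(\Sigma)/\Gamma$) that makes the bias term match the statistical term.

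Second, you control the quadratic term $\tfrac{\eta}{T}\sum_t\|\clip_\Gamma(\vg_t)\|^2$ only in expectation, but the theorem is a high-probability statement, and a naive high-probability bound here (Bernstein with range $\Gamma^2$) is exactly where the unwanted $\Tr(\Sigma)\ln(\nicefrac{1}{\delta})$-type terms re-enter. In the paper, the iteratively refined PAC-Bayes concentration is applied precisely to this quadratic variation, via Corollary \ref{cor:pac-bayes-quad-variation}, giving $\tfrac1T\sum_t\|\vv_t\|^2\lesssim \Tr(\Sigma)+\sqrt{\Tr(\Sigma)\|\Sigma\|_2}\ln(\nicefrac{\ln T}{\delta})$ uniformly in $t$ (uniformity is also what your bootstrap induction needs). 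You instead invoke the refined inequality for the linear term $\tfrac1T\sum_t\dotp{\vv_t}{\vx_t-\xopt}$; but that term is not the norm of a vector martingale (the weights $\vx_t-\xopt$ vary with $t$), so Theorem \ref{thm:final_bound_main_text} does not apply to it directly — and it does not need to: scalar Freedman with conditional variance $\lesssim D_1^2\|\Sigma\|_2$ already gives $D_1\sqrt{\|\Sigma\|_2\ln(\nicefrac{\ln T}{\delta})/T}$, which is of the right order. Redirecting the refined concentration to the quadratic variation and replacing the bias bound with the clipped-moment lemma would close these gaps and essentially recover the paper's argument.
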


\begin{theorem}[Lipschitz Convex Objectives]
\label{thm:SGDcl-lip-cvx}
Let Assumptions \ref{as:convexity}, \ref{as:lipschitzness} and \ref{as:second_moment} be satisfied. Then, for any $\delta \in (0, \nicefrac{1}{2})$ and $T \geq \ln(\ln(d))$, there exists an $\eta \in (0, \nicefrac{G}{\sqrt{T}}]$ such that the average iterate of Algorithm \ref{alg:SGDcl} run for $T$ iterations with step-size $\eta_t = \eta$ and clipping level $\Gamma = \sqrt{\tfrac{T \sqrt{\|\Sigma\|_2} (\sqrt{\Tr(\Sigma)} + G)}{\ln(\nicefrac{\ln(T)}{\delta})}}$ satisfies the following with probability at least $1 - \delta$ 
\small
\begin{align*}
    F(\hvx_T) - F(\xopt) &\lesssim \frac{D_1 G}{\sqrt{T}} +  D_1\sqrt{\frac{\Tr(\Sigma) + \sqrt{\|\Sigma\|_2}\left(\sqrt{\Tr(\Sigma)} + G\right)\ln(\nicefrac{\ln(T)}{\delta})}{T}} + o_T(G, D_1, \Sigma)
\end{align*}\normalsize
where $o_T(G, D_1, \Sigma)$ represents terms that are lower order in $T$ (explicated in Appendix \ref{app-sec:lip-cvx-analysis})
\end{theorem}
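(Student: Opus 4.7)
The plan is to run the classical projected-SGD analysis, decompose the clipped gradient into its conditional mean and a bounded martingale difference, telescope, and then pay off the noise sum with the paper's iteratively refined PAC--Bayes martingale bound. Write $\tilde{\vg}_t := \clip_\Gamma(\vg_t) = \nabla F(\vx_t) + \vb_t + \tilde{\vn}_t$, where $\vb_t := \bE[\tilde{\vg}_t \mid \mathcal{F}_{t-1}] - \nabla F(\vx_t)$ is the clipping bias and $\tilde{\vn}_t$ is a mean-zero martingale difference. Non-expansiveness of $\Pi_\cC$, convexity (giving $\langle \nabla F(\vx_t), \vx_t - \xopt\rangle \geq F(\vx_t) - F(\xopt)$), telescoping, and Jensen's inequality yield
\begin{equation*}
  F(\hvx_T) - F(\xopt) \leq \frac{D_1^2}{2\eta T} + \frac{\eta}{2T}\sum_{t=1}^T \|\tilde{\vg}_t\|^2 + \frac{1}{T}\sum_{t=1}^T \langle -\vb_t, \vx_t - \xopt\rangle + \frac{1}{T}\sum_{t=1}^T \langle -\tilde{\vn}_t, \vx_t - \xopt\rangle.
\end{equation*}

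The key departure from the smooth convex analysis (Theorem \bref{thm:SGDcl-smooth-cvx}) is that we cannot invoke a descent lemma. Instead, \bref{as:lipschitzness} gives $\|\nabla F(\vx_t)\| \leq G$ pointwise, so $\bE[\|\tilde{\vg}_t\|^2 \mid \mathcal{F}_{t-1}] \leq 2(G^2 + \Tr(\Sigma))$ and the clipping bias satisfies $\|\vb_t\| \leq \bE[\|\vg_t\|\, \ind{\|\vg_t\| > \Gamma}] \leq (G^2 + \Tr(\Sigma))/\Gamma$. Since $\|\tilde{\vg}_t\|^2 \leq \Gamma^2$ pointwise, a Bernstein/Freedman bound controls $\sum_t \|\tilde{\vg}_t\|^2$ around its conditional mean, while Cauchy--Schwarz against $\|\vx_t - \xopt\|$ handles the bias term once we have a uniform bound on the iterate norms.

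The hard step is the vector martingale $M_T := \sum_t \langle -\tilde{\vn}_t, \vx_t - \xopt\rangle$: a direct Freedman inequality incurs a $\sqrt{\Tr(\Sigma)\log(1/\delta)}$ factor, which would recover only the suboptimal \cite{tsai2022heavy} rate. Instead we apply the iteratively refined Donsker--Varadhan/PAC--Bayes concentration developed earlier in the paper, which replaces that factor by $\sqrt{\Tr(\Sigma) + \sqrt{\|\Sigma\|_2 \Tr(\Sigma)}\,\log(\ln(T)/\delta)}$. Because the $\vx_t$ are adapted rather than a priori bounded, this must be paired with a self-bounding / dyadic-peeling argument on $\max_t \|\vx_t - \xopt\|$: dropping the non-negative suboptimality from the telescope yields $\max_t \|\vx_t - \xopt\|^2 \lesssim D_1^2 + \eta^2 \sum_t \|\tilde{\vg}_t\|^2 + \eta |M_T| + \eta \sum_t \|\vb_t\|\|\vx_t - \xopt\|$, which feeds back into the concentration bound and closes under our parameter choices.

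Finally, the clipping level $\Gamma \asymp \sqrt{T \sqrt{\|\Sigma\|_2}(\sqrt{\Tr(\Sigma)} + G)/\ln(\ln(T)/\delta)}$ is tuned so that the bias contribution $\eta T \cdot D_1 (G^2 + \Tr(\Sigma))/\Gamma$ is demoted to the lower-order $o_T(G, D_1, \Sigma)$ term, and $\eta \in (0, G/\sqrt{T}]$ is optimized to balance the $D_1^2/(\eta T)$ optimization error against the residual noise. This produces the claimed dominant $D_1 G/\sqrt{T}$ non-smooth SGD term plus the near-subgaussian $D_1 \sqrt{(\Tr(\Sigma) + \sqrt{\|\Sigma\|_2}(\sqrt{\Tr(\Sigma)} + G)\ln(\ln(T)/\delta))/T}$ fluctuation. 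The main obstacle is coupling the refined vector-martingale concentration to the random, only self-bounded iterates $\vx_t - \xopt$ without sacrificing the near-subgaussian dependence on $\delta$; this difficulty is absent in the strongly convex theorems because strong convexity forces $\|\vx_t - \xopt\|$ to contract for free.
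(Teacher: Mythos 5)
Your overall architecture (clip-decomposition into bias plus martingale noise, telescoping the projected-SGD recursion with $\|\partial F\|\le G$, a self-bounding/induction argument to keep $\|\vx_t-\xopt\|\lesssim D_1$, and then paying off the noise with sharper concentration) matches the paper's proof, but two of your key quantitative steps do not deliver the stated bound. First, your bias estimate $\|\vb_t\|\le \bE[\|\vg_t\|\ind{\|\vg_t\|>\Gamma}]\le (G^2+\Tr(\Sigma))/\Gamma$ is too crude. The bias enters the excess risk at first order through $\tfrac1T\sum_t\dotp{\vb_t}{\vx_t-\xopt}\approx D_1\|\vb_t\|$, so with the prescribed $\Gamma=\sqrt{AT/\ln(\ln T/\delta)}$, $A:=\sqrt{\|\Sigma\|_2}(\sqrt{\Tr(\Sigma)}+G)$, your bound gives a contribution of order $D_1(G^2+\Tr(\Sigma))\sqrt{\ln(\ln T/\delta)/(AT)}$; when $G\lesssim\sqrt{\Tr(\Sigma)}$ this is about $D_1\,\deff^{1/4}\sqrt{\Tr(\Sigma)\ln(\ln T/\delta)/T}$, which exceeds the claimed leading fluctuation $D_1\sqrt{(\Tr(\Sigma)+A\ln(\ln T/\delta))/T}$ by a polynomial factor in $\deff$, and in particular it is \emph{not} a lower-order-in-$T$ term that tuning $\Gamma$ can "demote''. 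The paper needs the directional (operator-norm) bias bound of Lemma \ref{lem:clipped-moment-control} (the Catoni-style argument), $\|\vb_t\|\lesssim \sqrt{\|\Sigma\|_2}\,(\sqrt{\Tr(\Sigma)}+G)/\Gamma$ plus $\Gamma^{-2}$ corrections; this is precisely why the combination $\sqrt{\|\Sigma\|_2}(\sqrt{\Tr(\Sigma)}+G)$ appears in both the clipping level and the final rate, and without it your argument cannot reach the stated theorem.

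Second, you aim the iteratively refined PAC--Bayes concentration at the wrong term. The weighted scalar martingale $\sum_t\dotp{\tilde{\vn}_t}{\vx_t-\xopt}$ cannot be fed directly into the vector-martingale bound of Theorem \ref{thm:final_bound_main_text} (the adapted weights $\vx_t-\xopt$ cannot be pulled out of the sum), and it does not need the refinement at all: on the event $\|\vx_t-\xopt\|\le 2D_1$ its conditional variances are bounded by $4D_1^2\|\Sigma\|_2$ (only the operator norm enters, not the trace), so plain scalar Freedman (Lemma \ref{lem:scalar-freedman}) already yields a contribution $D_1\sqrt{\|\Sigma\|_2\ln(1/\delta)/T}+\Gamma D_1\ln(1/\delta)/T$, which fits inside the claimed bound — there is no $\sqrt{\Tr(\Sigma)\log(1/\delta)}$ obstruction here as you assert. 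The place where the new machinery is actually used in the paper is the quadratic-variation sum $\sum_t\|\vv_t\|^2$ (Corollary \ref{cor:pac-bayes-quad-variation}/\ref{cor:sq_sum_Conc}), which controls $\tfrac{\eta}{T}\sum_t\|\vv_t\|^2\lesssim \eta(\Tr(\Sigma)+A\ln(\ln T/\delta))$ uniformly in $t$ and feeds the induction establishing $D_t\le 2D_1$; in that induction the noise must be pre-truncated as $\vv_t\ind{E_t}$ with $E_t=\{D_t\le 2D_1\}$ being $\cF_{t-1}$-measurable so that the martingale structure survives — a step your "self-bounding/dyadic peeling'' sketch gestures at but does not make precise. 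Repair the bias bound via Lemma \ref{lem:clipped-moment-control}, move the refined concentration to $\sum_t\|\vv_t\|^2$, handle the weighted inner product with scalar Freedman, and your outline then coincides with the paper's proof.
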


\textbf{Remark:} We use Theorem~\ref{thm:SGDcl-smooth-cvx} to design the first known streaming estimator for logistic regression with heavy-tailed covariates in Section \ref{sec:logistic-regression-application} and Theorem~\ref{thm:SGDcl-lip-cvx} to design the first known streaming estimator for LAD regression with heavy-tailed covariates in Section \ref{sec:lad-regression-application}.

\textbf{Remark:} In Theorems~\ref{thm:SGDcl-smooth-cvx} and~\ref{thm:SGDcl-lip-cvx}, the leading order term in the error is of the form: $D_1\sqrt{\frac{\Tr(\Sigma) + \sqrt{\|\Sigma\|_2}\left(\sqrt{\Tr(\Sigma)} + \zeta\right)\ln(\nicefrac{\ln(T)}{\delta})}{T}}$, where $\zeta \in \{G,LD_1\}$.  Assuming $G,D_1, \sqrt{\tr(\Sigma)} \asymp \sqrt{d}$,  we note that the term dependent on the confidence level $\log(1/\delta)$ is lower order compared to $\tr(\Sigma)$. To the best of our knowledge, this is the first work which establishes strong confidence bounds in the setting of SCO without strong convexity. Interestingly, our results also improve the best known rates for sub-Gaussian gradient noise. To be precise, \cite[Theorem 3.1]{liu2023high} shows a \emph{weaker} bound of $\sqrt{\nicefrac{D^2_1(G^2 + \tr(\Sigma)\log(\tfrac{1}{\delta}))}{T}}$ in the setting of Theorem~\ref{thm:SGDcl-lip-cvx}, but when the noise is sub-Gaussian. 


\section{Applications to Streaming Heavy Tailed Statistical Estimation}
\label{sec:applications}
\subsection{Streaming Heavy-Tailed Mean Estimation}
\label{sec:mean-estimation-application}
Consider streaming heavy tailed mean estimation with clipped SGD with access to $N$ i.i.d samples from the distribution $P$. Let $\Xi = \cC$, $\bE_{\xi \sim P}[\xi] = \vm \in \cC$. We further assume $\Cov[\xi] \preceq \Sigma$ and allow the higher moments to be infinite. As described in Appendix \ref{prf:mean-estimation-cor}, this is an \bref{eqn:SCO} problem with the sample loss $f(\vx;\xi) = \tfrac{1}{2} \|\vx - \xi\|^2$. The population loss and the stochastic gradient are given by: \small
\begin{align*}
    F(\vx) = \frac{1}{2}\|\vx - \vm\|^2 + \Tr(\Cov_{\xi \sim P}[\xi]); \qquad g(\vx;\xi) &= \vx - \xi
\end{align*}\normalsize
The following result, proved in Appendix \ref{prf:mean-estimation-cor} via an application of Theorem \ref{thm:SGDcl-smooth-str-cvx}, shows that the last iterate of clipped SGD attains near-subgaussian rates for the heavy tailed mean estimation problem
\begin{corollary}[Heavy Tailed Mean Estimation]\label{cor:mean_estimation}
Under the stochastic gradient oracle described above, implemented using $N \gtrsim \ln(\ln(d))$ i.i.d samples $\xi_1, \dots, \xi_N \sim P$, the last iterate of Algorithm \ref{alg:SGDcl} when run under the parameter settings of Theorem \ref{thm:SGDcl-smooth-str-cvx} satisfies the following  with probability at least $1 - \delta$ \small
\begin{align*}
    \| \vx_{N+1} - \mathbf{m} \| \lesssim \frac{\gamma \| \vx_1 - \mathbf{m} \|}{N+\gamma} + \sqrt{\frac{\Tr(\Sigma) + \sqrt{\|\Sigma\|_2\Tr(\Sigma)}\ln(\nicefrac{\ln(N)}{\delta})}{N+\gamma}}
\end{align*}\normalsize
where $\gamma \asymp \ln(\nicefrac{\ln(N)}{\delta})^2$
\end{corollary}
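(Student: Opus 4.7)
The plan is to obtain Corollary \ref{cor:mean_estimation} as a direct instantiation of Theorem \ref{thm:SGDcl-smooth-str-cvx} on the mean-estimation SCO described immediately before the corollary. First I would verify that the problem fits the assumptions of Theorem \ref{thm:SGDcl-smooth-str-cvx}. With $f(\vx;\xi) = \tfrac{1}{2}\|\vx-\xi\|^2$, the population risk $F(\vx) = \tfrac{1}{2}\|\vx-\vm\|^2 + \mathrm{const}$ is differentiable with $\nabla F(\vx) = \vx - \vm$; this is both 1-strongly convex and 1-smooth, so \ref{as:strong-convexity} and \ref{as:smoothness} hold with $\mu = L = 1$ and thus $\kappa = 1$. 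Moreover $\xopt = \vm$. The stochastic gradient oracle is $g(\vx;\xi) = \vx - \xi$, so the noise is $\vn(\vx;\xi) = g(\vx;\xi) - \nabla F(\vx) = \vm - \xi$, which is independent of $\vx$ and has covariance $\Cov[\xi] \preceq \Sigma$; hence \ref{as:second_moment} holds with the same $\Sigma$.

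Next I would substitute these constants into the conclusion of Theorem \ref{thm:SGDcl-smooth-str-cvx}. With $\mu = 1$ and $\kappa = 1$, the defining expression
\[
\gamma \asymp \max\Bigl\{\tfrac{\|\Sigma\|_2\,\ln(\ln(T)/\delta)^2}{\Tr(\Sigma)},\ \ln(\ln(T)/\delta),\ \ln(\ln(T)/\delta)^2\Bigr\}
\]
collapses considerably. Because $\Sigma$ is positive semidefinite we always have $\|\Sigma\|_2 \leq \Tr(\Sigma)$, so the first term is at most $\ln(\ln(T)/\delta)^2$, and the second term is dominated by the third. This yields $\gamma \asymp \ln(\ln(T)/\delta)^2$, matching the exponent stated in the corollary. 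Plugging $\mu = 1$ and this value of $\gamma$ into equation \eqref{eqn:sgd-cl-smooth-strong-cvx-rate-base}, and renaming $T \mapsto N$ since the corollary uses $N$ samples, produces the claimed bound on $\|\vx_{N+1} - \vm\|$.

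Finally I would verify that the parameter settings required by the theorem are satisfied. The number of samples requirement $N \gtrsim \ln(\ln(d))$ is exactly what the corollary assumes; the step size $\eta_t = 4/(t+\gamma)$ and the clipping threshold $\Gamma$ specified in Theorem \ref{thm:SGDcl-smooth-str-cvx} transfer verbatim (with $\mu=1$). Since there are no steps beyond invoking the theorem, there is no real obstacle: the only points to be careful about are (i) correctly identifying the noise covariance (which is $\Cov[\xi]$ itself, not some $\vx$-dependent object, so the quadratic-growth refinement \ref{as:second_moment_generalized} is not needed), and (ii) correctly simplifying the $\gamma$-max using $\|\Sigma\|_2 \le \Tr(\Sigma)$. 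These checks together deliver the near-subgaussian guarantee stated in Corollary \ref{cor:mean_estimation}.
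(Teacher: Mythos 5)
Your proposal is correct and matches the paper's own proof: both verify that the mean-estimation loss gives a $1$-smooth, $1$-strongly convex population risk with noise covariance $\Cov[\xi] \preceq \Sigma$ satisfying the bounded second moment assumption, and then instantiate Theorem \ref{thm:SGDcl-smooth-str-cvx} with $\mu = L = \kappa = 1$. Your explicit simplification of $\gamma$ via $\|\Sigma\|_2 \leq \Tr(\Sigma)$ is exactly the (implicit) step the paper relies on, so there is nothing missing.
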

\textbf{Comparison to Prior Works} The clipped mean estimator of \cite{catoni2018dimension} and the clipped-SGD based estimator in \cite{tsai2022heavy} come with a guarantee of the form $\|\hat{\mathbf{m}}-\mathbf{m}\| \lesssim \sqrt{\nicefrac{\tr(\Sigma)\log(\tfrac{1}{\delta})}{N}}$ with probability $1-\delta$. Our result in Corollary~\ref{cor:mean_estimation} obtains a sharper rate of convergence. In a recent work, \citet{lee2022optimal} showed that the trimmed mean estimator achieves the optimal rate of $\sqrt{\Tr(\Sigma)/N}$ when $N = \omega(\log^3{\delta^{-1}}), d = \omega(\log^2(\delta^{-1}))$. Our result matches this optimal rate in those settings, but is considerably more general, as it holds for any $N, d$. 
\subsection{Streaming Heavy Tailed Linear Regression}
\label{sec:linear-regression-application}
In the current and subsequent sections, we use $\theta \in \cC$ to denote the parameter of $F$. Let $\Xi = \bR^d \times \bR$. Given a target parameter $\theta^* \in \cC$,  $P$ defines the following linear model: \small
\begin{align*}
    \vx \sim Q, \ \bE[\vx] = 0, \ \bE[\vx \vx^T] = \Sigma \succ 0; \qquad y = \dotp{\vx}{\theta^*} + \epsilon, \ \bE[\epsilon | \vx] = 0, \ \bE[\epsilon^2 | \vx] \leq \sigma^2
\end{align*}\normalsize
In addition, we make the following bounded $4^{\mathsf{th}}$ moment asumption on the covariates $\vx$
\small
\begin{align*}
    \bE[\dotp{\vx}{\vv}^4] \leq C_4 (\bE[\dotp{\vx}{\vv}^2])^2\qquad \forall \ \vv \in \bR^d
\end{align*}\normalsize
for some numerical constant $C_4 \geq 1$. Note that we allow both the covariate $\vx$ and the target $\vy$ to be heavy tailed, assuming only finite moments of upto order $4$ for $\vx$ and order $2$ for $\vy$. The assumption $\bE[\vx] = 0$ is only made for ease of presentation and our arguments easily adapt to $\bE[\vx] \neq 0$. Our task is to estimate $\theta^*$ in a streaming fashion with access to $N$ i.i.d samples from $P$. As described in Appendix \ref{prf:regression-cor}, we reframe this problem as SCO under the sample loss $f(\theta; \vx, y) = \tfrac{1}{2}(\dotp{\vx}{\theta}-y)^2$. The associated population loss $F(\theta)$ and the stochastic gradient oracle $g(\theta;\vx, y)$ are given by:
\small
\begin{align*}
    F(\theta) = \frac{1}{2} (\theta - \theta^*)^T \Sigma (\theta - \theta^*); \qquad g(\theta;\vx, \vy) = (\dotp{\vx}{\theta} - y)\vx
\end{align*}\normalsize
\begin{corollary}[Heavy Tailed Linear Regression]
\label{cor:regresssion}
Under the stochastic gradient oracle described above, implemented using $N \gtrsim \ln(\ln(d))$ i.i.d samples from $P$, the last iterate of Algorithm \ref{alg:SGDcl} when run under the parameter settings of Theorem \ref{thm:SGDcl-smooth-str-cvx-gen} satisfies the following with probability at least $1 - \delta$:
\small
\begin{align*}
    \| \theta_{N+1} - \theta^* \| \lesssim \frac{\gamma\|\theta_1 - \theta^*\|}{N+\gamma} + \frac{\sigma}{\lambda_{\min{}}(\Sigma)}\sqrt{\frac{\Tr(\Sigma) + \sqrt{\| \Sigma\|_2 \Tr(\Sigma)} \ln(\nicefrac{\ln(N)}{\delta})}{N+\gamma}}
\end{align*}\normalsize
where $\gamma \asymp \max \left\{\tfrac{ C_4 \kappa^2 \Tr{(\Sigma)}}{\|\Sigma\|_2}, C_4 \kappa^2 \sqrt{\tfrac{\Tr(\Sigma)}{\|\Sigma\|_2}} \ln(\nicefrac{\ln(N)}{\delta}), \kappa \ln(\nicefrac{\ln(N)}{\delta})^2 \right\}$ and $\kappa = \tfrac{\|\Sigma\|_2}{\lambda_{\min}(\Sigma)}$
\end{corollary}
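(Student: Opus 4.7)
The plan is to derive Corollary~\ref{cor:regresssion} as an instantiation of Theorem~\ref{thm:SGDcl-smooth-str-cvx-gen}, by verifying its three hypotheses (strong convexity, smoothness, and the quadratic-growth second moment assumption) for the linear regression problem, reading off the appropriate $\mu, L, \alpha, \beta, \deff$, and simplifying the resulting $\gamma$. The objective $F(\theta) = \tfrac{1}{2}(\theta-\theta^*)^T\Sigma(\theta-\theta^*)$ has constant Hessian $\Sigma$, so it is immediately $\lambda_{\min}(\Sigma)$-strongly convex and $\|\Sigma\|_2$-smooth, giving $\mu = \lambda_{\min}(\Sigma)$, $L = \|\Sigma\|_2$, and $\kappa = \|\Sigma\|_2/\lambda_{\min}(\Sigma)$ as in the corollary.

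The main technical step is verifying \ref{as:second_moment_generalized}. Writing $\Delta = \theta - \theta^*$, the stochastic gradient decomposes as $g(\theta;\vx,y) = (\langle \vx, \Delta\rangle - \epsilon)\vx$, and since $\bE[\epsilon|\vx]=0$ the cross term vanishes, leaving
\begin{equation*}
\Sigma(\theta) \preceq \bE[g g^T] = \bE[\langle \vx, \Delta\rangle^2 \vx\vx^T] + \bE[\epsilon^2 \vx \vx^T] \preceq \bE[\langle \vx, \Delta\rangle^2 \vx\vx^T] + \sigma^2 \Sigma.
\end{equation*}
For the operator norm I bound $\sup_{\|\vv\|=1}\bE[\langle \vx, \Delta\rangle^2 \langle \vx,\vv\rangle^2]$ via Cauchy--Schwarz and the bounded $4^{\mathsf{th}}$ moment assumption, obtaining $\leq C_4(\Delta^T\Sigma\Delta)\|\Sigma\|_2 \leq C_4 \|\Sigma\|_2^2 \|\Delta\|^2$; thus $\|\Sigma(\theta)\|_2 \lesssim C_4\|\Sigma\|_2^2 \|\Delta\|^2 + \sigma^2\|\Sigma\|_2$, identifying $\alpha \asymp C_4\|\Sigma\|_2^2$ and $\beta \asymp \sigma^2 \|\Sigma\|_2$. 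For the trace, the same Cauchy--Schwarz argument applied coordinate-wise (using $\bE[x_i^4]\le C_4 \Sigma_{ii}^2$) yields $\bE[\langle \vx, \Delta\rangle^2 \|\vx\|^2] \leq C_4 (\Delta^T\Sigma\Delta)\Tr(\Sigma)$, so $\Tr(\Sigma(\theta)) \lesssim (\Tr(\Sigma)/\|\Sigma\|_2)\cdot(\alpha\|\Delta\|^2 + \beta)$. This identifies $\deff = \Tr(\Sigma)/\|\Sigma\|_2$, which is consistent with the requirement $\deff \leq d$ and $\deff \geq 1$.

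With these parameters plugged into Theorem~\ref{thm:SGDcl-smooth-str-cvx-gen}, the statistical term simplifies exactly as desired: $\tfrac{1}{\mu}\sqrt{\beta(\deff+\sqrt{\deff}\ln(\ln(T)/\delta))/(T+\gamma)} = \tfrac{\sigma}{\lambda_{\min}(\Sigma)}\sqrt{(\Tr(\Sigma)+\sqrt{\|\Sigma\|_2\Tr(\Sigma)}\ln(\ln(T)/\delta))/(T+\gamma)}$. The remaining bookkeeping is to substitute into each of the eight terms in the max defining $\gamma$ in Theorem~\ref{thm:SGDcl-smooth-str-cvx-gen} and show that only three survive up to universal constants: $C_4\kappa^2\Tr(\Sigma)/\|\Sigma\|_2$, $C_4\kappa^2\sqrt{\Tr(\Sigma)/\|\Sigma\|_2}\ln(\ln(N)/\delta)$, and $\kappa\ln(\ln(N)/\delta)^2$. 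Using $\deff \ge 1$, $\kappa \ge 1$, and $C_4 \ge 1$, the other five terms are routinely bounded by one of these three (e.g.\ $\tfrac{\sqrt{\kappa\alpha\deff}}{\mu}\ln \asymp \sqrt{C_4}\kappa^{3/2}\sqrt{\deff}\ln$ is dominated by $C_4\kappa^2\sqrt{\deff}\ln$, and $\kappa^2/\deff \leq C_4\kappa^2\sqrt{\deff}$ since $\deff^{3/2} \geq 1/C_4$).

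The most delicate point I expect is the operator-norm step: the naive bound $\|\bE[\langle \vx,\Delta\rangle^2 \vx\vx^T]\|_2 \leq \bE[\langle \vx,\Delta\rangle^2 \|\vx\|^2]$ would lose a factor of $\deff$ and destroy the subgaussian scaling, so one must go through $\sup_\vv \bE[\langle \vx,\Delta\rangle^2\langle \vx,\vv\rangle^2]$ and invoke the $4^{\mathsf{th}}$ moment assumption in both directions $\Delta$ and $\vv$. Everything else is algebraic simplification; the hard work is already encoded in Theorem~\ref{thm:SGDcl-smooth-str-cvx-gen}.
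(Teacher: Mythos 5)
Your proposal is correct and follows essentially the same route as the paper's proof: identify $\mu=\lambda_{\min}(\Sigma)$, $L=\|\Sigma\|_2$, verify the quadratic-growth noise assumption via Cauchy--Schwarz plus the bounded fourth-moment assumption (applied both along $\vv$ for the operator norm and coordinate-wise for the trace), read off $\alpha \asymp C_4\|\Sigma\|_2^2$, $\beta \asymp \sigma^2\|\Sigma\|_2$, $\deff = \Tr(\Sigma)/\|\Sigma\|_2$, and reduce the eight terms in Theorem~\ref{thm:SGDcl-smooth-str-cvx-gen}'s $\gamma$ to the three stated ones. The only cosmetic difference is that you bound $\Sigma(\theta)\preceq\bE[gg^T]$ directly, whereas the paper works with the centered noise $(\vx\vx^T-\Sigma)(\theta-\theta^*)-\vx\epsilon$; both give the same bounds up to constants.
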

To the best of our knowledge, \cite[Corollary 4]{tsai2022heavy} is the only other streaming estimator for this problem with subgaussian-style concentration. Our result above significantly improves upon their rates of $\tfrac{\|\theta_1 - \theta^*\|}{N + \zeta} + \tfrac{\sigma}{\lambda_{\min_{}}(\Sigma)}\sqrt{\tfrac{\|\Sigma\|_2 d \ln(\nicefrac{1}{\delta})}{N + \zeta}}$ with $\zeta = C_4 d \kappa^2 \ln(\nicefrac{1}{\delta})$. Furthermore, our result is much closer to the optimal subgaussian rate and gracefully adapts to the \emph{stable rank} or effective dimension \cite{lee2022optimal}, i.e., $\deff = \nicefrac{\Tr(\Sigma)}{\|\Sigma\|}$, therefore implying significant speedups over \cite{tsai2022heavy} in settings where $\deff \ll d$. 
\subsection{Streaming Heavy Tailed Logistic Regression}
\label{sec:logistic-regression-application}
Let $\Xi = \bR^d \times \{0, 1\}$ and given a target parameter $\theta^* \in \cC$, $P$ denote the following linear-logistic model:
\begin{align*}
    \vx \sim Q, \ \bE[\vx] = 0, \ \bE[\vx \vx^T] \preceq \Sigma; \qquad y \sim \mathsf{Bernoulli}(\phi(\dotp{\theta^*}{\vx}))
\end{align*}
where $\phi(t) = (1 + e^{-t})^{-1}$. The covariates $\vx$ are heavy tailed, with only bounded second moments. The negative log likelihood of $y | \vx$ is given by $f(\theta;\vx,y) = \ln(1 + \exp(\dotp{\vx}{\theta})) - y \dotp{\vx}{\theta}$. The objective of the logistic regression problem is to estimate $\theta^*$ by minimizing the population-level negative log likelihood:
\begin{align*}
    F(\theta) = \bE_{\vx, y \sim P}\left[\ln(1 + \exp(\dotp{\vx}{\theta})) - y \dotp{\vx}{\theta}\right]
\end{align*}
which is minimized at $\theta^*$. Here, the stochastic gradient oracle is $g(\theta; \vx, \vy) = \phi(\dotp{\vx}{\theta})\vx- y \vx$. The following result applies Theorem \ref{thm:SGDcl-smooth-cvx} to show that the output of clipped SGD attains near-subgaussian rates for heavy tailed logistic regression. We refer to Appendix \ref{prf:logistic-regression-cor} for the proof. 
\begin{corollary}[Heavy Tailed Logistic Regression]
\label{cor:logistic-regression}
Under the stochastic subgradient oracle described above, realized using $N \gtrsim \ln(\ln(d))$ i.i.d samples from $P$, the average iterate of Algorithm \ref{alg:SGDcl}, when run under the parameter settings of Theorem \ref{thm:SGDcl-lip-cvx} satisfies the following with probability at least $1 - \delta$:
\small
\begin{align*}
    F(\hat{\theta}_N) - F(\theta^*) &\lesssim  D_1\sqrt{\frac{\Tr(\Sigma) + \sqrt{\|\Sigma\|_2}\left(\sqrt{\Tr(\Sigma)} + \|\Sigma\|_2 D_1\right)\ln(\nicefrac{\ln(N)}{\delta})}{N}} + o_N(\Sigma, D_1)
\end{align*}\normalsize
where $o_N(\Sigma, D_1)$ represents terms that are lower order in $N$ (explicated in Appendix \ref{prf:logistic-regression-cor}
\end{corollary}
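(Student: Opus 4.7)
The plan is to cast streaming heavy-tailed logistic regression as an instance of \ref{eqn:SCO} satisfying the hypotheses of Theorem \ref{thm:SGDcl-lip-cvx}, and then read off the claimed rate with an appropriately chosen Lipschitz constant $G$. Convexity of $F$ is immediate: the per-sample log-loss $\ln(1 + \exp(\dotp{\vx}{\theta})) - y\dotp{\vx}{\theta}$ is convex in $\theta$ for every $(\vx, y)$, and expectations preserve convexity; the stochastic gradient $g(\theta;\vx,y) = (\phi(\dotp{\vx}{\theta}) - y)\vx$ is unbiased by differentiation under the expectation. For the second-moment assumption \ref{as:second_moment}, I would use $|\phi(t) - y| \leq 1$ for $y \in \{0,1\}$ together with dropping the centering term to obtain $\Sigma(\theta) \preceq \bE[(\phi(\dotp{\vx}{\theta}) - y)^2 \vx \vx^T] \preceq \bE[\vx\vx^T] \preceq \Sigma$, so the noise covariance is bounded by the same $\Sigma$ that controls the covariates.

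The main obstacle is verifying \ref{as:lipschitzness}, since the unclipped gradient $(\phi - y)\vx$ can be arbitrarily large when $\vx$ is heavy-tailed, so $F$ is not globally Lipschitz. I would resolve this by constraining Algorithm \ref{alg:SGDcl} via the projection $\Pi_\cC$ to a Euclidean ball $\cC$ of radius $O(D_1)$ containing both $\theta_1$ and $\theta^*$. A direct computation of the Hessian gives $\nabla^2 F(\theta) = \bE[\phi'(\dotp{\vx}{\theta}) \vx \vx^T] \preceq \tfrac{1}{4}\Sigma$ uniformly in $\theta$, using $\phi'(t) = \phi(t)(1 - \phi(t)) \leq \tfrac{1}{4}$. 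Hence $F$ is $L$-smooth with $L = \|\Sigma\|_2/4$, and since $\nabla F(\theta^*) = 0$, one obtains $\|\nabla F(\theta)\| \leq L\|\theta - \theta^*\| \lesssim \|\Sigma\|_2 D_1$ for all $\theta \in \cC$. Thus $F$ is $G$-Lipschitz on $\cC$ with $G \asymp \|\Sigma\|_2 D_1$.

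With the hypotheses of Theorem \ref{thm:SGDcl-lip-cvx} verified, I would apply it with horizon $T = N$, the prescribed step-size and clipping level evaluated at this $G \asymp \|\Sigma\|_2 D_1$, and substitute into the theorem's bound $D_1 \sqrt{(\Tr(\Sigma) + \sqrt{\|\Sigma\|_2}(\sqrt{\Tr(\Sigma)} + G) \ln(\ln(T)/\delta))/T} + o_T(G, D_1, \Sigma)$. Replacing $G$ with $\|\Sigma\|_2 D_1$ produces exactly the stated bound, with $o_N(\Sigma, D_1)$ collecting the lower-order terms inherited from Theorem \ref{thm:SGDcl-lip-cvx}.
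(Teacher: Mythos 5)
There is a genuine gap in the final step. Your verification of convexity, unbiasedness, and the noise bound $\Sigma(\theta) \preceq \bE[(\phi(\dotp{\vx}{\theta})-y)^2 \vx\vx^T] \preceq \Sigma$ is fine (indeed slightly cleaner than the paper's decomposition, which yields $2\Sigma$). The problem is routing the result through Theorem \ref{thm:SGDcl-lip-cvx} with $G \asymp \|\Sigma\|_2 D_1$: that theorem's bound carries the term $\tfrac{D_1 G}{\sqrt{T}} \asymp \tfrac{\|\Sigma\|_2 D_1^2}{\sqrt{N}}$, which is of the \emph{same} order in $N$ as the claimed leading term and is not dominated by it in general (compare squares: $\nicefrac{\|\Sigma\|_2^2 D_1^4}{N}$ versus $\nicefrac{D_1^3\|\Sigma\|_2^{3/2}\ln(\nicefrac{\ln N}{\delta})}{N}$; the former wins whenever $\sqrt{\|\Sigma\|_2}\,D_1 \gg \ln(\nicefrac{\ln N}{\delta})$). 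So it cannot be absorbed into $o_N(\Sigma, D_1)$, and your conclusion "produces exactly the stated bound" does not follow. There is also a secondary issue: to make $F$ Lipschitz you shrink $\cC$ to a ball of radius $O(D_1)$, which alters the given problem and requires knowing $D_1$ as a constraint-set parameter, whereas the corollary is stated for whatever $\cC$ the model provides.

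The paper's route avoids this entirely by exploiting smoothness rather than Lipschitzness: $\nabla^2 F(\theta) = \bE[\phi'(\dotp{\vx}{\theta})\vx\vx^T] \preceq \Sigma$, so $F$ is $L$-smooth with $L = \|\Sigma\|_2$, and Theorem \ref{thm:SGDcl-smooth-cvx} (formally Theorem \ref{thm:SGDcl-smooth-cvx-full}) applies. There the deterministic optimization term is $\nicefrac{LD_1^2}{N} = \nicefrac{\|\Sigma\|_2 D_1^2}{N}$, which \emph{is} lower order in $N$, and the leading statistical term contains $LD_1 = \|\Sigma\|_2 D_1$, reproducing the corollary exactly; the needed bound $\|\nabla F(\theta_t)\| \leq 2LD_1$ at the iterates is established inside that theorem's analysis (via the event $D_t \leq 2D_1$), with no restriction of the domain. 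The reference to Theorem \ref{thm:SGDcl-lip-cvx} in the corollary statement is evidently a typo — the main-text remark and the appendix proof both invoke the smooth convex result — so the fix to your argument is to replace the Lipschitz step with the smoothness computation above and cite Theorem \ref{thm:SGDcl-smooth-cvx} instead.
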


Note that the standard analysis of SGD, with the assumption that $\|\vx\| \leq R$ almost surely leads to a bound of the form \cite[Proposition 5]{bach2014adaptivity}: $F(\hat{\theta}_N) - F(\theta^*) \lesssim  \frac{RD_1\sqrt{\log(\frac{1}{\delta})}}{\sqrt{N}} $
\subsection{Streaming Heavy Tailed LAD Regression}
\label{sec:lad-regression-application}
Let $\Xi = \bR^d \times \bR$. Given a target parameter $\theta^* \in \cC$,  $P$ defines the following linear model:
\begin{align*}
    \vx \sim Q, \ \bE[\vx] = 0, \ \bE[\vx \vx^T] \preceq \Sigma; \qquad y = \dotp{\vx}{\theta^*} + \epsilon, \ \mathsf{Median}(\epsilon | \vx) = 0
\end{align*}
We allow both the covariate $\vx$ and target $y$ to be heavy tailed, assuming only bounded second moments for $\vx$. We do not assume any moment bounds on $\epsilon | \vx$. The assumption $\bE[\vx] = 0$ is made for the sake of clarity and can be straightforwardly relaxed. The Least Absolute Deviation (LAD) Regression problem involves estimating $\theta$ by solving \bref{eqn:SCO} with the sample loss $f(\theta;\vx, y) = |\dotp{\vx}{\theta} - y|$. The stochastic subgradient oracle and population risk is given by:
\begin{align*}
    g(\theta;\vx,\vy) = \sgn(\dotp{\theta}{\vx} - \vy)\vx, \qquad F(\theta) &= \bE\left[|\dotp{\theta - \theta^*}{\vx} - \epsilon|\right]
\end{align*}
where $\sgn(t) = \tfrac{t}{\|t\|}$ for $t \neq 0$ and $\sgn(0) = 0$. The following result, whose full statement and proof is presented in Appendix \ref{prf:lad-regression-cor}, applies Theorem \ref{thm:SGDcl-lip-cvx} to show that the average iterate of clipped SGD attains near-subgaussian rates for heavy tailed LAD regression. To the best of our knowledge, this is the first known streaming estimator for this problem. 
\begin{corollary}[Heavy Tailed LAD Regression]
\label{cor:lad-regression}    
Under the stochastic subgradient oracle described above, realized using $N \gtrsim \ln(\ln(d))$ i.i.d samples from $P$, the average iterate of Algorithm \ref{alg:SGDcl}, when run under the parameter settings of Theorem \ref{thm:SGDcl-lip-cvx} satisfies the following with probability at least $1 - \delta$:
\small
\begin{align*}
    F(\hat{\theta}_N) - F(\theta^*) &\lesssim  D_1\sqrt{\frac{\Tr(\Sigma) + \sqrt{\|\Sigma\|_2 \Tr(\Sigma)}\ln(\nicefrac{\ln(N)}{\delta})}{N}} + o_N(\Sigma, D_1)
\end{align*}\normalsize
where $o_N$ denotes terms that are lower order in $N$ (explicated in Appendix \ref{prf:lad-regression-cor})
\end{corollary}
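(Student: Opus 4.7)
The plan is to reduce Corollary \ref{cor:lad-regression} to a direct invocation of Theorem \ref{thm:SGDcl-lip-cvx}, by verifying its three hypotheses — convexity of $F$, $G$-Lipschitzness of $F$ with an explicit constant $G$, and the bounded second moment condition on the stochastic subgradient oracle — and then substituting the resulting constants into the theorem's conclusion and simplifying.

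Convexity of $F$ is immediate: for each fixed $(\vx, y)$, $\theta \mapsto |\dotp{\vx}{\theta} - y|$ is a composition of $|\cdot|$ with an affine map and hence convex, and convexity survives under expectation. For the Lipschitz constant, I would argue directly from the definition via the reverse triangle inequality and Cauchy--Schwarz,
\begin{align*}
|F(\theta_1) - F(\theta_2)| \leq \bE[|\dotp{\vx}{\theta_1 - \theta_2}|] \leq \sqrt{(\theta_1 - \theta_2)^T \Sigma (\theta_1 - \theta_2)} \leq \sqrt{\|\Sigma\|_2}\,\|\theta_1 - \theta_2\|,
\end{align*}
so Assumption \ref{as:lipschitzness} holds with $G = \sqrt{\|\Sigma\|_2}$. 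For the oracle $g(\theta;\vx,y) = \sgn(\dotp{\theta}{\vx} - y)\vx$, taking the convention $\sgn(0) = 0$ makes $g$ a pointwise selection from $\partial_\theta f(\theta;\vx,y)$; applying the subgradient inequality pointwise and taking expectations yields $\bE[g(\theta;\vx,y)] \in \partial F(\theta)$, which is the correct notion of unbiasedness in the nondifferentiable case. Because $\sgn(\cdot)^2 \leq 1$,
\begin{align*}
\bE[g(\theta;\vx,y) g(\theta;\vx,y)^T] \preceq \bE[\vx \vx^T] \preceq \Sigma,
\end{align*}
so Assumption \ref{as:second_moment} is satisfied with the same $\Sigma$.

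Substituting $G = \sqrt{\|\Sigma\|_2}$ and this $\Sigma$ into Theorem \ref{thm:SGDcl-lip-cvx} yields
\begin{align*}
F(\hat\theta_N) - F(\theta^*) \lesssim \frac{D_1 \sqrt{\|\Sigma\|_2}}{\sqrt{N}} + D_1\sqrt{\frac{\Tr(\Sigma) + \sqrt{\|\Sigma\|_2}\bigl(\sqrt{\Tr(\Sigma)} + \sqrt{\|\Sigma\|_2}\bigr)\ln(\nicefrac{\ln(N)}{\delta})}{N}} + o_N.
\end{align*}
Using $\|\Sigma\|_2 \leq \Tr(\Sigma)$, the deterministic $\frac{D_1 \sqrt{\|\Sigma\|_2}}{\sqrt{N}}$ term is bounded by $D_1 \sqrt{\Tr(\Sigma)/N}$ and absorbed into the statistical term, while the $\|\Sigma\|_2 \ln$ summand inside the square root is dominated by the $\sqrt{\|\Sigma\|_2 \Tr(\Sigma)}\ln$ contribution, giving exactly the stated bound with the $o_N$ remainder inherited from Theorem \ref{thm:SGDcl-lip-cvx}.

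The main subtlety I anticipate is the unbiasedness step at points $\theta$ where $\bP(\dotp{\vx}{\theta} - y = 0) > 0$: at $\theta = \theta^*$ this reduces to $\bP(\epsilon = 0 \mid \vx) > 0$, and a deterministic choice of sign at zero may fail to make $\bE[g(\theta^*;\vx,y)] = 0$ even under the median-zero assumption. This is resolved by the weaker oracle requirement of the paper, which only asks $\bE[g(\theta;\xi)] \in \partial F(\theta)$ rather than equality with a specific gradient; the convention $\sgn(0) = 0$ then suffices pointwise, and the median-zero hypothesis implies $0 \in \partial F(\theta^*)$, so $\theta^*$ is indeed a minimizer of $F$ and the bound applies as stated.
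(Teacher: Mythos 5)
Your proposal is correct and follows the same overall strategy as the paper: verify convexity, a Lipschitz constant, and the bounded-second-moment condition for the subgradient oracle, confirm that $\theta^*$ minimizes $F$, and then invoke Theorem~\ref{thm:SGDcl-lip-cvx} and absorb the $\frac{D_1 G}{\sqrt{N}}$ and smaller logarithmic contributions. The two genuine (minor) differences are worth noting. First, the paper takes $G = \sqrt{\Tr(\Sigma)}$ by bounding $\|\partial F(\theta)\| \leq \bE\|\vx\| \leq \sqrt{\bE\|\vx\|^2}$, whereas you get the sharper $G = \sqrt{\|\Sigma\|_2}$ from the reverse triangle inequality plus Jensen/Cauchy--Schwarz applied to $\bE[|\dotp{\vx}{\theta_1-\theta_2}|]$; both choices recover the stated leading term (in the paper's case $\sqrt{\|\Sigma\|_2}(\sqrt{\Tr(\Sigma)}+G)\asymp\sqrt{\|\Sigma\|_2\Tr(\Sigma)}$, in yours the extra $\|\Sigma\|_2\ln$ piece is dominated by $\sqrt{\|\Sigma\|_2\Tr(\Sigma)}\ln$), and your choice only shrinks the $o_N(\Sigma,D_1)$ remainder, which in the paper's full statement (Corollary~\ref{cor:lad-regression-full}) is written for $G=\sqrt{\Tr(\Sigma)}$ — so your remainder terms differ slightly in form but remain lower order, and the corollary as stated is unaffected. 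Second, your treatment of the edge cases is actually more careful than the paper's: the appendix argues $\partial F(\theta^*)=\bE[\sgn(\epsilon)\vx]=0$ by invoking continuity of $\epsilon\,|\,\vx$, while you only need $\bE[g(\theta;\xi)]\in\partial F(\theta)$ (which follows from the pointwise subgradient selection argument regardless of atoms) together with the fact that a zero conditional median makes $\theta^*$ a global minimizer of $F$, so $0\in\partial F(\theta^*)$; this covers the case $\bP(\epsilon=0\,|\,\vx)>0$ that the paper's wording glosses over.
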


\section{Improved Martingale Concentration via Iterative Refinement}
\label{sec:martconc}
Our results are based on the following concentration result for $\bR^d$ valued martingales. The proof appears in Appendix~\ref{sec:improved_concentration}. Suppose $M_t$ for $t = 0,\dots,T$ is an $\bR^d$ valued martingale such that $M_0 = 0$ almost surely, the difference sequence $\vv_t := M_t - M_{t-1}$ is such that $\|\vv_t\| \leq \Gamma$ and $\bE[\vv_t\vv_t^{\intercal}|\mathcal{F}_{t-1}] =  \Sigma_t$ almost surely for every $t = 1,\dots,T$ for some $\Gamma > 0$.  Assume that there exist deterministic sequences $p_1,\dots,p_T$ and $q_1,\dots,q_T$ such that $\tr(\Sigma_t) \leq q_t$ and $\|\Sigma_t\| \leq p_t$ almost surely. 

\begin{theorem}\label{thm:final_bound_main_text}
Let $\bar{q} := \frac{1}{T}\sum_{t=1}^{T}q_t$ and $\bar{p} := \frac{1}{T} \sum_{t=1}^{T}p_t$. Then, for any $\delta \in (0,\tfrac{1}{2})$:
\small$$\bP(\sup_{t\leq T}\|M_t\| \geq g(T,\delta) \sqrt{T}) \leq \delta$$\normalsize
  Where $g(T,\delta) = C_M\left[\sqrt{\bar{q}} + \frac{\bar{p}\sqrt{T}}{\Gamma} + \frac{\Gamma}{\sqrt{T}}\log(\tfrac{K}{\delta})\right]$ and $K = \ln \ln((\frac{\sqrt{\bar{q}T}}{\Gamma}+1)\log (d+1)) + C_M$ for some universal constant $C_M$
\end{theorem}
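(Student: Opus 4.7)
The plan is a three-stage argument: (i) a baseline PAC--Bayes bound obtained by lifting scalar Freedman to the vector setting via the Donsker--Varadhan (DV) variational principle, (ii) an iterative bootstrap refinement that replaces the Catoni-type $\log(1/\delta)$ factor attached to $\mathrm{tr}(\Sigma)$ by one attached only to $\|\Sigma\|$, and (iii) a time-uniform maximal inequality to control the supremum over $t \le T$.

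For Stage~1, fix $v \in \bR^d$ and observe that $\langle M_t, v\rangle$ is a scalar martingale with increments bounded by $\Gamma \|v\|$ and conditional variance $v^{T}\Sigma_t v$. Freedman's inequality therefore yields, for $\lambda \|v\| \le 1/\Gamma$, an exponential supermartingale of the form $\bE[\exp(\lambda \langle M_T, v\rangle - c\lambda^2 v^T (\sum_t \Sigma_t) v)] \le 1$. To convert this into control on $\|M_T\|$, I would integrate against a Gaussian prior $\pi = \mathcal{N}(0, \sigma^2 I_d)$ in $v$, swap expectations by Fubini, evaluate the resulting Gaussian integral in closed form (producing a term proportional to $\sigma^2 \|M_T\|^2$ together with an isotropic trace correction scaling like $\sigma^2 \bar{q}T$), and apply the DV inequality to substitute an arbitrary posterior $\nu_w = \mathcal{N}(w, \sigma^2 I_d)$ at the cost of $\mathrm{KL}(\nu_w\|\pi) = \|w\|^2/(2\sigma^2)$. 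Taking a supremum of $w$ over a $1/\sqrt{d}$-net of the sphere (costing $\log(d+1)$) yields a first-cut inequality of the schematic form $\|M_T\|^2 \lesssim \sigma^2 \bar{q}T + (\bar{p}T + R^2)/\sigma^2 + \lambda^{-2}(\log(1/\delta) + \log d)$, valid on the event $\|M_T\| \le R$.

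Stage~2 is the novel ingredient. The first-cut bound depends on an \emph{a priori} radius $R$ controlling the posterior shift $\|w\|$; taking $R$ as large as $\sqrt{\bar{q}T}$ already reproduces Catoni's rate. The refinement then iterates: starting from the trivial $R_0 \asymp \Gamma T$, I would define $R_{k+1}$ by plugging $R_k$ into the Stage~1 inequality with $\sigma_k^2 \asymp R_k^2 / \log(K/\delta)$ chosen optimally against the KL penalty. Under this choice the isotropic contribution contracts to $\sqrt{\bar{q}T}$ while the fluctuation term separates into one piece proportional to $\bar{p}T/\Gamma$ (from the mean-shift $w$) and one proportional to $\Gamma \log(K/\delta)$ (from Freedman's Bernstein regime). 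After $O(\log\log)$ contractions, $R_k$ reaches the fixed point $R_\star \asymp \sqrt{\bar{q}T} + \bar{p}T/\Gamma + \Gamma \log(K/\delta)$, which is exactly $g(T,\delta)\sqrt{T}$. Allocating $\delta_k \asymp \delta \cdot 2^{-k}$ to each bootstrap step and union-bounding produces the $\ln\ln(\cdot)$ term absorbed into $K$.

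For Stage~3, the time-uniform supremum can be handled either by running the entire PAC--Bayes/DV integration on the Doob-maximal exponential supermartingale $\sup_{t \le T}\exp(\lambda \langle M_t, v\rangle - c\lambda^2 v^T(\sum_{s \le t}\Sigma_s) v)$, or by dyadic peeling over $t \in [2^j, 2^{j+1}]$; in either case the extra $\log T$ is already absorbed into the $\ln\ln$ budget inside $K$. The main obstacle I anticipate is executing Stage~2 cleanly: one must track the anisotropic Gaussian integral so that the conditional variance contribution $v^T \Sigma_t v$ decomposes into a trace piece (from the isotropic part of the prior) and an operator-norm piece (from the mean shift $w$), and then verify that the map $R_k \mapsto R_{k+1}$ is a genuine contraction at every iterate rather than merely asymptotically. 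Balancing $\sigma_k$, $\lambda_k$, and the failure budgets $\delta_k$ so that the refined trace term is not multiplied by $\log(K/\delta)$, while keeping $\sum_k \delta_k \le \delta$, is the technically delicate heart of the argument; the rest (Freedman, Gaussian integrals, dyadic union bound) is essentially routine once the recursion is in place.
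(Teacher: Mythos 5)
Your overall architecture mirrors the paper's: a baseline bound, a Donsker--Varadhan/PAC--Bayes refinement, a doubly-exponential contraction to the fixed point $\sqrt{\bar{q}T}+\bar{p}T/\Gamma+\Gamma\log(K/\delta)$ over $O(\log\log)$ rounds with a split failure budget, and a stopping-time/maximal argument for uniformity in $t$ (your Doob-maximal option matches the paper's optional-stopping device; the dyadic-peeling alternative would leak an extra $\log T$ into the union bound and prove a slightly weaker $K$). However, Stage~1 as written contains a step that fails, and the failure also obscures the mechanism that actually forces the iteration. You build the exponential supermartingale direction-by-direction via scalar Freedman, which is valid only for $\lambda\|v\|\lesssim 1/\Gamma$, and then integrate over a Gaussian prior in $v$ by Fubini. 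The prior has unbounded support, so the per-direction supermartingale property is unavailable on a set of positive prior mass, and in the large-$\lambda\|v\|\Gamma$ regime the MGF correction grows like $\exp\bigl(c\,e^{\lambda\Gamma\|v\|}\bigr)$, which is not integrable against a Gaussian; if you instead shrink $\sigma$ so the constraint holds on the prior's bulk, you need $\lambda\sigma\sqrt{d}\lesssim 1/\Gamma$, which reinjects dimension into the rate. The paper's proof is built precisely to avoid this: it mixes first, using $\bE_{\theta\sim\cN(0,\vI)}e^{\gamma\langle M_t,\theta\rangle}=e^{\gamma^2\|M_t\|^2/2}$ exactly, and then Theorem~\ref{thm:sup_mart} shows this mixed quantity, multiplied by the good-event indicator, is itself a supermartingale under only the dimension-free conditions $\gamma\le 1/\Gamma$ and $\gamma^2\lesssim 1/(\Gamma g\sqrt{T})$.

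Relatedly, your accounting attributes the operator-norm piece to the posterior mean shift $w$, which is a prover-chosen parameter independent of any radius; if that were the whole story, your Stage~1 inequality would deliver the final rate in a single application and no bootstrap would be needed, and indeed the $R^2$ in your schematic inequality appears without a derivation. In the paper, the radius enters through the cross term $\langle \vv_t, M_{t-1}\rangle$ in the expansion of $\|M_t\|^2$ after mixing: its conditional variance is at most $p_t\|M_{t-1}\|^2\le p_t g^2 T$ on the good event, and controlling its MGF is what requires $\gamma^2\lesssim 1/(\Gamma g\sqrt{T})$, which produces the geometric-mean recursion $(g')^2\lesssim \bar{q}+gG$ of Theorem~\ref{thm:iteration} with $G=\bar{p}\sqrt{T}/\Gamma+\tfrac{\Gamma}{\sqrt{T}}\log(K/\delta)$ --- exactly the contraction you want to iterate. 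Until you either adopt this mix-first, good-event-indexed supermartingale or supply another derivation in which the current radius genuinely multiplies the operator-norm term, the map $R_k\mapsto R_{k+1}$ you propose to iterate is asserted rather than established; your fixed point and iteration count are correct, but the contraction itself is the missing piece.
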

To prove this result, we first use Freedman's inequality \cite{freedman1975tail, matrix_freedman_tropp} to obtain a coarse-grained $g_0$ such that $\bP(\sup_{t}\|M_t\| > g_0\sqrt{T}) \leq \delta$. We then iteratively refine this inequality via a PAC Bayesian \cite{catoni2018dimension,chugg2023time,chugg2023unified} argument to show that  $\bP(\sup_{t}\|M_t\| > g_{k+1}\sqrt{T}\ | \ \cB_k) \leq \delta$, where $\cB_k = \{\sup_t\|M_t\| \leq g_k\sqrt{T}\}$ and $g^2_{k+1} \lesssim \tr(\Sigma) + g_k\sqrt{\|\Sigma_2\|\log(\nicefrac{1}{\delta})}$. This iterative refinement strategy, proved in Theorem~\ref{thm:iteration} is one of the main technical contributions of our work, which could be of independent interest. We arrive at Theorem \ref{thm:final_bound_main_text} after $K \approx \log \log (T\log d)$ refinement steps.

\textbf{Remark} Theorem \ref{thm:final_bound_main_text} is used to control the influence of the fluctuations introduced by clipped SGD. To this end, let $\vv_t$ be the centered version of $\clip_\Gamma(\vg_t)$, ensuring $\|\vv_t\| \leq 2\Gamma$ almost surely. Suppose $\Sigma_t = \Sigma$ for some fixed $\Sigma$ and let $\Gamma = \sqrt{\nicefrac{\|\Sigma\|T}{\log(\tfrac{K}{\delta})}}$. Then, with probability $1-\delta$: $\sup_{t\leq T}\|M_t\| \lesssim \sqrt{T\tr(\Sigma) + T\|\Sigma\|\log(\tfrac{K}{\delta})}$. This is sharper than the $\sup_{t\leq T}\|M_t\| \lesssim \sqrt{T\tr(\Sigma)\log(\tfrac{d}{\delta})}$ guarantee implied by the Matrix Freedman inequality \cite[Corollary 1.6] {matrix_freedman_tropp}.

\section{Proof Sketch}
We sketch our proof technique for the case of smooth convex functions considered in \ref{thm:SGDcl-smooth-cvx}. We consider the SGD iterations $\vx_1,\dots,\vx_T$ with clipped stochastic gradient at time $t$ denoted by $\mathsf{clip}_{\Gamma}(\vg_t) = \nabla F(\vx_t) + \vv_t + \vb_t$. Here, $\vv_t$ is the zero mean `variance' such that $\bE[ \vv_t|\vx_t] = 0$ and $\|\vv_t\| \leq 2\Gamma$ almost surely. $\vb_t$ is the non-zero mean `bias' which arises due to clipping. Using the usual analysis of SGD for convex functions (see for instance \cite{jain2021making}), we consider:

$\|\vx_{t+1} - \vx^*\|^2 \leq \|\vx_{t} - \vx^*\|^2 - 2\eta_t [F(\vx_t)-F(\vx^*)] -2\eta_t\langle \vv_t + \vb_t, \vx_t - \vx^*  \rangle + \eta_t^2 \|\nabla F(\vx_t) + \vv_t + \vb_t\|^2$ 

Considering constant step-sizes, we sum the inequalities for each $t$ to conclude:

\begin{align}\frac{1}{T}\sum_{t=1}^{T} F(\vx_t)- F(\vx^*) &\leq \frac{1}{2\eta T}\|\vx_1-\vx^{\star}\|^2 + \frac{1}{T}\sum_{t=1}^{T}\langle \vv_t+\vb_t, \vx_t - \vx^* \rangle \nonumber \\ &\quad + \frac{3\eta}{2T}\sum_{t}[\|\nabla F(\vx_t)\|^2 + \|\vv_t\|^2 + \|\vb_t\|^2 ]
\end{align}

 The 'random' terms to bound compared to gradient descent here are $\sum_t \langle \vv_t + \vb_t, \vx_t - \vx^*  \rangle$ and $\sum_t \| \vv_t\|^2 + \|\vb_t\|^2$
Lemma~\ref{lem:sgdcl-smooth-cvx-itr-bound} shows that $\|\vx_t-\vx^{*}\| \leq 2\|\vx_1-\vx^{*}\|$ with high probability. Under this event, we bound $\sum_{t}\langle \vv_t,\vx_t - \vx^*\rangle$ using the standard Freeman's inequality and $\|\nabla F(\vx_t)\|^2$ by using smoothness and the fact that $\nabla F(\vx^\star) = 0$. The bias of the estimator $\|\vb_t\|$ is bound using arguments similar to \cite{catoni2018dimension} (see Lemma~\ref{lem:clipped-moment-control}). The main improvement of our method is given by our method of bounding $\frac{1}{T}\sum_t\|\vv_t\|^2$. We show by an application of Theorem~\ref{thm:final_bound_main_text} that $\frac{1}{T}\sum_t\|\vv_t\|^2 \lesssim \tr(\Sigma) + \sqrt{\tr(\Sigma)\|\Sigma\|_2}\log(\tfrac{\log T}{\delta})$ with probability at-least $1-\delta$ whenever the clipping factor $\Gamma$ is appropriately chosen. Choosing the step size $\eta$ appropriately gives us the result in Theorem~\ref{thm:SGDcl-smooth-cvx}.

\section{Conclusion and Limitations}\label{sec:conclusion}
Our work obtained nearly subgaussian rates for heavy-tailed SCO using clipped SGD by developing a fine-grained iterative refinement strategy for martingale concentration. As corollaries, we obtained state-of-the-art streaming estimators for various heavy tailed statistical problems. We note Clipped-SGD is widely used to optimize neural networks with highly nonconvex landscapes, which is currently outside the scope of our work. Nevertheless, we believe our techniques could be useful for providing sharp high-probability guarantees for non-convex losses. Our bounds are currently of the form $\sqrt{\frac{d + \sqrt{d}\ln(\nicefrac{\ln(T)}{\delta})}{T}}$, which is suboptimal compared to the tight subgaussian rate of $\sqrt{\frac{d + \ln(\nicefrac{1}{\delta})}{T}}$. Further research is required to understand if it is possible to obtain truly subgaussian rates with clipped mean type estimators. Another notable suboptimality of our result is the $\ln(\nicefrac{\ln(T)}{\delta})$ dependence on the confidence level (as opposed to the typical $\ln(\nicefrac{1}{\delta})$ scaling). However, this is not a major drawback as our results continue to significantly outperform prior works unless $T \gg e^{\exp(\sqrt{d} - 1)\ln(1/\delta)}$ (which is an impractical regime). This drawback arises due to the $\ln(\ln(T))$ iterations of our iterative refinement technique and we believe it can be removed via more sophisticated martingale concentration arguments. Our work lays the foundation for several interesting avenues for future work including the analysis of heavy tailed statistical estimation under bounded $p^{\mathsf{th}}$ moment assumptions (for $p < 2$) and the development of parameter free statistical estimators that do not require knowledge of problem-dependent parameter such as $\|\Sigma\|, \delta$ etc. (or their respective upper bounds). Deriving anytime valid guarantees for clipped SGD using our techniques is also an interesting future direction. 
\newpage

\bibliography{refs}

\newpage
\tableofcontents
\newpage

\appendix
\section{Preliminaries}
In this section, we collect some preliminary concentration results which will be used in the future sections. For the following lemma, we refer to Exercise 2.8.5 in \cite{vershynin2018high}.
\begin{lemma}\label{lem:bernstein_lem}
    Suppose $X$ is a real valued random variable such that $|X|\leq \Gamma$ almost surely, $\bE X = 0$ and $\bE X^2 = \nu$. Then, for any $\lambda \in \bR$ such that $|\lambda| \leq \frac{1}{2\Gamma}$, the following holds:
$$\bE\exp(\lambda X) \leq \exp(\lambda^2 \nu)$$
\end{lemma}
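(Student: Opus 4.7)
The plan is to use the standard Taylor series expansion of $\exp(\lambda X)$ and leverage the boundedness condition $|X|\leq \Gamma$ together with the variance bound to uniformly control all higher-order moments in terms of $\nu$. This is a classical Bernstein-style MGF bound, so the argument is a short direct computation rather than anything structural.

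First, I would expand $\bE\exp(\lambda X) = 1 + \lambda\,\bE X + \sum_{k=2}^{\infty} \tfrac{\lambda^k \bE X^k}{k!}$. The zero-mean assumption eliminates the linear term. For $k \geq 2$, I would use $|X|\leq \Gamma$ pointwise to write $|X|^k = |X|^{k-2}\cdot X^2 \leq \Gamma^{k-2} X^2$, and take expectations to get $|\bE X^k| \leq \Gamma^{k-2}\nu$. Substituting back, and pulling out a factor $\nu/\Gamma^2$, the series becomes $\bE\exp(\lambda X) \leq 1 + \tfrac{\nu}{\Gamma^2}\sum_{k=2}^\infty \tfrac{|\lambda\Gamma|^k}{k!} = 1 + \tfrac{\nu}{\Gamma^2}\bigl(e^{|\lambda\Gamma|} - 1 - |\lambda\Gamma|\bigr)$.

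The constraint $|\lambda|\leq \tfrac{1}{2\Gamma}$ guarantees $x := |\lambda\Gamma| \in [0,\tfrac{1}{2}]$. On this range I would verify the elementary inequality $e^x - 1 - x \leq x^2$; this follows from $g(x) := 1 + x + x^2 - e^x$ having $g(0)=g'(0)=0$ and $g''(x) = 2 - e^x \geq 0$ on $[0,\ln 2]$, so $g$ is increasing on $[0,\tfrac{1}{2}] \subset [0,\ln 2]$. Plugging this bound in yields $\bE\exp(\lambda X) \leq 1 + \lambda^2 \nu$.

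Finally, I would apply $1+y \leq e^y$ with $y = \lambda^2 \nu$ to conclude $\bE\exp(\lambda X) \leq \exp(\lambda^2 \nu)$, as required. There is no real obstacle here: the only point requiring any care is the range check $x \leq 1/2$ for the inequality $e^x - 1 - x \leq x^2$, which is exactly why the hypothesis $|\lambda|\leq \tfrac{1}{2\Gamma}$ appears in the statement.
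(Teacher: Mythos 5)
Your proof is correct: the moment bound $|\bE X^k|\leq \Gamma^{k-2}\nu$, the elementary inequality $e^x-1-x\leq x^2$ on $[0,\tfrac{1}{2}]$, and the final step $1+\lambda^2\nu\leq e^{\lambda^2\nu}$ all check out, and the interchange of expectation and series is harmless since $|X|\leq\Gamma$ almost surely. The paper itself gives no proof and simply defers to Exercise 2.8.5 of Vershynin, whose intended solution is exactly this Taylor-expansion argument, so you have reproduced the standard route.
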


Consider a $\bR^d$ valued martingale $(M_t)_{t=0}^{T}$  with respect to the filtration $(\cF_t)_{t=0}^{T}$ such that $M_0 = 0$ almost surely. We consider the martingale difference sequence $\vv_t := M_t - M_{t-1}$ for $t\geq 1$. Clearly, we must have:
$$M_t = \sum_{s=1}^{t} \vv_s$$

\begin{definition}\label{def:unif_conc}
We say that the martingale $M_t$ satisfies $(g,T,\delta)$ uniform concentration if:
$$\bP(\sup_{0\leq t\leq T}\|M_t\| > g\sqrt{T}) \leq \delta$$
\end{definition}

Assume that for fixed $\Gamma >0$ and $\Sigma \in \bR^{d\times d}$ that $\|\vv_s\| \leq \Gamma$ almost surely and $\bE[\vv_t\vv_t^{\intercal}|\mathcal{F}_{t-1}] =:  \Sigma_t$. Suppose $\tr(\Sigma_t) \leq q_t$ and $\|\Sigma_t\|_2 \leq p_t$ almost surely for some non-random constants $p_t,q_t$. We state a high dimensional version of Freedman's inequality \cite{freedman1975tail, matrix_freedman_tropp} below which follows from From Corollary 1.3 of \cite{matrix_freedman_tropp}, we have
\begin{theorem}\label{thm:first_bound}
Suppose $M_t$ satisfies the assumptions above. Let $\bar{q} := \frac{1}{T}\sum_{s=1}^{T}q_t$ the following is true:  

$$\bP(\sup_{0\leq t\leq T}\|M_t\| > \alpha ) \leq (d+1)\exp(-\frac{\alpha^2/2}{\bar{q}T+\frac{\Gamma \alpha}{3}})$$

That is, for any $\delta > 0$, the martingale $(M_t)_{t\leq T}$ obeys $(g_0(\delta),T,\delta)$ uniform concentration, where $g_0(\delta) = \frac{2\Gamma}{3\sqrt{T}}\log(\tfrac{d+1}{\delta}) + \sqrt{2\bar{q}\log(\tfrac{d+1}{\delta})}$
\end{theorem}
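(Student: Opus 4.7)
The plan is to deduce this vector Freedman-type bound from the matrix Freedman inequality of \cite{matrix_freedman_tropp} via the standard Hermitian dilation trick, and then invert the resulting Bennett-type tail to extract the $(g_0(\delta),T,\delta)$ uniform concentration statement. Since the paper itself points to \cite[Corollary 1.3]{matrix_freedman_tropp}, the task is really to bookkeep the dilation and verify the required quadratic-variation bound.

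First I would introduce the dilation $\mathcal{D}(\vv) := \bigl(\begin{smallmatrix} 0 & \vv \\ \vv^\intercal & 0 \end{smallmatrix}\bigr) \in \bR^{(d+1) \times (d+1)}$ and form the symmetric matrix martingale $Y_t := \mathcal{D}(M_t) = \sum_{s \leq t} \mathcal{D}(\vv_s)$ adapted to $(\mathcal{F}_t)_{t \leq T}$. Two properties of the dilation carry the reduction: $\lambda_{\max}(\mathcal{D}(\vv)) = \|\vv\|$, so $\sup_t \|M_t\| = \sup_t \lambda_{\max}(Y_t)$ and the increments obey $\|\mathcal{D}(\vv_s)\|_2 = \|\vv_s\| \leq \Gamma$ almost surely. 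A direct block computation gives $\mathcal{D}(\vv_s)^2 = \mathrm{diag}(\vv_s\vv_s^\intercal,\ \|\vv_s\|^2)$, and taking conditional expectations yields $\bE[\mathcal{D}(\vv_s)^2 \mid \mathcal{F}_{s-1}] = \mathrm{diag}(\Sigma_s,\ \tr(\Sigma_s))$. Hence the predictable quadratic variation satisfies
\begin{equation*}
W_T \;:=\; \sum_{s=1}^{T} \bE[\mathcal{D}(\vv_s)^2 \mid \mathcal{F}_{s-1}] \;=\; \mathrm{diag}\Bigl(\sum_{s=1}^{T}\Sigma_s,\ \sum_{s=1}^{T}\tr(\Sigma_s)\Bigr),
\end{equation*}
so that $\lambda_{\max}(W_T) \leq \sum_{s=1}^{T} \tr(\Sigma_s) \leq T\bar{q}$ almost surely, where the first inequality uses $\|\sum_s \Sigma_s\|_2 \leq \tr(\sum_s \Sigma_s)$ for positive semidefinite matrices.

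With these ingredients in hand, I would invoke \cite[Corollary 1.3]{matrix_freedman_tropp} applied to the $(d+1)$-dimensional symmetric martingale $Y_t$ with uniform increment bound $R = \Gamma$ and almost-sure quadratic-variation bound $\sigma^2 = T\bar{q}$, which yields exactly
\begin{equation*}
\bP\Bigl(\sup_{0 \leq t \leq T} \|M_t\| \geq \alpha\Bigr) \;=\; \bP\Bigl(\sup_{t \leq T} \lambda_{\max}(Y_t) \geq \alpha\Bigr) \;\leq\; (d+1)\exp\Bigl(-\tfrac{\alpha^2/2}{T\bar{q} + \Gamma\alpha/3}\Bigr).
\end{equation*}

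To obtain the uniform concentration form, I would set the right-hand side equal to $\delta$ and solve for $\alpha$. Writing $L := \log(\tfrac{d+1}{\delta})$, the inequality $\alpha^2 \geq 2T\bar{q}L + \tfrac{2\Gamma\alpha L}{3}$ is implied by the AM-GM-style splitting $\alpha = \sqrt{2T\bar{q}L} + \tfrac{2\Gamma L}{3}$, since plugging this choice in and expanding gives $\alpha^2 - \tfrac{2\Gamma\alpha L}{3} = 2T\bar{q}L + \tfrac{2\Gamma L}{3}\sqrt{2T\bar{q}L} \geq 2T\bar{q}L$. Dividing by $\sqrt{T}$ recovers the stated expression for $g_0(\delta)$. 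The whole argument is routine modulo the dilation step; the only genuinely thoughtful point is verifying that $\lambda_{\max}(W_T) \leq T\bar{q}$ (rather than the weaker bound involving $\|\Sigma_s\|_2$), which is what forces the trace-based quantity $\bar{q}$ rather than the operator-norm quantity $\bar{p}$ into the Freedman bound and is precisely why the subsequent PAC-Bayes iterative refinement in Theorem~\ref{thm:final_bound_main_text} is needed to separate these two quantities.
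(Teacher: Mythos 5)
Your proposal is correct and follows essentially the same route as the paper, which obtains this result by directly invoking the matrix Freedman inequality of \cite{matrix_freedman_tropp}; your Hermitian dilation bookkeeping (in particular the observation that $\lambda_{\max}(W_T)$ is controlled by $\sum_s \tr(\Sigma_s) \leq T\bar q$, which is why the trace quantity $\bar q$ rather than $\bar p$ appears) and the inversion $\alpha = \sqrt{2T\bar q \log(\tfrac{d+1}{\delta})} + \tfrac{2\Gamma}{3}\log(\tfrac{d+1}{\delta})$ are exactly the steps the paper leaves implicit, and both check out.
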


The following inequality is a corollary of Theorem~\ref{thm:first_bound}.

\begin{lemma}
 Let $g_t \in \bR^d$ be $\mathcal{F}_{t-1}$ measurable. Then for some constant $c_1 > 0$, we have: 
\begin{equation}\label{eq:martingale_bernstein}
    \bP(\cup_{t=1}^{T}\{|\sum_{s=1}^{t}\langle g_s,\vv_s\rangle| \geq \alpha \} \cap_{s\leq t}\{\|g_s\| \leq A_s\}) \leq 2\exp(-\tfrac{\alpha^2}{ \Gamma A \alpha+ c_1\sum_{t=1}^{T}p_tA_t^2  })
\end{equation}
Where $A = \sup_{1\leq t\leq T}A_t$

\end{lemma}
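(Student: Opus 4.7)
The inequality is a Bernstein-type bound for the scalar predictable process $\sum_s \langle g_s, \vv_s \rangle$, restricted to the event that the predictable multipliers $g_s$ stay bounded. My plan is to reduce to a one-dimensional Freedman inequality (which is just Theorem~\ref{thm:first_bound} specialized to $d=1$) via a standard truncation/stopping argument, so that the extra indicator $\mathbbm{1}\{\|g_r\|\leq A_r \ \forall r\leq s\}$ becomes an integral part of a new martingale difference sequence with good boundedness and variance bounds.

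\textbf{Step 1: construct the stopped scalar martingale.} Since $g_s$ is $\mathcal{F}_{s-1}$-measurable, the event $E_s := \{\|g_r\|\leq A_r \ \forall r\leq s\}$ is $\mathcal{F}_{s-1}$-measurable as well. Define $Y_s := \langle g_s, \vv_s\rangle \mathbbm{1}_{E_s}$. Because $\mathbbm{1}_{E_s}$ and $g_s$ are both $\mathcal{F}_{s-1}$-measurable and $\bE[\vv_s \mid \mathcal{F}_{s-1}] = 0$, the sequence $(Y_s)$ is a martingale difference sequence with respect to $(\mathcal{F}_s)$. Let $N_t := \sum_{s=1}^t Y_s$.

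\textbf{Step 2: bound $|Y_s|$ and $\bE[Y_s^2 \mid \mathcal{F}_{s-1}]$.} On $E_s$ one has $|Y_s| \leq \|g_s\|\|\vv_s\| \leq A_s \Gamma \leq A\Gamma$ almost surely, and off $E_s$ we have $Y_s = 0$; so $|Y_s|\leq A\Gamma$ always. For the conditional second moment, using $\bE[\vv_s\vv_s^\intercal \mid \mathcal{F}_{s-1}] = \Sigma_s$ with $\|\Sigma_s\|_2 \leq p_s$,
$$\bE[Y_s^2 \mid \mathcal{F}_{s-1}] = \mathbbm{1}_{E_s}\, g_s^\intercal \Sigma_s g_s \leq \|g_s\|^2 \|\Sigma_s\|_2 \mathbbm{1}_{E_s} \leq A_s^2 p_s.$$
Hence the predictable quadratic variation $\langle N \rangle_T \leq \sum_{s=1}^T A_s^2 p_s$ almost surely.

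\textbf{Step 3: apply scalar Freedman and translate back.} The one-dimensional Freedman inequality, which is Theorem~\ref{thm:first_bound} with $d=1$ applied to $(Y_s)$, gives
$$\bP\Bigl(\sup_{t\leq T} |N_t| \geq \alpha\Bigr) \leq 2\exp\!\left(-\frac{\alpha^2/2}{\sum_{s=1}^T p_s A_s^2 + \tfrac{A\Gamma \alpha}{3}}\right),$$
for any $\alpha > 0$. Finally I note the deterministic inclusion: on $E_t$ we have $Y_s = \langle g_s, \vv_s\rangle$ for every $s\leq t$ (because $E_s \supseteq E_t$ for $s\leq t$), and hence $N_t = \sum_{s\leq t} \langle g_s, \vv_s\rangle$. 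Therefore
$$\bigcup_{t=1}^T \Bigl\{\Bigl|\sum_{s=1}^t \langle g_s, \vv_s\rangle\Bigr| \geq \alpha\Bigr\} \cap E_t \ \subseteq\ \Bigl\{\sup_{t\leq T} |N_t| \geq \alpha\Bigr\},$$
and the claimed bound follows by absorbing the $1/2$ and $1/3$ constants into a universal $c_1$.

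\textbf{Main obstacle.} There is no genuinely hard step; the only subtlety is verifying that inserting the indicator $\mathbbm{1}_{E_s}$ preserves the martingale difference property (which uses the $\mathcal{F}_{s-1}$-measurability of $g_s$) and that the indicator can be "removed" on the target event to recover the untruncated partial sum. Once this bookkeeping is in place, the bound is just scalar Freedman applied to $(Y_s)$.
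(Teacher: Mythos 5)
Your proof is correct and follows exactly the route the paper intends: the paper states this lemma without proof as a direct corollary of the Freedman-type bound in Theorem~\ref{thm:first_bound}, and your argument (predictable indicator $\mathbbm{1}_{E_s}$ to form the stopped scalar martingale, the bounds $|Y_s|\leq A\Gamma$ and $\bE[Y_s^2\mid\cF_{s-1}]\leq p_sA_s^2$, then scalar Freedman plus the inclusion of events) supplies precisely the standard bookkeeping that is being left implicit. Absorbing the $1/2$ and $1/3$ into $c_1$ is fine, so nothing is missing.
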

In addition, we also use the following scalar version of Freedman's inequality
\begin{lemma}[Freedman's Inequality]
\label{lem:scalar-freedman}
Let $h_1, h_2, \dots, h_T$ be a $\cF_t$ adapted martingale difference sequence such that $\bE[h_t | \cF_{t-1}] = 0$, $\bE[h^2_t | \cF_{t-1}] = \sigma^2_t$ and $\|h_t\| \leq \tau$. Then, for any $\delta \in (0, 1)$, the following holds with probability at least $1 - \delta$:
\begin{align*}
    \sum_{s=1}^{t} h_s \leq 2 \sqrt{\ln(\nicefrac{1}{\delta})\sum_{s=1}^{t} \sigma^2_s} + 2 \tau \ln(\nicefrac{1}{\delta})
\end{align*}
\end{lemma}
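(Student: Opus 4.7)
The plan is to prove this scalar Freedman-type bound via the classical exponential supermartingale argument, combined with a peeling step to optimize over the scale parameter post-hoc.

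First, I would use the Bernstein MGF control from Lemma~\ref{lem:bernstein_lem}. Applied conditionally on $\mathcal{F}_{t-1}$ to $h_t$, which satisfies $|h_t|\leq\tau$ almost surely, $\bE[h_t|\mathcal{F}_{t-1}]=0$ and $\bE[h_t^2|\mathcal{F}_{t-1}]=\sigma_t^2$, it yields $\bE[\exp(\lambda h_t)\mid \mathcal{F}_{t-1}]\leq \exp(\lambda^2 \sigma_t^2)$ for every $|\lambda|\leq 1/(2\tau)$. Writing $S_t:=\sum_{s=1}^t h_s$ and $V_t:=\sum_{s=1}^t \sigma_s^2$, iterating this conditional bound via the tower property shows that, for each fixed $\lambda\in(0,1/(2\tau)]$, the process $Z_t(\lambda):=\exp(\lambda S_t - \lambda^2 V_t)$ is a nonnegative supermartingale with $\bE Z_0(\lambda)=1$.

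Next, I would apply Ville's maximal inequality to get, for each fixed $\lambda$ and $\delta'\in(0,1)$, $\bP\bigl(\sup_{t\leq T}Z_t(\lambda)\geq 1/\delta'\bigr)\leq \delta'$. Rearranging, with probability at least $1-\delta'$ the bound $S_t \leq \lambda V_t + \lambda^{-1}\ln(1/\delta')$ holds simultaneously for all $t\leq T$. If $V_t$ were deterministic, the choice $\lambda=\min\{1/(2\tau),\,\sqrt{\ln(1/\delta')/V_t}\}$ would balance the two terms and yield exactly $S_t\leq 2\sqrt{V_t\ln(1/\delta')}+2\tau\ln(1/\delta')$. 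The work is in making this optimization legitimate when $V_t$ is random.

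To handle the data-dependence, I would use a geometric peeling argument: fix a grid $\lambda_k = 2^{-k}/(2\tau)$ for $k=0,1,\dots,K$ with $K$ large enough that the grid covers all relevant scales (i.e.\ $\lambda_K^{-1}\geq \tau\sqrt{T/\ln(1/\delta)}$), apply the supermartingale bound at each $\lambda_k$ with confidence budget $\delta_k$ summing to $\delta$, and union bound. For any realization of $V_t$, pick the $k$ for which $\lambda_k$ lies within a factor of two of the ideal choice; this recovers the stated inequality up to absolute constants which are absorbed into the leading factor of $2$.

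The main obstacle is ensuring the peeling does not introduce an extraneous $\ln\ln T$ factor absent from the stated bound. The standard fix is either (i) to peel over doubling scales of $V_t$ rather than $\lambda$, allocating $\delta_k\propto \delta\cdot 2^{-k}$ so the union bound sums to $O(\delta)$; or (ii) to bypass peeling entirely via a stopping-time argument, setting $\tau^*:=\inf\{t:S_t > 2\sqrt{V_t\ln(1/\delta)}+2\tau\ln(1/\delta)\}$ and applying Ville's inequality to $Z_{t\wedge\tau^*}(\lambda)$ with the data-dependent choice $\lambda=\min\{1/(2\tau),\sqrt{\ln(1/\delta)/V_{\tau^*}}\}$, which is rendered admissible by optional stopping. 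Either route delivers the claimed inequality, with the stopping-time route giving the cleanest constants.
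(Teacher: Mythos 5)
The paper does not actually prove Lemma~\ref{lem:scalar-freedman}: it is quoted as the classical scalar Freedman inequality, whose standard proof is precisely the exponential-supermartingale route you begin with. Your first half is correct and matches that standard argument: applying Lemma~\ref{lem:bernstein_lem} conditionally gives $\bE[\exp(\lambda h_t)\mid\cF_{t-1}]\le\exp(\lambda^2\sigma_t^2)$ for $|\lambda|\le 1/(2\tau)$, hence (with your notation $S_t=\sum_{s\le t}h_s$, $V_t=\sum_{s\le t}\sigma_s^2$) the process $Z_t(\lambda)=\exp(\lambda S_t-\lambda^2V_t)$ is a nonnegative supermartingale, Ville's inequality yields $S_t\le\lambda V_t+\lambda^{-1}\ln(1/\delta)$ simultaneously for all $t$ with probability $1-\delta$, and for a \emph{deterministic} variance proxy the choice $\lambda=\min\{1/(2\tau),\sqrt{\ln(1/\delta)/V}\}$ gives exactly the stated constants. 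In every place the paper invokes this lemma, $\sigma_s^2$ is bounded by a deterministic sequence before the lemma is applied, so this fixed-$\lambda$ argument already suffices and none of the additional machinery is needed.

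The gap is in that additional machinery, i.e.\ in how you propose to legitimize a data-dependent $\lambda$. Fix (ii) is unsound as stated: optional stopping controls $\bE[Z_{\tau^*\wedge T}(\lambda)]$ only for a $\lambda$ that is fixed (or $\cF_0$-measurable); choosing $\lambda$ as a function of $V_{\tau^*}$ destroys the supermartingale property, and handling such self-normalized choices genuinely requires a method-of-mixtures or stitching argument, which changes the form of the bound. Fix (i) does not recover the claimed constants either: peeling over doubling scales of $V_t$ with budget $\delta_k\propto\delta\,2^{-k}$ replaces $\ln(1/\delta)$ on scale $k$ by $\ln(1/\delta)+k\ln 2$ with $k\approx\log_2(V_t/v_0)$, producing an additive $\sqrt{V_t\log(V_t/v_0)}$ (iterated-logarithm type) term and an inflated $\tau\log(V_t/v_0)$ term that cannot be absorbed into the leading factor $2$; the uniform allocation $\delta_k=\delta/K$ instead reintroduces the $\ln\ln T$ you were trying to avoid. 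The clean resolution is to state and prove the lemma for deterministic upper bounds on the conditional variances (which is all that is used downstream, and which your supermartingale-plus-optimization step already proves), or else to accept a weaker constant or an extra logarithmic factor if the running random variance is truly wanted on the right-hand side.
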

The following lemma, which bounds the moments of a clipped random vector, is crucial to our analysis of the bias and variance of the clipped stochastic gradient. 
\begin{lemma}[Moments of a Clipped Random Vector]
\label{lem:clipped-moment-control}
Let $\vz \in \bR^d$ be a random vector sampled from the distribution $P$ with mean $\vm$ and covariance matrix $\vS$. For any $\Gamma > 0$, let $\tilvz = \clip_\Gamma(\vz)$, and let $\tilvm$ and $\tilvS$ denote the mean and covariance of $\tilvz$ respectively, i.e., $\tilvm = \bE[\tilvz]$ and $\tilvS = \bE\left[\left(\tilvz - \tilvm\right)(\tilvz - \tilvm)^T\right]$. Then, the following hold:
\begin{align*}
\|\tilvm - \vm\| &\leq \frac{\sqrt{\|\vS\|_2}}{\Gamma}\left(\|\vm\| + \sqrt{\Tr(\vS)}\right) + \frac{\|\vm\|}{\Gamma^2} \left(\|\vm\|^2 + \Tr(\vS)\right) \\
\|\tilvS\|_2 &\leq \|\vS\|_2 + \frac{\|\vm\|^2}{\Gamma^2} \left(\|\vm\|^2 + \Tr(\vS)\right) \\
\Tr(\tilvS) &\leq \Tr(\vS)
\end{align*}
\end{lemma}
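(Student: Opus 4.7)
The plan is to handle the three bounds in turn, using the structural fact that $\tilvz$ and $\vz$ agree on the event $\cE := \{\|\vz\| \leq \Gamma\}$ and differ only on $\cE^c$, where $\tilvz = (\Gamma/\|\vz\|)\vz$. Markov's inequality immediately gives $\bP(\cE^c) \leq \bE\|\vz\|^2/\Gamma^2 = (\|\vm\|^2 + \Tr(\vS))/\Gamma^2$. Throughout, I will split $\vz = \vm + \vn$ with $\vn := \vz - \vm$ (zero mean, covariance $\vS$) to separate the ``mean'' and ``noise'' contributions to each tail expectation.

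For the bias, I would dualize as $\|\tilvm - \vm\| = \sup_{\|\vv\|=1}|\langle \tilvm - \vm, \vv\rangle|$ and note $\langle \tilvm - \vm, \vv\rangle = \bE[(\Gamma/\|\vz\| - 1)\langle \vz, \vv\rangle \ind{\cE^c}]$. Since the scalar factor is bounded by $1$ on $\cE^c$, it suffices to bound $\bE[|\langle \vz, \vv\rangle|\ind{\cE^c}]$. The $\vm$-piece contributes at most $\|\vm\|\bP(\cE^c) \leq \|\vm\|(\|\vm\|^2 + \Tr(\vS))/\Gamma^2$, which is the second summand of the claim; Cauchy--Schwarz on the $\vn$-piece yields $\sqrt{\bE\langle \vn,\vv\rangle^2}\sqrt{\bP(\cE^c)} \leq \sqrt{\|\vS\|_2}(\|\vm\|+\sqrt{\Tr(\vS)})/\Gamma$, which is the first summand.

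For the trace bound, the cleanest route is symmetrization. Letting $\vz'$ be an independent copy of $\vz$, we have $\Tr(\tilvS) = \tfrac{1}{2}\bE\|\tilvz - \tilvz'\|^2$ and $\Tr(\vS) = \tfrac{1}{2}\bE\|\vz - \vz'\|^2$. Since $\clip_\Gamma$ is precisely the Euclidean projection onto the convex ball $\{\vx:\|\vx\|\leq \Gamma\}$ and projections onto convex sets are non-expansive, $\|\tilvz - \tilvz'\|\leq \|\vz - \vz'\|$ pointwise; taking expectations proves $\Tr(\tilvS) \leq \Tr(\vS)$.

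For the operator-norm bound, I would start from $\tilvS = \bE[\tilvz\tilvz^T] - \tilvm\tilvm^T$. Writing $\tilvz = c(\vz)\vz$ with $c(\vz) := \min(1,\Gamma/\|\vz\|) \in [0,1]$ gives the pointwise PSD domination $\tilvz\tilvz^T \preceq \vz\vz^T$, hence $\bE[\tilvz\tilvz^T] \preceq \vS + \vm\vm^T$ and $\tilvS \preceq \vS + (\vm\vm^T - \tilvm\tilvm^T)$. To control the rank-$2$ correction I would expand $\bE[\langle \tilvz - \vm, \vv\rangle^2]$ through the identity $\tilvz - \vm = c(\vz)(\vz-\vm) - (1-c(\vz))\vm$: the dominant term $\bE[c(\vz)^2\langle \vn,\vv\rangle^2]$ is at most $\vv^T\vS\vv$; the square term $\langle \vm, \vv\rangle^2\bE[(1-c(\vz))^2]$ is at most $\|\vm\|^2\bP(\cE^c)$, which by Markov is exactly the claimed $\|\vm\|^2(\|\vm\|^2 + \Tr(\vS))/\Gamma^2$ correction; and the cross term $-2\langle \vm,\vv\rangle\bE[c(\vz)(1-c(\vz))\langle \vn,\vv\rangle]$ is absorbed into the other two by AM--GM. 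The main obstacle is precisely this cross term: a naive triangle-inequality bound on $\|\vm\vm^T - \tilvm\tilvm^T\|_2$ combined with the first-order bias estimate $\|\vm-\tilvm\|\lesssim(\|\vm\|^2+\Tr(\vS))/\Gamma$ would yield only a $\|\vm\|/\Gamma$ scaling instead of the sharper $\|\vm\|^2/\Gamma^2$, so one must route through the variance expansion above (in the spirit of the moment calculations of \citet{catoni2018dimension}) to recover the second-order rate.
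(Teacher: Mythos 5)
Your bias and trace arguments are correct. The trace bound (symmetrize, then use that $\clip_\Gamma$ is the non-expansive projection onto the ball of radius $\Gamma$) is exactly the paper's proof. The bias bound is a mild variant: you localize on the clipping event $\cE^c=\{\|\vz\|>\Gamma\}$ and use Markov plus Cauchy--Schwarz, whereas the paper bounds $1-\theta(\vz)$, with $\theta(\vz)=\min\{1,\Gamma/\|\vz\|\}$, by $\|\vz\|^p/\Gamma^p$ (taking $p=1$ for the noise piece and $p=2$ for the mean piece) before applying Cauchy--Schwarz; both routes give the stated inequality with the same constants.

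The gap is in the operator-norm bound, concretely in the cross term. After (implicitly) invoking $\bE[\dotp{\vv}{\tilvz-\tilvm}^2]\le\bE[\dotp{\vv}{\tilvz-\vm}^2]$ — a step you should state explicitly, since your opening reduction $\tilvS\preceq\vS+(\vm\vm^T-\tilvm\tilvm^T)$ is abandoned and plays no role — the expansion of $\tilvz-\vm=c(\vz)(\vz-\vm)-(1-c(\vz))\vm$ produces $-2\dotp{\vm}{\vv}\,\bE[c(\vz)(1-c(\vz))\dotp{\vz-\vm}{\vv}]$, and absorbing it "by AM--GM" is not free: weighted AM--GM gives, for every $\lambda>0$,
\begin{align*}
\bE\left[\dotp{\vv}{\tilvz-\vm}^2\right]\le(1+\lambda)\,\vv^T\vS\vv+\left(1+\tfrac{1}{\lambda}\right)\dotp{\vm}{\vv}^2\,\bP(\cE^c),
\end{align*}
which can never have coefficient $1$ on both terms simultaneously, while a Cauchy--Schwarz treatment of the cross term instead produces an extra term of order $\tfrac{\|\vm\|\sqrt{\|\vS\|_2}}{\Gamma}\sqrt{\|\vm\|^2+\Tr(\vS)}$, which is also absent from the statement. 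So as written you prove the operator-norm claim only up to constant factors (or with an additional $1/\Gamma$-order term), not the inequality as stated. The paper avoids the cross term altogether: since $c(\vz)\in[0,1]$, the scalar $\dotp{\vv}{\tilvz-\vm}$ is the convex combination $c(\vz)\dotp{\vv}{\vz-\vm}+(1-c(\vz))(-\dotp{\vv}{\vm})$, and pointwise convexity of $t\mapsto t^2$ gives $\dotp{\vv}{\tilvz-\vm}^2\le c(\vz)\dotp{\vv}{\vz-\vm}^2+(1-c(\vz))\dotp{\vv}{\vm}^2$; taking expectations and using $\bE[1-c(\vz)]\le\bE[\|\vz\|^2]/\Gamma^2$ yields exactly $\|\vS\|_2+\tfrac{\|\vm\|^2}{\Gamma^2}(\|\vm\|^2+\Tr(\vS))$. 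Replacing your AM--GM step with this convexity step repairs the proof; the constant-degraded version would likely still serve the downstream lemmas, but it is not the statement to be proved.
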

\begin{proof}
The proof of this lemma uses arguments similar to that of \citet{catoni2018dimension}. We first note that for any $\vx \in \bR^d$
\begin{align*}
    \clip_\Gamma(\vx) &= \vx \cdot \frac{\min\{1, \Gamma^{-1} \|\vx\|\}}{\Gamma^{-1} \|\vx\|}
\end{align*}
Following the proof of Proposition 2.1 of \citet{catoni2018dimension}, we observe that for any $t > 0$:
\begin{align*}
    0 \leq 1 - \frac{\min\{1, t\}}{t} \leq \inf_{p \geq 1} \frac{p^p t^p}{(p+1)^{p+1}}
\end{align*}
Define $\theta(\vx) = \tfrac{\min\{1, \Gamma^{-1} \|\vx\|\}}{\Gamma^{-1} \|\vx\|} \ \forall \vx \in \bR^d$. Note that $\clip_\Gamma(\vx) = \theta(\vx)\cdot \vx$. From the above inequality, we note that:
\begin{align}
\label{eqn:clip-theta-bound}
    0 \leq 1 - \theta(\vx) \leq \inf_{p \geq 1} \frac{p^p}{(p+1)^{p+1}} \cdot \frac{\|\vx\|^p}{\Gamma^p}
\end{align}
Consider any unit vector $\ve \in \bR^d$. Then,
\begin{align*}
    \dotp{\ve}{\vm - \tilvm} &= \bE\left[\dotp{\ve}{\vz - \tilvz}\right] \\
    &= \bE\left[\dotp{\ve}{\vz - \theta(\vz) \vz}\right] \\
    &= \bE\left[(1 - \theta(\vz))\dotp{\ve}{\vz - \vm}\right] + \dotp{\ve}{\vm}\bE\left[(1 - \theta(\vz))\right] \\
    &\leq \bE\left[(1 - \theta(\vz))|\dotp{\ve}{\vz - \vm}|\right] + \|\vm\|\bE\left[(1 - \theta(\vz))\right] \\
    &\leq \bE\left[\inf_{p \geq 1} \frac{p^p}{(p+1)^{p+1}} \cdot \frac{\|\vz\|^p |\dotp{\ve}{\vz - \vm}|}{\Gamma^p}\right] + \| \vm \| \bE\left[\inf_{p \geq 1} \frac{p^p}{(p+1)^{p+1}} \cdot \frac{\|\vz\|^p }{\Gamma^p}\right] 
\end{align*}
where the second step uses the definition of $\theta(\vz)$ and the last step uses equation \eqref{eqn:clip-theta-bound}. Now, substituting $p=1$ and $p=2$ in the first and second terms of the RHS respectively, we obtain the following:
\begin{align*}
    \dotp{\ve}{\vm - \tilvm} &\leq \frac{1}{\Gamma}\bE\left[\|\vz\| \dotp{\ve}{\vz - \vm}\right] + \frac{\|\vm\|}{\Gamma^2} \bE[\|\vz\|^2] \\
    &\leq \frac{1}{\Gamma}\sqrt{\bE[\|\vx\|^2]}\sqrt{\bE[\dotp{\ve}{\vz - \vm}^2]} + \frac{\|\vm\|}{\Gamma^2} \bE[\|\vz\|^2] \\
    &\leq \frac{\sqrt{\|\vS\|}}{\Gamma} \cdot \sqrt{\|\vm\|^2 + \Tr(\vS)} + \frac{\|\vm\|}{\Gamma^2} \left(\|\vm\|^2 + \Tr(\vS)\right) \\
    &\leq \frac{\sqrt{\|\vS\|_2}}{\Gamma}\left(\|\vm\| + \sqrt{\Tr(\vS)}\right) + \frac{\|\vm\|}{\Gamma^2} \left(\|\vm\|^2 + \Tr(\vS)\right)
\end{align*}
where the second step uses the Cauchy Schwarz inequality and the last step uses the subadditivity of the square root. It follows that:
\begin{align*}
    \|\tilvm - \vm\| &= \sup_{\|\ve\| = 1} \dotp{\ve}{\vm - \tilvm} \\
    &\leq \frac{\sqrt{\|\vS\|_2}}{\Gamma}\left(\|\vm\| + \sqrt{\Tr(\vS)}\right) + \frac{\|\vm\|}{\Gamma^2} \left(\|\vm\|^2 + \Tr(\vS)\right)
\end{align*}
To bound $\|\tilvS\|$, we first note that for any $\vx \in \bR^d$, $0 \leq \theta(\vx) \leq 1$. As before, let $\ve \in \bR^d$ denote an arbitrary unit vector. We note that $\bE[\dotp{\ve}{\tilvz - \vm}^2] = \bE[\dotp{\ve}{\tilvz - \tilvm}^2] + \bE[\dotp{\ve}{\vm - \tilvm}^2] \geq \bE[\dotp{\ve}{\tilvz - \tilvm}^2]$. Hence, it follows that,
\begin{align*}
    \bE\left[\dotp{\ve}{\tilvz - \tilvm}^2\right] &\leq \bE\left[\dotp{\ve}{\tilvz - \vm}^2\right] \\
    &\leq \bE\left[\left(\theta(\vz) \dotp{\ve}{\vz} - \dotp{\ve}{\vm}\right)^2\right] \\
    &= \bE\left[\left(\theta(\vz) \dotp{\ve}{\vz - \vm} - \left(1 - \theta(\vz)\right)\dotp{\ve}{\vm}\right)^2\right] \\
    &\leq \bE\left[\theta(\vz) \dotp{\ve}{\vz - \vm}^2\right] +  \dotp{\ve}{\vm}^2\bE\left[\left(1 - \theta(\vz)\right)\right] \\
    &\leq \bE\left[\dotp{\ve}{\vz - \vm}^2\right] + \|\vm\|^2 \bE\left[\inf_{p \geq 1} \frac{p^p}{(p+1)^{p+1}} \frac{\|\vz\|^p}{\Gamma^p}\right] \\
    &\leq \|\vS\|_2 + \frac{\|\vm\|^2 \bE[\|\vz\|^2]}{\Gamma^2} \\
    &\leq \|\vS\|_2 + \frac{\|\vm\|^2}{\Gamma^2} \left(\|\vm\|^2 + \Tr(\vS)\right)
\end{align*}
where the fourth step uses Jensen's inequality by noting that $0 \leq \theta(\vz) \leq 1$ and the fifth step uses equation \eqref{eqn:clip-theta-bound}.

Finally, To upper bound $\Tr(\tilvS)$, we note that $\clip_\Gamma$ is a contractive mapping as it is the projection operator onto a convex set (namely the ball of radius $\Gamma$ in $\bR^d$ centered at the origin). To this end,
\begin{align*}
    \Tr(\tilvS) &= \bE\left[\| \tilvz - \tilvm \|^2\right] = \frac{1}{2}\bE_{\vz_1, \vz_2 \iidsim P}\left[\|\clip_\Gamma(\vz_1) - \clip_\Gamma(\vz_2) \|^2\right] \\
    &\leq \frac{1}{2} \bE_{\vz_1, \vz_2 \iidsim P}\left[\|\vz_1 - \vz_2 \|^2\right] = \Tr(\vS)
\end{align*}
\end{proof}
The following result, which is a corollary of Theorem \ref{thm:final_bound_main_text}, is vital for controlling the error introduced due to the variance of the stochastic gradients, and is one of the major components of our analysis. The proof of this result is presented in Appendix \ref{prf:pac-bayes-quad-variation-proof}
\begin{corollary}[PAC Bayesian Inequality for Quadratic Variation]
\label{cor:pac-bayes-quad-variation}
Let $\vv_1, \dots, \vv_T$ be an $\bR^d$ valued martingale difference sequence adapted to the filtration $\cF_1, \dots, \cF_T$ satisfying $\bE[\vv_s | \cF_s] = 0, \bE[\vv_s \vv^T_s | \cF_s] = \Sigma_s$ and $\|\vv_s\| \leq \tau$ almost surely. Let $\mathsf{UP}(t) := \min(T,2^{\lceil\log_2 t\rceil})$. Suppose $\| \Sigma_s \|_2 \leq p_s$ and $\Tr(\Sigma_s) \leq q_s$ for some fixed sequences $p_1, \dots, p_T$ and $q_1, \dots, q_T$. Then, there exists a universal constant $C_{\mathsf{lower}}$ such that whenever $T > C_{\mathsf{lower}}\log((1+\frac{\sqrt{\bar{q}T}}{\Gamma})\log (d+1))$ such that the following inequality holds with probability at least $1 - \delta$, for any $\delta \in (0,\frac{1}{2})$:
\begin{align*}
    \sum_{s=1}^{t} \|\vv_s\|^2 \leq C_M \sum_{s=1}^{\mathsf{UP}(t)} q_s + C_M \tau^2 \ln(\tfrac{\ln(T)}{\delta})^2 + \frac{C_M t}{\tau^2} \sum_{s=1}^{\mathsf{UP}(t)} p^2_s \quad \forall t \in [T]
\end{align*}
where $C_M > 0$ is an absolute numerical constant. 
\end{corollary}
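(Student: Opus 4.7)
The plan is to reduce the bound on $\sum_{s=1}^{t}\|\vv_s\|^2$ to a high-probability bound on $\|M_t\|^2$ (where $M_t = \sum_{s=1}^t \vv_s$) via the algebraic identity
\begin{equation*}
\sum_{s=1}^{t}\|\vv_s\|^2 = \|M_t\|^2 - 2\sum_{s=2}^{t}\langle M_{s-1}, \vv_s\rangle,
\end{equation*}
and then invoke Theorem \ref{thm:final_bound_main_text} on the vector martingale $M_t$. The appearance of $\mathsf{UP}(t)$ in the target bound suggests first restricting attention to dyadic times $t_k = 2^k$ for $k \leq \lceil \log_2 T\rceil$: since $\|\vv_s\|^2 \geq 0$, we have $\sum_{s=1}^t \|\vv_s\|^2 \leq \sum_{s=1}^{\mathsf{UP}(t)}\|\vv_s\|^2$, and it suffices to prove the bound at these $O(\log T)$ times and union-bound.

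First I would apply Theorem \ref{thm:final_bound_main_text} with $\Gamma = \tau$ to each dyadic horizon $t_k$ with failure probability $\delta / (2\log T)$. Squaring the resulting bound on $\|M_{t_k}\| \leq g(t_k,\cdot)\sqrt{t_k}$ and applying Cauchy--Schwarz to replace $(\sum_{s=1}^{t_k} p_s)^2$ by $t_k \sum_{s=1}^{t_k} p_s^2$ yields, with probability at least $1-\delta/2$, simultaneously for all dyadic $t_k$:
\begin{equation*}
\|M_{t_k}\|^2 \leq C\left[\sum_{s=1}^{t_k} q_s + \frac{t_k}{\tau^2}\sum_{s=1}^{t_k} p_s^2 + \tau^2 \ln^2(\tfrac{\ln T}{\delta})\right] =: B_k^2,
\end{equation*}
which already has exactly the three-term structure appearing in the target bound. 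The lower bound hypothesis on $T$ ensures the $T \gtrsim \ln\ln(\cdot)$ requirement of Theorem \ref{thm:final_bound_main_text} is met for each $t_k$ after the union bound.

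For the cross term, I would apply the scalar Freedman inequality (Lemma \ref{lem:scalar-freedman}) to the martingale $\sum_{s=2}^{t_k}\langle M_{s-1},\vv_s\rangle$, whose increments are bounded by $\|M_{s-1}\|\tau$ and whose conditional variance is $M_{s-1}^\top \Sigma_s M_{s-1} \leq \|M_{s-1}\|^2 p_s$. Conditioning on the event of the previous step, these reduce to $B_k \tau$ and $B_k^2 p_s$ respectively, yielding with probability at least $1-\delta/2$:
\begin{equation*}
\left|2\sum_{s=2}^{t_k}\langle M_{s-1},\vv_s\rangle\right| \lesssim B_k \sqrt{\ln(\tfrac{\ln T}{\delta})\sum_{s=1}^{t_k} p_s} + B_k \tau \ln(\tfrac{\ln T}{\delta}).
\end{equation*}

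The main obstacle is absorbing these cross-term contributions back into $B_k^2$ cleanly. I would use Young's inequality $ab \leq \tfrac{1}{4}a^2 + b^2$ to split $B_k \tau \ln(\cdot) \leq \tfrac{1}{4}B_k^2 + \tau^2\ln^2(\cdot)$, which is safely of the same form as $B_k^2$. For the other piece, I would first use Cauchy--Schwarz $\sum p_s \leq \sqrt{t_k \sum p_s^2}$ and then Young's inequality on $B_k \sqrt{\ln\cdot \sqrt{t_k \sum p_s^2}}$ to distribute the contribution between $\tfrac{t_k}{\tau^2}\sum p_s^2$ and $\tau^2\ln^2(\cdot)$. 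After substituting $B_k^2$ on the right-hand side and rearranging, one gets the target bound at each dyadic time $t_k$, and passing from $t$ to $\mathsf{UP}(t)$ with $\mathsf{UP}(t) \leq 2t$ only changes the constants. The delicate part is keeping the constants in the Young steps small enough that the absorbed $B_k^2$ term can be moved to the left-hand side without inflating the final constant $C_M$ beyond control.
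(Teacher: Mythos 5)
Your proposal is correct and follows essentially the paper's own route: the paper likewise writes $\sum_{s\le t}\|\vv_s\|^2 = \|M_t\|^2 - N_t$ with $N_t = 2\sum_{s}\langle M_{s-1},\vv_s\rangle$, controls $\|M_t\|$ via Theorem~\ref{thm:final_bound_main_text}, controls the cross term $N_t$ by a Freedman/Bernstein-type bound restricted to the event $\{\sup_{t}\|M_t\|\le g\sqrt{T}\}$ (Theorem~\ref{thm:quad_variation_Conc}, packaged as Corollary~\ref{cor:sq_sum_Conc}), and finishes with exactly your dyadic union bound over $\mathsf{UP}(t)$ using nonnegativity of $\|\vv_s\|^2$. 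The one point to tighten is your phrase ``conditioning on the event of the previous step'': Lemma~\ref{lem:scalar-freedman} cannot be applied to the law conditioned on a path-dependent event, so, as in the paper's proof of Theorem~\ref{thm:quad_variation_Conc}, you must instead insert the $\cF_{s-1}$-measurable indicator $\ind{\cB_{s-1}}$ (equivalently, stop the martingale) into the increments before invoking Freedman and then note the truncated and original sums agree on the good event; with that standard fix your Young-inequality absorption (which needs no rearrangement to the left-hand side, since $B_k^2$ is deterministic and already of the target form) yields the stated bound.
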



\section{Analysis for Smooth Strongly Convex Functions}
\label{app-sec:smooth-str-cvx-analysis}
Let $\deff = \tfrac{\Tr(\Sigma)}{\|\Sigma\|_2}$ and let $K = 4 \max \{ 8,  C_M, \ln(T)\}$. For $t \geq 1$, define the filtration $\cF_t = \sigma\left(\vx_1, \vg_s | 1 \leq s \leq t\right)$ and $\cF_0 = \sigma(\vx_1)$. Furthermore, let $\nabla F(\vx_t) = \clip_\Gamma(\vg_t) + \vb_t + \vv_t$ where
$\vb_t = \nabla F(\vx_t) - \bE[\clip_\Gamma(\vg_t) | \cF_{t-1}]$ and $\vv_t = \bE[\clip_\Gamma(\vg_t) | \cF_{t-1}] - \clip_\Gamma(\vg_t)$. We note that $\bE[\vv_t | \cF_{t-1}] = 0$ and
\begin{align*}
    \|\vv_t\| &\leq \|\clip_\Gamma(\vg_t)\| - \|\bE[\clip_\Gamma(\vg_t) | \cF_{t-1}]\| \\
    &\leq \|\clip_\Gamma(\vg_t)\| - \bE[\|\clip_\Gamma(\vg_t)\| | \cF_{t-1}] \leq 2 \Gamma
\end{align*}
where the first step follows from the triangle inequality, the second step uses Jensen's inequality and the last step uses the definition of $\clip_\Gamma$. Hence $\vv_t$ is an $\cF$ adapted almost surely bounded martingale difference sequence. Now, let $D_t = \| \vx_t - \xopt \|$ where $\xopt$ is the unique minimizer of $F$ (guaranteed by strong convexity).Let $\eta_t = \tfrac{A}{t+\gamma}$ where $A \geq 1$ is a numerical constant and $\gamma \geq A \kappa + A - 1$ is a constant depending on $\kappa, d$ and $\ln(\nicefrac{1}{\delta})$ which we shall specify later. Note that our choice of $\gamma$ ensures that $\eta_t \leq \tfrac{1}{L + \mu}$ for $t \in [1:T]$ We prove the following recurrence for $D_t$ by using the smoothness and strong convexity properties of $F$ and by exploiting the choice of the step-size.
\begin{lemma}[Recurrence for $D_t$]
\label{lem:sgdcl-str-cvx-rec}
The following holds for every $t \in [1:T]$
\begin{align*}
    D^2_{t+1} &\leq \left(\frac{\gamma + 1}{t + \gamma}\right)^{2A} D^2_1 + \frac{A 2^{2A+1}}{\mu} \sum_{s=1}^{t} \frac{(s+\gamma-1)^{2A-1}}{(t+\gamma)^{2A}}\dotp{\vb_s}{\vx_s - \xopt} \\
    &+ \frac{A^2 4^{A+1}}{\mu^2} \sum_{s=1}^{t} \|\vb_s\|^2 \frac{(s+\gamma)^{2A-2}}{(t+\gamma)^{2A}} + \frac{A 2^{2A+1}}{\mu} \sum_{s=1}^{t} \frac{(s+\gamma)^{2A-1}}{(t+\gamma)^{2A}}\dotp{\vv_s}{\vx_s - \xopt} \\
     &+ \frac{A^2 4^{A+1}}{\mu^2} \sum_{s=1}^{t} \|\vv_s\|^2 \frac{(s+\gamma)^{2A-2}}{(t+\gamma)^{2A}}
\end{align*}
\end{lemma}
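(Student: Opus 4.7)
The plan is to run the usual Euclidean one-step descent analysis, use $L$-smoothness together with $\mu$-strong convexity (via co-coercivity) to cancel the gradient-norm term against the squared stepsize, and then unroll the resulting one-step contraction using the specific form $\eta_t = A/(t+\gamma)$ to obtain the stated closed-form expression.

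First I would invoke non-expansiveness of $\Pi_\cC$ (valid since $\xopt \in \cC$) to get $D_{t+1}^2 \le \|\vx_t - \eta_t\,\clip_\Gamma(\vg_t) - \xopt\|^2$. Expanding the square and substituting the decomposition $\clip_\Gamma(\vg_t) = \nabla F(\vx_t) - \vb_t - \vv_t$ yields
\begin{align*}
D_{t+1}^2 \le D_t^2 - 2\eta_t\langle \nabla F(\vx_t),\vx_t-\xopt\rangle + 2\eta_t\langle \vb_t+\vv_t,\vx_t-\xopt\rangle + \eta_t^2\|\nabla F(\vx_t)-\vb_t-\vv_t\|^2.
\end{align*}
Then I would apply the standard co-coercivity inequality for $L$-smooth, $\mu$-strongly convex $F$ (using $\nabla F(\xopt)=0$), namely $\langle \nabla F(\vx_t),\vx_t-\xopt\rangle \ge \tfrac{\mu L}{\mu+L}D_t^2 + \tfrac{1}{\mu+L}\|\nabla F(\vx_t)\|^2$, together with the inequality $\|\nabla F(\vx_t)-\vb_t-\vv_t\|^2 \le 2\|\nabla F(\vx_t)\|^2 + 4\|\vb_t\|^2 + 4\|\vv_t\|^2$. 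The assumption $\gamma \ge A\kappa + A - 1$ guarantees $\eta_t \le 1/(L+\mu)$ for every $t \le T$, so the term $2\eta_t^2\|\nabla F(\vx_t)\|^2$ is absorbed into $\tfrac{2\eta_t}{\mu+L}\|\nabla F(\vx_t)\|^2$. After using $\tfrac{2\mu L}{\mu+L} \ge \mu$ to simplify, this produces the clean one-step recurrence
\begin{align*}
D_{t+1}^2 \le (1-\mu\eta_t)D_t^2 + 2\eta_t\langle \vb_t,\vx_t-\xopt\rangle + 2\eta_t\langle \vv_t,\vx_t-\xopt\rangle + 4\eta_t^2\|\vb_t\|^2 + 4\eta_t^2\|\vv_t\|^2,
\end{align*}
where the four noise terms correspond exactly (up to constants) to the four summands in the statement.

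Finally, I would iterate this recurrence from $s=1$ to $s=t$ to obtain $D_{t+1}^2 \le \prod_{s=1}^t (1-\mu\eta_s)\,D_1^2 + \sum_{s=1}^t \big(\prod_{r=s+1}^t(1-\mu\eta_r)\big)\cdot (\text{noise at }s)$. Substituting $\mu\eta_s = 2A/(s+\gamma)$ (consistent with the parameterization yielding the $2A$-exponent; the factor of $\mu$ in the denominators of the lemma's constants reflects this) reduces both products to ratios of shifted factorials, which I would bound using the telescoping identity $\prod_{r=s+1}^t \tfrac{r+\gamma-2A}{r+\gamma} \le \big(\tfrac{s+\gamma}{t+\gamma}\big)^{2A}$, proved by grouping terms $k$ at a time and using $\tfrac{r+\gamma-k}{r+\gamma} \le \tfrac{r+\gamma-1-k}{r+\gamma-1}$. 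Applied to $s=0$ this yields the leading $\big(\tfrac{\gamma+1}{t+\gamma}\big)^{2A}D_1^2$ factor; applied to each noise summand, combined with $\eta_s = A/(s+\gamma)$ and $\eta_s^2 = A^2/(s+\gamma)^2$, it produces the $(s+\gamma)^{2A-1}/(t+\gamma)^{2A}$ and $(s+\gamma)^{2A-2}/(t+\gamma)^{2A}$ weights together with the constants $A\,2^{2A+1}/\mu$ and $A^2 4^{A+1}/\mu^2$. The $(s+\gamma-1)$ in the bias cross term vs. $(s+\gamma)$ elsewhere is an artifact of how the telescoping is truncated and can be arranged by a one-index shift.

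\textbf{Main obstacle.} The conceptual content is routine SGD bookkeeping once co-coercivity is invoked; the genuine difficulty is in the constant tracking during the unrolling step, specifically verifying that the product bound $\prod(1-\mu\eta_r) \le \big(\tfrac{s+\gamma}{t+\gamma}\big)^{2A}$ (rather than a slightly weaker form) holds for every $s$ with the correct index shifts, and that the factors of $2^{2A}$ and $4^A$ produced by the Cauchy-Schwarz-style inequality $(a-b-c)^2 \le 2a^2 + 4b^2 + 4c^2$ line up with the constants in the stated recurrence. Everything else is mechanical.
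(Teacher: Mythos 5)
Your overall route is the same as the paper's: non-expansiveness of $\Pi_\cC$, expand the square with $\clip_\Gamma(\vg_t)=\nabla F(\vx_t)-\vb_t-\vv_t$, use the co-coercivity inequality to absorb the $\eta_t^2\|\nabla F(\vx_t)\|^2$ term under $\eta_t\le \nicefrac{1}{L+\mu}$, split the error via Young's inequality, and unroll with a telescoping bound on $\prod_{r=s+1}^{t}\bigl(1-\tfrac{2A}{r+\gamma}\bigr)$ (your telescoping bound $\prod_{r=s+1}^t\tfrac{r+\gamma-2A}{r+\gamma}\le\bigl(\tfrac{s+\gamma}{t+\gamma}\bigr)^{2A}$ is in fact cleaner than the paper's $1-x\le e^{-x}$ argument, which loses a factor $2^{2A}$).

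There is, however, a genuine gap in your one-step contraction. Co-coercivity used the way you use it gives $D_{t+1}^2\le\bigl(1-\tfrac{2\mu L}{\mu+L}\eta_t\bigr)D_t^2+\dots\le(1-\mu\eta_t)D_t^2+\dots$, and with the stepsize this lemma is proved for, namely $\eta_t=\tfrac{A}{\mu(t+\gamma)}$ (Theorem 1 uses $A=4$; the lemma's coefficients $\tfrac{A2^{2A+1}}{\mu}$ and $\tfrac{A^2 4^{A+1}}{\mu^2}$ are exactly $2\eta_s$ and $4\eta_s^2$ for this $\eta_s$ times the $2^{2A}$ product loss, so the stepsize is pinned down), this unrolls to weights $\bigl(\tfrac{s+\gamma}{t+\gamma}\bigr)^{A}$, not the stated $\bigl(\tfrac{s+\gamma}{t+\gamma}\bigr)^{2A}$. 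Your patch of "substituting $\mu\eta_s=2A/(s+\gamma)$" silently doubles the stepsize and is inconsistent with the theorem and with those coefficients, so as written the plan does not prove the lemma. The missing step is the paper's: after co-coercivity ($\|\nabla F(\vx_t)\|^2\le(L+\mu)\dotp{\nabla F(\vx_t)}{\vx_t-\xopt}-L\mu D_t^2$) one is left with the term $-2\eta_t[1-\eta_t(L+\mu)]\dotp{\nabla F(\vx_t)}{\vx_t-\xopt}$ with a nonpositive overall sign; lower-bounding the inner product by $\mu D_t^2$ (strong monotonicity, $\nabla F(\xopt)=0$) contributes an additional $-2\mu\eta_t[1-\eta_t(L+\mu)]D_t^2$, which together with the $-2L\mu\eta_t^2D_t^2$ from co-coercivity yields a contraction of the form $1-2\mu\eta_t+2\mu^2\eta_t^2$, relaxed in the paper to $1-2\mu\eta_t$ (the quadratic term is lower order for $\gamma$ as chosen). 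It is this doubled decrement $\tfrac{2A}{t+\gamma}$ that produces the $2A$ exponent; with that step added, the rest of your plan (Young splitting with constants $2\eta_t$ and $4\eta_t^2$, the telescoping product bound, and the harmless index shift in the bias term) goes through.
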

Now define $\RTd$ as follows:
\begin{align*}
    \RTd = (\gamma + 1)^2 D^2_1 + \frac{(T+\gamma)\|\Sigma\|_2}{\mu^2} \left(\deff + \sqrt{\deff} \ln(\nicefrac{K}{\delta})\right)
\end{align*}
It is easy to see that $\Gamma = \tfrac{\mu \sqrt{\RTd}}{\ln(\nicefrac{K}{\delta})}$. In our proof of Theorem \ref{thm:SGDcl-smooth-str-cvx}, we shall establish that the following holds with probability at least $1 - \delta$:
\begin{align*}
    D^2_{t} \leq \frac{C\RTd}{(t+\gamma-1)^2} \ \forall \ t \in [1:T+1]
\end{align*}
where $C > 0$ is an absolute numerical constant to be chosen later. To this end, we define the event $E_t$ and the random variables $\vd_t, \tilvb_t, \tilvv_t$ as follows for $t \in [1:T+1]$:
\begin{align*}
    E_t &= \left \{D^2_t \leq \frac{C\RTd}{(t+\gamma-1)^2} \right\} \\
    \vd_t &= (\vx_t - \xopt) \ind{E_t} \\
    \tilvb_t &= \vb_t \ind{E_t} \\
    \tilvv_t &= \vv_t \ind{E_t} 
\end{align*}
We note that since $\vx_t$ is $\cF_{t-1}$ measurable, so are $\ind{E_t}, D_t, \vd_t, \vb_t$ and $\tilvb_t$. Furthermore, $\bE[\tilvv_t | \cF_{t-1}] = \bE[\vv_t | \cF_{t-1}] \ind{E_t} = 0$.

We use the following Lemma to control the bias vector $\tilvb_t$
\begin{lemma}[Bias Control]
\label{lem:sgdcl-str-cvx-bias}
The following holds almost surely for every $t \in [1:T]$:
\begin{align*}
    \|\tilvb_t\| \leq \mu \sqrt{\RTd}\left(\frac{1}{T+\gamma} + \frac{\kappa \ln(\nicefrac{1}{\delta})\sqrt{C}}{(t+\gamma-1)\sqrt{d(T+\gamma)}} + \frac{\kappa^3 C^{\nicefrac{3}{2}} \ln(\nicefrac{1}{\delta})^2}{(t+\gamma-1)^3} + \frac{\kappa \sqrt{C} \ln(\nicefrac{1}{\delta})^2}{(t+\gamma-1)(T+\gamma)}\right)
\end{align*}
\end{lemma}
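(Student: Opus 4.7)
The plan is to apply Lemma~\ref{lem:clipped-moment-control} (the clipped-mean bias/variance bound) conditionally on $\cF_{t-1}$, treating $\vg_t$ as the random vector $\vz$ with mean $\vm = \nabla F(\vx_t)$ and covariance $\vS = \Sigma(\vx_t)$. Since $\bE[\clip_\Gamma(\vg_t)\mid\cF_{t-1}]$ is exactly the clipped mean, Lemma~\ref{lem:clipped-moment-control} directly yields
\[
\|\vb_t\| \leq \frac{\sqrt{\|\Sigma(\vx_t)\|_2}}{\Gamma}\bigl(\|\nabla F(\vx_t)\| + \sqrt{\Tr(\Sigma(\vx_t))}\bigr) + \frac{\|\nabla F(\vx_t)\|}{\Gamma^2}\bigl(\|\nabla F(\vx_t)\|^2 + \Tr(\Sigma(\vx_t))\bigr).
\]
Next I invoke the bounded second moment assumption \bref{as:second_moment} to replace $\|\Sigma(\vx_t)\|_2$ and $\Tr(\Sigma(\vx_t))$ by $\|\Sigma\|_2$ and $\Tr(\Sigma) = \deff\|\Sigma\|_2$ respectively, and $L$-smoothness together with $\nabla F(\xopt)=0$ to replace $\|\nabla F(\vx_t)\|$ by $L D_t$.

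Since $\tilvb_t = \vb_t \ind{E_t}$, on the event $E_t$ we may further substitute $D_t \leq \sqrt{C\RTd}/(t+\gamma-1)$, which gives a deterministic upper bound for $\|\tilvb_t\|$ in terms of $\Gamma$, $\|\Sigma\|_2$, $\deff$, $L$ and $(t+\gamma-1)$. Off the event $E_t$ the indicator makes the bound trivially hold. I will then substitute the chosen clipping level $\Gamma = \mu\sqrt{\RTd}/\ln(K/\delta)$ and pull out the common prefactor $\mu\sqrt{\RTd}$. The four resulting pieces come, respectively, from the two summands of the first term (which produce the $\sqrt{\Tr(\Sigma)}/\Gamma$ contribution giving the leading $1/(T+\gamma)$ term, and the $L D_t/\Gamma$ contribution giving the $\kappa\sqrt{C}\ln(1/\delta)/(t+\gamma-1)\sqrt{\deff(T+\gamma)}$ term after using $\RTd \gtrsim (T+\gamma)\sqrt{\Tr(\Sigma)\|\Sigma\|_2}\ln(K/\delta)/\mu^2$), and from the two summands of the second term (the cubic $L^3 D_t^3/\Gamma^2$ piece producing the $\kappa^3 C^{3/2}\ln(1/\delta)^2/(t+\gamma-1)^3$ term, and the $L D_t \Tr(\Sigma)/\Gamma^2$ piece producing the $\kappa\sqrt{C}\ln(1/\delta)^2/((t+\gamma-1)(T+\gamma))$ term).

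The matching of each factor is by straightforward algebra using $\kappa = L/\mu$, $\deff = \Tr(\Sigma)/\|\Sigma\|_2$, and the two key lower bounds $\RTd \geq (\gamma+1)^2 D_1^2$ and $\RTd \geq (T+\gamma)\|\Sigma\|_2(\deff + \sqrt{\deff}\ln(K/\delta))/\mu^2$ obtainable from the definition of $\RTd$. I do not expect a genuine obstacle here: the lemma is essentially a bookkeeping exercise of combining the clipped-mean moment bound with the inductive hypothesis $E_t$. The only care needed is to ensure that in each of the four terms we leave enough of $\RTd$ after extracting $\mu\sqrt{\RTd}$, so that every remaining $\|\Sigma\|_2$, $\Tr(\Sigma)$, or $\ln(K/\delta)$ factor can be reabsorbed into the prefactor $\mu\sqrt{\RTd}$ without creating spurious dependencies. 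Since $\ln(K/\delta) \asymp \ln(\ln(T)/\delta)$ by the choice $K = 4\max\{8,C_M,\ln(T)\}$, the $\ln(1/\delta)$ factors appearing in the statement are, up to the universal constant hidden in $K$, exactly these $\ln(K/\delta)$ factors.
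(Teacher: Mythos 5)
Your proposal matches the paper's proof essentially step for step: apply Lemma~\ref{lem:clipped-moment-control} conditionally on $\cF_{t-1}$ with mean $\nabla F(\vx_t)$ and covariance $\preceq \Sigma$, use smoothness and the event $E_t$ to get $\|\nabla F(\vx_t)\|\ind{E_t} \leq \kappa\Gamma\sqrt{C}\ln(\nicefrac{K}{\delta})/(t+\gamma-1)$, then substitute $\Gamma = \mu\sqrt{\RTd}/\ln(\nicefrac{K}{\delta})$ together with the lower bounds $\RTd \geq (\gamma+1)^2D_1^2$ and $\RTd \geq (T+\gamma)\|\Sigma\|_2(\deff+\sqrt{\deff}\ln(\nicefrac{K}{\delta}))/\mu^2$ to produce exactly the four terms. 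The only (cosmetic) deviation is that your second term carries $\sqrt{\deff(T+\gamma)}$ where the lemma statement writes $\sqrt{d(T+\gamma)}$; this is in fact what the derivation yields, and the paper itself conflates $d$ and $\deff$ at this step, so your version is the defensible one.
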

We use the following lemma to control the variance vector $\tilvv_t$. The proof of this lemma, which uses Freedman's inequality and the PAC Bayesian martingale concentration inequality of Corollary \ref{cor:sq_sum_Conc}. 
\begin{lemma}[Variance Control]
\label{lem:sgdcl-str-cvx-variance}
The following holds with probability at least $1 - \delta$ uniformly for every $t \in [T]$ whenever $A \geq 3$ and $\gamma \geq 4 \max\{ \kappa^{\nicefrac{4}{3}} C^{\nicefrac{2}{3}} \ln(\nicefrac{\ln(T)}{\delta}), \kappa \sqrt{C} \ln(\nicefrac{\ln(T)}{\delta})^{\nicefrac{3}{2}} \}$:
 \begin{align*}
     \sum_{s=1}^{t} \frac{(s+\gamma)^{2A-1}}{(t+\gamma)^{2A-2}} \dotp{\tilvv_s}{\vd_s} &\leq 27 \mu \RTd \sqrt{C} \\
     \sum_{s=1}^{t} \left(\frac{s+\gamma}{t+\gamma}\right)^{2A-2} \| \tilvv_s \|^2 &\leq C_M \mu^2 \RTd \left(6 + 3 \cdot 2^{4A-13} + 3 \cdot 2^{4A-17}\right)
 \end{align*}
 where $C_M$ is the absolute numerical constant defined in Corollary \ref{cor:pac-bayes-quad-variation}.
\end{lemma}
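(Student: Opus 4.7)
The plan is to prove the two inequalities separately using different high-probability martingale tools: scalar Freedman's inequality (Lemma \ref{lem:scalar-freedman}) for the signed weighted sum, and the PAC-Bayesian quadratic-variation bound (Corollary \ref{cor:pac-bayes-quad-variation}) for the norm-squared sum. A preliminary common step is to verify that $\{\tilvv_s\}$ is an $\cF$-adapted martingale difference sequence with $\|\tilvv_s\| \leq 2\Gamma$ almost surely, and to derive deterministic upper bounds $\|\Sigma_s\|_2 \leq p_s$ and $\Tr(\Sigma_s) \leq q_s$ on its conditional covariance $\Sigma_s := \bE[\tilvv_s\tilvv_s^\top \mid \cF_{s-1}]$. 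Since the indicator $\ind{E_s}$ is absorbed into $\vd_s$ (and $\ind{E_s}$ is $\cF_{s-1}$-measurable), on $E_s$ I have $\|\vx_s - \xopt\| \leq \sqrt{C\RTd}/(s+\gamma-1)$, so smoothness gives $\|\nabla F(\vx_s)\| \leq L\sqrt{C\RTd}/(s+\gamma-1)$, and Lemma \ref{lem:clipped-moment-control} yields $q_s \leq \Tr(\Sigma)$ and $p_s \lesssim \|\Sigma\|_2$, with the quadratic correction terms from clipping dominated once the stated lower bounds on $\gamma$ are imposed.

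For the first (cross-term) inequality, I will introduce the scalar martingale $Y_t := \sum_{s=1}^{t}(s+\gamma)^{2A-1}\dotp{\tilvv_s}{\vd_s}$, which is a genuine martingale because $\vd_s$ is $\cF_{s-1}$-measurable. Each increment is bounded by $\lesssim (s+\gamma)^{2A-2}\Gamma\sqrt{C\RTd}$ and its conditional variance by $\lesssim (s+\gamma)^{4A-4} C\RTd\|\Sigma\|_2$. Applying Lemma \ref{lem:scalar-freedman} at a fixed $t$ and then taking a union bound over $t \in [T]$ (whose $\ln T$ cost is absorbed into $\ln(K/\delta)$ via the choice $K \gtrsim \ln T$) gives a high-probability bound of the form $|Y_t| \lesssim \sqrt{(t+\gamma)^{4A-3}C\RTd\|\Sigma\|_2\ln(K/\delta)} + (t+\gamma)^{2A-2}\Gamma\sqrt{C\RTd}\ln(K/\delta)$. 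Dividing by $(t+\gamma)^{2A-2}$ and substituting $\Gamma = \mu\sqrt{\RTd}/\ln(K/\delta)$ together with the identity $\mu^2\RTd \geq (T+\gamma)\|\Sigma\|_2\sqrt{\deff}\ln(K/\delta)$ (immediate from the definition of $\RTd$) collapses both terms to constant multiples of $\mu\RTd\sqrt{C}$.

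For the second (quadratic-variation) inequality, the weight $((s+\gamma)/(t+\gamma))^{2A-2}$ is at most $1$ everywhere, so Abel summation bounds the weighted sum by the unweighted partial sum $\sum_{s=1}^{t}\|\tilvv_s\|^2$, to which Corollary \ref{cor:pac-bayes-quad-variation} applies (its hypothesis is met by the assumption $T \gtrsim \ln\ln d$). The corollary produces three contributions, each of which reduces to a multiple of $\mu^2\RTd$ after substituting the chosen $\Gamma$: the trace term $C_M\sum_s q_s \leq C_M T\Tr(\Sigma) \leq C_M\mu^2\RTd$; the boundedness term $C_M\tau^2\ln(K/\delta)^2 \leq 4C_M\mu^2\RTd$ (directly from $\Gamma = \mu\sqrt{\RTd}/\ln(K/\delta)$); and the residual term $C_M t\sum_s p_s^2/\tau^2 \lesssim C_M (T+\gamma)^2\|\Sigma\|_2^2\ln(K/\delta)^2/(\mu^2\RTd) \leq C_M\mu^2\RTd/\deff$, where each reduction uses the lower bounds on $\mu^2\RTd$ implied by the definition of $\RTd$. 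The $A$-dependent constants $2^{4A-13}$ and $2^{4A-17}$ in the stated bound arise when one books the contributions of a dyadic decomposition of the weights more carefully than the crude $\leq 1$ bound, but the underlying step is this single application of the corollary.

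The main obstacle is the bootstrapping character of the analysis: the event $E_s$ is precisely what converts the random covariance $\Sigma(\vx_s)$ into the deterministic bounds $p_s, q_s$ required by Freedman and the PAC-Bayesian corollary, yet the very inequalities established here feed back into the main proof of Theorem \ref{thm:SGDcl-smooth-str-cvx} to inductively establish $E_{s+1}$. The delicate part of the bookkeeping is calibrating the lower bounds on $\gamma$ so that the correction terms from the clipping bias, the Freedman sub-Gaussian tail, and the residual $t\sum p_s^2/\tau^2$ term all absorb cleanly into the clean $\mu\RTd\sqrt{C}$ and $\mu^2\RTd$ scales, without cascading constraints that would push the lower bounds on $\gamma$ beyond the stated $\kappa^{4/3}C^{2/3}\ln(\ln T/\delta)$ and $\kappa\sqrt{C}\ln(\ln T/\delta)^{3/2}$ thresholds.
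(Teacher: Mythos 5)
Your overall skeleton matches the paper's proof (the indicator trick to get $\cF_{s-1}$-measurable deterministic covariance bounds, scalar Freedman for the cross term, Corollary~\ref{cor:pac-bayes-quad-variation} for the quadratic variation), but there are two genuine gaps. The first is in the second inequality: dropping the weights $\left(\tfrac{s+\gamma}{t+\gamma}\right)^{2A-2}$ and applying Corollary~\ref{cor:pac-bayes-quad-variation} to the unweighted sum $\sum_s\|\tilvv_s\|^2$ with ``$p_s\lesssim\|\Sigma\|_2$'' does not go through. By Lemma~\ref{lem:clipped-moment-control} and the gradient bound on $E_s$, the honest $p_s=\|\tilSigma_s\|_2$ carries clipping corrections of order $\kappa^4C^2\ln(\nicefrac{K}{\delta})^2\,\mu^2\RTd/(s+\gamma)^4$ and $\kappa^2C\ln(\nicefrac{K}{\delta})^2\,\mu^2\RTd/\bigl((T+\gamma)(s+\gamma)^2\bigr)$; for $s+\gamma\asymp\gamma$ these are nowhere near $\|\Sigma\|_2$ under the stated $\gamma$, and if you keep them, the residual term $\tfrac{C_M t}{4\Gamma^2}\sum_{s}p_s^2$ contributes roughly $t\,\kappa^8C^4\ln(\nicefrac{K}{\delta})^6\,\mu^2\RTd/\gamma^7$, which forces a $T$-dependent lower bound on $\gamma$ (of order $T^{1/7}$ times polynomial factors) that the lemma does not assume. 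The paper instead applies the corollary to the reweighted differences $\vz_s=\tilvv_s\left(\tfrac{s+\gamma}{t+\gamma}\right)^{A-1}$: the $(t+\gamma)^{-(4A-4)}$ normalization converts the prefactor $t$ into inverse powers of $(t+\gamma)$, e.g.\ $t^2(t+\gamma)^{-8}\leq(t+\gamma)^{-6}$, which is exactly why the $T$-free thresholds $\gamma\gtrsim\kappa^{\nicefrac{4}{3}}C^{\nicefrac{2}{3}}\ln(\nicefrac{K}{\delta})$ and $\gamma\gtrsim\kappa\sqrt{C}\ln(\nicefrac{K}{\delta})^{\nicefrac{3}{2}}$ suffice. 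So the weights are load-bearing, not (as your aside about the $2^{4A-13}$, $2^{4A-17}$ constants suggests) a constant-factor refinement of a crude $\leq 1$ bound.

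The second gap is in the first inequality: a union bound over all $t\in[T]$ costs $\ln(\nicefrac{T}{\delta})$, and this is not absorbed by $\ln(\nicefrac{K}{\delta})\asymp\ln(\nicefrac{\ln T}{\delta})$ as you claim ($K\gtrsim\ln T$ gives $\ln K\approx\ln\ln T$, which is exponentially smaller than $\ln T$). Concretely, the bounded-increment term in Freedman becomes $\approx\Gamma\sqrt{C\RTd}\,\ln(\nicefrac{T}{\delta})=\mu\RTd\sqrt{C}\cdot\ln(\nicefrac{T}{\delta})/\ln(\nicefrac{K}{\delta})$, which exceeds the claimed $27\mu\RTd\sqrt{C}$ by a factor growing with $T$ — and defeating exactly this kind of $\ln(\nicefrac{1}{\delta})$/$\ln T$ inflation is the point of the lemma. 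The paper pays no union over $t$: it applies Lemma~\ref{lem:scalar-freedman} (a time-uniform inequality) to the already-normalized summands $h_s=\dotp{\tilvv_s}{\vd_s}(s+\gamma)^{2A-1}/(t+\gamma)^{2A-2}$, so only a $\ln(\nicefrac{K}{\delta})$ factor appears; at most a dyadic union (over $\log_2 T$ scales, as in the paper's own $\UP(t)$ device) would be tolerable, not a union over all $T$ times. Relatedly, the clipping corrections must also be retained in the conditional variance of the cross term: they are precisely where the two stated $\gamma$ thresholds come from, so they can only be ``dominated'' after the $(t+\gamma)$-normalization is in place.
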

Equipped with this bound on the bias and the variance, we now present the complete proof as follows:
\subsection{Proof of Theorem \ref{thm:SGDcl-smooth-str-cvx}}
\begin{proof}
Let $A \geq 3$, $\gamma \geq 4 \max\{ \kappa^{\nicefrac{4}{3}} C^{\nicefrac{2}{3}} \ln(\nicefrac{\ln(T)}{\delta}), \kappa \sqrt{C} \ln(\nicefrac{\ln(T)}{\delta})^{\nicefrac{3}{2}} \}$. Now, let $E$ denote the following event
\begin{align*}
    E =\{& \sum_{s=1}^{t} \frac{(s+\gamma)^{2A-1}}{(t+\gamma)^{2A-2}} \dotp{\tilvv_s}{\vd_s} \leq 27 \mu \RTd \sqrt{C} \ \forall \ t \in [T] \\ &\sum_{s=1}^{t} \left(\frac{s+\gamma}{t+\gamma}\right)^{2A-2} \| \tilvv_s \|^2 \leq C_M \mu^2 \RTd \left(6 + 3 \cdot 2^{4A-13} + 3 \cdot 2^{4A-17}\right) \ \forall t \in [T] \}
\end{align*}
Note that by Lemma \ref{lem:sgdcl-str-cvx-variance}, $\bP(E) \geq 1 - \delta$. We now claim that $\bP\left(\bigcap_{t=1}^{T+1} E_t | E \right) = 1$, i.e., conditioned on the event $E$, the following holds almost surely for every $t \in [1:T+1]$
\begin{align*}
    D^2_{t} \leq \frac{C\RTd}{(t+\gamma-1)^2} \ \forall \ t \in [1:T+1]
\end{align*}
We prove the above claim by induction. Note that the claim is trivially true for $t = 1$ as $\RTd \geq (\gamma + 1)^2 D^2_1$. Now, consider any $t \in [1:T]$ and suppose the claim holds for some $1 \leq s \leq t$. 

Recall that by Lemma \ref{lem:sgdcl-str-cvx-rec}
\begin{align*}
    (t+\gamma)^2 D^2_{t+1} &\leq \frac{(\gamma + 1)^{2A}}{(t + \gamma)^{2A-2}} D^2_1 + \frac{A 2^{2A+1}}{\mu} \sum_{s=1}^{t} \frac{(s+\gamma-1)^{2A-1}}{(t+\gamma)^{2A-2}}\dotp{\vb_s}{\vx_s - \xopt} \\
    &+ \frac{A^2 4^{A+1}}{\mu^2} \sum_{s=1}^{t} \|\vb_s\|^2 \frac{(s+\gamma)^{2A-2}}{(t+\gamma)^{2A-2}} + \frac{A 2^{2A+1}}{\mu} \sum_{s=1}^{t} \frac{(s+\gamma)^{2A-1}}{(t+\gamma)^{2A-2}}\dotp{\vv_s}{\vx_s - \xopt} \\
     &+ \frac{A^2 4^{A+1}}{\mu^2} \sum_{s=1}^{t} \|\vv_s\|^2 \frac{(s+\gamma)^{2A-2}}{(t+\gamma)^{2A-2}}
\end{align*}
Under the induction hypothesis, $\ind{E_s} = 1 \ \forall s \in [t]$. Hence,
Under the induction hypothesis, $\ind{D^2_{s} \leq \tfrac{C\RTd}{(s+\gamma-1)(s+\gamma-2)}} = 1$ and thus, $\vd_s = \vx_s - \xopt, \vb_s = \tilvb_s, \vv_s = \tilvv_s \ \forall \ 1 \leq s \leq t$. Substituting this transformation into the above inequality, we obtain the following:
\begin{align}
\label{eqn:sgdcl-str-cvx-induction}
    (t+\gamma)^2 D^2_{t+1} &\leq \underbrace{\frac{(\gamma + 1)^{2A}}{(t + \gamma)^{2A-2}} D^2_1}_{\textcircled{1}} + \underbrace{\frac{A 2^{2A+1}}{\mu} \sum_{s=1}^{t} \frac{(s+\gamma)^{2A-1}}{(t+\gamma)^{2A-2}}\dotp{\tilvv_s}{\vd_s}}_{\textcircled{2}} \nonumber \\
    &+ \underbrace{\frac{A^2 4^{A+1}}{\mu^2} \sum_{s=1}^{t} \|\tilvv_s\|^2 \frac{(s+\gamma)^{2A-2}}{(t+\gamma)^{2A-2}}}_{\textcircled{3}} + \underbrace{\sum_{s=1}^{t} \frac{(s+\gamma)^{2A-1}}{(t+\gamma)^{2A-2}}\dotp{\tilvb_s}{\vd_s}}_{\textcircled{4}} \nonumber \\
    &+ \underbrace{\frac{A^2 4^{A+1}}{\mu^2} \sum_{s=1}^{t} \|\tilvb_s\|^2 \frac{(s+\gamma)^{2A-2}}{(t+\gamma)^{2A-2}}}_{\textcircled{5}}
\end{align}
We now bound each of the terms in the RHS as follows.
\paragraph{Bounding $\textcircled{1}$} 
Since $A \geq 1$ and $t \geq 1$, 
\begin{align*}
    \textcircled{1} = \frac{(\gamma + 1)^{2A}}{(t + \gamma)^{2A-2}} D^2_1 \leq (\gamma + 1)^2 D^2_1 \leq \RTd
\end{align*}
\paragraph{Bounding $\textcircled{2}$}
Since $\gamma$ and $A$ satisfy the conditions of Lemma \ref{lem:sgdcl-str-cvx-variance} and we have conditioned on the event $E$, it follows that:
\begin{align*}
    \frac{A 2^{2A+1}}{\mu} \sum_{s=1}^{t} \frac{(s+\gamma)^{2A-1}}{(t+\gamma)^{2A-2}} \dotp{\tilvv_s}{\vd_s} \leq 27A 2^{2A+1} \RTd \sqrt{C}
\end{align*}
\paragraph{Bounding $\textcircled{3}$}
Since $\gamma$ and $A$ satisfy the conditions of Lemma \ref{lem:sgdcl-str-cvx-variance} and we have conditioned on the event $E$, it follows that:
\begin{align*}
    \frac{A^2 4^{A+1}}{\mu^2} \sum_{s=1}^{t} \left(\frac{s+\gamma}{t+\gamma}\right)^{2A-2} \|\tilvv_s\|^2 \leq  C_M 2^{2A+2} \left(6 + 3 \cdot 2^{4A-13} + 3 \cdot 2^{4A-17}\right) \RTd 
\end{align*}
Before controlling terms $\textcircled{4}$ and $\textcircled{5}$, we note that the following holds for every $s \in [t]$ by Lemma \ref{lem:sgdcl-str-cvx-bias}
\begin{align*}
    \|b_s\| \leq \mu \sqrt{\RTd} \left(B_1 + B_2 + B_3 + B_4\right)
\end{align*}
where $B_1, \dots, B_4$ are defined as:
\begin{align*}
    B_1 &= \frac{1}{T+\gamma} \\
    B_2 &= \frac{\kappa \ln(\nicefrac{K}{\delta})\sqrt{C}}{(s+\gamma-1)\sqrt{d(T+\gamma)}} \\
    B_3 &= \frac{\kappa^3 C^{\nicefrac{3}{2}}\ln(\nicefrac{K}{\delta})^2}{(s+\gamma-1)^3} \\
    B_4 &= \frac{\kappa \ln(\nicefrac{K}{\delta})^2 \sqrt{C}}{(s+\gamma-1)(T+\gamma)}
\end{align*}
\paragraph{Bounding $\textcircled{4}$} Since $\ind{E_s} = 1$
\begin{align*}
    \|\vd_s\| \leq \frac{\sqrt{C \RTd}}{s+\gamma-1} \leq \frac{2 \sqrt{C \RTd}}{s+\gamma}
\end{align*}
Hence,
\begin{align*}
    \frac{A 2^{2A+1}}{\mu} \sum_{s=1}^{t} \dotp{\tilvb_s}{\vd_s} \frac{(s+\gamma)^{2A-1}}{(t+\gamma)^{2A-2}} \leq A 2^{2A+2} \RTd \sqrt{C} \sum_{s=1}^{t} \left(\frac{s+\gamma}{t+\gamma}\right)^{2A-2} \left(B_1 + B_2 + B_3 + B_4\right)
\end{align*}
We now control the first term
\begin{align*}
    \sum_{s=1}^{t} \left(\frac{s+\gamma}{t+\gamma}\right)^{2A-2} B_1 &= \frac{1}{T+\gamma} \sum_{s=1}^{t} \left(\frac{s+\gamma}{t+\gamma}\right)^{2A-2} \\
    &\leq \frac{t}{T+\gamma} \leq 1
\end{align*}
where the first inequality follows from the fact that $A \geq 1$ and $s \leq t$. We now bound the second term
\begin{align*}
    \sum_{s=1}^{t} \left(\frac{s+\gamma}{t+\gamma}\right)^{2A-2} B_2 &\leq \frac{\kappa \sqrt{C} \ln(\nicefrac{K}{\delta})}{\sqrt{d(T+\gamma)}} \left[\sum_{s=1}^{t} \left(\frac{s+\gamma}{t+\gamma}\right)^{2A-2} \frac{1}{s+\gamma-1}\right] 
\end{align*}
Setting $A \geq \nicefrac{3}{2}$ and using the fact that $s + \gamma \geq 2$, it follows that
\begin{align*}
    \sum_{s=1}^{t} \left(\frac{s+\gamma}{t+\gamma}\right)^{2A-2} B_2 &\leq \frac{2\kappa \sqrt{C} \ln(\nicefrac{K}{\delta})}{\sqrt{d(T+\gamma)}} \sum_{s=1}^{t} \frac{(s+\gamma)^{2A-3}}{(t+\gamma)^{2A-2}} \\
    &\leq \frac{2\kappa \sqrt{C} \ln(\nicefrac{K}{\delta})}{\sqrt{d(T+\gamma)}} \leq 2
\end{align*}
where the last inequality follows by setting $\gamma \geq \tfrac{C \kappa^2}{d} \cdot \ln(\nicefrac{K}{\delta})^2$

To control the third term, we set $A \geq \nicefrac{5}{2}$ and proceed as follows:
\begin{align*}
    \sum_{s=1}^{t} \left(\frac{s+\gamma}{t+\gamma}\right)^{2A-2} B_3 &\leq \kappa^3 C^{\nicefrac{3}{2}} \ln(\nicefrac{K}{\delta})^2 \sum_{s=1}^{t} \frac{(s+\gamma)^{2A-5}}{(t+\gamma)^{2A-2}} \\
    &\leq \frac{\kappa^3 C^{\nicefrac{3}{2}} \ln(\nicefrac{K}{\delta})^2}{(t+\gamma)^2} \\
    &\leq \frac{\kappa^3 C^{\nicefrac{3}{2}} \ln(\nicefrac{K}{\delta})^2}{(\gamma+1)^2} \leq 1
\end{align*}
where the last inequality follows by setting $\gamma \geq \kappa^{\nicefrac{3}{2}} C^{\nicefrac{3}{4}} \ln(\nicefrac{K}{\delta})$.

To bound the last term,
\begin{align*}
    \sum_{s=1}^{t} \left(\frac{s+\gamma}{t+\gamma}\right)^{2A-2} B_4 &\leq \frac{\kappa C^{\nicefrac{1}{2}}\ln(\nicefrac{K}{\delta})^2}{T+\gamma} \sum_{s=1}^{t} \frac{(s+\gamma)^{2A-3}}{(t+\gamma)^{2A-2}} \\
    &\leq \frac{\kappa C^{\nicefrac{1}{2}} \ln(\nicefrac{1}{\delta})^{2}}{\gamma + 1} \leq 1
\end{align*}
where the second inequality uses the fact that $A \geq \nicefrac{3}{2}$ and the last inequality follows by setting $\gamma \geq \kappa C^{\nicefrac{1}{2}} \ln(\nicefrac{K}{\delta})^2$. Putting it all together, it follows that
\begin{align*}
    \textcircled{4} \leq 5A 4^{A+1} \RTd \sqrt{C}
\end{align*}
by setting $\gamma$ as follows
\begin{align*}
    \gamma \geq \max \left\{ \frac{\kappa^2 C}{d} \cdot \ln(\nicefrac{K}{\delta})^2, \kappa^{\nicefrac{3}{2}} C^{\nicefrac{3}{4}} \ln(\nicefrac{1}{\delta}), \kappa C^{\nicefrac{1}{2}} \ln(\nicefrac{K}{\delta})^2 \right\}
\end{align*}
\paragraph{Bounding \textcircled{5}}
By Lemma \ref{lem:sgdcl-str-cvx-bias} and Jensen's inequality
\begin{align*}
    \| \tilvb_s \|^2 \leq 4 \mu^2 \RTd \left(B^2_1 + B^2_2 + B^2_3 + B^2_4\right)
\end{align*}
It follows that
\begin{align*}
    \frac{A^2 2^{2A+2}}{\mu^2} \sum_{s=1}^{t} \|\tilvb_s\|^2 \left(\frac{s+\gamma}{t+\gamma}\right)^{2A-2} 
    &\leq A^2 4^{A+2} \RTd \sum_{s=1}^{t} \left(\frac{s+\gamma}{t+\gamma}\right)^{2A-2} \left(B^2_1 + B^2_2 + B^2_3 + B^2_4\right)
\end{align*}
The first term is controlled as follows using the fact that $A \geq 1$
\begin{align*}
    \sum_{s=1}^{t} \left(\frac{s+\gamma}{t+\gamma}\right)^{2A-2} B^2_1 &= \sum_{s=1}^{t} \frac{1}{(T+\gamma)^2} \leq 1
\end{align*}
The second term is controlled as 
\begin{align*}
     \sum_{s=1}^{t} \left(\frac{s+\gamma}{t+\gamma}\right)^{2A-2} B^2_2 &= \frac{4\kappa^2 C \ln(\nicefrac{K}{\delta})^2}{d(T+\gamma)}\sum_{s=1}^{t} \frac{(s+\gamma)^{2A-4}}{(t+\gamma)^{2A-2}} \\
     &\leq \frac{4 \kappa^2 C \ln(\nicefrac{K}{\delta})^2}{d(t+\gamma)(T+\gamma)} \leq 1
\end{align*}
where the last inequality follows because $\gamma \geq \kappa \sqrt{\tfrac{C}{d}} \ln(\nicefrac{K}{\delta})$ \\
For controlling the third term, we set $A \geq 4$ to obtain
\begin{align*}
    \sum_{s=1}^{t} \left(\frac{s+\gamma}{t+\gamma}\right)^{2A-2} B^2_3 &= \kappa^6 C^3 \ln(\nicefrac{K}{\delta})^4 \sum_{s=1}^{t} \frac{(s+\gamma)^{2A-8}}{(t+\gamma)^{2A-2}} \\
    &\leq \frac{\kappa^6 C^3 \ln(\nicefrac{K}{\delta})^4}{(\gamma+1)^5} \leq 1
\end{align*}
where the last inequality uses the fact that $\gamma \geq \kappa^{\nicefrac{6}{5}} C^{\nicefrac{3}{5}} \ln(\nicefrac{K}{\delta})^{\nicefrac{4}{5}}$
To control the fourth term, we use the fact that $A \geq 2$ to obtain
\begin{align*}
    \sum_{s=1}^{t} \left(\frac{s+\gamma}{t+\gamma}\right)^{2A-2} B^2_4 
    &= \frac{\kappa^2 C \ln(\nicefrac{K}{\delta})^4}{(T+\gamma)^2} \sum_{s=1}^{t} \frac{(s+\gamma)^{2A-4}}{(t+\gamma)^{2A-2}} \\
    &\leq \frac{\kappa^2 C \ln(\nicefrac{K}{\delta})^4}{(\gamma+1)^3} \leq 1
\end{align*}
where the last inequality uses the fact that $\gamma \geq \kappa^{\nicefrac{2}{3}} C^{\nicefrac{1}{3}} \ln(\nicefrac{K}{\delta})^{\nicefrac{4}{3}}$
From the obtained bounds, we conclude that $\textcircled{5} \leq A^2 4^{A+3} \RTd$. 

Hence, setting $A = 4$ and $\gamma = 4C \max\{ \tfrac{\|\Sigma\|_2\kappa^2 \ln(\nicefrac{\ln(T)}{\delta})^2}{\Tr(\Sigma)}, \kappa^{\nicefrac{3}{2}} \ln(\nicefrac{\ln(T)}{\delta}), \kappa \ln(\nicefrac{\ln(T)}{\delta})^2 \}$, we obtain the following 
\begin{align*}
    (t+\gamma)^2 D^{2}_{t+1} &\leq \textcircled{1} + \textcircled{2} + \textcircled{3} + \textcircled{4} + \textcircled{5} \\
    &\leq \RTd \left[1 + C_M 2^{2A+2} \left(6 + 3 \cdot 2^{4A-13} + 3 \cdot 2^{4A-17}\right) + A^2 4^{A+3} + \sqrt{C}\left(27A 2^{2A+1} + 5A 4^{A+1}\right)\right] \\
    &\leq \RTd \left(262145 + 524288 C_M + 75776\sqrt{C}\right) \\
    &\leq C \RTd
\end{align*}
where the last inequality is obtained by setting $C = \left(\sqrt{262145 + 524288 C_M} + 75776\right)^2$. It follows that
\begin{align*}
    D^2_{t+1} \leq \frac{C \RTd}{(t+\gamma)^2}
\end{align*}
Thus, we have proved by induction that conditioned on $E$, $D^2_t \leq \tfrac{C \RTd}{(t+\gamma)^2}$ for every $t \in [T+1]$. In particular, the following holds with probability at least $1 - \delta$:
\begin{align*}
    D^2_{T+1} &\leq C \left(\frac{\gamma+1}{T+\gamma}\right)^2 D^2_1 +  \frac{C\|\Sigma\|_2 \left(\deff + \sqrt{\deff} \ln(\nicefrac{K}{\delta})\right)}{\mu^2(T+\gamma)} \\
    &\lesssim \left(\frac{\gamma+1}{T+\gamma}\right)^2 D^2_1 +  \frac{\Tr(\Sigma)+ \sqrt{\|\Sigma\|_2 \Tr(\Sigma)} \ln(\nicefrac{\ln(T)}{\delta})}{\mu^2(T+\gamma)}
\end{align*}
\end{proof}
\subsection{Proof of Lemma \ref{lem:sgdcl-str-cvx-rec}}
\label{prf:lem-sgdcl-str-cvx-rec}
Let $\epsilon_t = \vb_t + \vv_t$
\begin{align*}
    D^2_{t+1} &= \| \Pi_{\mathcal{X}}(\vx_t - \eta_t \nabla F(\vx_t) + \eta_t \epsilon_t) - \xopt  \|^2 \\
    &\leq \| \vx_t - \eta_t \nabla F(\vx_t) + \eta_t \epsilon_t \|^2 \\
    &\leq D^2_t - 2 \eta_t \dotp{\nabla F(\vx_t)}{\vx_t - \xopt} + 2 \eta_t \dotp{\epsilon_t}{\vx_t - \xopt} + 2 \eta^2_t \|\nabla F(\vx_t)\|^2 + 2 \eta^2_t \|\epsilon_t\|^2
\end{align*}
By the coercivity lemma in \citet{bubeck_convex} \ad{make this self contained},
\begin{align*}
    \|\nabla F(\vx_t)\|^2 \leq (L+\mu) \dotp{\nabla F(\vx_t)}{\vx_t - \xopt} - L \mu D^2_t
\end{align*}
It follows that,
\begin{align*}
    D^2_{t+1} &\leq (1 - 2 \eta^2_t L \mu) D^2_t - 2 \eta_t [1 - \eta_t(L+\mu)] \dotp{\nabla F(\vx_t)}{\vx_t - \xopt} + 2 \eta_t \dotp{\epsilon_t}{\vx_t - \xopt} + 2 \eta^2_t \| \epsilon_t \|^2 \\
    &\leq (1 - 2 \eta^2_t L \mu) D^2_t - 2 \eta_t [1 - \eta_t(L+\mu)] \mu D^2_t + 2 \eta_t \dotp{\epsilon_t}{\vx_t - \xopt} + 2 \eta^2_t \| \epsilon_t \|^2 \\
    &\leq (1 - 2 \eta_t \mu - 2 \eta^2_t \mu^2) D^2_t + 2 \eta_t \dotp{\epsilon_t}{\vx_t - \xopt} + 2 \eta^2_t \| \epsilon_t \|^2 \\
    &\leq (1 - 2 \eta_t \mu) D^2_t + 2 \eta_t \dotp{\epsilon_t}{\vx_t - \xopt} + 2 \eta^2_t \| \epsilon_t \|^2 
\end{align*}
where the second inequality follows from the strong monotonicity property of $\nabla F(\vx)$ and the fact that $\eta_t \leq \tfrac{1}{L + \mu}$ since $\gamma \geq A \kappa + A - 1$. Now, substituting $\eta_t = \tfrac{A}{\mu(t+\gamma)}$,
\begin{align}
\label{eqn:sgdcl-str-cvx-descent}
    D^2_{t+1} \leq \left(1 - \frac{2A}{t+\gamma}\right) D^2_t + \frac{2A}{\mu(t+\gamma)} \dotp{\epsilon_t}{\vx_t - \xopt} + \frac{2 A^2 \|\epsilon_t\|^2}{\mu^2(t+\gamma)^2}
\end{align}
Since $1 - t \leq e^{-t} \ \forall \ t \in \bR$, we note that $\ \forall s < t$:
\begin{align*}
    \prod_{j=s+1}^{t} \left(1-\frac{2A}{j+\gamma}\right) &\leq \exp\left(-\sum_{j=s+1}^{t}\frac{2A}{j+\gamma}\right) \\
    &\leq \exp\left(-2A \int_{s+1}^{t+1} \frac{\dd u}{u + \gamma}\right) \\
    &\leq \exp\left(-2A \ln\left(\frac{t+1+\gamma}{s+1+\gamma}\right)\right) \\
    &= \left(\frac{s+1+\gamma}{t+1+\gamma}\right)^{2A} \\
    &\leq 2^{2A} \left(\frac{s+\gamma}{t+\gamma}\right)^{2A}
\end{align*}
Using the above bound to unroll the recurence \eqref{eqn:sgdcl-str-cvx-descent}, we obtain:
\begin{align*}
    D^2_{t+1} &\leq \left[\prod_{j=1}^{t} \left(1 - \frac{2A}{j+\gamma}\right)\right] D^2_1 + \frac{2A}{\mu} \sum_{s=1}^{t} \frac{\dotp{\epsilon_s}{\vx_s - \xopt}}{(s+\gamma)} \left[\prod_{j=s+1}^{t} \left(1 - \frac{2A}{j+\gamma}\right)\right] \\
    &+ \frac{2A^2}{\mu^2} \sum_{s=1}^{t} \frac{\|\epsilon_s\|^2}{(s+\gamma)^2} \left[\prod_{j=s+1}^{t} \left(1 - \frac{2A}{j+\gamma}\right)\right] \\
    &\leq \left(\frac{\gamma+1}{t+\gamma}\right)^{2A} D^2_1 + \frac{A2^{2A+1}}{\mu} \sum_{s=1}^{t} \frac{(s+\gamma)^{2A-1}}{(t+\gamma)^{2A}} \dotp{\epsilon_s}{\vx_s - \xopt} +  \frac{A^2 2^{2A+1}}{\mu^2}\sum_{s=1}^{t} \|\epsilon_s\|^2 \frac{(s+\gamma)^{2A-2}}{(t+\gamma)^{2A}}
\end{align*}
Expanding $\epsilon_s = \vb_s + \vv_s$ and using Young's inequality, we conclude that the following holds for every $t \in [1:T]$
\begin{align*}
    D^2_{t+1} &\leq \left(\frac{\gamma + 1}{t + \gamma}\right)^{2A} D^2_1 + \frac{A 2^{2A+1}}{\mu} \sum_{s=1}^{t} \frac{(s+\gamma-1)^{2A-1}}{(t+\gamma)^{2A}}\dotp{\vb_s}{\vx_s - \xopt} \\
    &+ \frac{A^2 4^{A+1}}{\mu^2} \sum_{s=1}^{t} \|\vb_s\|^2 \frac{(s+\gamma)^{2A-2}}{(t+\gamma)^{2A}} + \frac{A 2^{2A+1}}{\mu} \sum_{s=1}^{t} \frac{(s+\gamma)^{2A-1}}{(t+\gamma)^{2A}}\dotp{\vv_s}{\vx_s - \xopt} \\
     &+ \frac{A^2 4^{A+1}}{\mu^2} \sum_{s=1}^{t} \|\vv_s\|^2 \frac{(s+\gamma)^{2A-2}}{(t+\gamma)^{2A}}
\end{align*}
\subsection{Proof of Lemma \ref{lem:sgdcl-str-cvx-bias}}
\label{prf:lem-sgdcl-str-cvx-bias}
Note that by definition of $E_t$
\begin{align*}
    \|\nabla F(\vx_t)\| \ind{E_t} &\leq L D_t \ind{E_t} \\
    &\leq L \frac{\sqrt{C \RTd}}{(t+\gamma-1)}
\end{align*}
Recall that $\Gamma = \tfrac{\mu \sqrt{\RTd}}{\ln(\nicefrac{K}{\delta})}$ i.e. $\sqrt{\RTd} = \tfrac{\gamma \ln(\nicefrac{K}{\delta})}{\mu}$. Substituting this into the above inequality gives us:
\begin{align}
\label{eqn:bias-grad-control}
    \| \nabla F(\vx_t) \| \ind{E_t} &\leq \frac{\kappa \Gamma \ln(\nicefrac{K}{\delta})\sqrt{C}}{t+\gamma-1}
\end{align}
We recall that $\vb_t = \nabla F(\vx_t) - \bE[\clip_\Gamma(\vg_t) | \cF_{t-1}] = \bE[\vg_t | \cF_{t-1}] -\bE[\clip_\Gamma(\vg_t)|\cF_{t-1}]$. Since $\mathsf{Cov}[\vg_t | \cF_{t-1}] \preceq \Sigma$ by Assumption \ref{as:second_moment}, we obtain the following bound on $\|\vb_t\|$ by an application of Lemma \ref{lem:clipped-moment-control}
\begin{align*}
    \|\vb_t\| \leq \frac{\|\Sigma\|_2 \sqrt{\deff}}{\Gamma} + \frac{\|\nabla F(\vx_t)\|\sqrt{\|\Sigma\|_2}}{\Gamma} + \frac{\|\nabla F(\vx_t)\|^3}{\Gamma^2} + \frac{\|\Sigma\|_2 \deff \|\nabla F(\vx_t)\|}{\Gamma^2}
\end{align*}
Since $\tilvb_t = \vb_t \ind{E_t}$, it follows that
\begin{align*}
    \|\tilvb_t\| \leq \underbrace{\frac{\|\Sigma\|_2 \sqrt{\deff}}{\Gamma}}_{\textcircled{A}} + \underbrace{\frac{\|\nabla F(\vx_t)\| \ind{E_t}\sqrt{\|\Sigma\|_2}}{\Gamma}}_{\textcircled{B}} + \underbrace{\frac{\|\nabla F(\vx_t)\|^3 \ind{E_t}}{\Gamma^2}}_{\textcircled{C}} + \underbrace{\frac{\|\Sigma\|_2 \deff \|\nabla F(\vx_t)\| \ind{E_t}}{\Gamma^2}}_{\textcircled{D}}
\end{align*}
\paragraph{Bounding $\textcircled{A}$} By definition of $\Gamma$,
\begin{align*}
    \frac{\|\Sigma\|_2 \sqrt{\deff}}{\Gamma} &= \frac{\|\Sigma\|_2 \sqrt{\deff} \ln(\nicefrac{K}{\delta})}{\mu \sqrt{\RTd}} \\
    &\leq \frac{(T+\gamma) \|\Sigma\|_2 \sqrt{\deff} \ln(\nicefrac{K}{\delta})}{\mu T \sqrt{\RTd}} \\
    &\leq \frac{\mu \sqrt{\RTd}}{(T+\gamma)}
\end{align*}
Hence $\textcircled{A} \leq \frac{\mu \sqrt{\RTd}}{T+\gamma}$
\paragraph{Bounding $\textcircled{B}$} Since $\RTd \geq \tfrac{\|\Sigma\|_2 \deff (T+\gamma)}{\mu^2} \geq \tfrac{\|\Sigma\|_2 T}{\mu^2}$, $\sqrt{\|\Sigma\|_2} \leq \tfrac{\mu \sqrt{\RTd}}{\sqrt{d(T+\gamma)}}$. Substituting this into equation \eqref{eqn:bias-grad-control},
\begin{align*}
    \frac{\|\nabla F(\vx_t)\| \ind{E_t} \sqrt{\|\Sigma\|_2}}{\Gamma} &\leq \frac{\kappa \sqrt{C} \ln(\nicefrac{K}{\delta})}{t+\gamma-1} \cdot \frac{\mu \sqrt{\RTd}}{\sqrt{d(T+\gamma)}}
\end{align*}
Hence, $\textcircled{B} \leq  \mu \sqrt{\RTd} \cdot \frac{\kappa  \ln(\nicefrac{1}{\delta})\sqrt{C}}{(s+\gamma)\sqrt{d(T+\gamma}}$
\paragraph{Bounding \textcircled{C}} From equation \eqref{eqn:bias-grad-control},
\begin{align*}
    \frac{\|\nabla F(\vx_t)\|^3}{\Gamma^2} &\leq \frac{\kappa^3 C^{\nicefrac{3}{2}} \Gamma \ln(\nicefrac{1}{\delta})^3}{(t+\gamma-1)^3} \\
    &\leq \mu \sqrt{\RTd} \cdot \frac{\kappa^3 C^{\nicefrac{3}{2}} \ln(\nicefrac{1}{\delta})^2}{(t+\gamma-1)^3}
\end{align*}
Hence, $\textcircled{C} \leq \mu \sqrt{\RTd} \cdot \frac{\kappa^3 C^{\nicefrac{3}{2}} \ln(\nicefrac{1}{\delta})^2}{(t+\gamma-1)^3}$
\paragraph{Bounding \textcircled{D}} 
Recall that,
\begin{align*}
    \|\Sigma\|_2 \deff &\leq \frac{\mu^2 \RTd}{T+\gamma} \\
    \frac{\|\nabla F(\vx_t)\| \ind{E_t}}{\Gamma} &\leq \frac{\kappa \ln(\nicefrac{K}{\delta}) \sqrt{C}}{(t+\gamma-1)} \\
    \Gamma &= \frac{\mu \sqrt{\RTd}}{\ln(\nicefrac{K}{\delta})}
\end{align*}
It follows that 
\begin{align*}
    \textcircled{D} = \frac{\|\Sigma\|_2 \deff \|\nabla F(\vx_t)\| \ind{E_t}}{\Gamma^2} &\leq \mu \sqrt{\RTd} \cdot \frac{\kappa \ln(\nicefrac{K}{\delta})^2 \sqrt{C}}{(t+\gamma-1)(T+\gamma)}
\end{align*}
Hence,
\begin{align*}
    \|\tilvb_t\| \leq \mu \sqrt{\RTd}\left(\frac{1}{T+\gamma} + \frac{\kappa \ln(\nicefrac{1}{\delta})\sqrt{C}}{(t+\gamma-1)\sqrt{d(T+\gamma)}} + \frac{\kappa^3 C^{\nicefrac{3}{2}} \ln(\nicefrac{1}{\delta})^2}{(t+\gamma-1)^3} + \frac{\kappa \sqrt{C} \ln(\nicefrac{1}{\delta})^2}{(t+\gamma-1)(T+\gamma)}\right)
\end{align*}
\subsection{Proof of Lemma \ref{lem:sgdcl-str-cvx-variance}}
\label{prf:lem-sgdcl-str-cvx-variance}
For any $s \in [T]$, we recall that $\vv_s =\bE\left[\clip_\Gamma(\vg_s) | \cF_{s-1}\right] -\clip_\Gamma(\vg_s) $. Since $\bE[\vg_s | \cF_{s-1}] = \nabla F(\vx_s)$ and $\mathsf{Cov}[\vg_s | \cF_{s-1}] \preceq \Sigma$, we obtain the following from Lemma \ref{lem:clipped-moment-control}
\begin{align*}
    \|\bE\left[\vv_s \vv_s^T | \cF_{s-1}\right]\|_2 &= \|\Cov\left[\clip_\Gamma(\vg_s) | \cF_{s-1}\right]\| \leq \|\Sigma\|_2 + \frac{\|\nabla F(\vx_s)\|^4 }{\Gamma^2} +  \frac{\|\nabla F(\vx_s)\|^2 \Tr(\Sigma) }{\Gamma^2} \\
    \Tr\left(\bE\left[\vv_s \vv_s^T | \cF_{s-1}\right]\right) &= \Tr\left(\Cov\left[\clip_\Gamma(\vg_s) | \cF_{s-1}\right]\right) \leq \Tr(\Sigma)
\end{align*}
For $s \in [1:T]$ define $\bE[\tilvv_s \tilvv^T_s | \cF_{s-1}] = \tilSigma_s$. Since $\ind{E_s}$ is $\cF_{s-1}$-measurable and $\tilvv_s = \vv_s \ind{E_s}$, it follows that $\tilSigma_s = \bE\left[\vv_s \vv_s^T | \cF_s\right] \ind{E_s}$. Hence, we conclude the following from the above inequality
\begin{align}
\label{eqn:clipped-cov-norm-bound}
    \|\tilSigma_s\|_2 &\leq \|\Sigma\|_2 + \frac{\|\nabla F(\vx_s)\|^4 \ind{E_s}}{\Gamma^2} +  \frac{\|\nabla F(\vx_s)\|^2 \Tr(\Sigma) \ind{E_s}}{\Gamma^2} \nonumber \\
    \Tr(\tilSigma_s) &\leq \Tr(\Sigma) 
\end{align}
Now, for $s \in [t]$, we define $h_s$ as follows:
\begin{align*}
    h_s = \dotp{\tilvv_s}{\vd_s} \frac{(s+\gamma)^{2A-1}}{(t+\gamma)^{2A-2}}
\end{align*}
Note that $\bE[h_s | \cF_{s-1}] = \dotp{\bE[\tilvv_s|\cF_{s-1}]}{\vd_s} \tfrac{(s+\gamma)^{2A-1}}{(t+\gamma)^{2A-2}} = 0$. Furthermore, since $\|\tilvv_s\| \leq \|\vv_s\| \leq 2 \Gamma$ and $\|\vd_s\| \leq \tfrac{\sqrt{C \RTd}}{s+\gamma-1}$
\begin{align}
\label{eqn:hs-norm-bound}
    |h_s| &\leq 2 \Gamma \cdot \frac{\sqrt{C \RTd}}{s+\gamma-1} \cdot \frac{(s+\gamma)^{2A-1}}{(t+\gamma)^{2A-2}} \nonumber \\
    &\leq 4 \Gamma \sqrt{C \RTd} \left(\frac{s+\gamma}{t+\gamma}\right)^{2A-2} \nonumber \\
    &\leq \frac{4 \mu \RTd \sqrt{C}}{\ln(\nicefrac{K}{\delta})}
\end{align}
For $s \in [t]$, define $\sigma^2_s = \bE[ h^2_s | \cF_{s-1}]$. It follows that,
\begin{align*}
    \sigma^2_s &= \frac{(s+\gamma)^{4A-2}}{(t+\gamma)^{4A-4}} \vv^T_s \tilSigma_s \vv_s \\
    &\leq \frac{(s+\gamma)^{4A-2}}{(t+\gamma)^{4A-4}} \|\vv_s\|^2 \|\tilSigma_s\|_2 \\
    &\leq 4 C \RTd \cdot \left(\frac{s+\gamma}{t+\gamma}\right)^{4A-4} \|\tilSigma_s\|_2 \\
    &\leq 4 C \RTd \left(\frac{s+\gamma}{t+\gamma}\right)^{4A-4} \left(\|\Sigma\|_2 + \frac{\|\nabla F(\vx_s)\|^4}{\Gamma^2} + \frac{\|\nabla F(\vx_s)\|^2 \|\Sigma\|_2 \deff}{\Gamma^2} \right)
\end{align*}
where the last inequality follows from equation \eqref{eqn:clipped-cov-norm-bound} and the fact that $\deff = \nicefrac{\Tr(\Sigma)}{\|\Sigma\|_2}$. We now use the above inequality to control $\sum_{s=1}^{t} \sigma^2_s \ln(\nicefrac{K}{\delta})$ as follows:
\begin{align}
\label{eqn:clipped-var-sum-eqn}
\sum_{s=1}^{t} \sigma^2_s \ln(\nicefrac{K}{\delta}) &\leq 4 C \RTd \ln(\nicefrac{K}{\delta}) \sum_{s=1}^{t} \left(\frac{s+\gamma}{t+\gamma}\right)^{4A-4} \|\Sigma\|_2 \nonumber \\
&+ 4 C \RTd \ln(\nicefrac{K}{\delta}) \sum_{s=1}^{t} \left(\frac{s+\gamma}{t+\gamma}\right)^{4A-4} \frac{\|\nabla F(\vx_s)\|^4}{\Gamma^2} \nonumber \\
&+ 4 C \RTd \ln(\nicefrac{K}{\delta}) \sum_{s=1}^{t} \left(\frac{s+\gamma}{t+\gamma}\right)^{4A-4} \frac{\|\nabla F(\vx_s)\|^2 \|\Sigma\|_2 \deff}{\Gamma^2} 
\end{align}
We now control each of the three terms in the above inequality as follows
\begin{align*}
    4 C \RTd \ln(\nicefrac{K}{\delta}) \sum_{s=1}^{t} \left(\frac{s+\gamma}{t+\gamma}\right)^{4A-4} \|\Sigma\|_2 &\leq 
    4 C \RTd \ln(\nicefrac{K}{\delta}) \|\Sigma\|_2 t \\
    &\leq 4 C t \RTd \cdot \frac{\mu^2 \RTd}{(T+\gamma)\sqrt{\deff}} \\
    &\leq 4 \mu^2 C \RTd^2
\end{align*}
Before controlling the remaining two terms, we recall from \eqref{eqn:bias-grad-control} in the proof of Lemma \ref{lem:sgdcl-smooth-cvx-bias-control} that 
\begin{align*}
\|\nabla F(\vx_s)\| \ind{E_s} &\leq \frac{\kappa \Gamma \ln(\nicefrac{K}{\delta}) \sqrt{C}}{s+\gamma-1} \\
&\leq \frac{2\kappa \Gamma \ln(\nicefrac{K}{\delta}) \sqrt{C}}{s+\gamma}
\end{align*}
where $\Gamma = \tfrac{\mu \sqrt{\RTd}}{\ln(\nicefrac{K}{\delta})}$. It follows that 
\begin{align*}
    \frac{\|\nabla F(\vx_s)\|^4}{\Gamma^2} &\leq \frac{16 \kappa^4 C^2 \Gamma^2 \ln(\nicefrac{K}{\delta})^4}{(s+\gamma)^4} \\
    &= \mu^2 \RTd \cdot \frac{16 \kappa^4 C^2 \ln(\nicefrac{K}{\delta})^2}{(s+\gamma)^4}
\end{align*}
Thus, we can control the second term in equation \eqref{eqn:clipped-var-sum-eqn} as follows
\begin{align*}
    4 C \RTd \ln(\nicefrac{K}{\delta}) \sum_{s=1}^{t} \left(\frac{s+\gamma}{t+\gamma}\right)^{4A-4} \frac{\|\nabla F(\vx_s)\|^4}{\Gamma^2} &\leq 64 \mu^2 C \RTd^2  \cdot \kappa^4 C^2 \ln(\nicefrac{K}{\delta})^3 \sum_{s=1}^{t} \frac{(s+\gamma)^{4A-8}}{(t+\gamma)^{4A-4}} \\
    &\leq 64 \mu^2 C \RTd^2 \cdot \frac{\kappa^4 C^2 \ln(\nicefrac{K}{\delta})^3}{(t+\gamma)^3} \\
    &\leq 64 \mu^2 C \RTd^2
\end{align*}
where the second inequality follows by setting $A \geq 2$ and the last inequality follows by setting $\gamma \geq \kappa^{\nicefrac{4}{3}} C^{\nicefrac{2}{3}} \ln(\nicefrac{K}{\delta})$.

To control the third term in \eqref{eqn:clipped-var-sum-eqn}, we note that by equation \eqref{eqn:bias-grad-control} and the definition of $\RTd$
\begin{align*}
    \frac{\|\nabla F(\vx_s)\|^2 \|\Sigma\|_2 \deff}{\Gamma^2} &\leq 4 \mu^2 \RTd \cdot \frac{\kappa^2 C \ln(\nicefrac{K}{\delta})^2}{(T+\gamma)(s+\gamma)^2}
\end{align*}
It follows that
\begin{align*}
    4 C\RTd \ln(\nicefrac{K}{\delta}) \sum_{s=1}^{t} \left(\frac{s+\gamma}{t+\gamma}\right)^{4A-4} \frac{\|\nabla F(\vx_s)\|^2 \|\Sigma\|_2 \deff}{\Gamma^2} &\leq 16 \mu^2 C \RTd^2 \cdot \frac{\kappa^2 C \ln(\nicefrac{K}{\delta})^3}{T+\gamma} \sum_{s=1}^{t} \frac{(s+\gamma)^{4A-6}}{(t+\gamma)^{4A-4}} \\
    &\leq 16 \mu^2 C \RTd^2 \cdot \frac{\kappa^2 C \ln(\nicefrac{K}{\delta})^3}{(T+\gamma)(t+\gamma)} \\
    &\leq 16 \mu^2 C \RTd^2
\end{align*}
where the second inequality follows by setting $A \geq \nicefrac{3}{2}$ and the last inequality follows by setting $\gamma \geq \kappa \sqrt{C} \ln(\nicefrac{K}{\delta})^{\nicefrac{3}{2}}$. Substituting the above bounds into equation \eqref{eqn:clipped-var-sum-eqn}, we note that
\begin{align*}
    \sum_{s=1}^{t} \sigma^2_s \ln(\nicefrac{K}{\delta}) \leq 84 \mu^2 C \RTd
\end{align*}
Thus, by Freedman's inequality (Lemma \ref{lem:scalar-freedman}), we conclude that the following holds with probability at least $1 - \nicefrac{\delta}{2}$ uniformly for every $t \in [T]$:
\begin{equation}
\label{eqn:vs-ds-freedman-conc}
    \sum_{s=1}^{t} \frac{(s+\gamma)^{2A-1}}{(t+\gamma)^{2A-2}} \dotp{\tilvv_s}{\vd_s} = \sum_{s=1}^{t} h_s \leq 2 \sqrt{\sum_{s=1}^{t} \sigma^2_s \ln(\nicefrac{K}{\delta})} + 8 \mu \RTd \sqrt{C} \leq 27 \RTd \sqrt{C}
\end{equation}
To prove the second inequality of this lemma, we define $\vz_s = \tilvv_s \cdot \left(\tfrac{s+\gamma}{t+\gamma}\right)^{A-1}$ for $s \in [t]$. Note that $\bE[\vz_s | \cF_{s-1}] = 0$ and $\|\vz_s\| \leq \|\tilvv_s\| \leq 2\Gamma$. Define the PSD matrices $\vG_s = \bE[\vz_s \vz^T_s | \cF_{s-1}] = \left(\tfrac{s+\gamma}{t+\gamma}\right)^{2A-2} \tilSigma_s$. Recalling that $\Tr(\tilSigma_s) \leq \Tr(\Sigma)$ and the bound obtained on $\|\tilSigma_s|_2$ in equation \eqref{eqn:clipped-cov-norm-bound}, we infer the following:
\begin{align*}
\Tr(\vG_s) &\leq \left(\frac{s+\gamma}{t+\gamma}\right)^{2A-2} \Tr(\Sigma)   \\
\|\vG_s\|_2 &\leq \left(\frac{s+\gamma}{t+\gamma}\right)^{2A-2} \|\Sigma\|_2 + \left(\frac{s+\gamma}{t+\gamma}\right)^{2A-2} \frac{\|\nabla F(\vx_s)\|^4 \ind{E_s}}{\Gamma^2}  \\
&+ \left(\frac{s+\gamma}{t+\gamma}\right)^{2A-2} \frac{\|\nabla F(\vx_s)\|^2 \Tr(\Sigma) \ind{E_s}}{\Gamma^2}
\end{align*}
Substituting \eqref{eqn:bias-grad-control} into the bound for $\|\vG_s\|_2$, we obtain the following
\begin{align}
\label{eqn:cov-G-trace-norm-bound}
\Tr(\vG_s) &\leq q_s = \left(\frac{s+\gamma}{t+\gamma}\right)^{2A-2} \Tr(\Sigma)  \nonumber \\
\|\vG_s\|_2 &\leq p_s = \left(\frac{s+\gamma}{t+\gamma}\right)^{2A-2} \|\Sigma\|_2 + \frac{(s+\gamma)^{2A-6}}{(t+\gamma)^{2A-2}} \cdot 16 \kappa^4 C^2 \ln(\nicefrac{K}{\delta})^2 \mu^2 \RTd \nonumber \\
&+ \frac{(s+\gamma)^{2A-4}}{(t+\gamma)^{2A-2}} \cdot 4 \kappa^2 C \ln(\nicefrac{K}{\delta})^2 \|\Sigma\|_2 \deff
\end{align}
By Cauchy Schwarz Inequality,
\begin{align}
\label{eqn:G-square-norm-bound}
p^2_s &\leq 3\left(\frac{s+\gamma}{t+\gamma}\right)^{4A-4} \|\Sigma\|^2_2 + 3 \cdot \frac{(s+\gamma)^{4A-12}}{(t+\gamma)^{4A-4}} \cdot 256 \kappa^8 C^4 \ln(\nicefrac{K}{\delta})^4 \mu^4 \RTd^2 \nonumber \\
&+ 3 \cdot \frac{(s+\gamma)^{4A-8}}{(t+\gamma)^{4A-4}} \cdot 16 \kappa^4 C^2 \ln(\nicefrac{K}{\delta})^4 \|\Sigma\|^2_2 \deff^2
\end{align}
Since $T \gtrsim \ln(\ln(d))$, $K = \ln(\ln(T))$ and $q_s \leq \Tr(\Sigma) \ \forall s \in [T]$, our choice of $\Gamma$ ensures that the conditions of Corollary \ref{cor:pac-bayes-quad-variation} are satisfied. Hence, by Corollary \ref{cor:pac-bayes-quad-variation}, we conclude that the following holds with probability $1 - \nicefrac{\delta}{2}$ uniformly for all $t \in [T]$
\begin{align*}
    \sum_{s=1}^{t} \|\vz_s\|^2 &\leq 4 C_M \Gamma^2 \ln(\nicefrac{K}{\delta}) + C_M \sum_{s=1}^{\UP(t)} q_s + \frac{C_M t}{4 \Gamma^2} \sum_{s=1}^{\UP(t)} p^2_s
\end{align*}
Simplifying the above using equations \eqref{eqn:cov-G-trace-norm-bound}, \eqref{eqn:G-square-norm-bound} and the definition of $\Gamma$, we obtain the following inequality which holds with probability at least $1 - \nicefrac{\delta}{2}$ uniformly for every $t \in [T]$:
\begin{align}
\label{eqn:z-quad-variation-bound}    
\sum_{s=1}^{t} \|\vz_s\|^2 &\leq 4 C_M \mu^2 \RTd + C_M \sum_{s=1}^{\UP(t)} \left(\frac{s+\gamma}{t+\gamma}\right)^{2A-2} \Tr(\Sigma) + \frac{3 C_M}{4} \sum_{s=1}^{\UP(t)} \left(\frac{s+\gamma}{t+\gamma}\right)^{4A-4} \frac{t \ln(\nicefrac{K}{\delta})^2 \|\Sigma\|^2}{\mu^2 \RTd} \nonumber \\
&+ \frac{3C_M}{4} \sum_{s=1}^{\UP(t)} \frac{(s+\gamma)^{4A-12}}{(t+\gamma)^{4A-4}} \cdot 256 t \kappa^8 C^4 \ln(\nicefrac{K}{\delta})^6 \mu^2 \RTd \nonumber \\
&+ \frac{3 C_M}{4} \sum_{s=1}^{\UP(t)} \frac{(s+\gamma)^{4A-8}}{(t+\gamma)^{4A-4}} \frac{16 t \kappa^4 C^2 \ln(\nicefrac{K}{\delta})^6 \Tr(\Sigma)^2}{\mu^2 \RTd}
\end{align}
We now simplify each term in the above inequality by using the fact that $\UP(t) \leq \min \{ T, 2t \}$. To this end, the second term is simplified as follows by using the fact that $A \geq 1$
\begin{align*}
    \sum_{s=1}^{\UP(t)} \left(\frac{s+\gamma}{t+\gamma}\right)^{4A-4} \Tr(\Sigma) &\leq \UP(t) \Tr(\Sigma) \leq \mu^2 \RTd
\end{align*}
We now control the third term as follows using the definition of $\RTd$ and the fact that $A \geq 1$:
\begin{align*}
    \sum_{s=1}^{\UP(t)} \left(\frac{s+\gamma}{t+\gamma}\right)^{4A-4} \frac{t \ln(\nicefrac{K}{\delta})^2 \|\Sigma\|^2}{\mu^2 \RTd} &\leq \mu^2 \RTd \cdot \frac{t \UP(t)}{d(T+\gamma)^2} \\
    &\leq \mu^2 \RTd
\end{align*}
To control the fourth term, we use the fact that $A \geq 3$ and note that for $s \leq 2t$, $(s+\gamma) \leq 2(t+\gamma)$
\begin{align*}
    \sum_{s=1}^{\UP(t)} \frac{(s+\gamma)^{4A-12}}{(t+\gamma)^{4A-4}} \cdot 256 t \kappa^8 C^4 \ln(\nicefrac{K}{\delta})^6 \mu^2 \RTd &\leq \mu^2 \RTd 2^8 \kappa^8 C^4 \ln(\nicefrac{K}{\delta})^6 \sum_{s=1}^{2t} \frac{(s+\gamma)^{4A-12}}{(t+\gamma)^{4A-4}} \\
    &\leq \mu^2 \RTd \cdot \frac{t^2 2^{4A-3} \kappa^8 C^4 \ln(\nicefrac{K}{\delta})^6}{(t+\gamma)^8} \\
    &\leq \mu^2 \RTd \cdot \frac{2^{4A-3} \kappa^8 C^4 \ln(\nicefrac{K}{\delta})^6}{(t+\gamma)^6} \\
    &\leq \mu^2 \RTd 2^{4A-15}
\end{align*}
where the last inequality follows by setting $\gamma \geq 4 \kappa^{\nicefrac{4}{3}} C^{\nicefrac{4}{3}} \ln(\nicefrac{K}{\delta})$

We control the last term by a similar argument
\begin{align*}
    \sum_{s=1}^{\UP(t)} \frac{(s+\gamma)^{4A-8}}{(t+\gamma)^{4A-4}} \frac{16 t \kappa^4 C^2 \ln(\nicefrac{K}{\delta})^6 \Tr(\Sigma)^2}{\mu^2 \RTd} &\leq \mu^2 \RTd \cdot \frac{t}{(T+\gamma)^2} \cdot 2^4 \kappa^4 C^2 \ln(\nicefrac{K}{\delta})^6 \sum_{s=1}^{2t} \frac{(s+\gamma)^{4A-8}}{(t+\gamma)^{4A-4}} \\
    &\leq \frac{t^2}{(T+\gamma)^2 (t+\gamma)^4} \cdot 2^{4A-3} \kappa^4 C^2 \ln(\nicefrac{K}{\delta})^6 \\
    &\leq 2^{4A-11} \mu^2 \RTd
\end{align*}
where the last inequality follows by setting $\gamma \geq 4 \kappa \sqrt{C} \ln(\nicefrac{K}{\delta})^{\nicefrac{3}{2}}$. Substituting the obtained bounds into equation \eqref{eqn:z-quad-variation-bound}, we conclude that the following holds with probability at least $1 - \nicefrac{\delta}{2}$ uniformly for every $t \in [T]$:
\begin{align*}
    \sum_{s=1}^{t} \left(\frac{s+\gamma}{t+\gamma}\right)^{2A-2} \|\tilvv_s\|^2 = \sum_{s=1}^{t}  \|\vz_s\|^2 \leq C_M \mu^2 \RTd \left(6 + 3 \cdot 2^{4A-13} + 3 \cdot 2^{4A-17}\right)
\end{align*}
The proof is completed via a union bound.

\section{Analysis for Smooth Strongly Convex Functions Under Quadratic Growth Noise Model}
\label{app-sec:general-smooth-str-cvx-analysis}
Following a convention similar to that of Section \ref{app-sec:smooth-str-cvx-analysis}, let $K = 4 \max \{ 8,  C_M, \ln(T)\}$. For $t \geq 1$, define the filtration $\cF_t = \sigma\left(\vx_1, \vg_s | 1 \leq s \leq t\right)$ and $\cF_0 = \sigma(\vx_1)$. Furthermore, let $\nabla F(\vx_t) = \clip_\Gamma(\vg_t) + \vb_t + \vv_t$ where
$\vb_t = \nabla F(\vx_t) - \bE[\clip_\Gamma(\vg_t) | \cF_{t-1}]$ and $\vv_t = \bE[\clip_\Gamma(\vg_t) | \cF_{t-1}] - \clip_\Gamma(\vg_t)$. As beforem, we note that $\bE[\vv_t | \cF_{t-1}] = 0$ and $\|\vv_t\| \leq 2 \Gamma$. Hence $\vv_t$ is an $\cF$ adapted almost surely bounded martingale difference sequence. Now, let $D_t = \| \vx_t - \xopt \|$ where $\xopt$ is the unique minimizer of $F$ (guaranteed by strong convexity). We also define $\Sigma_t = \Sigma(\vx_t)$ and note that $\|\Sigma_t\| \leq \alpha D^2_t + \beta$ and $\Tr(\Sigma_t) \leq \deff \left(\alpha D^2_t + \beta\right)$. Furthermore $\Sigma_t$ is $\cF_{t-1}$ measurable. Let $\eta_t = \tfrac{A}{t+\gamma}$ where $A \geq 1$ is a numerical constant and $\gamma \geq A \kappa + A - 1$ is a constant depending on $\kappa, d$ and $\ln(\nicefrac{1}{\delta})$ which we shall specify later. Note that our choice of $\gamma$ ensures that $\eta_t \leq \tfrac{1}{L + \mu}$ for $t \in [1:T]$ An application of Lemma \ref{lem:sgdcl-str-cvx-rec} shows that $D_t$ satisfies the following for every $t \in [1:T]$
\begin{align*}
    D^2_{t+1} &\leq \left(\frac{\gamma + 1}{t + \gamma}\right)^{2A} D^2_1 + \frac{A 2^{2A+1}}{\mu} \sum_{s=1}^{t} \frac{(s+\gamma-1)^{2A-1}}{(t+\gamma)^{2A}}\dotp{\vb_s}{\vx_s - \xopt} \\
    &+ \frac{A^2 4^{A+1}}{\mu^2} \sum_{s=1}^{t} \|\vb_s\|^2 \frac{(s+\gamma)^{2A-2}}{(t+\gamma)^{2A}} + \frac{A 2^{2A+1}}{\mu} \sum_{s=1}^{t} \frac{(s+\gamma)^{2A-1}}{(t+\gamma)^{2A}}\dotp{\vv_s}{\vx_s - \xopt} \\
     &+ \frac{A^2 4^{A+1}}{\mu^2} \sum_{s=1}^{t} \|\vv_s\|^2 \frac{(s+\gamma)^{2A-2}}{(t+\gamma)^{2A}}
\end{align*}
We now define $\RTd$ as follows:
\begin{align*}
    \RTd = (\gamma + 1)^2 D^2_1 + \frac{(T+\gamma)\beta}{\mu^2} \left(\deff + \sqrt{\deff} \ln(\nicefrac{K}{\delta})\right)
\end{align*}
It is easy to see that $\Gamma = \tfrac{\mu \sqrt{\RTd}}{\ln(\nicefrac{K}{\delta})}$. In our proof of Theorem \ref{thm:SGDcl-smooth-str-cvx}, we shall establish that the following holds with probability at least $1 - \delta$:
\begin{align*}
    D^2_{t} \leq \frac{C\RTd}{(t+\gamma-1)^2} \ \forall \ t \in [1:T+1]
\end{align*}
where $C > 0$ is an absolute numerical constant to be chosen later. To this end, we define the event $E_t$ and the  $\cF_t$ measurable random variables $\vd_t, \tilvb_t, \tilvv_t$ as follows for $t \in [1:T+1]$:
\begin{align*}
    E_t &= \left \{D^2_t \leq \frac{C\RTd}{(t+\gamma-1)^2} \right\} \\
    \vd_t &= (\vx_t - \xopt) \ind{E_t} \\
    \tilvb_t &= \vb_t \ind{E_t} \\
    \tilvv_t &= \vv_t \ind{E_t} 
\end{align*}
We use the following Lemma to control the bias vector $\tilvb_t$
\begin{lemma}[Bias Control]
\label{lem:sgdcl-gen-str-cvx-bias}
The following holds almost surely for every $t \in [1:T]$:
\begin{align*}
    \|\tilvb_t\| \leq \mu \sqrt{\RTd} \sum_{j=1}^{7} B_j
\end{align*}
where $B_1, \dots, B_7$ are defined as follows:
\begin{align*}
    B_1 &= \frac{1}{T+\gamma}, \\
    B_2 &= \frac{4 \alpha C \sqrt{d} \ln(\nicefrac{\ln(T)}{\delta})}{\mu^2 (s+\gamma)^2}, \\
    B_3 &= \frac{2 \kappa \sqrt{C} \ln(\nicefrac{\ln(T)}{\delta})}{(s+\gamma)\sqrt{d(T+\gamma)}}, \\
    B_4 &= \frac{4 \kappa C \ln(\nicefrac{\ln(T)}{\delta})\sqrt{\alpha}}{\mu (s+\gamma)^2}, \\
    B_5 &= \frac{8 \kappa^3 C^{\nicefrac{3}{2}} \ln(\nicefrac{\ln(T)}{\delta})^2}{(s+\gamma)^3}, \\
    B_6 &= \frac{2 \kappa \sqrt{C} \ln(\nicefrac{\ln(T)}{\delta})^2}{(s+\gamma)(T+\gamma)}, \\
    B_7 &= \frac{8 \alpha \kappa d \ln(\nicefrac{\ln(T)}{\delta})^2 C^{\nicefrac{3}{2}}}{\mu^2 (s+\gamma)^3}
\end{align*}
\end{lemma}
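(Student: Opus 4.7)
The plan is to extend the bias argument of Lemma \ref{lem:sgdcl-str-cvx-bias} to the quadratic growth noise model of \bref{as:second_moment_generalized}, in which both $\|\Sigma(\vx)\|_2$ and $\Tr(\Sigma(\vx))$ scale with $\|\vx-\xopt\|^2$ rather than being uniformly bounded. The starting point is Lemma \ref{lem:clipped-moment-control} applied to the conditional law of $\vg_t$ given $\cF_{t-1}$, which has mean $\nabla F(\vx_t)$ and covariance $\Sigma_t := \Sigma(\vx_t)$. This yields
$\|\vb_t\| \leq \tfrac{\sqrt{\|\Sigma_t\|_2}}{\Gamma}(\|\nabla F(\vx_t)\| + \sqrt{\Tr(\Sigma_t)}) + \tfrac{\|\nabla F(\vx_t)\|}{\Gamma^2}(\|\nabla F(\vx_t)\|^2 + \Tr(\Sigma_t))$,
which naturally splits into four raw summands (a)--(d).

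To convert each summand into the claimed form I would multiply by $\ind{E_t}$ and substitute three key inequalities: on $E_t$ one has $D_t \leq \sqrt{C\RTd}/(t+\gamma-1) \leq 2\sqrt{C\RTd}/(t+\gamma)$; by \bref{as:smoothness} with $\nabla F(\xopt)=0$, $\|\nabla F(\vx_t)\| \leq LD_t = \kappa\mu D_t$; and by \bref{as:second_moment_generalized}, $\|\Sigma_t\|_2 \leq \alpha D_t^2+\beta$ together with $\Tr(\Sigma_t) \leq \deff(\alpha D_t^2+\beta)$. The clipping level satisfies $\Gamma = \mu\sqrt{\RTd}/\ln(K/\delta)$, and the definition of $\RTd$ supplies the key lower bounds $\mu^2\RTd \geq (T+\gamma)\beta\deff$ and $\mu^2\RTd \geq (T+\gamma)\beta\sqrt{\deff}\ln(K/\delta)$.

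The bookkeeping step is to additively decompose each raw summand, using $\sqrt{a+b}\leq\sqrt{a}+\sqrt{b}$ to split $\sqrt{\|\Sigma_t\|_2}$ into $\sqrt{\alpha}D_t + \sqrt{\beta}$ and distributing $\alpha D_t^2+\beta$ linearly. This yields the following correspondence with the seven target terms: the $\sqrt{\beta\deff}/\Gamma$ piece of (b) becomes $B_1$ by invoking the second lower bound on $\RTd$; the $\sqrt{\deff}\alpha D_t^2/\Gamma$ piece of (b) becomes $B_2$ using $\sqrt{\deff}\leq\sqrt{d}$; the $\sqrt{\beta}$ and $\sqrt{\alpha}D_t$ pieces of (a) produce $B_3$ and $B_4$, respectively; term (c) directly yields $B_5$ after substituting $D_t^3$; the $\beta$ piece of (d) yields $B_6$ using the first lower bound on $\RTd$ to absorb a factor $\beta\deff/(\mu^2\RTd)$ into $1/(T+\gamma)$; and the $\alpha D_t^2$ piece of (d) produces $B_7$, again using $\deff\leq d$.

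The main obstacle is purely bookkeeping rather than conceptual: seven specified outputs must be matched against roughly ten raw pieces, each carrying its own exponents in $\kappa,\alpha,\beta,\deff,C$ and its own power of $(t+\gamma)$ versus $(T+\gamma)$. The cleanest execution is to write (a)--(d) as explicit sums of their constituent terms, substitute all the bounds uniformly, and collect. No new technique is required beyond Lemma \ref{lem:sgdcl-str-cvx-bias}, but the QG model introduces a second scale (the $\alpha D_t^2$ contribution to the covariance) that effectively doubles the number of sub-terms to track and necessitates the two distinct lower bounds on $\RTd$.
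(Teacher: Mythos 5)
Your proposal is correct and follows essentially the same route as the paper: apply Lemma \ref{lem:clipped-moment-control} with the conditional mean $\nabla F(\vx_t)$ and covariance $\Sigma_t$, bound $D_t$, $\|\nabla F(\vx_t)\|$, $\|\Sigma_t\|_2$ and $\Tr(\Sigma_t)$ on the event $E_t$ via smoothness and the quadratic-growth assumption, substitute $\Gamma = \mu\sqrt{\RTd}/\ln(\nicefrac{K}{\delta})$ together with the two lower bounds $\mu^2\RTd \geq (T+\gamma)\beta\deff$ and $\mu^2\RTd \geq (T+\gamma)\beta\sqrt{\deff}\ln(\nicefrac{K}{\delta})$, and distribute the $\alpha D_t^2$ and $\beta$ contributions to recover $B_1$ through $B_7$ in exactly the correspondence the paper uses (the paper's terms \textcircled{A}--\textcircled{D} are your (b), (a), (c), (d) after bounding $\Tr(\Sigma_t)\leq\deff\|\Sigma_t\|_2$). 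The only difference is cosmetic bookkeeping, not a different argument.
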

We use the following lemma to control the variance vector $\tilvv_t$. The proof of this lemma, which uses Freedman's inequality and the PAC Bayesian martingale concentration inequality of Corollary \ref{cor:sq_sum_Conc}. 
\begin{lemma}[Variance Control]
\label{lem:sgdcl-gen-str-cvx-variance}
The following holds with probability at least $1 - \delta$ uniformly for every $t \in [T]$ for $A \geq 3$ and $\gamma \geq 4C \max \{ \frac{\alpha \deff}{\mu^2}, \frac{\alpha \ln(\nicefrac{K}{\delta})}{\mu^2}, \kappa^{\nicefrac{4}{3}} \ln(\nicefrac{K}{\delta}), \kappa \ln(\nicefrac{K}{\delta})^{\nicefrac{3}{2}}, \frac{\kappa^{\nicefrac{2}{3}} \deff^{\nicefrac{1}{3}} \alpha^{\nicefrac{1}{3}}}{\mu^{\nicefrac{2}{3}}} \ln(\nicefrac{K}{\delta}) \}$ 
 \begin{align*}
     \sum_{s=1}^{t} \frac{(s+\gamma)^{2A-1}}{(t+\gamma)^{2A-2}} \dotp{\tilvv_s}{\vd_s} &\lesssim 34 \cdot \mu \RTd \sqrt{C} \\
     \sum_{s=1}^{t} \left(\frac{s+\gamma}{t+\gamma}\right)^{2A-2} \| \tilvv_s \|^2 &\lesssim  C_M \left(2^{4A-3 }\frac{25}{4} + 5 \cdot 2^{4A-11} + 5 \cdot 2^{4A-16} + 5 \cdot 2^{4A-13}\right)\mu^2 \RTd 
 \end{align*}
 where $C_M$ is the absolute numerical constant defined in Corollary \ref{cor:pac-bayes-quad-variation}.
\end{lemma}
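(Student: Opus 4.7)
The plan is to adapt the proof template of Lemma \ref{lem:sgdcl-str-cvx-variance} to the quadratic growth regime, with the crucial difference that the conditional covariance $\Sigma_s = \Sigma(\vx_s)$ now satisfies $\|\Sigma_s\|_2 \leq \alpha D_s^2 + \beta$ and $\Tr(\Sigma_s) \leq \deff(\alpha D_s^2 + \beta)$ rather than being bounded by fixed quantities. The key observation that makes this tractable is that under the indicator $\ind{E_s}$, we have $D_s^2 \leq C\RTd/(s+\gamma-1)^2$, so the $\alpha D_s^2$ contribution decays in $s$ and can be absorbed into the leading $\mu^2\RTd$ budget once $\gamma$ is taken sufficiently large.

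For the first inequality, I would set $h_s = \dotp{\tilvv_s}{\vd_s} \cdot (s+\gamma)^{2A-1}/(t+\gamma)^{2A-2}$ and apply scalar Freedman (Lemma \ref{lem:scalar-freedman}). The almost sure bound $|h_s| \leq 4 \mu\RTd\sqrt{C}/\ln(\nicefrac{K}{\delta})$ follows just as in \eqref{eqn:hs-norm-bound}. For the conditional second moment, Lemma \ref{lem:clipped-moment-control} gives $\|\bE[\tilvv_s\tilvv_s^\top\mid\cF_{s-1}]\|_2 \leq \|\Sigma_s\|\ind{E_s} + \|\nabla F(\vx_s)\|^4\ind{E_s}/\Gamma^2 + \|\nabla F(\vx_s)\|^2\Tr(\Sigma_s)\ind{E_s}/\Gamma^2$. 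I would then substitute $\|\Sigma_s\|\ind{E_s} \leq \alpha C\RTd/(s+\gamma-1)^2 + \beta$, $\Tr(\Sigma_s)\ind{E_s} \leq \deff(\alpha C\RTd/(s+\gamma-1)^2 + \beta)$, and $\|\nabla F(\vx_s)\|\ind{E_s} \leq L\sqrt{C\RTd}/(s+\gamma-1)$ (via smoothness), yielding a handful of terms that all telescope against $\mu^2\RTd^2\ln(\nicefrac{K}{\delta})$ once $\gamma$ dominates each of the listed thresholds. Freedman then delivers the $34\mu\RTd\sqrt{C}$ bound up to a union bound.

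For the second inequality, I would apply Corollary \ref{cor:pac-bayes-quad-variation} to the auxiliary difference sequence $\vz_s = \tilvv_s \cdot ((s+\gamma)/(t+\gamma))^{A-1}$, which inherits the $2\Gamma$ almost-sure bound. The corresponding $q_s$ and $p_s$ sequences split naturally into a $\beta$-piece (which matches the bounded second moment calculation verbatim and contributes $\lesssim C_M\mu^2\RTd$) and an $\alpha D_s^2$-piece whose geometric factor $1/(s+\gamma-1)^2$ combines with $((s+\gamma)/(t+\gamma))^{2A-2}$ to produce tail sums of the form $\sum_s (s+\gamma)^{2A-4-k}/(t+\gamma)^{2A-2}$. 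Picking $A\geq 3$ makes each such sum $O((t+\gamma)^{-2})$ or smaller, and the various lower bounds on $\gamma$ (namely $\gamma \gtrsim \alpha\deff/\mu^2$, $\alpha\ln(\nicefrac{K}{\delta})/\mu^2$, $\kappa^{4/3}\ln(\nicefrac{K}{\delta})$, $\kappa\ln(\nicefrac{K}{\delta})^{3/2}$, and $\kappa^{2/3}\deff^{1/3}\alpha^{1/3}\ln(\nicefrac{K}{\delta})/\mu^{2/3}$) are exactly what is required for each combined piece to be $\lesssim \mu^2\RTd$.

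The main obstacle will be the sheer bookkeeping: every occurrence of $\|\Sigma_s\|$ or $\Tr(\Sigma_s)$ in both the Freedman-style and PAC-Bayes arguments spawns an $\alpha D_s^2$ cross-term and a $\beta$ term, and each of these must then be paired with the clipping corrections $\|\nabla F(\vx_s)\|^k/\Gamma^{k-2}$ and cast against the $\RTd$ budget. Isolating the sharpest condition on $\gamma$ that covers every resulting exponent combination (especially the mixed $\kappa^{2/3}\deff^{1/3}\alpha^{1/3}\ln(\nicefrac{K}{\delta})/\mu^{2/3}$ term, which arises from balancing the $\|\nabla F(\vx_s)\|^2\Tr(\Sigma_s)/\Gamma^2$ cross-term against $\mu^2\RTd$) is the delicate step, but it is mechanical once the template of Lemma \ref{lem:sgdcl-str-cvx-variance} is in hand. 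A final union bound over the two high-probability events yields the stated inequality at confidence $1-\delta$.
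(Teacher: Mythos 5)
Your plan follows the paper's proof essentially verbatim: the same $h_s = \dotp{\tilvv_s}{\vd_s}(s+\gamma)^{2A-1}/(t+\gamma)^{2A-2}$ with scalar Freedman for the first bound, the same $\vz_s = \tilvv_s\left(\tfrac{s+\gamma}{t+\gamma}\right)^{A-1}$ fed into Corollary \ref{cor:pac-bayes-quad-variation} for the second, with Lemma \ref{lem:clipped-moment-control} plus the indicator $\ind{E_s}$ converting the quadratic-growth covariance bounds into decaying terms absorbed by the stated conditions on $A$ and $\gamma$. This matches the paper's argument, and the remaining bookkeeping you defer is exactly the term-by-term accounting the paper carries out.
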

Equipped with this bound on the bias and the variance, we now present the complete proof as follows:
\subsection{Proof of Theorem \ref{thm:SGDcl-smooth-str-cvx-gen}}
\begin{proof}
Let $A \geq 3$, $\gamma \geq 4C \max \{ \frac{\alpha \deff}{\mu^2}, \frac{\alpha \ln(\nicefrac{K}{\delta})}{\mu^2}, \kappa^{\nicefrac{4}{3}} \ln(\nicefrac{K}{\delta}), \kappa \ln(\nicefrac{K}{\delta})^{\nicefrac{3}{2}}, \frac{\kappa^{\nicefrac{2}{3}} \deff^{\nicefrac{1}{3}} \alpha^{\nicefrac{1}{3}}}{\mu^{\nicefrac{2}{3}}} \ln(\nicefrac{K}{\delta}) \}$. Now, let $E$ denote the following event
\begin{align*}
    E =\{& \sum_{s=1}^{t} \frac{(s+\gamma)^{2A-1}}{(t+\gamma)^{2A-2}} \dotp{\tilvv_s}{\vd_s} \leq  34 \cdot \mu \RTd \sqrt{C} \ \forall \ t \in [T] \\ &\sum_{s=1}^{t} \left(\frac{s+\gamma}{t+\gamma}\right)^{2A-2} \| \tilvv_s \|^2 \leq 53 \cdot C_M \mu^2 \RTd \ \forall t \in [T] \}
\end{align*}
Note that by Lemma \ref{lem:sgdcl-gen-str-cvx-variance}, $\bP(E) \geq 1 - \delta$. We now claim that $\bP\left(\bigcap_{t=1}^{T+1} E_t | E \right) = 1$, i.e., conditioned on the event $E$, the following holds almost surely for every $t \in [1:T+1]$
\begin{align*}
    D^2_{t} \leq \frac{C\RTd}{(t+\gamma-1)^2} \ \forall \ t \in [1:T+1]
\end{align*}
We prove the above claim by induction. Note that the claim is trivially true for $t = 1$ as $\RTd \geq (\gamma + 1)^2 D^2_1$. Now, consider any $t \in [1:T]$ and suppose the claim holds for some $1 \leq s \leq t$. 

Recall that by Lemma \ref{lem:sgdcl-str-cvx-rec}
\begin{align*}
    (t+\gamma)^2 D^2_{t+1} &\leq \frac{(\gamma + 1)^{2A}}{(t + \gamma)^{2A-2}} D^2_1 + \frac{A 2^{2A+1}}{\mu} \sum_{s=1}^{t} \frac{(s+\gamma-1)^{2A-1}}{(t+\gamma)^{2A-2}}\dotp{\vb_s}{\vx_s - \xopt} \\
    &+ \frac{A^2 4^{A+1}}{\mu^2} \sum_{s=1}^{t} \|\vb_s\|^2 \frac{(s+\gamma)^{2A-2}}{(t+\gamma)^{2A-2}} + \frac{A 2^{2A+1}}{\mu} \sum_{s=1}^{t} \frac{(s+\gamma)^{2A-1}}{(t+\gamma)^{2A-2}}\dotp{\vv_s}{\vx_s - \xopt} \\
     &+ \frac{A^2 4^{A+1}}{\mu^2} \sum_{s=1}^{t} \|\vv_s\|^2 \frac{(s+\gamma)^{2A-2}}{(t+\gamma)^{2A-2}}
\end{align*}
Under the induction hypothesis, $\ind{E_s} = 1 \ \forall s \in [t]$. Hence,
Under the induction hypothesis, $\ind{D^2_{s} \leq \tfrac{C\RTd}{(s+\gamma-1)(s+\gamma-2)}} = 1$ and thus, $\vd_s = \vx_s - \xopt, \vb_s = \tilvb_s, \vv_s = \tilvv_s \ \forall \ 1 \leq s \leq t$. Substituting this transformation into the above inequality, we obtain the following:
\begin{align}
\label{eqn:sgdcl-gen-str-cvx-induction}
    (t+\gamma)^2 D^2_{t+1} &\leq \underbrace{\frac{(\gamma + 1)^{2A}}{(t + \gamma)^{2A-2}} D^2_1}_{\textcircled{1}} + \underbrace{\frac{A 2^{2A+1}}{\mu} \sum_{s=1}^{t} \frac{(s+\gamma)^{2A-1}}{(t+\gamma)^{2A-2}}\dotp{\tilvv_s}{\vd_s}}_{\textcircled{2}} \nonumber \\
    &+ \underbrace{\frac{A^2 4^{A+1}}{\mu^2} \sum_{s=1}^{t} \|\tilvv_s\|^2 \frac{(s+\gamma)^{2A-2}}{(t+\gamma)^{2A-2}}}_{\textcircled{3}} + \underbrace{\sum_{s=1}^{t} \frac{(s+\gamma)^{2A-1}}{(t+\gamma)^{2A-2}}\dotp{\tilvb_s}{\vd_s}}_{\textcircled{4}} \nonumber \\
    &+ \underbrace{\frac{A^2 4^{A+1}}{\mu^2} \sum_{s=1}^{t} \|\tilvb_s\|^2 \frac{(s+\gamma)^{2A-2}}{(t+\gamma)^{2A-2}}}_{\textcircled{5}}
\end{align}
We now bound each of the terms in the RHS as follows.
\paragraph{Bounding $\textcircled{1}$} 
Since $A \geq 1$ and $t \geq 1$, 
\begin{align*}
    \textcircled{1} = \frac{(\gamma + 1)^{2A}}{(t + \gamma)^{2A-2}} D^2_1 \leq (\gamma + 1)^2 D^2_1 \leq \RTd
\end{align*}
\paragraph{Bounding $\textcircled{2}$}
Since $\gamma$ and $A$ satisfy the conditions of Lemma \ref{lem:sgdcl-str-cvx-variance} and we have conditioned on the event $E$, it follows that:
\begin{align*}
    \frac{A 2^{2A+1}}{\mu} \sum_{s=1}^{t} \frac{(s+\gamma)^{2A-1}}{(t+\gamma)^{2A-2}} \dotp{\tilvv_s}{\vd_s} \leq 17A 4^{A+1} \RTd \sqrt{C}
\end{align*}
\paragraph{Bounding $\textcircled{3}$} 
Since $\gamma$ and $A$ satisfy the conditions of Lemma \ref{lem:sgdcl-str-cvx-variance} and we have conditioned on the event $E$, it follows that:
\begin{align*}
    \frac{A^2 4^{A+1}}{\mu^2} \sum_{s=1}^{t} \left(\frac{s+\gamma}{t+\gamma}\right)^{2A-2} \|\tilvv_s\|^2 \leq A^2 2^{2A+2} C_M \left(2^{4A-3 }\frac{25}{4} + 5 \cdot 2^{4A-11} + 5 \cdot 2^{4A-16} + 5 \cdot 2^{4A-13}\right)\RTd 
\end{align*}
\paragraph{Bounding $\textcircled{4}$} Since $\ind{E_s} = 1$
\begin{align*}
    \|\vd_s\| \leq \frac{\sqrt{C \RTd}}{s+\gamma-1} \leq \frac{2 \sqrt{C \RTd}}{s+\gamma}
\end{align*}
Hence, by Lemma \ref{lem:sgdcl-gen-str-cvx-bias}
\begin{align*}
    \frac{A 2^{2A+1}}{\mu} \sum_{s=1}^{t} \dotp{\tilvb_s}{\vd_s} \frac{(s+\gamma)^{2A-1}}{(t+\gamma)^{2A-2}} \leq A 2^{2A+2} \RTd \sqrt{C} \sum_{s=1}^{t} \left(\frac{s+\gamma}{t+\gamma}\right)^{2A-2} \sum_{j=1}^{7} B_j
\end{align*}

We now control the first term
\begin{align*}
    \sum_{s=1}^{t} \left(\frac{s+\gamma}{t+\gamma}\right)^{2A-2} B_1 &= \frac{1}{T+\gamma} \sum_{s=1}^{t} \left(\frac{s+\gamma}{t+\gamma}\right)^{2A-2} \\
    &\leq \frac{t}{T+\gamma} \leq 1
\end{align*}
where the first inequality follows from the fact that $A \geq 1$ and $s \leq t$. 

We now control the second term
\begin{align*}
    \sum_{s=1}^{t} \left(\frac{s+\gamma}{t+\gamma}\right)^{2A-2} B_2 &\leq \frac{4\alpha C \sqrt{d} \ln(\nicefrac{K}{\delta})}{\mu^2} \sum_{s=1}^{t} \frac{(s+\gamma)^{2A-4}}{(t+\gamma)^{2A-2}} \\
    &\leq \frac{4\alpha C \sqrt{d} \ln(\nicefrac{K}{\delta})}{\mu^2 (t+\gamma)} \leq 1
\end{align*}
where the first inequality follows from the fact that $A \geq 2$ and $s \leq t$ and the second inequality follows by setting $\gamma \geq \frac{4 \alpha C \sqrt{d} \ln(\nicefrac{K}{\delta})}{\mu^2}$.

We now bound the third term as follows:
\begin{align*}
\sum_{s=1}^{t} \left(\frac{s+\gamma}{t+\gamma}\right)^{2A-2} B_3 &\leq \frac{2 \kappa \sqrt{C} \ln(\nicefrac{K}{\delta})}{\sqrt{d(T+\gamma)}} \sum_{s=1}^{t} \frac{(s+\gamma)^{2A-3}}{(t+\gamma)^{2A-2}} \\
&\leq \frac{2\kappa \sqrt{C} \ln(\nicefrac{K}{\delta})}{\sqrt{d(T+\gamma)}} \leq 1
\end{align*}
where we use the fact that $A \geq 2$ and set $\gamma \geq \frac{4 \kappa^2 \ln(\nicefrac{K}{\delta})^2}{d}$. 

We now bound the fourth term as follows:
\begin{align*}
    \sum_{s=1}^{t} \left(\frac{s+\gamma}{t+\gamma}\right)^{2A-2} B_4 &\leq \frac{4 \kappa C \ln(\nicefrac{K}{\delta})\sqrt{\alpha}}{\mu} \sum_{s=1}^{t} \frac{(s+\gamma)^{2A-4}}{(t+\gamma)^{2A-2}} \\
    &\leq \frac{4 \kappa C \ln(\nicefrac{K}{\delta}) \sqrt{\alpha}}{\mu (t+\gamma)} \leq 1
\end{align*}
where $A \geq 2$ and $\gamma \geq \frac{4 \kappa C \ln(\nicefrac{K}{\delta}) \sqrt{\alpha}}{\mu}$

We now bound the fifth term as follows
\begin{align*}
    \sum_{s=1}^{t} \left(\frac{s+\gamma}{t+\gamma}\right)^2 B_5 &\leq 8 \kappa^3 C^{\nicefrac{3}{2}} \ln(\nicefrac{K}{\delta})^2 \sum_{s=1}^{t} \frac{(s+\gamma)^{2A-5}}{(t+\gamma)^[2A-2]} \\
    &\leq \frac{8 \kappa^3 C^{\nicefrac{3}{2}} \ln(\nicefrac{K}{\delta})^2}{(t+\gamma)^2} \leq 1
\end{align*}
where $A \geq 3$ and $\gamma \geq 4 \kappa^{\nicefrac{3}{2}} C^{\nicefrac{3}{4}} \ln(\nicefrac{K}{\delta})$.

We now bound the sixth term as follows
\begin{align*}
    \sum_{s=1}^{t} \left(\frac{s+\gamma}{t+\gamma}\right)^{2A-2} B_6 &\leq \frac{2\kappa \sqrt{C} \ln(\nicefrac{K}{\delta})^2}{T+\gamma} \sum_{s=1}^{t} \frac{(s+\gamma)^{2A-3}}{(t+\gamma)^{2A-2}} \\
    &\leq \frac{2 \kappa \sqrt{C} \ln(\nicefrac{K}{\delta})^2}{T+\gamma} \leq 1
\end{align*}
where $A \geq 3$ and $\gamma \geq 2 \kappa \sqrt{C} \ln(\nicefrac{K}{\delta})^2$

Finally, we control the seventh term as follows
\begin{align*}
    \sum_{s=1}^{t} \left(\frac{s+\gamma}{t+\gamma}\right)^{2A-2} B_7 &\leq \frac{8 \alpha \kappa d \ln(\nicefrac{K}{\delta})^2 C^{\nicefrac{3}{2}}}{\mu^2} \sum_{s=1}^{t} \frac{(s+\gamma)^{2A-5}}{(t+\gamma)^{2A-2}} \\
    &\leq \frac{8 \alpha \kappa d \ln(\nicefrac{K}{\delta})^2 C^{\nicefrac{3}{2}}}{\mu^2(t+\gamma)^2} \leq 1
\end{align*}
where $A \geq 3$ and $\gamma \geq \frac{4 \sqrt{\alpha \kappa d} \ln(\nicefrac{K}{\delta})C^{\nicefrac{3}{4}}}{\mu}$.
Putting it all together, it follows that
\begin{align*}
    \textcircled{4} \leq 7A 4^{A+1} \RTd \sqrt{C}
\end{align*}
by setting $\gamma$ as follows
\begin{align*}
    \gamma \geq 4 C \max \left\{ \frac{\alpha \sqrt{d} \ln(\nicefrac{K}{\delta})}{\mu^2}, \frac{\kappa^2 \ln(\nicefrac{K}{\delta})^2}{d}, \frac{\kappa \sqrt{\alpha} \ln(\nicefrac{K}{\delta})}{\mu}, \kappa^{\nicefrac{3}{2}} \ln(\nicefrac{K}{\delta}), \kappa \ln(\nicefrac{K}{\delta})^2, \frac{\sqrt{\kappa \alpha d} \ln(\nicefrac{K}{\delta})}{\mu} \right\}
\end{align*}
\paragraph{Bounding \textcircled{5}}
By Lemma \ref{lem:sgdcl-gen-str-cvx-bias} and Jensen's inequality
\begin{align*}
    \| \tilvb_s \|^2 \leq 7 \mu^2 \RTd \sum_{j=1}^{7} B^2_j
\end{align*}
It follows that
\begin{align*}
    \frac{A^2 2^{2A+2}}{\mu^2} \sum_{s=1}^{t} \|\tilvb_s\|^2 \left(\frac{s+\gamma}{t+\gamma}\right)^{2A-2} 
    &\leq 7A^2 2^{2A+2} \RTd \sum_{s=1}^{t} \left(\frac{s+\gamma}{t+\gamma}\right)^{2A-2} \sum_{j=1}^{7} B^2_j
\end{align*}
The first term is controlled as follows using the fact that $A \geq 1$
\begin{align*}
    \sum_{s=1}^{t} \left(\frac{s+\gamma}{t+\gamma}\right)^{2A-2} B^2_1 &= \sum_{s=1}^{t} \frac{1}{(T+\gamma)^2} \leq 1
\end{align*}
The second term is controlled as 
\begin{align*}
    \sum_{s=1}^{t} \left(\frac{s+\gamma}{t+\gamma}\right)^{2A-2} B^2_2 &\leq \frac{16 \alpha^2 C^2 d \ln(\nicefrac{K}{\delta})^2}{\mu^4} \sum_{s=1}^{t} \frac{(s+\gamma)^{2A-6}}{(t+\gamma)^{2A-2}} \\
    &\leq \frac{16 \alpha^2 C^2 d \ln(\nicefrac{K}{\delta})^2}{\mu^4 (t+\gamma)^3} \leq 1
\end{align*}
where $A \geq 3$ and $\gamma \geq \frac{2^{\nicefrac{4}{3}} \alpha^{\nicefrac{2}{3}} C^{\nicefrac{2}{3}} d^{\nicefrac{1}{3}} \ln(\nicefrac{K}{\delta})^{\nicefrac{2}{3}}}{\mu^{\nicefrac{4}{3}}}$.

The third term is controlled as 
\begin{align*}
     \sum_{s=1}^{t} \left(\frac{s+\gamma}{t+\gamma}\right)^{2A-2} B^2_3 &= \frac{4\kappa^2 C \ln(\nicefrac{K}{\delta})^2}{d(T+\gamma)}\sum_{s=1}^{t} \frac{(s+\gamma)^{2A-4}}{(t+\gamma)^{2A-2}} \\
     &\leq \frac{4 \kappa^2 C \ln(\nicefrac{K}{\delta})^2}{d(t+\gamma)(T+\gamma)} \leq 1
\end{align*}
where the last inequality follows because $\gamma \geq \kappa \sqrt{\tfrac{C}{d}} \ln(\nicefrac{K}{\delta})$ 

The fourth term is controlled as
\begin{align*}
    \sum_{s=1}^{t} \left(\frac{s+\gamma}{t+\gamma}\right)^{2A-2} B^2_4 &\leq \frac{16 \kappa^2 C^2 \ln(\nicefrac{K}{\delta})^2 \alpha}{\mu^2} \sum_{s=1}^{t} \frac{(s+\gamma)^{2A-6}}{(t+\gamma)^{2A-2}}
\end{align*}
where $A \geq 3$ and $\gamma \geq \frac{2^{\nicefrac{4}{3}} \kappa^{\nicefrac{2}{3}}C^{\nicefrac{2}{3}}\ln(\nicefrac{K}{\delta})^{\nicefrac{2}{3}}\alpha^{\nicefrac{1}{3}}}{\mu^{\nicefrac{2}{3}}}$
For controlling the fifth term, we set $A \geq 4$ to obtain
\begin{align*}
    \sum_{s=1}^{t} \left(\frac{s+\gamma}{t+\gamma}\right)^{2A-2} B^2_5 &= \kappa^6 C^3 \ln(\nicefrac{K}{\delta})^4 \sum_{s=1}^{t} \frac{(s+\gamma)^{2A-8}}{(t+\gamma)^{2A-2}} \\
    &\leq \frac{\kappa^6 C^3 \ln(\nicefrac{K}{\delta})^4}{(\gamma+1)^5} \leq 1
\end{align*}
where the last inequality uses the fact that $\gamma \geq \kappa^{\nicefrac{6}{5}} C^{\nicefrac{3}{5}} \ln(\nicefrac{K}{\delta})^{\nicefrac{4}{5}}$

To control the sixth term, we use the fact that $A \geq 2$ to obtain
\begin{align*}
    \sum_{s=1}^{t} \left(\frac{s+\gamma}{t+\gamma}\right)^{2A-2} B^2_6 
    &= \frac{\kappa^2 C \ln(\nicefrac{K}{\delta})^4}{(T+\gamma)^2} \sum_{s=1}^{t} \frac{(s+\gamma)^{2A-4}}{(t+\gamma)^{2A-2}} \\
    &\leq \frac{\kappa^2 C \ln(\nicefrac{K}{\delta})^4}{(\gamma+1)^3} \leq 1
\end{align*}
where the last inequality uses the fact that $\gamma \geq \kappa^{\nicefrac{2}{3}} C^{\nicefrac{1}{3}} \ln(\nicefrac{K}{\delta})^{\nicefrac{4}{3}}$

To control the seventh term, we set $A \geq 4$ to obtain the following:
\begin{align*}
    \sum_{s=1}^{t} \left(\frac{s+\gamma}{t+\gamma}\right)^{2A-2} B^2_6 
    &= \frac{64 \alpha^2 \kappa^2 d \ln(\nicefrac{K}{\delta})^4 C^3}{\mu^4} \sum_{s=1}^{t} \frac{(s+\gamma)^{2A-8}}{(t+\gamma)^{2A-2}} \\
    &\leq \frac{64 \alpha^2 \kappa^2 d \ln(\nicefrac{K}{\delta})^4 C^3}{\mu^4(t+\gamma)^5} \leq 1
\end{align*}
where $\gamma \geq \frac{2^{\nicefrac{6}{5}}\alpha^{\nicefrac{2}{5}}\kappa^{\nicefrac{2}{5}} d^{\nicefrac{1}{5}} \ln(\nicefrac{K}{\delta})^{\nicefrac{4}{5}}C^{\nicefrac{3}{5}}}{\mu^{\nicefrac{4}{5}}}$
From the obtained bounds, we conclude that $\textcircled{5} \leq 49 A^2 4^{A+1} \RTd$. 

Now, we set $A = 4$ and $\gamma$ as follows:
\begin{align*}
    \gamma &= \max \left\{ \frac{\alpha d}{\mu^2}, \frac{\alpha \sqrt{d} \ln(\nicefrac{K}{\delta})}{\mu^2}, \frac{\kappa \sqrt{\alpha} \ln(\nicefrac{K}{\delta})}{\mu^2}, \frac{\sqrt{\kappa \alpha d} \ln(\nicefrac{K}{\delta})}{\mu}, \frac{\kappa^{\nicefrac{2}{3}}d^{\nicefrac{1}{3}}\alpha^{\nicefrac{1}{3}}\ln(\nicefrac{K}{\delta})}{\mu^{\nicefrac{2}{3}}}, \kappa^{\nicefrac{3}{2}}\ln(\nicefrac{K}{\delta}), \kappa \ln(\nicefrac{K}{\delta})^2, \frac{\kappa^2 \ln(\nicefrac{K}{\delta})}{d} \right\}
\end{align*}
Under this setting of $A$ and $\gamma$, we obtain the following
\begin{align*}
    (t+\gamma)^2 D^{2}_{t+1} &\leq \textcircled{1} + \textcircled{2} + \textcircled{3} + \textcircled{4} + \textcircled{5} \\
    &\leq \RTd [1 + A^2 2^{2A+2} C_M \left(2^{4A-3 }\frac{25}{4} + 5 \cdot 2^{4A-11} + 5 \cdot 2^{4A-16} + 5 \cdot 2^{4A-13}\right) \\
    &+ 49A^2 4^{A+1} + 24A 4^{A+1}\sqrt{C}] \\
    &\leq \RTd \left(802817 + 6946816 C_M + 98304\sqrt{C}\right) \\
    &\leq C \RTd
\end{align*}
where the second inequality holds due to our choice of $A$ and $\gamma$ and the last inequality is obtained by setting $C = \left(\sqrt{802817 + 6946816 C_M} + 98304\right)^2$. It follows that
\begin{align*}
    D^2_{t+1} \leq \frac{C \RTd}{(t+\gamma)^2}
\end{align*}
Thus, we have proved by induction that conditioned on $E$, $D^2_t \leq \tfrac{C \RTd}{(t+\gamma-1)^2}$ for every $t \in [T+1]$. In particular, the following holds with probability at least $1 - \delta$:
\begin{align*}
    D^2_{T+1} \leq C \left(\frac{\gamma+1}{T+\gamma}\right)^2 D^2_1 +  \frac{C\beta \left(\deff + \sqrt{\deff} \ln(\nicefrac{K}{\delta})\right)}{\mu^2(T+\gamma)}
\end{align*}
\end{proof}
\subsection{Proof of Lemma \ref{lem:sgdcl-gen-str-cvx-bias}}
Following the same steps as in that of the proof of Lemma \ref{lem:sgdcl-str-cvx-bias}, we use Lemma \ref{lem:clipped-moment-control} and the fact that $\Cov[\vg_t | \cF_{t-1}] = \Sigma_t$ to obtain:
\begin{align*}
    \|\tilvb_s\| \leq \underbrace{\frac{\|\Sigma_s\|\sqrt{\deff} \ind{E_s}}{\Gamma}}_{\textcircled{A}} + \underbrace{\frac{\|\nabla F(\vx_s) \| \sqrt{\|\Sigma_s\|} \ind{E_s}}{\Gamma}}_{\textcircled{B}} + \underbrace{\frac{\|\nabla F(\vx_s)\|^3 \ind{E_s}}{\Gamma^2}}_{\textcircled{C}} + \underbrace{\frac{\|\Sigma_s\| \deff \|\nabla F(\vx_s)\| \ind{E_s}}{\Gamma^2}}_{\textcircled{D}} 
\end{align*}
\paragraph{Bounding \textcircled{A}} Note that by Assumption \ref{as:second_moment_generalized}
\begin{align*}
    \|\Sigma_s\|_2 \ind{E_s} &\leq (\beta + \alpha D^2_s) \ind{E_s} \\
    &\leq \beta + \frac{4\alpha C \RTd}{(s+\gamma)^2}
\end{align*}
It follows that 
\begin{align*}
    \frac{\|\Sigma_s\|_2 \sqrt{d} \ind{E_s}}{\Gamma} &\leq \frac{\beta \sqrt{\deff} \ln(\nicefrac{K}{\delta})}{\mu \sqrt{\RTd}} + \frac{4\alpha C \ln(\nicefrac{K}{\delta}) \sqrt{\RTd \deff}}{\mu (s+\gamma)^2} 
\end{align*}
Since $\beta \sqrt{\deff} \ln(\nicefrac{K}{\delta}) \leq \tfrac{\mu^2 \RTd}{T+\gamma}$, we obtain
\begin{align*}
    \textcircled{A} = \frac{\|\Sigma\|_s \sqrt{d} \ind{E_s}}{\Gamma} &\leq \mu \sqrt{\RTd} \left(\frac{1}{T+\gamma} + \frac{4 \alpha C \ln(\nicefrac{K}{\delta}) \sqrt{\RTd \deff}}{\mu^2 (s+\gamma)^2}\right)
\end{align*}
\paragraph{Bounding \textcircled{B}} Note that by equation \eqref{eqn:bias-grad-control}, 
\begin{align*}
    \frac{\| \nabla F(\vx_s) \| \ind{E_s}}{\Gamma} \leq \frac{2 \kappa \sqrt{C} \ln(\nicefrac{K}{\delta})}{s+\gamma}
\end{align*}
Furthermore, by Assumption \ref{as:second_moment_generalized} and the definition of $E_s$
\begin{align*}
    \sqrt{\|\Sigma_s\|_2} \ind{E_s} &\leq \sqrt{\beta} + \frac{2 \sqrt{\alpha C \RTd}}{s+\gamma} 
\end{align*}
Recalling that $\beta \leq \tfrac{\mu^2 \RTd}{\deff (T+\gamma)}$,
\begin{align*}
    \frac{\|\nabla F(\vx_s)\| \sqrt{\|\Sigma_s\|_2} \ind{E_s}}{\Gamma} &\leq \frac{2 \kappa \sqrt{C} \ln(\nicefrac{K}{\delta}) \mu \sqrt{\RTd}}{(s+\gamma)\sqrt{\deff(T+\gamma)}} + \frac{4\kappa C \ln(\nicefrac{K}{\delta}) \sqrt{\alpha \RTd}}{(s+\gamma)^2} \\
    &\leq \mu \sqrt{\RTd} \left(\frac{2 \kappa \sqrt{C} \ln(\nicefrac{K}{\delta}) }{(s+\gamma)\sqrt{\deff(T+\gamma)}} + \frac{4\kappa C \ln(\nicefrac{K}{\delta}) \sqrt{\alpha}}{\mu(s+\gamma)^2}\right)
\end{align*}
\paragraph{Bounding \textcircled{C}} By equation \eqref{eqn:bias-grad-control},
\begin{align*}
    \frac{\|\nabla F(\vx_s)\|^3 \ind{E_s}}{\Gamma^2} &\leq \mu \sqrt{\RTd} \cdot \frac{8 \kappa^3 C^{\nicefrac{3}{2}} \ln(\nicefrac{K}{\delta})^2}{(s+\gamma)^3}
\end{align*}
\paragraph{Bounding \textcircled{D}} Since $\beta d \leq \tfrac{\mu^2 \RTd}{T+\gamma}$, it follows tat 
\begin{align*}
    \frac{\|\nabla F(\vx_s)\| \|\Sigma_s\|_2 \deff \ind{E_s}}{\Gamma^2} &\leq \frac{2 \kappa \sqrt{C} \ln(\nicefrac{K}{\delta})^2}{\mu \sqrt{\RTd}(s+\gamma)}\left(\beta d + \frac{4 \alpha C \RTd d}{(s+\gamma)^2}\right) \\
    &\leq \frac{2 \kappa \sqrt{C} \ln(\nicefrac{K}{\delta})^2 \mu \sqrt{\RTd}}{(s+\gamma)(T+\gamma)} + \frac{8 \alpha \kappa d \ln(\nicefrac{K}{\delta})^2 C^{\nicefrac{3}{2}} \sqrt{\RTd}}{\mu(s+\gamma)^3} \\
    &\leq \mu \sqrt{\RTd} \left(\frac{2 \kappa \sqrt{C} \ln(\nicefrac{K}{\delta})^2}{(s+\gamma)(T+\gamma)} + \frac{8 \alpha \kappa d \ln(\nicefrac{K}{\delta})^2 C^{\nicefrac{3}{2}} }{\mu^2(s+\gamma)^3}\right)
\end{align*}
Hence, we conclude that
\begin{align*}
    \|\tilvb_t\| \leq \textcircled{A} + \textcircled{B} + \textcircled{C} + \textcircled{D} \leq  \mu \sqrt{\RTd} \sum_{j=1}^{7} B_j
\end{align*}
where $B_1, \dots, B_7$ are defined as follows:
\begin{align*}
    B_1 &= \frac{1}{T+\gamma}, \\
    B_2 &= \frac{4 \alpha C \sqrt{d} \ln(\nicefrac{\ln(T)}{\delta})}{\mu^2 (s+\gamma)^2}, \\
    B_3 &= \frac{2 \kappa \sqrt{C} \ln(\nicefrac{\ln(T)}{\delta})}{(s+\gamma)\sqrt{d(T+\gamma)}}, \\
    B_4 &= \frac{4 \kappa C \ln(\nicefrac{\ln(T)}{\delta})\sqrt{\alpha}}{\mu (s+\gamma)^2}, \\
    B_5 &= \frac{8 \kappa^3 C^{\nicefrac{3}{2}} \ln(\nicefrac{\ln(T)}{\delta})^2}{(s+\gamma)^3}, \\
    B_6 &= \frac{2 \kappa \sqrt{C} \ln(\nicefrac{\ln(T)}{\delta})^2}{(s+\gamma)(T+\gamma)}, \\
    B_7 &= \frac{8 \alpha \kappa d \ln(\nicefrac{\ln(T)}{\delta})^2 C^{\nicefrac{3}{2}}}{\mu^2 (s+\gamma)^3}
\end{align*}
\subsection{Proof of Lemma \ref{lem:sgdcl-gen-str-cvx-variance}}
\label{prf:lem-sgdcl-gen-str-cvx-variance}
As before, for $s \in [1:T]$ define $\bE[\tilvv_s \tilvv^T_s | \cF_{s-1}] = \tilSigma_s$. Following the same steps as in that of the proof of Lemma \ref{lem:sgdcl-str-cvx-variance}, we use Lemma \ref{lem:clipped-moment-control} and the fact that $\Cov[\vg_t | \cF_{t-1}] = \Sigma_t$ to obtain:
\begin{align}
\label{eqn:gen-clipped-cov-norm-bound}
    \|\tilSigma_s\|_2 &\leq \|\Sigma_s\|_2 \ind{E_s} + \frac{\|\nabla F(\vx_s)\|^4 \ind{E_s}}{\Gamma^2} +  \frac{\|\nabla F(\vx_s)\|^2 \Tr(\Sigma_s) \ind{E_s}}{\Gamma^2} \nonumber \\
    &\leq \ind{E_t} \left(\beta + \alpha D^2_s\right) + \frac{\| \nabla F(\vx_s) \|^4 \ind{E_s}}{\Gamma^2} + \frac{\ind{E_s} \|\nabla F(\vx_s)\|^2 \deff}{\Gamma^2} \left(\beta + \alpha D^2_s\right) \nonumber \\
    &\leq \beta + \frac{4 \alpha C \RTd}{(s+\gamma)^2} + \frac{\| \nabla F(\vx_s) \|^4 \ind{E_s}}{\Gamma^2} + \frac{\|\nabla F(\vx_t)\|^2 \deff \ind{E_s}}{\Gamma^2} \left(\beta + \frac{4 \alpha C \RTd}{(s+\gamma)^2}\right)
\end{align}
where the second inequality follows from Assumption \ref{as:second_moment_generalized} and the second inequality follows by definition of $E_s$

Furthermore, since $\clip_\Gamma$ is a convex projection, the following holds:
\begin{align}
\label{eqn:gen-clipped-cov-trace-bound}
\Tr(\tilSigma_s) &\leq \Tr(\Sigma_s) \ind{E_s} \nonumber \\
&\leq \deff \left(\beta + \alpha D^2_s\right) \ind{E_s} \nonumber \\
&\leq \beta \deff + \frac{4 \alpha d C \RTd}{(s+\gamma)^2}
\end{align}
Now, for $s \in [t]$, we define $h_s$ as follows:
\begin{align*}
    h_s = \dotp{\tilvv_s}{\vd_s} \frac{(s+\gamma)^{2A-1}}{(t+\gamma)^{2A-2}}
\end{align*}
Note that $\bE[h_s | \cF_{s-1}] = 0$. Furthermore, since $\|\tilvv_s\| \leq 2 \Gamma$ and $\|\vd_s\| \leq \tfrac{\sqrt{C \RTd}}{s+\gamma-1}$
\begin{align}
\label{eqn:hs-norm-bound}
    |h_s| &\leq 2 \Gamma \cdot \frac{\sqrt{C \RTd}}{s+\gamma-1} \cdot \frac{(s+\gamma)^{2A-1}}{(t+\gamma)^{2A-2}} \nonumber \\
    &\leq 4 \Gamma \sqrt{C \RTd} \left(\frac{s+\gamma}{t+\gamma}\right)^{2A-2} \nonumber \\
    &\leq \frac{4 \mu \RTd \sqrt{C}}{\ln(\nicefrac{K}{\delta})}
\end{align}
For $s \in [t]$, define $\sigma^2_s = \bE[ h^2_s | \cF_{s-1}]$. It follows that,
\begin{align*}
    \sigma^2_s &= \frac{(s+\gamma)^{4A-2}}{(t+\gamma)^{4A-4}} \vv^T_s \tilSigma_s \vv_s \\
    &\leq \frac{(s+\gamma)^{4A-2}}{(t+\gamma)^{4A-4}} \|\vv_s\|^2 \|\tilSigma_s\|_2 \\
    &\leq 4 C \RTd \cdot \left(\frac{s+\gamma}{t+\gamma}\right)^{4A-4} \|\tilSigma_s\|_2 \\
    &\leq 4 C \RTd \left(\frac{s+\gamma}{t+\gamma}\right)^{4A-4} \left[\beta + \frac{4 \alpha C \RTd}{(s+\gamma)^2} + \frac{\| \nabla F(\vx_s) \|^4 \ind{E_s}}{\Gamma^2} + \frac{\|\nabla F(\vx_t)\|^2 \deff \ind{E_s}}{\Gamma^2} \left(\beta + \frac{4 \alpha C \RTd}{(s+\gamma)^2}\right) \right]
\end{align*}
where the last inequality follows from equation \eqref{eqn:clipped-cov-norm-bound} and the fact that $\deff = \nicefrac{\Tr(\Sigma)}{\|\Sigma\|_2}$. We now use the above inequality to control $\sum_{s=1}^{t} \sigma^2_s \ln(\nicefrac{K}{\delta})$ as follows:
\begin{align}
\label{eqn:clipped-gen-var-sum-eqn}
\sum_{s=1}^{t} \sigma^2_s \ln(\nicefrac{K}{\delta}) &\leq 4 C \RTd \ln(\nicefrac{K}{\delta}) \sum_{s=1}^{t} \left(\frac{s+\gamma}{t+\gamma}\right)^{4A-4} \beta \nonumber \\
&+ 4 C \RTd \ln(\nicefrac{K}{\delta}) \sum_{s=1}^{t} \frac{(s+\gamma)^{4A-6}}{(t+\gamma)^{4A-4}} 4 \alpha C \RTd \nonumber \\
&+ 4 C \RTd \ln(\nicefrac{K}{\delta}) \sum_{s=1}^{t} \left(\frac{s+\gamma}{t+\gamma}\right)^{4A-4} \frac{\|\nabla F(\vx_s)\|^4 \ind{E_s}}{\Gamma^2} \nonumber \\
&+ 4 C \RTd \ln(\nicefrac{K}{\delta}) \sum_{s=1}^{t} \left(\frac{s+\gamma}{t+\gamma}\right)^{4A-4} \frac{\|\nabla F(\vx_s)\|^2 \ind{E_s} \beta \deff}{\Gamma^2} \nonumber \\
&+ 4 C \RTd \ln(\nicefrac{K}{\delta}) \sum_{s=1}^{t} \frac{(s+\gamma)^{4A-6}}{(t+\gamma)^{4A-4}} \frac{4\|\nabla F(\vx_s)\|^2 \ind{E_s} \alpha d C \RTd}{\Gamma^2} 
\end{align}
We now control each of the five terms in the above inequality as follows
\begin{align*}
    4 C \RTd \ln(\nicefrac{K}{\delta}) \sum_{s=1}^{t} \left(\frac{s+\gamma}{t+\gamma}\right)^{4A-4} \beta &\leq 
    4 C \RTd \ln(\nicefrac{K}{\delta}) \beta t \\
    &\leq 4 C t \RTd \cdot \frac{\mu^2 \RTd}{(T+\gamma)\sqrt{\deff}} \\
    &\leq 4 \mu^2 C \RTd^2
\end{align*}
To control the second term,
\begin{align*}
    4 C \RTd \ln(\nicefrac{K}{\delta}) \sum_{s=1}^{t} \frac{(s+\gamma)^{4A-6}}{(t+\gamma)^{4A-4}} 4 \alpha C \RTd &\leq 16 C \RTd^2 \mu^2 \frac{\alpha C \ln(\nicefrac{K}{\delta})}{\mu^2 (t+\gamma)} \\
    &\leq 16 C \RTd^2 \mu^2
\end{align*}
where the second inequality follows by setting $A \geq \nicefrac{3}{2}$ and the last inequality follows by setting $\gamma \geq \frac{\alpha C \ln(\nicefrac{K}{\delta})}{\mu^2}$
Before controlling the remaining terms, we recall from \eqref{eqn:bias-grad-control} in the proof of Lemma \ref{lem:sgdcl-str-cvx-bias} that 
\begin{align*}
\|\nabla F(\vx_s)\| \ind{E_s} &\leq \frac{\kappa \Gamma \ln(\nicefrac{K}{\delta}) \sqrt{C}}{s+\gamma-1} \\
&\leq \frac{2\kappa \Gamma \ln(\nicefrac{K}{\delta}) \sqrt{C}}{s+\gamma}
\end{align*}
where $\Gamma = \tfrac{\mu \sqrt{\RTd}}{\ln(\nicefrac{K}{\delta})}$. It follows that 
\begin{align*}
    \frac{\|\nabla F(\vx_s)\|^4 \ind{E_s}}{\Gamma^2} &\leq \frac{16 \kappa^4 C^2 \Gamma^2 \ln(\nicefrac{K}{\delta})^4}{(s+\gamma)^4} \\
    &= \mu^2 \RTd \cdot \frac{16 \kappa^4 C^2 \ln(\nicefrac{K}{\delta})^2}{(s+\gamma)^4}
\end{align*}
Thus, we can control the third term in equation \eqref{eqn:clipped-gen-var-sum-eqn} as follows
\begin{align*}
    4 C \RTd \ln(\nicefrac{K}{\delta}) \sum_{s=1}^{t} \left(\frac{s+\gamma}{t+\gamma}\right)^{4A-4} \frac{\|\nabla F(\vx_s)\|^4 \ind{E_s}}{\Gamma^2} &\leq 64 \mu^2 C \RTd^2  \cdot \kappa^4 C^2 \ln(\nicefrac{K}{\delta})^3 \sum_{s=1}^{t} \frac{(s+\gamma)^{4A-8}}{(t+\gamma)^{4A-4}} \\
    &\leq 64 \mu^2 C \RTd^2 \cdot \frac{\kappa^4 C^2 \ln(\nicefrac{K}{\delta})^3}{(t+\gamma)^3} \\
    &\leq 64 \mu^2 C \RTd^2
\end{align*}
where the second inequality follows by setting $A \geq 2$ and the last inequality follows by setting $\gamma \geq \kappa^{\nicefrac{4}{3}} C^{\nicefrac{2}{3}} \ln(\nicefrac{K}{\delta})$.

To control the fourth term in \eqref{eqn:clipped-gen-var-sum-eqn}, we note that by equation \eqref{eqn:bias-grad-control} and the definition of $\RTd$
\begin{align*}
    \frac{\|\nabla F(\vx_s)\|^2 \deff \beta \ind{E_s}}{\Gamma^2} &\leq 4 \mu^2 \RTd \cdot \frac{\kappa^2 C \ln(\nicefrac{K}{\delta})^2}{(T+\gamma)(s+\gamma)^2}
\end{align*}
It follows that
\begin{align*}
    4 C\RTd \ln(\nicefrac{K}{\delta}) \sum_{s=1}^{t} \left(\frac{s+\gamma}{t+\gamma}\right)^{4A-4} \frac{\|\nabla F(\vx_s)\|^2 \beta \deff}{\Gamma^2} &\leq 16 \mu^2 C \RTd^2 \cdot \frac{\kappa^2 C \ln(\nicefrac{K}{\delta})^3}{T+\gamma} \sum_{s=1}^{t} \frac{(s+\gamma)^{4A-6}}{(t+\gamma)^{4A-4}} \\
    &\leq 16 \mu^2 C \RTd^2 \cdot \frac{\kappa^2 C \ln(\nicefrac{K}{\delta})^3}{(T+\gamma)(t+\gamma)} \\
    &\leq 16 \mu^2 C \RTd^2
\end{align*}
where the second inequality follows by setting $A \geq \nicefrac{3}{2}$ and the last inequality follows by setting $\gamma \geq \kappa \sqrt{C} \ln(\nicefrac{K}{\delta})^{\nicefrac{3}{2}}$. 

To control the fifth term in equation \eqref{eqn:clipped-gen-var-sum-eqn}, we proceed as follows:
\begin{align*}
    &4 C \RTd \ln(\nicefrac{K}{\delta}) \sum_{s=1}^{t} \frac{(s+\gamma)^{4A-6}}{(t+\gamma)^{4A-4}} \frac{4\|\nabla F(\vx_s)\|^2 \ind{E_s} \alpha d C \RTd}{\Gamma^2}\nonumber \\ &\leq 64 \mu^2 C \RTd^2 \frac{\alpha d \kappa^2 C^2 \ln(\nicefrac{K}{\delta})^3}{\mu^2} \sum_{s=1}^{t} \frac{(s+\gamma)^{4A-8}}{(t+\gamma)^{4A-4}} \\
    &\leq 64 \mu^2 C \RTd^2 \cdot \frac{\alpha d \kappa^2 C^2 \ln(\nicefrac{K}{\delta})^3}{\mu^2 (t+\gamma)^3} \\
    &\leq 64 \mu^2 C \RTd^2
\end{align*}
where the second inequality follows by setting $A \geq 2$ and the last inequality follows by setting $\gamma \geq \frac{\alpha^{\nicefrac{1}{3}} \deff^{\nicefrac{1}{3}} \kappa^{\nicefrac{2}{3}} C^{\nicefrac{2}{3}} \ln(\nicefrac{K}{\delta})}{\mu^{\nicefrac{2}{3}}}$
Substituting the above bounds into equation \eqref{eqn:clipped-gen-var-sum-eqn}, we note that
\begin{align*}
    \sum_{s=1}^{t} \sigma^2_s \ln(\nicefrac{K}{\delta}) \leq 164 \mu^2 C \RTd
\end{align*}
Thus, by Freedman's inequality (Lemma \ref{lem:scalar-freedman}), we conclude that the following holds with probability at least $1 - \nicefrac{\delta}{2}$ uniformly for every $t \in [T]$:
\begin{equation}
\label{eqn:vs-ds-gen-freedman-conc}
    \sum_{s=1}^{t} \frac{(s+\gamma)^{2A-1}}{(t+\gamma)^{2A-2}} \dotp{\tilvv_s}{\vd_s} = \sum_{s=1}^{t} h_s \leq 2 \sqrt{\sum_{s=1}^{t} \sigma^2_s \ln(\nicefrac{K}{\delta})} + 8 \mu \RTd \sqrt{C} \leq 34 \RTd \sqrt{C}
\end{equation}
To prove the second inequality of this lemma, we define $\vz_s = \tilvv_s \cdot \left(\tfrac{s+\gamma}{t+\gamma}\right)^{A-1}$ for $s \in [t]$. Note that $\bE[\vz_s | \cF_{s-1}] = 0$ and $\|\vz_s\| \leq \|\tilvv_s\| \leq 2\Gamma$. Define the PSD matrices $\vG_s = \bE[\vz_s \vz^T_s | \cF_{s-1}] = \left(\tfrac{s+\gamma}{t+\gamma}\right)^{2A-2} \tilSigma_s$. Recalling the bounds obtained on $\|\tilSigma_s\|_2$ and $\Tr(\tilSigma_s)$ in equations \eqref{eqn:gen-clipped-cov-norm-bound} and \eqref{eqn:gen-clipped-cov-trace-bound}, we infer the following:
\begin{align*}
\Tr(\vG_s) &\leq \left(\frac{s+\gamma}{t+\gamma}\right)^{2A-2} \Tr(\Sigma_s) \ind{E_s}  \\
&\leq \left(\frac{s+\gamma}{t+\gamma}\right)^{2A-2} \beta \deff + \frac{(s+\gamma)^{2A-4}}{(t+\gamma)^{2A-2}} 4 \alpha \deff C \RTd  \\
\|\vG_s\|_2 &= \left(\frac{s+\gamma}{t+\gamma}\right)^{2A-2} \|\tilSigma_s\|_2  \\ 
&\leq \left(\frac{s+\gamma}{t+\gamma}\right)^{2A-2} \beta + \frac{(s+\gamma)^{2A-4}}{(t+\gamma)^{2A-2}} 4 \alpha C \RTd + \left(\frac{s+\gamma}{t+\gamma}\right)^{2A-2} \frac{\|\nabla F(\vx_s)\|^4 \ind{E_s}}{\Gamma^2}  \\
&+ \left(\frac{s+\gamma}{t+\gamma}\right)^{2A-2} \frac{\|\nabla F(\vx_s)\|^2  \ind{E_s} \beta \deff}{\Gamma^2} + \frac{(s+\gamma)^{2A-4}}{(t+\gamma)^{2A-2}} \frac{\|\nabla F(\vx_s)\|^2  \ind{E_s} 4 \alpha \deff C \RTd}{\Gamma^2} 
\end{align*}
Substituting equation \eqref{eqn:bias-grad-control} into the bound for $\|\vG_s\|_2$, we obtain the following
\begin{align}
\label{eqn:gen-cov-G-trace-norm-bound}
\Tr(\vG_s) &\leq q_s = \left(\frac{s+\gamma}{t+\gamma}\right)^{2A-2} \beta \deff + \frac{(s+\gamma)^{2A-4}}{(t+\gamma)^{2A-2}} 4 \alpha \deff C \RTd \nonumber \\
\|\vG_s\|_2 &\leq p_s = \left(\frac{s+\gamma}{t+\gamma}\right)^{2A-2} \beta + \frac{(s+\gamma)^{2A-4}}{(t+\gamma)^{2A-2}} \cdot 4 \alpha C \RTd + \frac{(s+\gamma)^{2A-6}}{(t+\gamma)^{2A-2}} \cdot 16 \kappa^4 C^2 \ln(\nicefrac{K}{\delta})^2 \mu^2 \RTd \nonumber \\
&\leq \frac{(s+\gamma)^{2A-4}}{(t+\gamma)^{2A-2}} \cdot 4 \beta \deff \kappa^2 C \ln(\nicefrac{K}{\delta})^2 + \frac{(s+\gamma)^{2A-6}}{(t+\gamma)^{2A-2}} \cdot 16 \alpha \deff \RTd \kappa^2 C^2 \ln(\nicefrac{K}{\delta})^2
\end{align}
By Cauchy Schwarz inequality,
\begin{align}
\label{eqn:gen-G-square-norm-bound}
p^2_s &\leq 5\left(\frac{s+\gamma}{t+\gamma}\right)^{4A-4} \beta^2 + 5 \cdot \frac{(s+\gamma)^{4A-8}}{(t+\gamma)^{4A-4}} 16 \alpha^2 C^2 \RTd^2 + 5 \cdot \frac{(s+\gamma)^{4A-12}}{(t+\gamma)^{4A-4}} \cdot 256 \kappa^8 C^4 \ln(\nicefrac{K}{\delta})^4 \mu^4 \RTd^2 \nonumber \\
&+ 5 \cdot \frac{(s+\gamma)^{4A-8}}{(t+\gamma)^{4A-4}} \cdot 16 \beta^2 \deff^2 \kappa^4 C^2 \ln(\nicefrac{K}{\delta})^4 + 5 \cdot \frac{(s+\gamma)^{4A-12}}{(t+\gamma)^{4A-4}} \cdot 256 \alpha^2 \deff^2 \RTd^2 \kappa^4 C^4 \ln(\nicefrac{K}{\delta})^4
\end{align}
Since $T \gtrsim \ln(\ln(d))$, $K = \ln(\ln(T))$, our choice of $\Gamma$ and the definition of $\RTd$ ensures that the conditions of Corollary \ref{cor:pac-bayes-quad-variation} are satisfied. Hence, by Corollary \ref{cor:pac-bayes-quad-variation}, we conclude that the following holds with probability $1 - \nicefrac{\delta}{2}$ uniformly for all $t \in [T]$
\begin{align*}
    \sum_{s=1}^{t} \|\vz_s\|^2 \leq 4 C_M \Gamma^2 \ln(\nicefrac{K}{\delta})^2 + C_M \sum_{s=1}^{\UP(t)} Q_s + \frac{C_M t}{4 \Gamma^2} \sum_{s=1}^{t} P^2_s
\end{align*}
Simplyfing the above using equations \eqref{eqn:gen-cov-G-trace-norm-bound}, \eqref{eqn:gen-G-square-norm-bound} and the definition of $\Gamma$, we obtain the following:
\begin{align}
\label{eqn:g-zen-quad-variation-bound}
    \sum_{s=1}^{t} \| \vz_s \|^2 &\leq 4 C_M \mu^2 \RTd + C_M \sum_{s=1}^{\UP(t)} \left(\frac{s+\gamma}{t+\gamma}\right)^{2A-2} \beta \deff + C_M \sum_{s=1}^{\UP(t)} \frac{(s+\gamma)^{2A-4}}{(t+\gamma)^{2A-2}} \cdot 4 \alpha \deff C \RTd \nonumber \\
    &+ \frac{5 C_M}{4} \sum_{s=1}^{\UP(t)} \left(\frac{s+\gamma}{t+\gamma}\right)^{4A-4} \frac{\beta^2 t \ln(\nicefrac{K}{\delta})^2}{\mu^2 \RTd} + \frac{5 C_M}{4} \sum_{s=1}^{\UP(t)} \frac{(s+\gamma)^{4A-8}}{(t+\gamma)^{4A-4}} \cdot \frac{16 \alpha^2 C^2 \RTd t \ln(\nicefrac{K}{\delta})^2}{\mu^2} \nonumber \\
    &+ \frac{5 C_M}{4} \sum_{s=1}^{\UP(t)} \frac{(s+\gamma)^{4A-12}}{(t+\gamma)^{4A-4}} \cdot 256 \kappa^8 C^4 \ln(\nicefrac{K}{\delta})^6 t \mu^2 \RTd \nonumber \\
    &+ \frac{5 C_M}{4} \sum_{s=1}^{\UP(t)} \frac{(s+\gamma)^{4A-8}}{(t+\gamma)^{4A-4}} \cdot \frac{\beta^2 \deff^2 t}{\mu^2 \RTd} \cdot 16 \kappa^4 C^2 \ln(\nicefrac{K}{\delta})^6 \nonumber \\
    &+ \frac{5 C_M}{4} \sum_{s=1}^{\UP(t)} \frac{(s+\gamma)^{4A-12}}{(t+\gamma)^{4A-4}} \cdot \frac{256 \kappa^4 C^4 \alpha^2 \deff^2 \ln(\nicefrac{K}{\delta})^6 \RTd}{\mu^2}
\end{align}
We now simplify each term in the above inequality by using the fact that $\UP(t) \leq \min\{T, 2t\}$. To this end, the second term is simplified as follows by using $A \geq 1$
\begin{align*}
    \sum_{s=1}^{\UP(t)} \left(\frac{s+\gamma}{t+\gamma}\right)^{4A-4}\beta \deff &\leq \UP(t) \beta \deff \leq \mu^2 \RTd
\end{align*}
We now control the third term by noting that for $s \leq 2t$, $s + \gamma \leq 2t + \gamma \leq 2(t+\gamma)$:
\begin{align*}
     \sum_{s=1}^{t} \frac{(s+\gamma)^{2A-4}}{(t+\gamma)^{2A-2}} \cdot 4\alpha \deff C \RTd &\leq \mu^2 \RTd \cdot \frac{2^{2A-2}\alpha \deff}{\mu^2} \sum_{s=1}^{2t} \frac{1}{(t+\gamma)^2}\\
     &\leq 2^{2A-1}\mu^2 \RTd \cdot \frac{\alpha d t}{\mu^2 (t+\gamma)^2} \\
     &\leq 2^{2A-3} \mu^2 \RTd
\end{align*}
where the last inquality follows by setting $\gamma \geq \frac{4\alpha C \deff}{\mu^2}$.

We now control the fourth term as follows:
\begin{align*}
    \sum_{s=1}^{\UP(t)} \left(\frac{s+\gamma}{t+\gamma}\right)^{4A-4} \frac{\beta^2 t \ln(\nicefrac{K}{\delta})^2}{\mu^2 \RTd} &\leq \mu^2 \RTd \cdot \frac{\UP(t)}{d(T+\gamma)^2} \leq \mu^2 \RTd
\end{align*}

We now control the fifth term as follows:
\begin{align*}
    \frac{16 \alpha^2 C^2 \RTd t \ln(\nicefrac{K}{\delta})^2}{\mu^2} \sum_{s=1}^{\UP(t)} \frac{(s+\gamma)^{4A-8}}{(t+\gamma)^{4A-4}} &\leq \mu^2 \RTd \cdot \frac{2^{4A-4}\alpha^2 C^2 \ln(\nicefrac{K}{\delta})^2 t}{\mu^4} \sum_{s=1}^{2t} \frac{1}{(t+\gamma)^4} \\
    &\leq \mu^2 \RTd \cdot \frac{\alpha^2 C^2 \ln(\nicefrac{K}{\delta})^2 2^{4A-5}}{\mu^4 (t+\gamma)^2} \\
    &\leq 2^{4A-9} \mu^2 \RTd
\end{align*}
where the last inequality uses$\gamma \geq \frac{4\alpha C \ln(\nicefrac{K}{\delta})}{\mu^2}$

We now simplify the sixth term as follows:
\begin{align*}
    \sum_{s=1}^{\UP(t)} \frac{(s+\gamma)^{4A-12}}{(t+\gamma)^{4A-4}} \cdot 256 \mu^2 \RTd \kappa^8 C^4 \ln(\nicefrac{K}{\delta})^6 t &\leq \mu^2 \RTd t \cdot 2^{4A-4} \kappa^8 C^4 \ln(\nicefrac{K}{\delta})^6 \sum_{s=1}^{2t} \frac{1}{(t+\gamma)^8}  \\
    &\leq \frac{\mu^2 \RTd}{(t+\gamma)^6} \cdot 2^{4A-3} \kappa^8 C^4 \ln(\nicefrac{K}{\delta})^6 \\
    &\leq 2^{4A-15} \mu^2 \RTd
\end{align*}
where the last inequality follows by setting $\gamma \geq 4 \kappa^{\nicefrac{4}{3}} C^{\nicefrac{2}{3}} \ln(\nicefrac{K}{\delta})$.

We control the seventh term as follows:
\begin{align*}
    \sum_{s=1}^{\UP(t)} \frac{(s+\gamma)^{4A-8}}{(t+\gamma)^{4A-4}} \cdot \frac{\beta^2 \deff^2 t}{\mu^2 \RTd} \cdot 16 \kappa^4 C^{2} \ln(\nicefrac{K}{\delta})^6 &\leq \mu^2 \RTd \cdot \frac{2^4 \kappa^4 C^2 \ln(\nicefrac{K}{\delta})^6 t}{(T+\gamma)^2} \sum_{s=1}^{2t} \frac{(s+\gamma)^{4A-8}}{(t+\gamma)^{4A-4}} \\
    &\leq \mu^2 \RTd \cdot \frac{2^{4A-3} \kappa^4 C^2 \ln(\nicefrac{K}{\delta})^6}{(t+\gamma)^4} \\
    &\leq 2^{4A-11} \mu^2 \RTd
\end{align*}
where $\gamma \geq 4 \kappa \sqrt{C} \ln(\nicefrac{K}{\delta})^{\nicefrac{3}{2}}$.

We use a similar argument to simplify the final term as follows:
\begin{align*}
    \sum_{s=1}^{\UP(t)} \frac{t(s+\gamma)^{4A-12}}{(t+\gamma)^{4A-4}} \cdot \frac{2^8 \alpha^2 \deff^2 \RTd \kappa^4 C^4 \ln(\nicefrac{K}{\delta})^6}{\mu^2} &\leq \mu^2 \RTd \cdot \frac{2^{4A-3} \alpha^2 \deff^2 \kappa^4 C^4 \ln(\nicefrac{K}{\delta})^6}{\mu^4 (t+\gamma)^6} \\
    &\leq 2^{4A-15} \mu^2 \RTd
\end{align*}
where $\gamma \geq \frac{4 \kappa^{\nicefrac{2}{3}} \deff^{\nicefrac{1}{3}} C^{\nicefrac{2}{3}} \ln(\nicefrac{K}{\delta})}{\mu^{\nicefrac{2}{3}}}$.
We now set $A \geq 3$ and $\gamma$ as follows:
\begin{align*}
    \gamma \geq 4C \max \{ \frac{\alpha \deff}{\mu^2}, \frac{\alpha \ln(\nicefrac{K}{\delta})}{\mu^2}, \kappa^{\nicefrac{4}{3}} \ln(\nicefrac{K}{\delta}), \kappa \ln(\nicefrac{K}{\delta})^{\nicefrac{3}{2}}, \frac{\kappa^{\nicefrac{2}{3}} \deff^{\nicefrac{1}{3}} \alpha^{\nicefrac{1}{3}}}{\mu^{\nicefrac{2}{3}}} \ln(\nicefrac{K}{\delta}) \}
\end{align*}
Under these parameter settings, we substitute the obtained bounds into equation \eqref{eqn:z-quad-variation-bound}, we conclude that the following holds with probability at least $1 - \nicefrac{\delta}{2}$ uniformly for every $t \in [T]$:
\begin{align*}
    \sum_{s=1}^{t} \left(\frac{s+\gamma}{t+\gamma}\right)^{2A-2} \|\tilvv_s\|^2 = \sum_{s=1}^{t}  \|\vz_s\|^2 \leq  C_M \mu^2 \RTd \left(2^{4A-3 }\frac{25}{4} + 5 \cdot 2^{4A-11} + 5 \cdot 2^{4A-16} + 5 \cdot 2^{4A-13}\right)
\end{align*}
The proof is completed via a union bound.
\section{Analysis for Smooth Convex Functions}
\label{app-sec:smooth-cvx-analysis}
Let $\deff = \tfrac{\Tr(\Sigma)}{\|\Sigma\|_2}$. Following a convention similar to that of Section \ref{app-sec:smooth-str-cvx-analysis}, let $K = 4 \max \{ 8,  C_M, \ln(T)\}$. For $t \geq 1$, define the filtration $\cF_t = \sigma\left(\vx_1, \vg_s | 1 \leq s \leq t\right)$ and $\cF_0 = \sigma(\vx_1)$. Furthermore, let $\nabla F(\vx_t) = \clip_\Gamma(\vg_t) + \vb_t + \vv_t$ where
$\vb_t = \nabla F(\vx_t) - \bE[\clip_\Gamma(\vg_t) | \cF_{t-1}]$ and $\vv_t = \bE[\clip_\Gamma(\vg_t) | \cF_{t-1}] - \clip_\Gamma(\vg_t)$. As beforem, we note that $\bE[\vv_t | \cF_{t-1}] = 0$ and $\|\vv_t\| \leq 2 \Gamma$. Hence $\vv_t$ is an $\cF$ adapted almost surely bounded martingale difference sequence. Now, let $D_t = \| \vx_t - \xopt \|$ where $\xopt$ is the minimizer of $F$ considered in the statement of Theorem \ref{thm:SGDcl-smooth-cvx}. Using the smoothness and convexity properties of $F$, we first prove the following intermediate average iterate guarantee:
\begin{lemma}[Intermediate Average Iterate Guarantee]
\label{lem:sgdcl-smooth-cvx-avg-itr}
The following holds for $\eta \leq \nicefrac{1}{2L}$
\begin{align*}
    F(\hvx_T) - F(\xopt) \leq \frac{D^2_1}{2 \eta T} + \frac{1}{T} \sum_{t=1}^{T} \dotp{\vb_t}{\vx_t - \xopt} + \frac{1}{T} \sum_{t=1}^{T} \dotp{\vv_t}{\vx_t - \xopt} + \frac{2\eta}{T} \sum_{t=1}^{T} \| \vb_t \|^2 + \frac{2 \eta}{T} \sum_{t=1}^{T} \| \vv_t \|^2
\end{align*}
\end{lemma}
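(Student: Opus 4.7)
The plan is to carry out a standard SGD descent analysis, taking care with the smoothness constants so that the factor $\nicefrac{1}{2\eta}$ in the statement emerges cleanly. First, since $\xopt \in \cC$ and the projection $\Pi_\cC$ is nonexpansive, I would begin with
\[
D_{t+1}^2 = \|\Pi_\cC(\vx_t - \eta \clip_\Gamma(\vg_t)) - \xopt\|^2 \leq \|\vx_t - \eta\, \clip_\Gamma(\vg_t) - \xopt\|^2,
\]
expand the right-hand side, and substitute the decomposition $\clip_\Gamma(\vg_t) = \nabla F(\vx_t) - \vb_t - \vv_t$ (from the definitions set up in the preceding appendix text). This produces a cross term $-2\eta \dotp{\nabla F(\vx_t)}{\vx_t - \xopt}$ together with the noise contributions $2\eta \dotp{\vb_t + \vv_t}{\vx_t - \xopt}$ and the squared term $\eta^2 \|\nabla F(\vx_t) - \vb_t - \vv_t\|^2$.

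The main technical move is controlling these two residual pieces using the smoothness of $F$. For the linear term, I would invoke the co-coercivity identity for $L$-smooth convex $F$ at the minimizer (where $\nabla F(\xopt) = 0$), giving
\[
\dotp{\nabla F(\vx_t)}{\vx_t - \xopt} \geq F(\vx_t) - F(\xopt) + \frac{1}{2L}\|\nabla F(\vx_t)\|^2.
\]
For the squared term, I would apply Young's inequality twice to obtain
\[
\|\nabla F(\vx_t) - \vb_t - \vv_t\|^2 \leq 2\|\nabla F(\vx_t)\|^2 + 4\|\vb_t\|^2 + 4\|\vv_t\|^2.
\]
Assembling, the net coefficient on $\|\nabla F(\vx_t)\|^2$ is $2\eta^2 - \eta/L$, which is nonpositive precisely under the hypothesis $\eta \leq \nicefrac{1}{2L}$; this is the key place the stated step-size condition is used, and I would simply drop that term.

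Collecting the remaining terms yields the one-step inequality
\[
2\eta(F(\vx_t) - F(\xopt)) \leq D_t^2 - D_{t+1}^2 + 2\eta \dotp{\vb_t + \vv_t}{\vx_t - \xopt} + 4\eta^2(\|\vb_t\|^2 + \|\vv_t\|^2).
\]
Dividing by $2\eta$, summing over $t = 1,\dots,T$, telescoping the $D_t^2 - D_{t+1}^2$ differences (and discarding $-D_{T+1}^2 \leq 0$), and dividing by $T$ gives the average-iterate bound on $\tfrac{1}{T}\sum_t (F(\vx_t) - F(\xopt))$. A final application of Jensen's inequality via $F(\hvx_T) \leq \tfrac{1}{T}\sum_t F(\vx_t)$ produces exactly the claimed form. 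The main obstacle is not any single estimate but the careful bookkeeping needed to ensure that co-coercivity cancels enough of $\|\nabla F(\vx_t)\|^2$ under $\eta \leq \nicefrac{1}{2L}$; a naive Young's-inequality-only argument would instead require $\eta \leq \nicefrac{1}{4L}$ and would not match the stated constants.
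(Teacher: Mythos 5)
Your proposal is correct and follows essentially the same route as the paper's own proof: nonexpansiveness of $\Pi_\cC$, the decomposition $\clip_\Gamma(\vg_t) = \nabla F(\vx_t) - \vb_t - \vv_t$, the co-coercivity bound $\dotp{\nabla F(\vx_t)}{\vx_t - \xopt} \geq F(\vx_t) - F(\xopt) + \tfrac{1}{2L}\|\nabla F(\vx_t)\|^2$, Young's inequality on the squared term, dropping the $\|\nabla F(\vx_t)\|^2$ term via $\eta \leq \nicefrac{1}{2L}$, telescoping, and Jensen. The constants and the final averaged bound match the lemma exactly.
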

Define the events $E_t$ and the random vectors $\vd_t$, $\tilvb_t$ and $\tilvv_t$ as follows for $t \in [T]$:
\begin{align*}
    E_t &= \{ D_t \leq 2 D_1 \} \\
    \vd_t &= (\vx_t - \xopt) \ind{E_t} \\
    \tilvb_t &= \vb_t \ind{E_t} \\
    \tilvv_t &= \vv_t \ind{E_t} 
\end{align*}
We use the following lemma to control the bias
\begin{lemma}[Bias Control]
\label{lem:sgdcl-smooth-cvx-bias}
For every $t \in [T]$, $\|\tilvb_t\| \leq B$ where $B$ is defined as follows:
\begin{align*}
    B &= \frac{\|\Sigma\|_2 \sqrt{\deff}}{\Gamma} + \frac{2 L D_1 \sqrt{\|\Sigma\|_2}}{\Gamma} + \frac{8 L^3 D^3_1}{\Gamma^2} + \frac{2 \|\Sigma\|_2 \deff L D_1}{\Gamma^2}
\end{align*}
\end{lemma}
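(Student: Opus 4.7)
The proof is a direct application of the clipped moment control lemma (Lemma \ref{lem:clipped-moment-control}) combined with the $L$-smoothness of $F$. The plan is as follows.

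First, I would unpack the definition of the bias. Conditional on $\cF_{t-1}$, the stochastic gradient $\vg_t$ has mean $\nabla F(\vx_t)$ and covariance $\Sigma(\vx_t)$ with $\Sigma(\vx_t) \preceq \Sigma$ by \ref{as:second_moment}. Applying Lemma \ref{lem:clipped-moment-control} with $\vz = \vg_t$ (conditioned on $\cF_{t-1}$), $\vm = \nabla F(\vx_t)$, $\vS = \Sigma(\vx_t)$ yields
\begin{align*}
\|\vb_t\| \leq \frac{\sqrt{\|\Sigma(\vx_t)\|_2}}{\Gamma}\bigl(\|\nabla F(\vx_t)\| + \sqrt{\Tr(\Sigma(\vx_t))}\bigr) + \frac{\|\nabla F(\vx_t)\|}{\Gamma^2}\bigl(\|\nabla F(\vx_t)\|^2 + \Tr(\Sigma(\vx_t))\bigr).
\end{align*}
Since $\Sigma(\vx_t) \preceq \Sigma$, we may monotonically replace $\|\Sigma(\vx_t)\|_2$ and $\Tr(\Sigma(\vx_t))$ with $\|\Sigma\|_2$ and $\Tr(\Sigma)$ respectively.

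Next, I would control $\|\nabla F(\vx_t)\|$ on the event $E_t$. Since $\xopt$ minimizes $F$, we have $\nabla F(\xopt) = 0$, so by \ref{as:smoothness}, $\|\nabla F(\vx_t)\| = \|\nabla F(\vx_t) - \nabla F(\xopt)\| \leq L D_t$. On $E_t$, $D_t \leq 2 D_1$, hence $\|\nabla F(\vx_t)\| \ind{E_t} \leq 2 L D_1$. Substituting this into the inequality above gives
\begin{align*}
\|\tilvb_t\| \leq \frac{\sqrt{\|\Sigma\|_2}}{\Gamma}\bigl(2 L D_1 + \sqrt{\Tr(\Sigma)}\bigr) + \frac{2 L D_1}{\Gamma^2}\bigl(4 L^2 D_1^2 + \Tr(\Sigma)\bigr).
\end{align*}

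Finally, I would rewrite each term using $\Tr(\Sigma) = \|\Sigma\|_2 \deff$, which in particular gives $\sqrt{\|\Sigma\|_2 \Tr(\Sigma)} = \|\Sigma\|_2 \sqrt{\deff}$. Distributing the products yields the four terms in the claimed bound $B$ exactly. There is no essential obstacle: the whole argument is a straightforward substitution once the clipped moment lemma and the event $E_t$ are in place. The only place to be careful is the monotonicity step $\Sigma(\vx_t) \preceq \Sigma \Rightarrow \Tr(\Sigma(\vx_t)) \leq \Tr(\Sigma)$ and $\|\Sigma(\vx_t)\|_2 \leq \|\Sigma\|_2$, both of which follow directly from the PSD ordering.
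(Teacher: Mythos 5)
Your proposal is correct and follows essentially the same route as the paper: apply Lemma \ref{lem:clipped-moment-control} conditionally on $\cF_{t-1}$ with $\vm = \nabla F(\vx_t)$ and $\vS = \Sigma(\vx_t) \preceq \Sigma$, then use smoothness and the event $E_t$ to bound $\|\nabla F(\vx_t)\|\ind{E_t} \leq 2LD_1$, and rewrite $\Tr(\Sigma) = \|\Sigma\|_2\deff$ to obtain the four terms of $B$. No gaps.
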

We use the following lemma to control the varince
\begin{lemma}[Variance Control]
\label{lem:sgdcl-smooth-cvx-variance}
Let $V \geq 0$ be defined as follows:
\begin{align*}
    V &= \| \Sigma \|_2 + \frac{16 L^4 D^4_1}{\Gamma^2} + \frac{4 L^2 D^2_1 \|\Sigma\|_2 \deff}{\Gamma^2} 
\end{align*}
Then the following holds with probability at least $1 - \delta$ uniformly for every $t \in [T]$
\begin{align*}
    \sum_{s=1}^{t} \dotp{\tilvv_t}{\vd_t} &\leq 4 D_1 \sqrt{V t \ln(\nicefrac{K}{\delta})} + 8 \Gamma D_1 \ln(\nicefrac{K}{\delta}) \\
    \sum_{s=1}^{t} \| \tilvv_t \|^2 &\leq C_M g^2 T 
\end{align*}
where $C_M$ is a numerical constant and $g^2$ is defined as follows
\begin{align*}
    g^2 &= \| \Sigma \|_2 \deff + \frac{4 \Gamma^2 \ln(\nicefrac{K}{\delta})^2}{T} + \frac{V^2 T}{4 \Gamma^2}
\end{align*}
\end{lemma}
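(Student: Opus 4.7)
The plan is to prove both inequalities by first establishing uniform, $s$-independent bounds on the conditional covariance $\tilSigma_s := \bE[\tilvv_s \tilvv_s^{\intercal} | \cF_{s-1}]$, and then feeding those bounds into two different martingale concentration tools: scalar Freedman (Lemma \ref{lem:scalar-freedman}) for the cross term, and the iteratively refined PAC-Bayes quadratic variation inequality (Corollary \ref{cor:pac-bayes-quad-variation}) for the sum of squared norms. The covariance control is the common backbone. By \ref{lem:clipped-moment-control} applied to $\vg_s | \cF_{s-1}$, one gets $\|\tilSigma_s\|_2 \leq \|\Sigma\|_2 + \tfrac{\|\nabla F(\vx_s)\|^4 \ind{E_s}}{\Gamma^2} + \tfrac{\|\nabla F(\vx_s)\|^2 \Tr(\Sigma) \ind{E_s}}{\Gamma^2}$ and $\Tr(\tilSigma_s) \leq \Tr(\Sigma)$. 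Since $\nabla F(\xopt) = 0$ and $F$ is $L$-smooth, $\|\nabla F(\vx_s)\| \ind{E_s} \leq L D_s \ind{E_s} \leq 2LD_1$, so substituting and using $\Tr(\Sigma) = \deff \|\Sigma\|_2$ yields the deterministic envelopes $\|\tilSigma_s\|_2 \leq V$ and $\Tr(\tilSigma_s) \leq \|\Sigma\|_2 \deff$, both independent of $s$. Crucially, because $E_s$ is $\cF_{s-1}$-measurable (it depends only on $\vx_s$), the indicator commutes with the conditional expectation and these bounds hold without further probabilistic cost.

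For the first inequality, I would apply Lemma \ref{lem:scalar-freedman} to the scalar martingale difference sequence $h_s = \dotp{\tilvv_s}{\vd_s}$. Since $\vd_s$ is $\cF_{s-1}$-measurable with $\|\vd_s\| \leq 2D_1$, I get $|h_s| \leq 4\Gamma D_1$ almost surely, while Cauchy--Schwarz gives $\bE[h_s^2 | \cF_{s-1}] \leq \|\vd_s\|^2 \|\tilSigma_s\|_2 \leq 4 D_1^2 V$. Freedman's inequality then produces the target bound $4 D_1 \sqrt{Vt \ln(\nicefrac{K}{\delta})} + 8\Gamma D_1 \ln(\nicefrac{K}{\delta})$ uniformly over $t \in [T]$, at confidence $1 - \delta/2$.

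For the second inequality, I would invoke Corollary \ref{cor:pac-bayes-quad-variation} on the martingale $\tilvv_s$ with parameters $\tau = 2\Gamma$, $p_s \equiv V$, and $q_s \equiv \|\Sigma\|_2 \deff$. Because these are constant in $s$, the three output terms of the corollary simplify cleanly: $C_M \sum_{s=1}^{\UP(t)} q_s \leq 2 C_M T \|\Sigma\|_2 \deff$, $C_M \tau^2 \ln(\nicefrac{K}{\delta})^2 \leq 4 C_M \Gamma^2 \ln(\nicefrac{K}{\delta})^2$, and $\tfrac{C_M t}{\tau^2} \sum_{s=1}^{\UP(t)} p_s^2 \leq \tfrac{C_M V^2 T^2}{2 \Gamma^2}$. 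Multiplying and dividing by $T$ shows these terms collectively coincide with $C_M g^2 T$ up to absolute constants that can be absorbed into $C_M$. A final union bound over the Freedman and PAC-Bayes events delivers the full $1-\delta$ statement.

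The one step that requires care is verifying that Corollary \ref{cor:pac-bayes-quad-variation} is applicable: its lower-bound hypothesis $T > C_{\mathsf{lower}} \log((1 + \tfrac{\sqrt{\bar{q} T}}{\Gamma})\log(d+1))$ must hold under the theorem's assumption $T \geq \ln \ln(d)$ with the chosen $\Gamma = \sqrt{\tfrac{T\sqrt{\|\Sigma\|_2}(\sqrt{\Tr(\Sigma)} + LD_1)}{\ln(\nicefrac{K}{\delta})}}$. This is a routine check since with this choice $\tfrac{\sqrt{\bar{q} T}}{\Gamma}$ is polylogarithmic in $T$ and $d$. I don't anticipate any genuinely new obstacle, because Lemma \ref{lem:sgdcl-str-cvx-variance} already executes this exact pipeline under the additional complication of time-varying weights $\left(\tfrac{s+\gamma}{t+\gamma}\right)^{2A-2}$; the present proof is in fact strictly simpler since both weights and the envelopes $p_s, q_s$ are constant in $s$.
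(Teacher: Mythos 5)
Your proposal is correct and follows essentially the same route as the paper's proof: bound $\|\tilSigma_s\|_2 \leq V$ and $\Tr(\tilSigma_s) \leq \Tr(\Sigma)$ via Lemma~\ref{lem:clipped-moment-control} together with $\|\nabla F(\vx_s)\|\ind{E_s} \leq 2LD_1$, then apply scalar Freedman to $h_s = \dotp{\tilvv_s}{\vd_s}$ and the PAC-Bayes quadratic-variation corollary with $\tau = 2\Gamma$, $p_s \equiv V$, $q_s \equiv \Tr(\Sigma)$, finishing with a union bound. The only differences are cosmetic (absorbing the $\UP(t)\leq \min\{T,2t\}$ factor into the constant), so nothing further is needed.
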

Let $E$ denote the following event
\begin{align*}
    E =\{& \sum_{s=1}^{t} \dotp{\tilvv_s}{\vd_s} \leq 4 D_1 \sqrt{V t \ln(\nicefrac{K}{\delta})} + 8 \Gamma D_1 \ln(\nicefrac{K}{\delta}) \quad \forall \ t \in [T] \\ &\sum_{s=1}^{t} \| \tilvv_s \|^2 \leq C_M g^2 T  \quad \forall t \in [T] \}
\end{align*}
We define the constant $A$ as follows:
\begin{align*}
    A = \| \Sigma \|_2 \sqrt{\deff} + L D_1 \sqrt{\|\Sigma\|_2} = \sqrt{\| \Sigma  \|_2} \left(\sqrt{\Tr(\Sigma)} + LD_1\right)
\end{align*}
We now set the clipping level $\Gamma = \sqrt{\tfrac{AT}{\ln(\nicefrac{K}{\delta})}}$. For this choice of $\Gamma$, we now obtain the following bound on $B$:
\begin{align}
\label{eqn:smooth-cvx-B-val}
    B &\leq \frac{\|\Sigma\|_2 \sqrt{\deff}}{\Gamma} + \frac{2 L D_1 \sqrt{\|\Sigma\|_2}}{\Gamma} + \frac{8 L^3 D^3_1}{\Gamma^2} + \frac{2 \|\Sigma\|_2 \deff L D_1}{\Gamma^2}  \nonumber \\
    &\leq 2 \sqrt{\frac{A \ln(\nicefrac{K}{\delta})}{T}} + \frac{2 L D_1 \ln(\nicefrac{K}{\delta})}{AT} \left(\|\Sigma\|_2 \deff + L^2 D^2_1\right) = B^{\prime}
\end{align}
Similarly, we bound the value of $V$ as follows:
\begin{align}
\label{eqn:smooth-cvx-V-val}
    V &\leq \| \Sigma \|_2 + \frac{16 L^4 D^4 \ln(\nicefrac{K}{\delta})}{AT} + \frac{4 L^2 D^2 \| \Sigma \|_2 d \ln(\nicefrac{K}{\delta})}{AT} = V^\prime
\end{align}
Equipped with the above inequality, we then bound the value of $g$ as:
\begin{align}
\label{eqn:smooth-cvx-g-val}
    g &\leq \sqrt{\|\Sigma\|_2 \deff} + \frac{2 \Gamma \ln(\nicefrac{K}{\delta})}{\sqrt{T}} + \frac{V\sqrt{T}}{2\Gamma} \nonumber \\
    &\leq \sqrt{\|\Sigma\|_2 \deff} + 2 \sqrt{A \ln(\nicefrac{K}{\delta})} + \frac{V\sqrt{\ln(\nicefrac{K}{\delta})}}{2\sqrt{A}} \nonumber \\
    &\leq \sqrt{\|\Sigma\|_2 \deff} + 2 \sqrt{A \ln(\nicefrac{K}{\delta})} + \frac{\|\Sigma\|_2 \sqrt{\ln(\nicefrac{K}{\delta})}}{2\sqrt{A}} + \frac{8 L^4 D^4 \ln(\nicefrac{K}{\delta})^{\nicefrac{3}{2}}}{A^{\nicefrac{3}{2}}T} + \frac{2 L^2 D^2 \|\Sigma\|_2 \deff \ln(\nicefrac{K}{\delta})^{\nicefrac{3}{2}}}{A^{\nicefrac{3}{2}}T} \nonumber \\
    &\leq \sqrt{\|\Sigma\|_2 \deff} + 3 \sqrt{A \ln(\nicefrac{K}{\delta})}  + \frac{8 L^4 D^4 \ln(\nicefrac{K}{\delta})^{\nicefrac{3}{2}}}{A^{\nicefrac{3}{2}}T} + \frac{2 L^2 D^2 \|\Sigma\|_2 \deff \ln(\nicefrac{K}{\delta})^{\nicefrac{3}{2}}}{A^{\nicefrac{3}{2}}T} = g^\prime
\end{align}
We prove the following lemma to control the growth of the iterates $D_t$. 
\begin{lemma}[Iterate Bound]
\label{lem:sgdcl-smooth-cvx-itr-bound}
Let $\eta \leq c \min \{ \nicefrac{1}{2L}, \nicefrac{D_1}{B^\prime T}, \nicefrac{D_1}{g^\prime \sqrt{T}} \}$ where $c = \tfrac{1}{\sqrt{8C_M + 330}}$. Then, conditioned on the event $E$, $D_t \leq 2 D_1 \ \forall t \in [T]$.      
\end{lemma}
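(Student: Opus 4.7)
The plan is to prove the bound by induction on $t \in [T]$. The base case $t=1$ holds trivially since $D_1 \le 2 D_1$. For the inductive step, assume $D_s \le 2 D_1$ for all $s \le t$; then $\ind{E_s} = 1$ for every such $s$, so the tilded quantities coincide with the untilded ones: $\vd_s = \vx_s - \xopt$, $\tilvb_s = \vb_s$, $\tilvv_s = \vv_s$. The task reduces to showing $D_{t+1} \le 2 D_1$ under this hypothesis together with the event $E$.

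To obtain the needed estimate on $D_{t+1}^2$, I first derive a sum-unrolled descent inequality. Expanding $D_{t+1}^2 \leq \|\vx_t - \eta \clip_\Gamma(\vg_t) - \xopt\|^2$ via nonexpansiveness of $\Pi_{\mathcal C}$, writing $\clip_\Gamma(\vg_t) = \nabla F(\vx_t) + \vb_t + \vv_t$, using convexity $\dotp{\nabla F(\vx_t)}{\vx_t - \xopt} \ge F(\vx_t) - F(\xopt)$, the smoothness co-coercivity $\|\nabla F(\vx_t)\|^2 \le 2L[F(\vx_t) - F(\xopt)]$, and the constraint $\eta \le \tfrac{1}{2L}$, the cross term $\eta^2\|\nabla F(\vx_t)\|^2$ is absorbed and after telescoping I get
\begin{align*}
D_{t+1}^2 \leq D_1^2 \;-\; 2\eta\sum_{s=1}^{t}\dotp{\vb_s + \vv_s}{\vx_s-\xopt} \;+\; 2\eta^2 \sum_{s=1}^{t}\bigl(\|\vb_s\|^2 + \|\vv_s\|^2\bigr).
\end{align*}
Replacing with the tilded quantities (licit under the inductive hypothesis), I bound the four sums as follows: (i) Lemma~\ref{lem:sgdcl-smooth-cvx-bias} combined with $\|\vd_s\| \le 2D_1$ gives $|2\eta\sum_s\dotp{\tilvb_s}{\vd_s}| \le 4\eta B^\prime D_1 T$; (ii) the event $E$ yields $|2\eta\sum_s\dotp{\tilvv_s}{\vd_s}| \le 8\eta D_1\sqrt{V^\prime T\ln(K/\delta)} + 16\eta\Gamma D_1\ln(K/\delta)$; (iii) $2\eta^2\sum_s\|\tilvb_s\|^2 \le 2\eta^2(B^\prime)^2 T$; and (iv) event $E$ gives $2\eta^2\sum_s\|\tilvv_s\|^2 \le 2\eta^2 C_M (g^\prime)^2 T$.

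Finally I plug in $\eta \le c\min\{\tfrac{1}{2L}, \tfrac{D_1}{B^\prime T}, \tfrac{D_1}{g^\prime\sqrt{T}}\}$. Terms (i) and (iii) contribute at most $4cD_1^2$ and $2c^2 D_1^2$ respectively. For term (iv) the constraint yields $2\eta^2 C_M(g^\prime)^2 T \le 2C_M c^2 D_1^2$. For term (ii), the key comparison is that with $\Gamma^2 = AT/\ln(K/\delta)$, the definition of $g^\prime$ satisfies $(g^\prime)^2 \ge 4A\ln(K/\delta) + (V^\prime)^2\ln(K/\delta)/(4A)$, so AM-GM gives $(g^\prime)^2 \ge 2V^\prime \ln(K/\delta)$, i.e.\ $\sqrt{V^\prime \ln(K/\delta)} \le g^\prime/\sqrt{2}$; likewise $\Gamma\ln(K/\delta) = \sqrt{AT\ln(K/\delta)} \le \tfrac12 g^\prime\sqrt{T}$ since $(g^\prime)^2 \ge 4A\ln(K/\delta)$. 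Thus term (ii) is at most $(8/\sqrt{2} + 8)cD_1^2$. Summing all contributions and choosing $c = 1/\sqrt{8C_M + 330}$ makes the total at most $3D_1^2$, hence $D_{t+1}^2 \le 4D_1^2$, closing the induction. The main technical obstacle is exactly the AM-GM comparison showing that both $\sqrt{V^\prime\ln(K/\delta)}$ and $\Gamma\ln(K/\delta)/\sqrt{T}$ are dominated by $g^\prime$: this is what makes the single scalar constraint $\eta \lesssim D_1/(g^\prime\sqrt{T})$ simultaneously kill the variance drift and the bounded-noise boundary term from Freedman's inequality.
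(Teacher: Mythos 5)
Your proof is correct and follows essentially the same route as the paper's: induction on $t$, the bias bound of Lemma~\ref{lem:sgdcl-smooth-cvx-bias}, the event $E$ from Lemma~\ref{lem:sgdcl-smooth-cvx-variance}, and the same key observation that $(g^\prime)^2$ dominates both $V^\prime\ln(\nicefrac{K}{\delta})$ and $A\ln(\nicefrac{K}{\delta})=\Gamma^2\ln(\nicefrac{K}{\delta})^2/T$, so the single constraint $\eta\lesssim D_1/(g^\prime\sqrt{T})$ absorbs both the variance drift and the Freedman boundary term. The only differences are bookkeeping — the paper Young-ifies the bias cross term (hence the $(1+\nicefrac{1}{2T})^T\le 2$ unrolling and base $2D_1^2$) and converts the martingale cross term into a quadratic-in-$\eta$ quantity before comparing with $g^2T$, whereas you bound both cross terms linearly in $\eta$; note also your unrolled inequality should carry $4\eta^2(\|\vb_s\|^2+\|\vv_s\|^2)$ rather than $2\eta^2$ (and the event $E$ bound is one-sided, which is all you need), but neither affects the conclusion: the total still stays below $4D_1^2$ with the stated $c$.
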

Equipped with the above lemmas, we now present a proof of the following theorem, which is a formal restatement of Theorem \ref{thm:SGDcl-smooth-cvx}
\begin{theorem}[Smooth Convex Objectives]
\label{thm:SGDcl-smooth-cvx-full}
Let \bref{as:convexity}, \bref{as:smoothness} and \bref{as:second_moment} be satisfied. Then, for any $\delta \in (0, \nicefrac{1}{2})$ and $T \geq \ln(\ln(d))$, there exists an $\eta \in (0, \nicefrac{1}{2L}]$ such that the average iterate of Algorithm \ref{alg:SGDcl} run for $T$ iterations with step-size $\eta_t = \eta$ and clipping level $\Gamma = \sqrt{\tfrac{T \sqrt{\|\Sigma\|_2} (\sqrt{\Tr(\Sigma)} + L D_1)}{\ln(\nicefrac{\ln(T)}{\delta})}}$ satisfies the following with probability at least $1 - \delta$ :
\begin{align*}
    F(\hvx_T) - F(\xopt) &\lesssim  D_1\sqrt{\frac{\Tr(\Sigma) + \sqrt{\|\Sigma\|_2}\left(\sqrt{\Tr(\Sigma)} + LD_1\right)\ln(\nicefrac{\ln(T)}{\delta})}{T}} + \frac{LD^2_1}{T} \\
    &+ \frac{L D^2_1 \ln(\nicefrac{\ln(T)}{\delta})}{T}\sqrt{\frac{\Tr(\Sigma) + L^2 D^2_1}{\|\Sigma\|_2}} + \frac{L^2 D^3_1 \ln(\nicefrac{\ln(T)}{\delta})^{\nicefrac{3}{2}}}{T^{\nicefrac{3}{2}}} \left[\frac{\Tr(\Sigma) + L^2 D^2_1}{\|\Sigma\|^3}\right]^{\nicefrac{1}{4}}
\end{align*}
\end{theorem}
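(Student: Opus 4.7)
\textbf{Proof plan for Theorem \ref{thm:SGDcl-smooth-cvx-full}.} The plan is to chain together the four preparatory lemmas and then make a careful choice of the step-size $\eta$. First, I would invoke Lemma \ref{lem:sgdcl-smooth-cvx-avg-itr} as the starting point, yielding the decomposition
\begin{equation*}
F(\hvx_T) - F(\xopt) \leq \frac{D_1^2}{2\eta T} + \underbrace{\frac{1}{T}\sum_{t=1}^{T}\dotp{\vb_t}{\vx_t-\xopt}}_{(\mathrm{I})} + \underbrace{\frac{1}{T}\sum_{t=1}^{T}\dotp{\vv_t}{\vx_t-\xopt}}_{(\mathrm{II})} + \underbrace{\frac{2\eta}{T}\sum_{t=1}^{T}\|\vb_t\|^2}_{(\mathrm{III})} + \underbrace{\frac{2\eta}{T}\sum_{t=1}^{T}\|\vv_t\|^2}_{(\mathrm{IV})} .
\end{equation*}
Next, I would choose $\eta$ exactly as prescribed by Lemma \ref{lem:sgdcl-smooth-cvx-itr-bound}, namely $\eta \leq c \min\{\tfrac{1}{2L}, \tfrac{D_1}{B' T}, \tfrac{D_1}{g'\sqrt{T}}\}$. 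This is admissible because $\eta \in (0, \tfrac{1}{2L}]$ as required, and guarantees, together with conditioning on the event $E$ from Lemma \ref{lem:sgdcl-smooth-cvx-variance}, that $D_t \leq 2D_1$ throughout. Under $E$, the unclipped quantities $\vb_t,\vv_t,\vx_t-\xopt$ coincide with the tilded versions $\tilvb_t,\tilvv_t,\vd_t$, so the bias/variance lemmas apply directly to the sums above.

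Second, I would bound each of the four terms. For (I), Cauchy--Schwarz gives $(\mathrm{I}) \leq 2D_1 B'$ using Lemma \ref{lem:sgdcl-smooth-cvx-bias} and the iterate bound. For (III), $(\mathrm{III}) \leq 2\eta (B')^2$. Term (II) is controlled by the Freedman-type inequality in Lemma \ref{lem:sgdcl-smooth-cvx-variance}, yielding $(\mathrm{II}) \lesssim D_1 \sqrt{V'\ln(K/\delta)/T} + \Gamma D_1 \ln(K/\delta)/T$. Term (IV) is controlled by the PAC--Bayesian quadratic-variation bound of the same lemma, giving $(\mathrm{IV}) \lesssim \eta (g')^2$. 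Summing these and substituting the specific $\Gamma$ and the definitions of $A, B', V', g'$ from \eqref{eqn:smooth-cvx-B-val}--\eqref{eqn:smooth-cvx-g-val} reveals that the leading-order contribution is of order $D_1\sqrt{A\ln(K/\delta)/T} + D_1\sqrt{\|\Sigma\|_2 \deff/T}$, which matches the advertised bound $D_1\sqrt{[\Tr(\Sigma) + \sqrt{\|\Sigma\|_2}(\sqrt{\Tr(\Sigma)}+LD_1)\ln(\ln(T)/\delta)]/T}$ once one recalls $\|\Sigma\|_2 \deff = \Tr(\Sigma)$ and $A = \sqrt{\|\Sigma\|_2}(\sqrt{\Tr(\Sigma)} + LD_1)$.

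Third, with the optimal three-way choice of $\eta$, the $D_1^2/(2\eta T)$ term splits into three pieces: $LD_1^2/T$ (from the $1/(2L)$ branch), $D_1 B'$ (from the $D_1/(B'T)$ branch), and $D_1 g'/\sqrt{T}$ (from the $D_1/(g'\sqrt{T})$ branch). The first is the standard smooth-convex deterministic term; the third merges with (II) and (IV) to produce the leading stochastic error; and the second produces exactly the lower-order terms $LD_1^2\ln(K/\delta)\sqrt{(\Tr(\Sigma)+L^2D_1^2)/\|\Sigma\|_2}/T$ and $L^2D_1^3\ln(K/\delta)^{3/2}[(\Tr(\Sigma)+L^2D_1^2)/\|\Sigma\|_2^3]^{1/4}/T^{3/2}$ by unpacking the definition of $B'$ and the $V'/(2\Gamma)$ contribution to $g'$. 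The probability bound is $1-\delta$ since everything conditions on the single event $E$.

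The hard part will be verifying that the $\eta$ furnished by Lemma \ref{lem:sgdcl-smooth-cvx-itr-bound} is simultaneously (a) large enough that $D_1^2/(2\eta T)$ is no larger than the terms it needs to be balanced against and (b) small enough that the iterate bound $D_t \leq 2D_1$ truly holds; the constant $c = 1/\sqrt{8C_M + 330}$ is tuned precisely so that both requirements hold simultaneously, so I would trace through the iterate-bound induction to confirm that no hidden constant blows up when $\Gamma$ is plugged in. The remaining work is algebraic bookkeeping to convert the mixed expression in $B', V', g'$ into the clean form advertised in the theorem.
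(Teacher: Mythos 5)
Your plan is correct and follows essentially the same route as the paper's proof: the same average-iterate decomposition from Lemma \ref{lem:sgdcl-smooth-cvx-avg-itr}, the same choice $\eta \asymp \min\{\nicefrac{1}{2L}, \nicefrac{D_1}{B'T}, \nicefrac{D_1}{g'\sqrt{T}}\}$ validated by the iterate bound of Lemma \ref{lem:sgdcl-smooth-cvx-itr-bound} on the event $E$, the same term-by-term control via Lemmas \ref{lem:sgdcl-smooth-cvx-bias} and \ref{lem:sgdcl-smooth-cvx-variance}, and the same final substitution of $A, B', V', g'$. No gaps beyond the algebraic bookkeeping you already flag.
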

\subsection{Proof of Theorem \ref{thm:SGDcl-smooth-cvx-full}}
We condition on the event $E$ and let $\eta = \tfrac{c}{2} \min\{ \tfrac{1}{2L}, \tfrac{D_1}{B^\prime T}, \tfrac{D_1}{g^\prime \sqrt{T}}  \}$ where $c = \tfrac{1}{\sqrt{8C_M + 330}}$. Note that this choice of $\eta$ satisfies the requirements of Lemma \ref{lem:sgdcl-smooth-cvx-avg-itr} and Lemma \ref{lem:sgdcl-smooth-cvx-itr-bound}. By Lemma \ref{lem:sgdcl-smooth-cvx-avg-itr}, the following holds:
\begin{align*}
    F(\hvx_T) - F(\xopt) \leq \frac{D^2_1}{2 \eta T} + \frac{1}{T} \sum_{t=1}^{T} \dotp{\vb_t}{\vx_t - \xopt} + \frac{1}{T} \sum_{t=1}^{T} \dotp{\vv_t}{\vx_t - \xopt} + \frac{2\eta}{T} \sum_{t=1}^{T} \| \vb_t \|^2 + \frac{2 \eta}{T} \sum_{t=1}^{T} \| \vv_t \|^2
\end{align*}
By Lemma \ref{lem:sgdcl-smooth-cvx-itr-bound}, $\ind{E_t} = 1 \ \forall t \in [T]$. Hence, the following holds.
\begin{align*}
    F(\hvx_T) - F(\xopt) &\leq \frac{D^2_1}{2 \eta T} + \frac{1}{T} \sum_{t=1}^{T} \dotp{\tilvb_t}{\vd_t} + \frac{1}{T} \sum_{t=1}^{T} \dotp{\tilvv_t}{\vd_t} + \frac{2\eta}{T} \sum_{t=1}^{T} \| \tilvb_t \|^2 + \frac{2 \eta}{T} \sum_{t=1}^{T} \| \tilvv_t \|^2 \\
    &\leq \frac{D^2_1}{2 \eta T} + 2 B D_1 + 2 \eta B^2 + 2 \eta C_M g^2 + 4 D_1 \sqrt{\frac{V \ln(\nicefrac{K}{\delta})}{T}} + \frac{8 D_1 \Gamma \ln(\nicefrac{K}{\delta})}{T} \\
    &\leq \frac{D^2_1}{\eta T} + 3 \eta B^2 T + 2 \eta C_M g^2 + 4 D_1 \sqrt{\frac{V \ln(\nicefrac{K}{\delta})}{T}} + 8 D_1\sqrt{\frac{A \ln(\nicefrac{K}{\delta})}{T}}
\end{align*}
Where the second inequality uses Lemma \ref{lem:sgdcl-smooth-cvx-bias} and the definition of the event $E$ and the third inequality uses $ab \leq a^2 + \nicefrac{b^2}{4}$. For the rest of the proof, we shall use $C$ to denote an absolute numerical constant whose value can differ at every step. By our choice of the step-size
\begin{align*}
    \frac{D^2}{\eta T} &\leq \frac{CLD^2_1}{T} + CD_1 B^\prime +  \frac{CD_1g^\prime}{\sqrt{T}} \\
    3 \eta B^2 T &\leq C D_1 B^\prime \\
    2 \eta C_M g^2 &\leq \frac{C D_1 g^\prime}{\sqrt{T}}
\end{align*}
Hence, conditioned on the event $E$, the following holds:
\begin{align*}
    F(\hvx_T) - F(\xopt) &\leq \frac{CLD^2_1}{T} + CD_1 B^\prime + CD_1 g^\prime \sqrt{T} + C D_1 \sqrt{\frac{V^\prime \ln(\nicefrac{K}{\delta})}{T}} + C D_1\sqrt{\frac{A \ln(\nicefrac{K}{\delta})}{T}}
\end{align*}
Substituting the values of $g^\prime, B^\prime$ and $V^\prime$, we obtain the following:
\begin{align*}
    F(\hvx_T) - F(\xopt) &\leq \frac{C L D^2_1}{T} + CD_1 \sqrt{\frac{A \ln(\nicefrac{K}{\delta})}{T}} + \frac{CLD^2_1 \ln(\nicefrac{K}{\delta})}{T} \cdot \left[\frac{\Tr(\Sigma) + L^2 D^2_1}{A}\right] \\
    &+ \frac{CLD^2_1 \ln(\nicefrac{K}{\delta})}{T} \cdot \sqrt{\frac{\Tr(\Sigma) + L^2 D^2}{A}} + \frac{CL^2 D^3_1 \ln(\nicefrac{K}{\delta})^{\nicefrac{3}{2}}}{T^{\nicefrac{3}{2}}} \cdot \left[\frac{\Tr(\Sigma) + L^2 D^2_1}{A^{\nicefrac{3}{2}}}\right] 
\end{align*}
Substituting the value of $A$, we conclude that the following inequality holds almost surely conditioned on the event $E$
\begin{align*}
    F(\hvx_T) - F(\xopt) &\lesssim D_1\sqrt{\frac{\Tr(\Sigma) + \sqrt{\|\Sigma\|_2}\left(\sqrt{\Tr(\Sigma)} + LD_1\right)\ln(\nicefrac{\ln(T)}{\delta})}{T}} + \frac{LD^2_1}{T} \\
    &+ \frac{L D^2_1 \ln(\nicefrac{\ln(T)}{\delta})}{T}\sqrt{\frac{\Tr(\Sigma) + L^2 D^2_1}{\|\Sigma\|_2}} + \frac{L^2 D^3_1 \ln(\nicefrac{\ln(T)}{\delta})^{\nicefrac{3}{2}}}{T^{\nicefrac{3}{2}}} \left[\frac{\Tr(\Sigma) + L^2 D^2_1}{\|\Sigma\|^3}\right]^{\nicefrac{1}{4}}
\end{align*}
The prooof is completed by observing that $\bP(E) \geq 1 - \delta$ by Lemma \ref{lem:sgdcl-smooth-cvx-variance} which implies that the above inequality also holds with probability at least $1 - \delta$
\subsection{Proof of Lemma \ref{lem:sgdcl-smooth-cvx-avg-itr}}
\begin{proof}
Since $\Pi_{\mathcal{C}}$ is a contractive operator
\begin{align*}
    D^2_{t+1} &= \| \vx_{t+1} - \xopt\|^2 \leq D^2_t - 2 \eta \dotp{\nabla F(\vx_t) - \vb_t - \vv_t}{\vx_t - \xopt} + \eta^2 \| \nabla F(\vx_t) - \vb_t - \vv_t \| \\
    &\leq D^2_t - 2 \eta \dotp{\nabla F(\vx_t)}{\vx_t - \xopt} + 2 \eta \dotp{\vb_t}{\vx_t - \xopt} + 2 \eta \dotp{\vv_t}{\vx_t - \xopt} \\
    &+ 2 \eta^2 \| \nabla F(\vx_t) \| + 4 \eta^2 \| \vv_t \|^2 + 4 \eta^2 \| \vb_t \|^2
\end{align*}
By the coercivity property, 
\begin{align*}
- 2 \eta \dotp{\nabla F(\vx_t)}{\vx_t - \xopt} &\leq - 2 \eta [F(\vx_t) - F(\xopt)] - \frac{\eta}{L} \| \nabla F(\vx_t) \|^2
\end{align*}
Substituting this into the recurrence for $D^2_{t+1}$, we obtain the following:
\begin{align*}
    D^2_{t+1} &\leq D^2_t - 2 \eta [F(\vx_t) - F(\xopt)] + 2 \eta \dotp{\vv_t}{\vx_t - \xopt} + 2 \eta \dotp{\vb_t}{\vx_t - \xopt} \\
    &+ \eta (2 \eta - \nicefrac{1}{L}) \| \nabla F(\vx_t) \|^2 + 4 \eta^2 \| \vv_t \|^2 + 4 \eta^2 \| \vb_t \|^2  \\
    &\leq D^2_t - 2\eta[F(\vx_t) - F(\xopt)] + 2 \eta \dotp{\vv_t}{\vx_t - \xopt} + 2 \eta \dotp{\vb_t}{\vx_t - \xopt} + 4 \eta^2 \| \vv_t \|^2 + 4 \eta^2 \| \vb_t \|^2 
\end{align*}
where the last inequality uses the fact that $\eta \leq \nicefrac{1}{2L}$, Rearranging and taking averages on both sides
\begin{align*}
    \sum_{t=1}^{T} F(\vx_t) - F(\xopt) &\leq \frac{D^2_1}{2 \eta T} + \frac{1}{T} \sum_{t=1}^{T} \dotp{\vb_t}{\vx_t - \xopt} + \frac{1}{T} \sum_{t=1}^{T} \dotp{\vv_t}{\vx_t - \xopt} + \frac{2\eta}{T} \sum_{t=1}^{T} \| \vb_t \|^2 + \frac{2\eta}{T} \sum_{t=1}^{T} \| \vv_t \|^2 
\end{align*}
Using the above inequality and the convexity of $F$, we conclude that
\begin{align*}
    F(\hvx_T) - F(\xopt) &= F\left(\tfrac{1}{T} \sum_{t=1}^{T} \vx_t\right) - F(\xopt) \\
    &\leq \frac{1}{T}\sum_{t=1}^{T} F(\vx_t) - F(\xopt) \\
    &\leq \frac{D^2_1}{2 \eta T} + \frac{1}{T} \sum_{t=1}^{T} \dotp{\vb_t}{\vx_t - \xopt} + \frac{1}{T} \sum_{t=1}^{T} \dotp{\vv_t}{\vx_t - \xopt} + \frac{2\eta}{T} \sum_{t=1}^{T} \| \vb_t \|^2 + \frac{2\eta}{T} \sum_{t=1}^{T} \| \vv_t \|^2  
\end{align*}
\end{proof}
\subsection{Proof of Lemma \ref{lem:sgdcl-smooth-cvx-bias}}
Note that by definition of $E_t$
\begin{align*}
    \|\nabla F(\vx_t)\| \ind{E_t} \leq L D_t \ind{E_t} \leq 2 L D_1
\end{align*}
We recall that $\vb_t =  \bE[\vg_t | \cF_{t-1}] -\bE[\clip_\Gamma(\vg_t)|\cF_{t-1}]$. Since $\mathsf{Cov}[\vg_t | \cF_{t-1}] \preceq \Sigma$ by Assumption \ref{as:second_moment}, we obtain the following bound on $\|\vb_t\|$ by an application of Lemma \ref{lem:clipped-moment-control}
\begin{align*}
    \|\vb_t\| \leq \frac{\|\Sigma\|_2 \sqrt{\deff}}{\Gamma} + \frac{\|\nabla F(\vx_t)\|\sqrt{\|\Sigma\|_2}}{\Gamma} + \frac{\|\nabla F(\vx_t)\|^3}{\Gamma^2} + \frac{\|\Sigma\|_2 \deff \|\nabla F(\vx_t)\|}{\Gamma^2}
\end{align*}
Since $\tilvb_t = \vb_t \ind{E_t}$, it follows that
\begin{align*}
    \|\vb_t\| &\leq \frac{\|\Sigma\|_2 \sqrt{\deff}}{\Gamma} + \frac{\|\nabla F(\vx_t) \ind{E_t}\|\sqrt{\|\Sigma\|_2}}{\Gamma} + \frac{\|\nabla F(\vx_t)\|^3 \ind{E_t}}{\Gamma^2} + \frac{\|\Sigma\|_2 \deff \|\nabla F(\vx_t)\| \ind{E_t}}{\Gamma^2} \\
    &\leq \frac{\|\Sigma\|_2 \sqrt{\deff}}{\Gamma} + \frac{2 L D_1 \sqrt{\|\Sigma\|_2}}{\Gamma} + \frac{8 L^3 D^3_1}{\Gamma^2} + \frac{2 \|\Sigma\|_2 \deff L D_1}{\Gamma^2}
\end{align*}

\subsection{Proof of Lemma \ref{lem:sgdcl-smooth-cvx-variance}}
For any $s \in [T]$, we recall that $\vv_s = \bE\left[\clip_\Gamma(\vg_s) | \cF_{s-1}\right] -\clip_\Gamma(\vg_s) $. Since $\bE[\vg_s | \cF_{s-1}] = \nabla F(\vx_s)$ and $\mathsf{Cov}[\vg_s | \cF_{s-1}] \preceq \Sigma$, we obtain the following from Lemma \ref{lem:clipped-moment-control}
\begin{align*}
    \|\bE\left[\vv_s \vv_s^T | \cF_{s-1}\right]\|_2 &= \|\Cov\left[\clip_\Gamma(\vg_s) | \cF_{s-1}\right]\| \leq \|\Sigma\|_2 + \frac{\|\nabla F(\vx_s)\|^4 }{\Gamma^2} +  \frac{\|\nabla F(\vx_s)\|^2 \Tr(\Sigma) }{\Gamma^2} \\
    \Tr\left(\bE\left[\vv_s \vv_s^T | \cF_{s-1}\right]\right) &= \Tr\left(\Cov\left[\clip_\Gamma(\vg_s) | \cF_{s-1}\right]\right) \leq \Tr(\Sigma)
\end{align*}
For $s \in [1:T]$ define $\bE[\tilvv_s \tilvv^T_s | \cF_{s-1}] = \tilSigma_s$. Since $\ind{E_s}$ is $\cF_{s-1}$-measurable and $\tilvv_s = \vv_s \ind{E_s}$, it follows that $\tilSigma_s = \bE\left[\vv_s \vv_s^T | \cF_{s-1}\right] \ind{E_s}$. Hence, we conclude the following from the above inequality
\begin{align}
\label{eqn:clipped-smooth-cov-norm-bound}
    \|\tilSigma_s\|_2 &\leq \|\Sigma\|_2 + \frac{\|\nabla F(\vx_s)\|^4 \ind{E_s}}{\Gamma^2} +  \frac{\|\nabla F(\vx_s)\|^2 \Tr(\Sigma) \ind{E_s}}{\Gamma^2} \nonumber \\
    &\leq \|\Sigma\|_2 + \frac{16L^4 D^4_1}{\Gamma^2} +  \frac{4 L^2 D^2_1 \Tr(\Sigma)}{\Gamma^2} = V \nonumber \\
    \Tr(\tilSigma_s) &\leq \Tr(\Sigma) 
\end{align}
For $s \in [T]$, define $h_s = \dotp{\tilvv_s}{\vd_s}$. We note that 
\begin{align*}
    |h_s| &\leq \| \tilvv_s \|\cdot\|\vd_s\| \leq 4 \Gamma D_1\\
    \bE\left[h_s | \cF_{s-1}\right] &= \dotp{\bE[\tilvv_s | \cF_{s-1}]}{\vd_s} = 0 \\
    \bE\left[h^2_s | \cF_{s-1}\right] &= \vd_s^T \bE[\tilvv_s \tilvv^T_s] \vd^s \\
    &= \vd_s^T \tilSigma_s \vd_s \\
    &\leq \|\vd_s\|^2 \| \tilSigma \| \leq 4 D^2_1 V
\end{align*}
Hence, by Freedman's inequality (Lemma \ref{lem:scalar-freedman}),  we conclude that the following holds with probability at least $1 - \nicefrac{\delta}{2}$: 
\begin{align*}
    \sum_{s=1}^{t} \dotp{\tilvv_t}{\vd_t} &\leq 4 D_1 \sqrt{V t \ln(\nicefrac{K}{\delta})} + 8 \Gamma D_1 \ln(\nicefrac{K}{\delta}) \quad \forall \ t \in [T]
\end{align*}
We now apply Corollary \ref{cor:sq_sum_Conc} with $p_s = V$, $q_s = \Tr(\Sigma)$ and $\tau = 2 \Gamma$ to conclude that the following holds with probability at least $1 - \nicefrac{\delta}{2}$ uniformly for every $t \in [T]$
\begin{align*}
    \sum_{s=1}^{t} \|\tilvv_s\|^2 &\leq 4 C_M \Gamma^2 \ln(\nicefrac{K}{\delta})^2 + C_M \UP(t) \Tr(\Sigma) + \frac{C_M t \UP(t) V^2}{4 \Gamma^2} \\
    &\leq  4 C_M \Gamma^2 \ln(\nicefrac{K}{\delta})^2 + C_M T \Tr(\Sigma) + \frac{C_M T^2 V^2}{4 \Gamma^2} \\
    &\leq C_M T \left(\| \Sigma \|_2 \deff + \frac{4 \Gamma^2 \ln(\nicefrac{K}{\delta})^2}{T} + \frac{V^2 T}{4 \Gamma^2}\right) = C_M g^2 T
\end{align*}
where 
\begin{align*}
    g^2 = \| \Sigma \|_2 \deff + \frac{4 \Gamma^2 \ln(\nicefrac{K}{\delta})^2}{T} + \frac{V^2 T}{4 \Gamma^2}
\end{align*}
The proof is concluded by a union bound
\subsection{Proof of Lemma \ref{lem:sgdcl-smooth-cvx-itr-bound}}
We prove the claim via induction. Clearly, the claim is true for $t = 1$. Now, suppose the claim holds for every $s \leq t$ for some $t \in [T]$.
Since $\Pi_{\mathcal{C}}$ is a contractive operator
\begin{align*}
    D^2_{t+1} &= \| \vx_{t+1} - \xopt\|^2 \leq D^2_t - 2 \eta \dotp{\nabla F(\vx_t) - \vb_t - \vv_t}{\vx_t - \xopt} + \eta^2 \| \nabla F(\vx_t) - \vb_t - \vv_t \| \\
    &\leq D^2_t - 2 \eta \dotp{\nabla F(\vx_t)}{\vx_t - \xopt} + 2 \eta \dotp{\vb_t}{\vx_t - \xopt} + 2 \eta \dotp{\vv_t}{\vx_t - \xopt} \\
    &+ 2 \eta^2 \| \nabla F(\vx_t) \| + 4 \eta^2 \| \vv_t \|^2 + 4 \eta^2 \| \vb_t \|^2
\end{align*}
By the coercivity property, 
\begin{align*}
- 2 \eta \dotp{\nabla F(\vx_t)}{\vx_t - \xopt} &\leq - 2 \eta [F(\vx_t) - F(\xopt)] - \frac{\eta}{L} \| \nabla F(\vx_t) \|^2
\end{align*}
Substituting this into the recurrence for $D^2_{t+1}$, we obtain the following:
\begin{align*}
    D^2_{t+1} &\leq D^2_t - 2 \eta [F(\vx_t) - F(\xopt)] + 2 \eta \dotp{\vv_t}{\vx_t - \xopt} + 2 \eta \dotp{\vb_t}{\vx_t - \xopt} \\
    &+ \eta (2 \eta - \nicefrac{1}{L}) \| \nabla F(\vx_t) \|^2 + 4 \eta^2 \| \vv_t \|^2 + 4 \eta^2 \| \vb_t \|^2  \\
    &\leq D^2_t + 2 \eta \dotp{\vv_t}{\vx_t - \xopt} + 2 \eta \dotp{\vb_t}{\vx_t - \xopt} + 4 \eta^2 \| \vv_t \|^2 + 4 \eta^2 \| \vb_t \|^2 
\end{align*}
where we use the fact that $\eta \leq \nicefrac{1}{2L}$. Now, by the Cauchy Schwarz inequality and the fact that $ab \leq a^2 + \nicefrac{b^2}{4}$ we obtain the following:
\begin{align*}
    2 \eta \dotp{\vb_t}{\vx_t - \xopt} \leq \frac{D^2_t}{2T} + \eta^2 T \| \vb_t \|^2
\end{align*}
It follows that
\begin{align*}
    D^2_{t+1} &\leq \left(1+\frac{1}{2T}\right)D^2_t + 5 \eta^2 T \| \vb_t \|^2 + 4 \eta^2 \|\vv_t\|^2 - 2 \eta \dotp{\vv_t}{\vx_t - \xopt}
\end{align*}
Unrolling the above recursion for $t$ steps and using the fact that $(1+\nicefrac{1}{2T})^T \leq 2$, we obtain the following:
\begin{align*}
    D^2_{t+1} &\leq \left(1+\frac{1}{2T}\right)^T D^2_1 + \sum_{s=1}^{t} \left(1+\frac{1}{2T}\right)^{t-s} \left(5 \eta^2 T \| \vb_s \|^2 + 4 \eta^2 \|\vv_s\|^2 + 2 \eta \dotp{\vv_s}{\vx_s - \xopt}\right) \\
    &\leq 2 D^2_1 + \sum_{s=1}^{t} 10 \eta^2 T \| \vb_s \|^2 + 8 \eta^2 \|\vv_s\|^2 - 4 \eta \dotp{\vv_s}{\vx_s - \xopt}
\end{align*}
By the induction hypothesis, $\ind{E_s} = 1 \ \forall s \in [t]$. Hence,
\begin{align*}
    D^2_{t+1} &\leq 2 D^2_1 +10 \eta^2 T \sum_{s=1}^{t}  \| \tilvb_s \|^2 + 8 \eta^2 \sum_{s=1}^{t}\|\tilvv_s\|^2 - 4 \eta \sum_{s=1}^{t} \dotp{\tilvv_s}{\vd_s} \\
    &\leq 2 D^2_1 + 10 \eta^2 T^2 B^2 + 8 C_M \eta^2 g^2 T + 16 \eta D_1 \left[\sqrt{Vt \ln(\nicefrac{K}{\delta})} + 2 \Gamma \ln(\nicefrac{K}{\delta})\right] \\
    &\leq 3 D^2_1 + 10 \eta^2 T^2 B^2 + 8 C_M \eta^2 g^2 T +  64 \eta^2 \left(\sqrt{Vt \ln(\nicefrac{K}{\delta})} + 2 \Gamma \ln(\nicefrac{K}{\delta})\right)^2 \\
    &\leq 3 D^2_1 + 10 \eta^2 T^2 B^2 + 8 C_M \eta^2 g^2 T + 128 \eta^2 VT \ln(\nicefrac{K}{\delta}) + 1024 \Gamma^2 \ln(\nicefrac{K}{\delta})^2
\end{align*}
where the second inequality follows from the Lemma \ref{lem:sgdcl-smooth-cvx-bias} and the fact that we have conditioned on $E$. Note that by definition of $g^2$ and the AM-GM inequality
\begin{align*}
    g^2 T &\geq 4 \Gamma^2 \ln(\nicefrac{K}{\delta})^2 + \frac{V^2 T^2}{4\Gamma^2} \geq \max \{ 4 \Gamma^2 \ln(\nicefrac{K}{\delta})^2, 2 V T \ln(\nicefrac{K}{\delta}) \} 
\end{align*}
It follows that
\begin{align*}
    D^2_{t+1} &\leq 3 D^2_1 + 10 \eta^2 T^2 B^2 + 8(C_M + 40) \eta^2 g^2 T \\
    &\leq 3 D^2_1 + 10 c^2 D^2_1 + c^2 (8C_M + 320)D^2_1 \\
    &\leq 4 D^2_1
\end{align*}
where the second inequality uses the definition of $\eta$ and the fact that $B^\prime$ and $g^\prime$ upper bound $B$ and $G$ respectively by equations \eqref{eqn:smooth-cvx-B-val} and \eqref{eqn:smooth-cvx-g-val} and the last inequality sets $c = \tfrac{1}{\sqrt{8C_M + 330}}$. Hence, $D_{t+1} \leq 2 D_1$ which proves the claim by induction. 
\section{Analysis for Lipschitz Convex Functions}
\label{app-sec:lip-cvx-analysis}
Let $\deff = \tfrac{\Tr(\Sigma)}{\|\Sigma\|_2}$. Since $\Sigma$ is positive semidefinite, $1 \leq \deff \leq d$. Moreover, let $\clip_{\Gamma}(\vg_t) = \partial F(\vx_t) + \vb_t + \vv_t$ where $\vb_t = \bE[\clip_\Gamma(\vg_t) | \cF_t] - \partial F(\vx_t)$ represents the bias due to clipping and $\bE[\vv_t | \cF_t] = 0$. Let $D_t = \| \vx_t - \xopt \|$ where $\xopt$ is the minimizer of $F$ considered in the statement of Theorem \ref{thm:SGDcl-smooth-cvx}. Using the smoothness and convexity properties of $F$, we first prove the following intermediate average iterate guarantee:
\begin{lemma}[Intermediate Average Iterate Guarantee]
\label{lem:sgdcl-lip-cvx-avg-itr}
The following holds for any $\eta > 0$
\begin{align*}
    F(\hvx_T) - F(\xopt) &\leq \frac{D^2_1}{2 \eta T} - \frac{1}{T} \sum_{t=1}^{T} \dotp{\vb_t}{\vx_t - \xopt} - \frac{1}{T} \sum_{t=1}^{T} \dotp{\vv_t}{\vx_t - \xopt} \\
    &+ \eta G^2 + \frac{2\eta}{T} \sum_{t=1}^{T} \| \vb_t \|^2 + \frac{2 \eta}{T} \sum_{t=1}^{T} \| \vv_t \|^2
\end{align*}
\end{lemma}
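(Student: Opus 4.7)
\textbf{Proof plan for Lemma \ref{lem:sgdcl-lip-cvx-avg-itr}.} The argument is the standard one-step descent analysis for projected subgradient descent, adapted to the decomposition $\clip_\Gamma(\vg_t) = \partial F(\vx_t) + \vb_t + \vv_t$. Because the step size is constant and no smoothness is available, we cannot rely on the coercivity trick used in Lemma \ref{lem:sgdcl-smooth-cvx-avg-itr}; instead, we absorb $\eta^2 \|\clip_\Gamma(\vg_t)\|^2$ using only the Lipschitz bound on $\partial F$ and the explicit error decomposition.

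First, I would use the nonexpansiveness of the Euclidean projection $\Pi_\cC$ and the fact that $\xopt \in \cC$ to write
\begin{equation*}
    D_{t+1}^2 \leq \|\vx_t - \eta\, \clip_\Gamma(\vg_t) - \xopt\|^2 = D_t^2 - 2\eta \dotp{\partial F(\vx_t) + \vb_t + \vv_t}{\vx_t - \xopt} + \eta^2 \|\clip_\Gamma(\vg_t)\|^2.
\end{equation*}
Next, apply convexity of $F$ to lower-bound $\dotp{\partial F(\vx_t)}{\vx_t - \xopt} \geq F(\vx_t) - F(\xopt)$, which converts the inner product with $\partial F(\vx_t)$ into the optimality gap we care about.

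The key residual piece is $\eta^2 \|\clip_\Gamma(\vg_t)\|^2 = \eta^2 \|\partial F(\vx_t) + \vb_t + \vv_t\|^2$. I would bound it via two applications of the elementary inequality $\|\va + \vb\|^2 \leq 2\|\va\|^2 + 2\|\vb\|^2$, combined with $\|\partial F(\vx_t)\| \leq G$ from \ref{as:lipschitzness}, giving
\begin{equation*}
    \|\clip_\Gamma(\vg_t)\|^2 \leq 2\|\partial F(\vx_t)\|^2 + 2\|\vb_t + \vv_t\|^2 \leq 2 G^2 + 4\|\vb_t\|^2 + 4\|\vv_t\|^2,
\end{equation*}
which is precisely the split needed so that the constants $1$, $2$, $2$ appear after dividing by $2\eta T$.

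Finally, rearrange the recursion to isolate $2\eta[F(\vx_t) - F(\xopt)]$, sum over $t = 1, \dots, T$ so that the $D_t^2 - D_{t+1}^2$ telescopes down to $D_1^2$ (dropping the nonpositive $-D_{T+1}^2$), divide by $2\eta T$, and invoke Jensen's inequality $F(\hvx_T) \leq \frac{1}{T}\sum_{t=1}^T F(\vx_t)$ using convexity of $F$ and $\hvx_T = \frac{1}{T}\sum_t \vx_t$. The claimed inequality follows directly. There is no real obstacle: the only subtle choice is the $2$-$4$ split of $\|\clip_\Gamma(\vg_t)\|^2$, which ensures the coefficient of $G^2$ is $1$ rather than the looser $3/2$ one would get from a symmetric split.
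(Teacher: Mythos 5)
Your proposal is correct and follows essentially the same route as the paper's proof: nonexpansiveness of $\Pi_\cC$, the subgradient inequality together with $\|\partial F(\vx_t)\|\leq G$, the split $\|\partial F(\vx_t)+\vb_t+\vv_t\|^2 \leq 2G^2 + 4\|\vb_t\|^2 + 4\|\vv_t\|^2$, telescoping, and Jensen's inequality for the average iterate. The constants you obtain match the paper's exactly.
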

Define the events $E_t$ and the random vectors $\vd_t$ as follows for $t \in [T]$:
\begin{align*}
    E_t &= \{ D_t \leq 2 D_1 \} \\
    \vd_t &= (\vx_t - \xopt) \ind{E_t}
\end{align*}
We use the following lemma to control the bias
\begin{lemma}[Bias Control]
\label{lem:sgdcl-lip-cvx-bias}
For every $t \in [T]$, $\|\vb_t\| \leq B$ where $B$ is defined as follows:
\begin{align*}
    B &= \frac{\|\Sigma\|_2 \sqrt{\deff}}{\Gamma} + \frac{G \sqrt{\|\Sigma\|_2}}{\Gamma} + \frac{G^3}{\Gamma^2} + \frac{ \|\Sigma\|_2 \deff G}{\Gamma^2}
\end{align*}
\end{lemma}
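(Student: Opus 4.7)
\subsection*{Proof Sketch for Lemma \ref{lem:sgdcl-lip-cvx-bias}}

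The plan is to reduce the bias bound to a direct application of the clipped moment control lemma (Lemma \ref{lem:clipped-moment-control}), using the Lipschitz assumption to uniformly bound the subgradient norm. Unlike the smooth strongly convex and smooth convex cases (Lemmas \ref{lem:sgdcl-str-cvx-bias} and \ref{lem:sgdcl-smooth-cvx-bias}), no indicator random variable or induction hypothesis on the iterates is needed, since Lipschitzness gives a deterministic bound on $\|\partial F(\vx_t)\|$, so this lemma is strictly simpler.

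First, I would set up the application of Lemma \ref{lem:clipped-moment-control}: conditionally on $\cF_{t-1}$, take $\vz = \vg_t$, which has (conditional) mean $\vm = \bE[\vg_t \mid \cF_{t-1}] = \partial F(\vx_t)$ by unbiasedness of the stochastic subgradient oracle, and (conditional) covariance $\vS = \Sigma(\vx_t)$, which by Assumption \ref{as:second_moment} satisfies $\Sigma(\vx_t) \preceq \Sigma$. Then $\tilvm = \bE[\clip_\Gamma(\vg_t) \mid \cF_{t-1}]$, so $\vb_t = \tilvm - \vm$ (up to sign, which is irrelevant for the norm bound).

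Second, I would invoke Lemma \ref{lem:clipped-moment-control} to obtain
\begin{align*}
\|\vb_t\| \leq \frac{\sqrt{\|\Sigma(\vx_t)\|_2}}{\Gamma}\bigl(\|\partial F(\vx_t)\| + \sqrt{\Tr(\Sigma(\vx_t))}\bigr) + \frac{\|\partial F(\vx_t)\|}{\Gamma^2}\bigl(\|\partial F(\vx_t)\|^2 + \Tr(\Sigma(\vx_t))\bigr).
\end{align*}
Using $\Sigma(\vx_t) \preceq \Sigma$, which implies both $\|\Sigma(\vx_t)\|_2 \leq \|\Sigma\|_2$ and $\Tr(\Sigma(\vx_t)) \leq \Tr(\Sigma) = \deff \|\Sigma\|_2$, together with the Lipschitz bound $\|\partial F(\vx_t)\| \leq G$ from Assumption \ref{as:lipschitzness}, and substituting $\Tr(\Sigma) = \deff \|\Sigma\|_2$, the four terms collapse exactly to
\begin{align*}
\|\vb_t\| \leq \frac{\|\Sigma\|_2 \sqrt{\deff}}{\Gamma} + \frac{G \sqrt{\|\Sigma\|_2}}{\Gamma} + \frac{G^3}{\Gamma^2} + \frac{\|\Sigma\|_2 \deff \, G}{\Gamma^2} = B.
\end{align*}

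There is essentially no obstacle: the only subtlety worth flagging is that one must be careful that the Lipschitz bound $\|\partial F(\vx_t)\| \leq G$ applies to \emph{every} element of the subdifferential (which is standard for convex $G$-Lipschitz functions), so it applies in particular to the value returned by the stochastic oracle in expectation. The bound is deterministic (no probabilistic event conditioning), reflecting that under Lipschitzness the bias control decouples from the iterate trajectory, which is precisely what makes the downstream analysis of Theorem \ref{thm:SGDcl-lip-cvx} cleaner than its smooth convex counterpart.
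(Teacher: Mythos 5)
Your proposal is correct and follows essentially the same route as the paper: a direct application of Lemma \ref{lem:clipped-moment-control} with conditional mean $\partial F(\vx_t)$ and covariance bounded by $\Sigma$ (using $\Tr(\Sigma) = \deff\|\Sigma\|_2$), followed by the deterministic Lipschitz bound $\|\partial F(\vx_t)\| \leq G$, with no event conditioning needed. Your additional remark about monotonicity in the covariance bounds and the sign of $\vb_t$ being irrelevant is accurate but does not change the argument.
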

We use the following lemma to control the varince
\begin{lemma}[Variance Control]
\label{lem:sgdcl-lip-cvx-variance}
Let $V \geq 0$ be defined as follows:
\begin{align*}
    V &= \| \Sigma \|_2 + \frac{G^4}{\Gamma^2} + \frac{G^2 \|\Sigma\|_2 \deff}{\Gamma^2} 
\end{align*}
Then the following holds with probability at least $1 - \delta$ uniformly for every $t \in [T]$
\begin{align*}
    \sum_{s=1}^{t} \dotp{\vv_s}{\vd_s} &\leq 4 D_1 \sqrt{V t \ln(\nicefrac{K}{\delta})} + 8 \Gamma D_1 \ln(\nicefrac{K}{\delta}) \\
    \sum_{s=1}^{t} \| \vv_s \|^2 &\leq C_M g^2 T 
\end{align*}
where $C_M$ is a numerical constant and $g^2$ is defined as follows
\begin{align*}
    g^2 &= \| \Sigma \|_2 \deff + \frac{4 \Gamma^2 \ln(\nicefrac{K}{\delta})^2}{T} + \frac{V^2 T}{4 \Gamma^2}
\end{align*}
\end{lemma}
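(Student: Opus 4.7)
The plan is to prove Lemma~\ref{lem:sgdcl-lip-cvx-variance} by following the same template used for Lemma~\ref{lem:sgdcl-smooth-cvx-variance}, with the key simplification that $G$-Lipschitzness of $F$ gives $\|\partial F(\vx_s)\| \leq G$ unconditionally. This removes the need to condition on $E_s$ when bounding the covariance of the clipped stochastic subgradient, which is why the statement controls $\sum_s \|\vv_s\|^2$ rather than the truncated version $\sum_s \|\tilvv_s\|^2$ appearing in the smooth case.

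First, I would apply Lemma~\ref{lem:clipped-moment-control} to $\clip_\Gamma(\vg_s)$ using $\bE[\vg_s | \cF_{s-1}] = \partial F(\vx_s)$, $\|\partial F(\vx_s)\| \leq G$, and $\Cov[\vg_s | \cF_{s-1}] \preceq \Sigma$. Writing $\Sigma_s := \bE[\vv_s \vv_s^\top | \cF_{s-1}] = \Cov[\clip_\Gamma(\vg_s) | \cF_{s-1}]$, this yields the deterministic almost-sure bounds
\begin{align*}
    \|\Sigma_s\|_2 &\leq \|\Sigma\|_2 + \frac{\|\partial F(\vx_s)\|^4}{\Gamma^2} + \frac{\|\partial F(\vx_s)\|^2 \Tr(\Sigma)}{\Gamma^2} \leq V, \\
    \Tr(\Sigma_s) &\leq \Tr(\Sigma),
\end{align*}
valid for every $s \in [T]$ without any event indicator.

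For the linear form bound, I set $h_s = \dotp{\vv_s}{\vd_s}$. Since $\vd_s$ is $\cF_{s-1}$-measurable and $\bE[\vv_s | \cF_{s-1}] = 0$, we have $\bE[h_s | \cF_{s-1}] = 0$. Using $\|\vv_s\| \leq 2\Gamma$ and $\|\vd_s\| \leq 2 D_1$ (by definition of $\vd_s$ via $\ind{E_s}$), we get $|h_s| \leq 4 \Gamma D_1$, and
\begin{align*}
    \bE[h_s^2 | \cF_{s-1}] = \vd_s^\top \Sigma_s \vd_s \leq \|\vd_s\|^2 \|\Sigma_s\|_2 \leq 4 D_1^2 V.
\end{align*}
Freedman's inequality (Lemma~\ref{lem:scalar-freedman}) applied at confidence $\delta/2$, combined with a union bound over $t \in [T]$ (absorbed into the $\ln(K/\delta)$ factor via $K \geq \ln(T)$), delivers the first claimed inequality.

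For the quadratic variation bound, I would apply Corollary~\ref{cor:pac-bayes-quad-variation} to $\vv_s$ with parameters $\tau = 2\Gamma$, $p_s = V$, $q_s = \Tr(\Sigma)$. This yields, with probability at least $1 - \delta/2$, uniformly in $t \in [T]$,
\begin{align*}
    \sum_{s=1}^t \|\vv_s\|^2 &\leq C_M \sum_{s=1}^{\UP(t)} \Tr(\Sigma) + C_M (2\Gamma)^2 \ln(K/\delta)^2 + \frac{C_M t}{(2\Gamma)^2} \sum_{s=1}^{\UP(t)} V^2 \\
    &\leq C_M T \left( \Tr(\Sigma) + \frac{4 \Gamma^2 \ln(K/\delta)^2}{T} + \frac{T V^2}{4 \Gamma^2} \right) = C_M g^2 T,
\end{align*}
using $\UP(t) \leq T$ and $\Tr(\Sigma) = \|\Sigma\|_2 \deff$. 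A final union bound combines the two events.

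The main obstacle is essentially bookkeeping: there is no substantive difficulty beyond the smooth case, since Lipschitzness supplies the uniform gradient bound that the smooth argument had to extract from the inductive iterate bound. Verifying that the conditions of Corollary~\ref{cor:pac-bayes-quad-variation} (namely $T \gtrsim \ln(\ln(d))$ and the relationship between $\Gamma$, $\bar{q}$, and $K$) hold under the chosen $\Gamma = \sqrt{T\sqrt{\|\Sigma\|_2}(\sqrt{\Tr(\Sigma)}+G)/\ln(K/\delta)}$ is the only point requiring care, and follows the same template as in the smooth-convex proof.
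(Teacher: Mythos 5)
Your proposal is correct and follows essentially the same route as the paper's own proof: apply Lemma~\ref{lem:clipped-moment-control} with the unconditional bound $\|\partial F(\vx_s)\|\leq G$ to get $\|\Sigma_s\|_2\leq V$ and $\Tr(\Sigma_s)\leq\Tr(\Sigma)$, then Freedman's inequality (Lemma~\ref{lem:scalar-freedman}) for $\sum_s\dotp{\vv_s}{\vd_s}$ with $|h_s|\leq 4\Gamma D_1$ and conditional variance $4D_1^2V$, and the PAC-Bayes quadratic-variation corollary with $\tau=2\Gamma$, $p_s=V$, $q_s=\Tr(\Sigma)$ for $\sum_s\|\vv_s\|^2\leq C_M g^2 T$, finishing with a union bound over the two events. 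The only small inaccuracy is your parenthetical claim that uniformity in $t$ comes from a union bound over $t\in[T]$ absorbed into $\ln(\nicefrac{K}{\delta})$ via $K\gtrsim\ln(T)$ — such a union bound would cost $\ln(\nicefrac{T}{\delta})$, not $\ln(\nicefrac{\ln(T)}{\delta})$ — but this is immaterial since Freedman's inequality is already time-uniform, which is exactly how the paper invokes Lemma~\ref{lem:scalar-freedman}.
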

Let $E$ denote the following event
\begin{align*}
    E =\{& \sum_{s=1}^{t} \dotp{\vv_s}{\vd_s} \leq 4 D_1 \sqrt{V t \ln(\nicefrac{K}{\delta})} + 8 \Gamma D_1 \ln(\nicefrac{K}{\delta}) \quad \forall \ t \in [T] \\ &\sum_{s=1}^{t} \| \vv_s \|^2 \leq C_M g^2 T  \quad \forall t \in [T] \}
\end{align*}
Note that by Lemma \ref{lem:sgdcl-lip-cvx-variance}, $\bP(E) \geq 1 - \delta$. We define the constant $A$ as follows:
\begin{align*}
    A = \| \Sigma \|_2 \sqrt{\deff} + G \sqrt{\|\Sigma\|_2} = \sqrt{\| \Sigma  \|_2} \left(\sqrt{\Tr(\Sigma)} + G\right)
\end{align*}
We now set the clipping level $\Gamma = \sqrt{\tfrac{AT}{\ln(\nicefrac{K}{\delta})}}$. For this choice of $\Gamma$, we now simplify the expression for $B$ as follows:
\begin{align}
\label{eqn:lip-cvx-B-val}    
B = \sqrt{\frac{A \ln(\nicefrac{K}{\delta})}{T}} + \frac{G \left(\|\Sigma\|_2 \deff + G^2\right) \ln(\nicefrac{K}{\delta})}{AT}
\end{align}
Similarly, the expression for $V$ can be simplified as follows
\begin{align}
\label{eqn:lip-cvx-V-val}    
V = \|\Sigma\|_2 + \frac{G^2 \ln(\nicefrac{K}{\delta})}{AT} \left(\|\Sigma\|_2 \deff + G^2 \right)
\end{align}
Using the above inequality, we derive the following upper bound for $g$:
\begin{align}
\label{eqn:lip-cvx-G-val}    
g &\leq \sqrt{\|\Sigma\|_2 \deff} + \frac{2 \Gamma \ln(\nicefrac{K}{\delta})}{\sqrt{T}} + \frac{V \sqrt{T}}{2\Gamma} \nonumber \\
&= \sqrt{\|\Sigma\|_2 \deff} + 2 \sqrt{A \ln(\nicefrac{K}{\delta})} + \frac{V \sqrt{\ln(\nicefrac{K}{\delta})}}{2 \sqrt{A}} \nonumber \\
&= \sqrt{\|\Sigma\|_2 \deff} + 2 \sqrt{A \ln(\nicefrac{K}{\delta})} + \frac{\|\Sigma\|_2 \sqrt{\ln(\nicefrac{K}{\delta})}}{2 \sqrt{A}} + \frac{G^2 \ln(\nicefrac{K}{\delta})^{\nicefrac{3}{2}}}{A^{\nicefrac{3}{2}}T} \left(\|\Sigma\|_2 \deff + G^2\right) \nonumber \\
&\leq \sqrt{\|\Sigma\|_2 \deff} + 3 \sqrt{A \ln(\nicefrac{K}{\delta})} + \frac{G^2 \ln(\nicefrac{K}{\delta})^{\nicefrac{3}{2}}}{A^{\nicefrac{3}{2}}T} \left(\|\Sigma\|_2 \deff + G^2\right) = g^\prime
\end{align}
We also prove the following uniform upper bound on the iterates $\vx_t$
\begin{lemma}[Iterate Bound]
\label{lem:sgdcl-lip-cvx-itr-bound}
Let $\eta \leq c \min \{\nicefrac{D_1}{B T}, \nicefrac{D_1}{g^\prime \sqrt{T}}, \nicefrac{D_1}{G \sqrt{T}} \}$ where $c = \tfrac{1}{\sqrt{8C_M + 334}}$. Then, conditioned on the event $E$, $D_t \leq 2 D_1 \ \forall t \in [T]$.      
\end{lemma}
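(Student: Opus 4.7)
I will adapt the inductive argument of Lemma \ref{lem:sgdcl-smooth-cvx-itr-bound}, with the main modification being the replacement of smoothness-plus-coercivity by the direct Lipschitz bound $\|\partial F(\vx_t)\| \leq G$. The base case $t=1$ is immediate. For the inductive step, starting from the contractivity of $\Pi_\cC$ and the expansion $\clip_\Gamma(\vg_t) = \partial F(\vx_t) + \vb_t + \vv_t$, I will discard the non-negative term $2\eta[F(\vx_t)-F(\xopt)]$ via convexity. In the smooth case, coercivity then absorbed $\eta^2\|\nabla F(\vx_t)\|^2$; here, since $F$ is only Lipschitz, I will instead use $\|a+b+c\|^2 \leq 3(\|a\|^2+\|b\|^2+\|c\|^2)$ together with $\|\partial F(\vx_t)\|\leq G$ to bound the squared-norm term by $3\eta^2(G^2+\|\vb_t\|^2+\|\vv_t\|^2)$. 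The extra unabsorbed summand $3\eta^2 G^2$ is precisely the reason the step-size budget in the statement includes the third condition $\eta \leq c D_1/(G\sqrt{T})$.

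The cross term $-2\eta\dotp{\vb_t}{\vx_t-\xopt}$ will be split by the same AM-GM estimate as in the smooth case, $\leq D_t^2/(2T) + 2\eta^2 T\|\vb_t\|^2$, yielding a recursion of the form
\begin{align*}
D^2_{t+1} \leq \bigl(1+\tfrac{1}{2T}\bigr)D^2_t - 2\eta\dotp{\vv_t}{\vx_t-\xopt} + 3\eta^2 G^2 + 5\eta^2 T \|\vb_t\|^2 + 3\eta^2 \|\vv_t\|^2.
\end{align*}
Unrolling this recursion and using $(1+\tfrac{1}{2T})^T \leq 2$, together with the induction hypothesis (which gives $\ind{E_s}=1$ for all $s\leq t$, so that $\vx_s-\xopt = \vd_s$), leads to a bound of the form $D^2_{t+1} \leq 2D_1^2 + 6\eta^2 G^2 T + 10\eta^2 T^2 B^2 + 6\eta^2\sum_{s\leq t}\|\vv_s\|^2 - 4\eta\sum_{s\leq t}\dotp{\vv_s}{\vd_s}$, where $\|\vb_s\|\leq B$ via Lemma \ref{lem:sgdcl-lip-cvx-bias}.

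Next I will invoke the event $E$: the quadratic variance sum is bounded by $C_M g^2 T$, while the linear martingale sum is bounded by $4D_1\sqrt{Vt\ln(K/\delta)} + 8\Gamma D_1\ln(K/\delta)$. After a further $16\eta D_1 x \leq D_1^2 + 64\eta^2 x^2$ step and the AM-GM observation $g^2 T \geq \max\{4\Gamma^2\ln(K/\delta)^2,\, 2VT\ln(K/\delta)\}$ (already used in the smooth proof), the entire martingale contribution collapses into an $O(\eta^2 g^2 T)$ term. Replacing $g$ by its upper bound $g^\prime$ from equation \eqref{eqn:lip-cvx-G-val} gives
\begin{align*}
D^2_{t+1} \leq 3 D_1^2 + 6\eta^2 G^2 T + 10\eta^2 T^2 B^2 + (8C_M + 320)\eta^2 g'^2 T.
\end{align*}

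Finally, the step-size prescription $\eta \leq c\min\{D_1/(BT),\, D_1/(g^\prime \sqrt{T}),\, D_1/(G\sqrt{T})\}$ makes each of the three error terms at most a constant multiple of $c^2 D_1^2$, and choosing $c = 1/\sqrt{8C_M + 334}$ caps the aggregate at $D_1^2$, yielding $D^2_{t+1} \leq 4 D_1^2$ and closing the induction. The only real obstacle is bookkeeping constants: I have to verify that the new $\eta^2 G^2 T$ term (absent in the smooth case, where it was absorbed by coercivity) plus the reshuffled coefficients from the $(a+b+c)^2$ inequality together fit within the single denominator $8C_M + 334$, as opposed to $8C_M + 330$ in the smooth case.
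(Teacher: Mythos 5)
Your proposal follows essentially the same argument as the paper's proof: induction on $t$, contractivity of the projection, dropping the suboptimality term via convexity, bounding $\|\partial F(\vx_t)\|\leq G$ directly (in place of coercivity), the same AM-GM split of the bias cross term, unrolling with $(1+\tfrac{1}{2T})^T\leq 2$, and invoking the event $E$ together with $g^2T \geq \max\{4\Gamma^2\ln(\nicefrac{K}{\delta})^2, 2VT\ln(\nicefrac{K}{\delta})\}$ before closing with the step-size conditions. The only differences are immaterial constants in the $\|a+b+c\|^2$ split (the paper uses $2\eta^2 G^2 + 4\eta^2\|\vb_t\|^2+4\eta^2\|\vv_t\|^2$ rather than your factor-$3$ version), which still fit within $c=\tfrac{1}{\sqrt{8C_M+334}}$.
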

Equipped with the above lemmas, we now prove the following theorem which is a formal restatement of Theorem \ref{thm:SGDcl-lip-cvx}
\begin{theorem}[Lipschitz Convex Objectives]
\label{thm:SGDcl-lip-cvx-full}
Let Assumptions \ref{as:convexity}, \ref{as:lipschitzness} and \ref{as:second_moment} be satisfied. Then, for any $\delta \in (0, \nicefrac{1}{2})$ and $T \geq \ln(\ln(d))$, there exists an $\eta \in (0, \nicefrac{G}{\sqrt{T}}]$ such that the average iterate of Algorithm \ref{alg:SGDcl} run for $T$ iterations with step-size $\eta_t = \eta$ and clipping level $\Gamma = \sqrt{\tfrac{T \sqrt{\|\Sigma\|_2} (\sqrt{\Tr(\Sigma)} + G)}{\ln(\nicefrac{\ln(T)}{\delta})}}$ satisfies the following with probability at least $1 - \delta$ 
\begin{align*}
    F(\hvx_T) - F(\xopt) &\lesssim \frac{D_1 G}{\sqrt{T}} +  D_1\sqrt{\frac{\Tr(\Sigma) + \sqrt{\|\Sigma\|_2}\left(\sqrt{\Tr(\Sigma)} + G\right)\ln(\nicefrac{K}{\delta})}{T}} \\
    &+ \frac{D_1 G \ln(\nicefrac{K}{\delta})}{T} \sqrt{\frac{\Tr(\Sigma) + G^2}{\|\Sigma\|_2}} + \frac{D_1 G^2 \ln(\nicefrac{1}{\delta})^{\nicefrac{3}{2}}}{T^{\nicefrac{3}{2}}} \left(\frac{\Tr(\Sigma) + G^2}{\|\Sigma\|^3}\right)^{\nicefrac{1}{4}}
\end{align*}
\end{theorem}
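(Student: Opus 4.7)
The plan is to mirror the proof of Theorem \ref{thm:SGDcl-smooth-cvx-full} with the changes dictated by passing from smoothness to Lipschitzness. First, I would condition on the event $E$ from Lemma \ref{lem:sgdcl-lip-cvx-variance}, which holds with probability at least $1-\delta$, and simultaneously invoke Lemma \ref{lem:sgdcl-lip-cvx-itr-bound} to guarantee that $D_t \leq 2 D_1$ for every $t \in [T]$. This lets me replace $\vx_t - \xopt$ by $\vd_t$ throughout and, crucially, use the unconditional bias bound $\|\vb_t\|\leq B$ from Lemma \ref{lem:sgdcl-lip-cvx-bias} without dealing with a tilde-version of $\vb_t$.

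The second step is to start from the inequality produced by Lemma \ref{lem:sgdcl-lip-cvx-avg-itr},
\begin{align*}
F(\hvx_T) - F(\xopt) &\leq \frac{D_1^2}{2\eta T} + \eta G^2 + \frac{1}{T}\sum_{t=1}^T \dotp{\vb_t}{\vd_t} + \frac{1}{T}\sum_{t=1}^T\dotp{\vv_t}{\vd_t} \\
&\quad + \frac{2\eta}{T}\sum_{t=1}^T \|\vb_t\|^2 + \frac{2\eta}{T}\sum_{t=1}^T \|\vv_t\|^2,
\end{align*}
and to dominate each term on the right-hand side. The bias pieces contribute at most $2 D_1 B$ and $2\eta B^2$ after dividing by $T$. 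The variance pieces, on the event $E$, yield $4 D_1 \sqrt{V\ln(K/\delta)/T} + 8 D_1 \Gamma \ln(K/\delta)/T$ and $2\eta C_M g^2$. Using $ab \leq a^2 + b^2/4$ to absorb the $D_1 \Gamma \ln(K/\delta)/T$ piece into $D_1 \sqrt{A\ln(K/\delta)/T}$ (since $\Gamma = \sqrt{AT/\ln(K/\delta)}$), the combined bound becomes
\begin{align*}
F(\hvx_T) - F(\xopt) \lesssim \frac{D_1^2}{\eta T} + \eta G^2 + 3\eta B^2 T + 2\eta C_M g^2 + D_1 B + 4D_1\sqrt{\tfrac{V\ln(K/\delta)}{T}} + 8 D_1 \sqrt{\tfrac{A \ln(K/\delta)}{T}}.
\end{align*}

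The third step is to pick $\eta = \tfrac{c}{2}\min\{D_1/(B'T),\ D_1/(g'\sqrt{T}),\ D_1/(G\sqrt{T})\}$ as dictated by Lemma \ref{lem:sgdcl-lip-cvx-itr-bound}. This simultaneously forces $D_1^2/(\eta T) \lesssim D_1 G/\sqrt{T} + D_1 B' + D_1 g'/\sqrt{T}$ while ensuring $\eta G^2 \lesssim D_1 G/\sqrt{T}$, $\eta B^2 T \lesssim D_1 B'$, and $\eta C_M g^2 \lesssim D_1 g'/\sqrt{T}$. After substituting the closed forms \eqref{eqn:lip-cvx-B-val}, \eqref{eqn:lip-cvx-V-val}, \eqref{eqn:lip-cvx-G-val} for $B$, $V$, $g'$, together with $A = \sqrt{\|\Sigma\|_2}(\sqrt{\Tr(\Sigma)} + G)$, each of these surviving terms rearranges into one of the four summands of the stated rate, with the $D_1\sqrt{A\ln(K/\delta)/T}$ contribution becoming the leading near-subgaussian term and the $G(\|\Sigma\|_2 \deff + G^2)$-flavored pieces of $B'$ and $g'$ producing the lower-order $D_1 G \ln(K/\delta)/T$ and $D_1 G^2 \ln(K/\delta)^{3/2}/T^{3/2}$ terms.

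The main obstacle is the algebraic bookkeeping in the third step, specifically showing that the new $\eta G^2$ term from the Lipschitz case is genuinely absorbed by the same step-size cap that also dominates $D_1^2/(\eta T)$, and that the additional constraint $\eta \leq cD_1/(G\sqrt{T})$ is consistent with the constant $c = 1/\sqrt{8 C_M + 334}$ used in Lemma \ref{lem:sgdcl-lip-cvx-itr-bound}. In the smooth case, coercivity plus $\eta \leq 1/(2L)$ annihilates the $\|\nabla F(\vx_t)\|^2$ contribution; here no such cancellation is available, so one must verify that the explicit $\eta G^2$ penalty does not dominate the leading $D_1\sqrt{\Tr(\Sigma)/T}$ rate. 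Once that check is made, substituting the closed-form expressions for $B'$, $V$, $g'$ and simplifying is routine and directly parallels the corresponding step in the proof of Theorem \ref{thm:SGDcl-smooth-cvx-full}.
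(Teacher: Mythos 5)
Your proposal is correct and takes essentially the same route as the paper: condition on the event of Lemma \ref{lem:sgdcl-lip-cvx-variance}, invoke Lemma \ref{lem:sgdcl-lip-cvx-itr-bound} to keep $D_t \le 2D_1$, feed the bias and variance bounds into Lemma \ref{lem:sgdcl-lip-cvx-avg-itr}, and balance terms with the step size, exactly mirroring the assembly carried out for Theorem \ref{thm:SGDcl-smooth-cvx-full}. The obstacle you flag is not one: the constraint $\eta \lesssim \nicefrac{D_1}{G\sqrt{T}}$ is already built into Lemma \ref{lem:sgdcl-lip-cvx-itr-bound} (its constant $c = \nicefrac{1}{\sqrt{8C_M+334}}$ is chosen precisely to absorb the extra $4\eta^2 G^2 T$ term in the iterate induction), and it immediately gives $\eta G^2 \lesssim \nicefrac{D_1 G}{\sqrt{T}}$, which is the first term of the stated bound.
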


\subsection{Proof of Lemma \ref{lem:sgdcl-lip-cvx-avg-itr}}
\begin{proof}
Since $\Pi_{\mathcal{C}}$ is a contractive operator
\begin{align*}
    D^2_{t+1} &= \| \vx_{t+1} - \xopt\|^2 \leq D^2_t - 2 \eta \dotp{\partial F(\vx_t) + \vb_t + \vv_t}{\vx_t - \xopt} + \eta^2 \| \nabla F(\vx_t) + \vb_t + \vv_t \| \\
    &\leq D^2_t - 2 \eta \dotp{\partial F(\vx_t)}{\vx_t - \xopt} - 2 \eta \dotp{\vb_t}{\vx_t - \xopt} - 2 \eta \dotp{\vv_t}{\vx_t - \xopt} \\
    &+ 2 \eta^2 \| \partial F(\vx_t) \|^2 + 4 \eta^2 \| \vv_t \|^2 + 4 \eta^2 \| \vb_t \|^2 \\
    &\leq D^2_t - 2\eta [F(\vx_t) - F(\xopt)] - 2 \eta \dotp{\vb_t}{\vx_t - \xopt} - 2 \eta \dotp{\vv_t}{\vx_t - \xopt} + 2 \eta^2 G^2 + 4 \eta^2 \| \vb_t \|^2 + 4 \eta^2 \| \vv_t \|^2
\end{align*}
where the second inequality follows from the definition of the subgradient and the $G$ lipschitzness of $F$. Rearranging and taking averages on both sides
\begin{align*}
    \sum_{t=1}^{T} F(\vx_t) - F(\xopt) &\leq \frac{D^2_1}{2 \eta T} - \frac{1}{T} \sum_{t=1}^{T} \dotp{\vb_t}{\vx_t - \xopt} - \frac{1}{T} \sum_{t=1}^{T} \dotp{\vv_t}{\vx_t - \xopt} \\
    &+ \eta G^2 + \frac{2\eta}{T} \sum_{t=1}^{T} \| \vb_t \|^2 + \frac{2\eta}{T} \sum_{t=1}^{T} \| \vv_t \|^2 
\end{align*}
Using the above inequality and the convexity of $F$, we conclude that
\begin{align*}
    F(\hvx_T) - F(\xopt) &= F\left(\tfrac{1}{T} \sum_{t=1}^{T} \vx_t\right) - F(\xopt) \\
    &\leq \frac{1}{T}\sum_{t=1}^{T} F(\vx_t) - F(\xopt) \\
    &\leq \frac{D^2_1}{2 \eta T} - \frac{1}{T} \sum_{t=1}^{T} \dotp{\vb_t}{\vx_t - \xopt} - \frac{1}{T} \sum_{t=1}^{T} \dotp{\vv_t}{\vx_t - \xopt} \\
    &+ \eta G^2 + \frac{2\eta}{T} \sum_{t=1}^{T} \| \vb_t \|^2 + \frac{2\eta}{T} \sum_{t=1}^{T} \| \vv_t \|^2 
\end{align*}
\end{proof}
\subsection{Proof of Lemma \ref{lem:sgdcl-lip-cvx-bias}}
We recall that $\vb_t = \bE[\vg_t | \cF_{t-1}]-\bE[\clip_\Gamma(\vg_t)|\cF_{t-1}] - $. Since $\mathsf{Cov}[\vg_t | \cF_{t-1}] \preceq \Sigma$ by Assumption \ref{as:second_moment}, we obtain the following bound on $\|\vb_t\|$ by an application of Lemma \ref{lem:clipped-moment-control}
\begin{align*}
    \|\vb_t\| &\leq \frac{\|\Sigma\|_2 \sqrt{\deff}}{\Gamma} + \frac{\|\partial F(\vx_t)\|\sqrt{\|\Sigma\|_2}}{\Gamma} + \frac{\|\partial F(\vx_t)\|^3}{\Gamma^2} + \frac{\|\Sigma\|_2 \deff \|\partial F(\vx_t)\|}{\Gamma^2} \\
    &\leq \frac{\|\Sigma\|_2 \sqrt{\deff}}{\Gamma} + \frac{G\sqrt{\|\Sigma\|_2}}{\Gamma} + \frac{G^3}{\Gamma^2} + \frac{\|\Sigma\|_2 \deff G}{\Gamma^2}
\end{align*}
\subsection{Proof of Lemma \ref{lem:sgdcl-lip-cvx-variance}}
For any $s \in [T]$, we recall that $\vv_s = \bE\left[\clip_\Gamma(\vg_s) | \cF_{s-1}\right]-\clip_\Gamma(\vg_s)$. Since $\bE[\vg_s | \cF_{s-1}] = \partial F(\vx_s)$ and $\mathsf{Cov}[\vg_s | \cF_{s-1}] \preceq \Sigma$, we obtain the following from Lemma \ref{lem:clipped-moment-control}
\begin{align*}
    \|\bE\left[\vv_s \vv_s^T | \cF_{s-1}\right]\|_2 &= \|\Cov\left[\clip_\Gamma(\vg_s) | \cF_{s-1}\right]\| \leq \|\Sigma\|_2 + \frac{\|\partial F(\vx_s)\|^4 }{\Gamma^2} +  \frac{\|\partial F(\vx_s)\|^2 \Tr(\Sigma) }{\Gamma^2} \\
    &\leq \|\Sigma\|_2 + \frac{G^4}{\Gamma^2} +  \frac{G^2 \Tr(\Sigma) }{\Gamma^2} \\
    \Tr\left(\bE\left[\vv_s \vv_s^T | \cF_{s-1}\right]\right) &= \Tr\left(\Cov\left[\clip_\Gamma(\vg_s) | \cF_s\right]\right) \leq \Tr(\Sigma)
\end{align*}
For $s \in [T]$, define $h_s = \dotp{\vv_s}{\vd_s}$. We note that 
\begin{align*}
    |h_s| &\leq \| \vv_s \|\cdot\|\vd_s\| \leq 4 \Gamma D_1\\
    \bE\left[h_s | \cF_{s-1}\right] &= \dotp{\bE[\vv_s | \cF_{s-1}]}{\vd_s} = 0 \\
    \bE\left[h^2_s | \cF_{s-1}\right] &= \vd_s^T \bE[\vv_s \vv^T_s] \vd_s \\
    &= \vd_s^T \Sigma_s \vd_s \\
    &\leq \|\vd_s\|^2 \| \Sigma_s \| \leq 4 D^2_1 V
\end{align*}
Hence, by Freedman's inequality (Lemma \ref{lem:scalar-freedman}), we conclude that the following holds with probability at least $1 - \nicefrac{\delta}{2}$: 
\begin{align*}
    \sum_{s=1}^{t} \dotp{\tilvv_t}{\vd_t} &\leq 4 D_1 \sqrt{V t \ln(\nicefrac{K}{\delta})} + 8 \Gamma D_1 \ln(\nicefrac{K}{\delta}) \quad \forall \ t \in [T]
\end{align*}
We now apply Corollary \ref{cor:sq_sum_Conc} with $p_s = V$, $q_s = \Tr(\Sigma)$ and $\tau = 2 \Gamma$ to conclude that the following holds with probability at least $1 - \nicefrac{\delta}{2}$ uniformly for every $t \in [T]$
\begin{align*}
    \sum_{s=1}^{t} \|\tilvv_s\|^2 &\leq 4 C_M \Gamma^2 \ln(\nicefrac{K}{\delta})^2 + C_M \UP(t) \Tr(\Sigma) + \frac{C_M t \UP(t) V^2}{4 \Gamma^2} \\
    &\leq  4 C_M \Gamma^2 \ln(\nicefrac{K}{\delta})^2 + C_M T \Tr(\Sigma) + \frac{C_M T^2 V^2}{4 \Gamma^2} \\
    &\leq C_M T \left(\| \Sigma \|_2 \deff + \frac{4 \Gamma^2 \ln(\nicefrac{K}{\delta})^2}{T} + \frac{V^2 T}{4 \Gamma^2}\right) = C_M g^2 T
\end{align*}
where 
\begin{align*}
    g^2 = \| \Sigma \|_2 \deff + \frac{4 \Gamma^2 \ln(\nicefrac{K}{\delta})^2}{T} + \frac{V^2 T}{4 \Gamma^2}
\end{align*}
The proof is concluded by a union bound

\subsection{Proof of Lemma \ref{lem:sgdcl-lip-cvx-itr-bound}}
We prove the claim via induction. Clearly, the claim is true for $t = 1$. Now, suppose the claim holds for every $s \leq t$ for some $t \in [T]$.
Since $\Pi_{\mathcal{C}}$ is a contractive operator
\begin{align*}
    D^2_{t+1} &= \| \vx_{t+1} - \xopt\|^2 \leq D^2_t - 2 \eta \dotp{\partial F(\vx_t) + \vb_t + \vv_t}{\vx_t - \xopt} + \eta^2 \| \nabla F(\vx_t) + \vb_t + \vv_t \| \\
    &\leq D^2_t - 2 \eta \dotp{\partial F(\vx_t)}{\vx_t - \xopt} - 2 \eta \dotp{\vb_t}{\vx_t - \xopt} - 2 \eta \dotp{\vv_t}{\vx_t - \xopt} \\
    &+ 2 \eta^2 \| \partial F(\vx_t) \|^2 + 4 \eta^2 \| \vv_t \|^2 + 4 \eta^2 \| \vb_t \|^2 \\
    &\leq D^2_t - 2\eta [F(\vx_t) - F(\xopt)] - 2 \eta \dotp{\vb_t}{\vx_t - \xopt} - 2 \eta \dotp{\vv_t}{\vx_t - \xopt} + 2 \eta^2 G^2 + 4 \eta^2 \| \vb_t \|^2 + 4 \eta^2 \| \vv_t \|^2
\end{align*}
where the second inequality follows from the definition of the subgradient and the $G$ lipschitzness of $F$. Now, by the Cauchy Schwarz inequality and the fact that $ab \leq a^2 + \nicefrac{b^2}{4}$ we obtain the following:
\begin{align*}
    -2 \eta \dotp{\vb_t}{\vx_t - \xopt} \leq \frac{D^2_t}{2T} + \eta^2 T \| \vb_t \|^2
\end{align*}
It follows that
\begin{align*}
    D^2_{t+1} &\leq \left(1+\frac{1}{2T}\right)D^2_t + 5 \eta^2 T \| \vb_t \|^2 + 2 \eta^2 G^2 + 4 \eta^2 \|\vv_t\|^2 - 2 \eta \dotp{\vv_t}{\vx_t - \xopt}
\end{align*}
Unrolling the above recursion for $t$ steps and using the fact that $(1+\nicefrac{1}{2T})^T \leq 2$, we obtain the following:
\begin{align*}
    D^2_{t+1} &\leq \left(1+\frac{1}{2T}\right)^T D^2_1 + \sum_{s=1}^{t} \left(1+\frac{1}{2T}\right)^{t-s} \left(5 \eta^2 T \| \vb_s \|^2 + 2 \eta^2 G^2 + 4 \eta^2 \|\vv_s\|^2 - 2 \eta \dotp{\vv_s}{\vx_s - \xopt}\right) \\
    &\leq 2 D^2_1 + 4 \eta^2 G^2 T + \sum_{s=1}^{t} 10 \eta^2 T \| \vb_s \|^2 + 8 \eta^2 \|\vv_s\|^2 - 4 \eta \dotp{\vv_s}{\vx_s - \xopt}
\end{align*}
By the induction hypothesis, $\ind{E_s} = 1 \ \forall s \in [t]$. Hence,
\begin{align*}
    D^2_{t+1} &\leq 2 D^2_1 + 4 \eta^2 G^2 T +10 \eta^2 T \sum_{s=1}^{t}  \| \vb_s \|^2 + 8 \eta^2 \sum_{s=1}^{t}\|\vv_s\|^2 - 4 \eta \sum_{s=1}^{t} \dotp{\vv_s}{\vd_s} \\
    &\leq 2 D^2_1+ 4 \eta^2 G^2 T + + 10 \eta^2 T^2 B^2 + 8 C_M \eta^2 g^2 T + 16 \eta D_1 \left[\sqrt{Vt \ln(\nicefrac{K}{\delta})} + 2 \Gamma \ln(\nicefrac{K}{\delta})\right] \\
    &\leq 3 D^2_1+ 4 \eta^2 G^2 T + + 10 \eta^2 T^2 B^2 + 8 C_M \eta^2 g^2 T +  64 \eta^2 \left(\sqrt{Vt \ln(\nicefrac{K}{\delta})} + 2 \Gamma \ln(\nicefrac{K}{\delta})\right)^2 \\
    &\leq 3 D^2_1+ 4 \eta^2 G^2 T + + 10 \eta^2 T^2 B^2 + 8 C_M \eta^2 g^2 T + 128 \eta^2 VT \ln(\nicefrac{K}{\delta}) + 1024 \Gamma^2 \ln(\nicefrac{K}{\delta})^2
\end{align*}
where the second inequality follows from the Lemma \ref{lem:sgdcl-lip-cvx-bias} and the fact that we have conditioned on $E$. Note that by definition of $g^2$ and the AM-GM inequality
\begin{align*}
    g^2 T &\geq 4 \Gamma^2 \ln(\nicefrac{K}{\delta})^2 + \frac{V^2 T^2}{4\Gamma^2} \geq \max \{ 4 \Gamma^2 \ln(\nicefrac{K}{\delta})^2, 2 V T \ln(\nicefrac{K}{\delta}) \} 
\end{align*}
It follows that
\begin{align*}
    D^2_{t+1} &\leq 3 D^2_1 + 4 \eta^2 G^2 T + 10 \eta^2 T^2 B^2 + 8(C_M + 40) \eta^2 g^2 T \\
    &\leq 3 D^2_1 + 4 c^2 D^2_1 + 10 c^2 D^2_1 + c^2 (8C_M + 320)D^2_1  \\
    &\leq 4 D^2_1
\end{align*}
where the second inequality uses the definition of $\eta$ and the fact that $g^\prime$ upper bounds $g$, and the last inequality sets $c = \tfrac{1}{\sqrt{8C_M + 334}}$. Hence, $D_{t+1} \leq 2 D_1$ which proves the claim by induction. 
\section{Improved Martingale Concentration via PAC Bayes Theory}
\label{sec:improved_concentration}

We have the following re-statement of Theorem~\ref{thm:final_bound_main_text} for the sake of readability. 
\begin{theorem}\label{thm:final_bound}
Suppose $M_t$ for $t = 0,\dots,T$ is an $\bR^d$ valued martingale such that $M_0 = 0$ almost surely, the martingale difference sequence $\vv_t := M_t - M_{t-1}$ is such that $\|\vv_t\| \leq \Gamma$ and $\bE[\vv_t\vv_t^{\intercal}|\mathcal{F}_{t-1}] =  \Sigma_t$ almost surely for every $t = 1,\dots,T$ for some $\Gamma > 0$. Assume that there are deterministic sequences $p_1,\dots,p_T$ and $q_1,\dots,q_T$ such that $\tr(\Sigma_t) \leq q_t$ and $\|\Sigma_t\| \leq p_t$ almost surely. 

Let $\bar{q} := \frac{1}{T}\sum_{t=1}^{T}q_t$ and $\bar{p} := \frac{1}{T} \sum_{t=1}^{T}p_t$. Then, for any $\delta \in (0,\tfrac{1}{2})$
$$\bP(\sup_{t\leq T}\|M_t\| \geq g(T,\delta) \sqrt{T}) \leq \delta$$
  Where $g(T,\delta) = C\left[\sqrt{\bar{q}} + \frac{\bar{p}\sqrt{T}}{\Gamma} + \frac{\Gamma}{\sqrt{T}}\log(\tfrac{K}{\delta})\right]$ and $K = \log \Theta(\log((\frac{\sqrt{\bar{q}T}}{\Gamma}+1)\log (d+1)))$
\end{theorem}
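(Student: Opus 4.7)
The plan is to bootstrap Freedman's inequality through a PAC-Bayesian / Donsker-Varadhan argument to iteratively tighten the concentration guarantee, reducing the $\log d$ dependence inside the leading $\sqrt{\bar q}$ term. Using the terminology of Definition~\ref{def:unif_conc}, my strategy is to build a sequence of $(g_k, T, \delta_k)$ uniform concentrations with $g_k$ strictly decreasing to the target $g(T,\delta)$, and then union-bound over $k$.

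First I would establish a coarse baseline directly from Theorem~\ref{thm:first_bound}, which gives an initial $g_0$ with $g_0 \lesssim \sqrt{\bar q \log(d/\delta)} + \Gamma \log(d/\delta)/\sqrt{T}$. This is the only step that pays a $\log d$ factor, and its purpose is merely to provide a starting radius for the refinement.

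Next I would develop a single refinement step: given that $\sup_t\|M_t\| \leq g_k \sqrt{T}$ with high probability, produce a refined $g_{k+1}$ satisfying a recursion of the form
\begin{equation*}
g_{k+1}^2 \;\lesssim\; \bar q + \frac{\bar p T}{\Gamma^2}\cdot \text{(const)} + \frac{\Gamma^2}{T}\log\tfrac{1}{\delta} + g_k\sqrt{\frac{\bar p\, \log(1/\delta)}{T}}\cdot\text{(const)}.
\end{equation*}
The idea is to write $\|M_T\| = \sup_{\|u\|=1}\langle u,M_T\rangle$ and then use the Donsker-Varadhan variational principle with a Gaussian posterior concentrated around $M_T/\|M_T\|$ against an isotropic Gaussian prior, rather than the union bound over a net (which is what produces the $\log d$ in Freedman). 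Under the event $\sup_t\|M_t\|\leq g_k\sqrt{T}$, the scalar differences $\langle u,\vv_t\rangle$ remain bounded and have conditional variance at most $\|\Sigma_t\|\leq p_t$, so Lemma~\ref{lem:bernstein_lem} supplies an MGF bound that, when integrated against the Gaussian posterior, produces exactly the mixed term $g_k\sqrt{\bar p\log(1/\delta)/T}$ and the $\bar p T/\Gamma^2$ term in place of the $\bar q\log d$ term from Freedman. The KL cost is chosen proportional to $\log(1/\delta)$ by tuning the posterior variance against $g_k$. The uniformity in $t \in [T]$ can be kept throughout by applying Ville's maximal inequality to the exponential supermartingale produced by the MGF bound, as in the matrix-Freedman proof.

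Finally I would iterate this refinement. Setting $\delta_k = \delta/K$ with $K \asymp \log\log((\sqrt{\bar q T}/\Gamma + 1)\log(d+1))$ and iterating the recursion shows $g_k$ converges to the fixed point $g_\infty^2 \asymp \bar q + \bar p T/\Gamma^2 + (\Gamma^2/T)\log(K/\delta)^2$ in $K$ steps, since the recursion's square-root dependence on $g_k$ yields doubly-exponential contraction of $g_k - g_\infty$. A union bound over the $K$ refinement stages then delivers the claim.

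\textbf{Main obstacle.} The delicate step is the PAC-Bayes refinement: one must choose the Gaussian posterior's covariance as a function of the current radius $g_k$ so that the KL divergence contributes only $O(\log(1/\delta))$ while the effective dimension appearing in the Gaussian MGF integration is absorbed into the $\bar p T/\Gamma^2$ term rather than reintroducing a $\bar q\log d$ factor. Getting these two budgets to balance — and verifying that the resulting recursion is a strict contraction — is where the argument of \citet{catoni2018dimension} must be sharpened via the martingale extension of \cite{chugg2023time}, and is the crux of the proof.
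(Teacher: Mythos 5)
Your overall plan coincides with the paper's: a matrix-Freedman baseline, a Donsker--Varadhan/PAC--Bayes refinement performed conditionally on the current radius, uniformity in $t$ via an exponential supermartingale (the paper uses optional stopping at the first crossing where you invoke Ville's inequality, which is equivalent), and $K\asymp\log\log(\cdot)$ refinement rounds at budget $\delta/K$ each, with doubly-exponential contraction of an $\alpha+\beta\sqrt{u}$ recursion. (One cosmetic difference: the paper's posteriors are mean-shifted unit-covariance Gaussians $\cN(\alpha\xi,\vI)$ with the shift $\alpha$ tuned, not a posterior with tuned variance.)

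The step that does not go through as you wrote it is the one-step recursion, which is the crux. In the refinement, $\bar p$ enters only through the bilinear term $\langle \vv_t, M_{t-1}\rangle$, whose conditional exponential moment on the event $\cB_t$ is bounded by $1+O\bigl(\gamma^4 p_t g_k^2 T\exp(2\gamma^2\Gamma g_k\sqrt{T})\bigr)$; keeping that exponential bounded forces the tilt to satisfy $\gamma^2\lesssim 1/(\Gamma g_k\sqrt{T})$, in addition to $\gamma\le 1/\Gamma$ for the $\|\vv_t\|^2$ term. Optimizing $\gamma^2\bar p g_k^2T+\log(1/\delta_1)/(\gamma^2T)$ under these constraints yields
\begin{equation*}
g_{k+1}^2\;\lesssim\;\bar q\;+\;g_k\left(\frac{\bar p\sqrt{T}}{\Gamma}+\frac{\Gamma}{\sqrt{T}}\log\tfrac{1}{\delta_1}\right),
\end{equation*}
i.e.\ both the $\bar p$-term and the $\Gamma\log(1/\delta)$-term are necessarily multiplied by the current radius $g_k$. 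Your recursion instead contains a free-standing $\bar pT/\Gamma^2$ term (dimensionally inhomogeneous with $\bar q$ and $\Gamma^2\log(1/\delta)/T$, and not produced by the computation), and a cross term $g_k\sqrt{\bar p\log(1/\delta)/T}$ that corresponds to the \emph{unconstrained} optimizer of $\gamma$, admissible only when $\Gamma^2\log(1/\delta)\lesssim\bar pT$, which the hypotheses do not guarantee. Consequently your claimed fixed point $g_\infty^2\asymp\bar q+\bar pT/\Gamma^2+(\Gamma^2/T)\log^2(K/\delta)$ scales as $\sqrt{\bar p}\,\sqrt{T}/\Gamma$ in the $\bar p$-direction rather than the theorem's $\bar p\sqrt{T}/\Gamma$, so the iteration as proposed does not land on the stated bound; with the corrected recursion the fixed point is $\lesssim\sqrt{\bar q}+\bar p\sqrt{T}/\Gamma+(\Gamma/\sqrt{T})\log(K/\delta)$ as required. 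Two smaller points you would also need to handle: the refinement step is only invoked while $g_k\gtrsim\Gamma/\sqrt{T}$ (if the radius ever drops below this, one stops and the current bound already implies the claim), and the conditional PAC--Bayes inequality carries a $\log(1/\bP(\cB_T))$ term, absorbed using $\delta<\tfrac12$ so that $\bP(\cB_T)\ge\tfrac12$.
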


Define the event $\cA_t(g) := \{\|M_t\| \leq g\sqrt{T}\}$ and $\cB_t(g) := \cap_{s=1}^{t}\cA_s$. Consider the quantity $N_t := \|M_t\|^2 - \sum_{s=1}^{t}\|\vv_s\|^2$.



\begin{theorem}\label{thm:quad_variation_Conc}
Let $\delta \in (0,\tfrac{1}{2})$ and $g = g(T,\tfrac{\delta}{2})$ be as defined in Theorem~\ref{thm:final_bound}. Under the conditions of Theorem~\ref{thm:final_bound}, the following inequality holds for some large enough universal constant $C$.

$$\bP\left(\{\sup_{t\leq T}|N_{t}|> \Gamma Cg\sqrt{T} \log(\tfrac{1}{\delta}) + \tfrac{C\nu g T^{3/2}}{\Gamma}\}\cap \cB_T(g)\right) \leq \delta$$

\end{theorem}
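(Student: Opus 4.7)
The key observation is that $N_t$ is itself a martingale with respect to $(\cF_t)$. Indeed, $N_t - N_{t-1} = \|M_t\|^2 - \|M_{t-1}\|^2 - \|\vv_t\|^2 = 2\langle M_{t-1}, \vv_t\rangle$, and since $M_{t-1}$ is $\cF_{t-1}$-measurable and $\bE[\vv_t \mid \cF_{t-1}] = 0$, the increments $X_t := 2\langle M_{t-1}, \vv_t\rangle$ form a martingale difference sequence. My plan is therefore to apply the scalar Freedman inequality (Lemma~\ref{lem:scalar-freedman}) to $N_t$, with the caveat that the bounds on $\|M_{t-1}\|$ (needed to control $|X_t|$ and $\bE[X_t^2 \mid \cF_{t-1}]$) only hold on the event $\cB_T(g)$.

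\textbf{Localization by stopping.} To handle the conditioning on $\cB_T(g)$ cleanly, I would introduce the stopping time $\tau := \inf\{t \leq T : \|M_t\| > g\sqrt{T}\} \wedge T$ and work with the stopped martingale $\tilde N_t := N_{t\wedge \tau}$. By the optional stopping structure, $\tilde N_t$ remains a martingale with increments $\tilde X_s = X_s \mathbbm{1}\{s \leq \tau\}$, and crucially, on $\cB_T(g)$ we have $\tau = T$, so $\tilde N_t = N_t$ for all $t \leq T$. On $\{s \leq \tau\}$, $\|M_{s-1}\| \leq g\sqrt{T}$ (by definition of $\tau$, since $\|M_{\tau-1}\| \leq g\sqrt{T}$ still holds), so $|\tilde X_s| \leq 2 g\sqrt{T} \cdot \Gamma$ and $\bE[\tilde X_s^2 \mid \cF_{s-1}] \leq 4 \|M_{s-1}\|^2 \|\Sigma_s\|_2 \leq 4 g^2 T p_s$. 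Summing, $\sum_{s=1}^T \bE[\tilde X_s^2 \mid \cF_{s-1}] \leq 4 g^2 T^2 \bar p$.

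\textbf{Freedman and AM-GM.} Applying Lemma~\ref{lem:scalar-freedman} to $(\tilde X_s)$ gives, with probability at least $1-\delta$, uniformly in $t \leq T$,
\begin{equation*}
|\tilde N_t| \leq 2\sqrt{\ln(1/\delta)\cdot 4 g^2 T^2 \bar p} + 4 \Gamma g \sqrt{T} \ln(1/\delta) = 4 g T\sqrt{\bar p \ln(1/\delta)} + 4 \Gamma g \sqrt{T} \ln(1/\delta).
\end{equation*}
The first summand is then split by AM-GM in the form $\sqrt{ab} \leq (a+b)/2$ applied to $a = \bar p \, T^{1/2}/\Gamma$ and $b = \Gamma \ln(1/\delta)/T^{1/2}$, yielding $4 g T \sqrt{\bar p \ln(1/\delta)} \leq 2 g (\bar p T^{3/2}/\Gamma + \Gamma \sqrt{T} \ln(1/\delta))$. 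Combining with the second summand and intersecting with $\cB_T(g)$ (on which $\tilde N_t = N_t$) gives the claimed bound with $\nu = \bar p$.

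\textbf{Anticipated obstacle.} The only subtle point is transferring the trajectory constraint encoded by $\cB_T(g)$ into pointwise almost sure bounds on the increments required by Freedman's inequality; this is exactly what the stopping-time device accomplishes, and without it one could only invoke the variance bound conditionally on a trajectorial event, which Freedman does not directly handle. All other steps are routine calculations and a one-step AM-GM rebalancing to match the exact form of the bound.
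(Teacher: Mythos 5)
Your proof is correct, but it takes a different route from the paper. The paper does not invoke the scalar Freedman lemma here: it hand-builds an exponential supermartingale $\exp(\gamma N_t - h_N(t))\mathbbm{1}(\cB_t)$, bounding the conditional MGF of the increment $2\gamma\langle \vv_t, M_{t-1}\rangle$ by a power-series expansion (using $|\langle \vv_t,M_{t-1}\rangle|\leq \Gamma\|M_{t-1}\|$ and $\bE[\langle \vv_t,M_{t-1}\rangle^2\mid\cF_{t-1}]\leq \nu_t\|M_{t-1}\|^2$, with $\|M_{t-1}\|\leq g\sqrt{T}$ enforced by multiplying the indicator $\mathbbm{1}(\cB_{t-1})$ into the supermartingale rather than by stopping), then applies optional stopping, a Chernoff bound with the explicit choice $\gamma = \tfrac{1}{2\Gamma g\sqrt{T}}$, a stopping time to upgrade to the supremum, and a separate run with $\gamma<0$ for the lower tail. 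You instead observe that $N_t$ is a martingale with increments $2\langle M_{t-1},\vv_t\rangle$, localize with the stopping time $\tau$ (which plays exactly the role of the paper's indicator truncation), and then cite the paper's Lemma~\ref{lem:scalar-freedman} as a black box, finishing with an AM-GM rebalancing; the optimized $\gamma$ of the paper is implicitly the same rebalancing. Your route is shorter and reuses existing machinery, while the paper's explicit MGF computation is self-contained, matches the $\mathbbm{1}(\cB_t)$-truncation technique used in Theorem~\ref{thm:sup_mart}, and yields the constants directly. Two routine points you gloss over: Lemma~\ref{lem:scalar-freedman} is one-sided, so to bound $|\tilde N_t|$ you must apply it to $\pm\tilde X_s$ and union bound (a factor $2$ in $\delta$ absorbed by $C$, mirroring the paper's separate negative-$\gamma$ argument), and the uniformity in $t\leq T$ should be justified by Freedman's maximal form (the paper itself uses the lemma uniformly in $t$, so this is consistent). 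Neither is a gap.
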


 The next corollary is a simple consequence of the Theorems~\ref{thm:final_bound} and~\ref{thm:quad_variation_Conc}. 
\begin{corollary}\label{cor:sq_sum_Conc}
    Let $\delta \in (0,\tfrac{1}{2})$ and $g = g(T,\frac{\delta}{3})$ be as specified in Theorem~\ref{thm:final_bound}. Under the conditions of Theorem~\ref{thm:final_bound}, the following inequality holds with probability at-least $1-\delta$:
$$\sum_{t=1}^{T}\|\vv_t\|^2 \leq Cg^2 T $$
\end{corollary}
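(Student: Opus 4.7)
\textbf{Proof plan for Corollary~\ref{cor:sq_sum_Conc}.} The starting point is the identity
\[
\sum_{t=1}^{T}\|\vv_t\|^2 \;=\; \|M_T\|^2 - N_T,
\]
which follows directly from the definition $N_t := \|M_t\|^2 - \sum_{s=1}^{t}\|\vv_s\|^2$. So it suffices to bound $\|M_T\|^2$ from above and $-N_T$ from above (i.e.\ $N_T$ from below) on a single high-probability event.

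I would first apply Theorem~\ref{thm:final_bound} at confidence level $\delta/3$, which yields $\mathbb{P}(\mathcal{B}_T(g)^c) \leq \delta/3$ for $g = g(T,\delta/3)$; on $\mathcal{B}_T(g)$ we immediately get $\|M_T\|^2 \leq g^2 T$. I would then apply Theorem~\ref{thm:quad_variation_Conc} (with a matching failure budget of $2\delta/3$, using the same $g$) to obtain that, outside an event of probability at most $2\delta/3$, either $\mathcal{B}_T(g)$ fails (already controlled) or
\[
|N_T| \;\leq\; C\,\Gamma\, g\sqrt{T}\,\log(\tfrac{K}{\delta}) \;+\; \frac{C\,\bar{p}\, g\, T^{3/2}}{\Gamma}.
\]
A union bound then gives, with probability at least $1-\delta$,
\[
\sum_{t=1}^{T}\|\vv_t\|^2 \;\leq\; g^2 T \;+\; C\,\Gamma\, g\sqrt{T}\,\log(\tfrac{K}{\delta}) \;+\; \frac{C\,\bar{p}\, g\, T^{3/2}}{\Gamma}.
\]

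The final step is to verify that the two additive correction terms are already absorbed into $g^2 T$ up to a universal constant. Recall the definition
\[
g \;=\; C\Bigl[\sqrt{\bar{q}} \;+\; \frac{\bar{p}\sqrt{T}}{\Gamma} \;+\; \frac{\Gamma}{\sqrt{T}}\log(\tfrac{K}{\delta})\Bigr].
\]
The third summand gives $\Gamma\log(K/\delta) \lesssim g\sqrt{T}$, so $\Gamma g\sqrt{T}\log(K/\delta) \lesssim g^2 T$. The second summand gives $\bar{p}\sqrt{T}/\Gamma \lesssim g$, so $\bar{p} g T^{3/2}/\Gamma \lesssim g^2 T$. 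Combining these two algebraic bounds with the displayed inequality above yields $\sum_{t=1}^T \|\vv_t\|^2 \leq C' g^2 T$ for a (possibly larger) absolute constant $C'$, which is exactly the claim.

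The proof is essentially bookkeeping: the only real work has already been done in Theorems~\ref{thm:final_bound} and~\ref{thm:quad_variation_Conc}. The mildly delicate point is matching the confidence levels so that the same $g$ can be used in both theorems (handled by splitting $\delta$ into thirds) and verifying that the definition of $g$ is self-absorbing with respect to the corrections coming from $N_T$. I do not anticipate any genuine obstacle.
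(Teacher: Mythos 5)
Your proposal is correct and follows exactly the route the paper intends: the paper leaves Corollary~\ref{cor:sq_sum_Conc} as an immediate consequence of Theorems~\ref{thm:final_bound} and~\ref{thm:quad_variation_Conc}, obtained by writing $\sum_t\|\vv_t\|^2 = \|M_T\|^2 - N_T$, splitting the confidence budget so that both theorems use the same $g = g(T,\delta/3)$, and absorbing the $\Gamma g\sqrt{T}\log(\cdot)$ and $\bar{p}gT^{3/2}/\Gamma$ corrections into $g^2T$ via the definition of $g$. Your bookkeeping of the $\delta/3$ versus $2\delta/3$ split and the self-absorption of the two correction terms is exactly the intended argument, so there is nothing to add.
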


\subsection{Proof of Theorem \ref{thm:final_bound}}

The aim of this section is to prove the sharp concentration result given in Theorem~\ref{thm:final_bound}. We now consider the concentration of norms of the martingale $\|M_t\|$. Define the event $\cA_t := \{\|M_t\| \leq g\sqrt{T}\}$ and $\cB_t = \cap_{s=1}^{t}\cA_s$. Let $H$ be any stopping time for the martingale $M_t$. We have the following inequality which follows from PAC-Bayes theory (see Equation 5.2.1, Page 159 in \cite{catoni2004statistical}).

\begin{theorem}
Suppose $\pi$ be any measure over $\bR^d$ and let $\cM_1(\bR^d)$ denote the space of all probability measures over $\bR^d$. Let $\gamma >0$ be arbitrary. Then conditioned on $\cB_T$, with probability at-least $1-\delta$, the following inequality holds:

\begin{equation}\label{eq:donsker_varadhan}
    \sup_{\rho \in \cM_1(\bR^d)} \bE_{\theta \sim \rho} \gamma \dotp{M_{\min(H,T)}}{\theta} - \KL{\rho}{\pi} \leq \log\left(\bE_{M} \bE_{\theta \sim \pi }\tfrac{\exp(\gamma \dotp{M_{\min(H,T)}}{\theta})\mathbbm{1}(\cB_T)}{\delta\bP(\cB_T)}\right)
\end{equation}

\end{theorem}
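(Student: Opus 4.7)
The plan is to combine the Donsker--Varadhan variational identity (applied pointwise in $\omega$) with a one-step Markov inequality under the conditional measure $\bP(\cdot \mid \cB_T)$. The only tools needed are: (i) the classical DV duality that identifies the supremum of a linear-in-$\theta$ expectation minus a KL penalty with a log-moment-generating functional; and (ii) Markov's inequality applied on the conditional probability space where we have conditioned on $\cB_T$.

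First I would fix a realization of the martingale $M$ and apply the Donsker--Varadhan variational principle to the measurable function $\theta \mapsto \gamma \dotp{M_{\min(H,T)}(\omega)}{\theta}$ against the reference measure $\pi$. This gives, almost surely in $\omega$, the identity
\begin{equation*}
\sup_{\rho \in \cM_1(\bR^d)} \Big( \bE_{\theta \sim \rho}\!\left[\gamma \dotp{M_{\min(H,T)}}{\theta}\right] - \KL{\rho}{\pi} \Big) \;=\; \log Z(\omega),
\end{equation*}
where $Z(\omega) := \bE_{\theta \sim \pi}\!\left[\exp\!\big(\gamma \dotp{M_{\min(H,T)}(\omega)}{\theta}\big)\right]$. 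This step is a standard change-of-measure computation and does not require any assumption on the structure of $\pi$ beyond being a probability measure.

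Next I would upgrade this almost-sure identity into a high-probability bound by controlling $Z$ under $\bP(\cdot \mid \cB_T)$. Since $Z \geq 0$, Markov's inequality applied to the conditional law yields, for any $t > 0$,
\begin{equation*}
\bP\!\left(Z > t \,\middle|\, \cB_T\right) \;\leq\; \frac{\bE[Z \mid \cB_T]}{t} \;=\; \frac{\bE[Z \, \mathbbm{1}(\cB_T)]}{t\,\bP(\cB_T)}.
\end{equation*}
Choosing $t = \bE[Z\,\mathbbm{1}(\cB_T)] / (\delta\,\bP(\cB_T))$ makes the right-hand side equal to $\delta$, so with $\bP(\cdot \mid \cB_T)$-probability at least $1-\delta$,
\begin{equation*}
Z \;\leq\; \frac{\bE_M\,\bE_{\theta\sim\pi}\!\left[\exp\!\big(\gamma \dotp{M_{\min(H,T)}}{\theta}\big)\,\mathbbm{1}(\cB_T)\right]}{\delta\,\bP(\cB_T)}.
\end{equation*}
Taking logarithms on both sides (which preserves the inequality since both sides are positive) and substituting into the pointwise DV identity gives exactly the stated bound.

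The only subtle points are the standard measurability check needed so that the DV identity is valid $\omega$-by-$\omega$ (so the supremum on the left is a measurable random variable, whose tail can be controlled through $Z$), and the use of Fubini to swap the order of $\bE_M$ and $\bE_{\theta \sim \pi}$ inside the logarithm. Neither is an obstacle: the integrand $\exp(\gamma \dotp{M_{\min(H,T)}}{\theta})\,\mathbbm{1}(\cB_T)$ is nonnegative and jointly measurable, so Tonelli applies, and the DV supremum is attained (formally) by the Gibbs measure $\rho^\star \propto \exp(\gamma\dotp{M_{\min(H,T)}}{\theta})\,d\pi$, which one can use to verify the identity directly if one wants to avoid quoting the variational principle as a black box. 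Thus the entire argument is just ``DV $+$ conditional Markov,'' with no use of the martingale structure of $M_t$ or of the boundedness assumptions on $\vv_t$; those hypotheses only become relevant when one later specializes $\pi$ and $\gamma$ to derive the concentration results stated earlier in the paper.
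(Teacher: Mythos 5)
Your argument is correct: the paper does not prove this inequality itself but imports it from PAC-Bayes theory (Catoni 2004, Eq.\ 5.2.1), and your ``pointwise Donsker--Varadhan plus conditional Markov'' derivation is exactly the standard proof underlying that citation, with the Tonelli swap giving the $\bE_M \bE_{\theta\sim\pi}$ form of the right-hand side. The only remark worth adding is that you only need the easy ($\leq$) direction of the variational principle, which holds for arbitrary $\pi$ even when $Z(\omega)=+\infty$, so the statement's ``any measure $\pi$'' is covered without extra conditions.
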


We will now bound the exponential moment: $\bE_{M}\bE_{\theta \sim \pi}\exp(\gamma \dotp{M_t}{\theta})$ whenever $\pi = \cN(0,\vI)$.

\begin{theorem}\label{thm:sup_mart}
    Let $h(t) := \sum_{s=1}^{t} \log\left(1+\frac{\gamma^2}{2} q_t\exp(\gamma^2\Gamma^2) + \gamma^4 p_t g^2 T\exp(2\gamma^2\Gamma g\sqrt{T})\right)$. Then,
    $$\bE_{\theta \sim \pi}\exp(\gamma \langle M_t,\theta\rangle - h (t))\mathbbm{1}(\cB_t)$$ is a supermartingale with respect to the filtration $\mathcal{F}_t$
    
\end{theorem}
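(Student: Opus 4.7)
The plan is to exploit the Gaussian structure of $\pi$ to eliminate $\theta$ at the outset. Since $\theta \sim \cN(0,\vI)$ is independent of the martingale $(M_t)$, Fubini together with the Gaussian moment generating function identity $\bE_\theta \exp(\gamma \langle M_t, \theta\rangle) = \exp(\gamma^2 \|M_t\|^2/2)$ rewrites the quantity of interest as
\begin{equation*}
Z_t \;:=\; \bE_{\theta\sim\pi}\bigl[\exp(\gamma \langle M_t,\theta\rangle - h(t))\bigr]\mathbbm{1}(\cB_t) \;=\; \exp\bigl(\tfrac{\gamma^2}{2}\|M_t\|^2 - h(t)\bigr)\mathbbm{1}(\cB_t),
\end{equation*}
so the theorem reduces to the scalar statement $\bE[Z_t\mid\mathcal{F}_{t-1}] \leq Z_{t-1}$.

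Using $\cB_t \subseteq \cB_{t-1}$ and $\|M_t\|^2 = \|M_{t-1}\|^2 + 2\langle M_{t-1},\vv_t\rangle + \|\vv_t\|^2$, the $\mathcal{F}_{t-1}$-measurable factor $\exp(\gamma^2\|M_{t-1}\|^2/2)\mathbbm{1}(\cB_{t-1})$ can be pulled out. Consequently, showing that $Z_t$ is a supermartingale amounts to proving the pointwise inequality, on $\cB_{t-1}$,
\begin{equation*}
\bE\bigl[\exp\bigl(\gamma^2 \langle M_{t-1},\vv_t\rangle + \tfrac{\gamma^2}{2}\|\vv_t\|^2\bigr)\,\bigm|\,\mathcal{F}_{t-1}\bigr] \;\leq\; 1 + \tfrac{\gamma^2 q_t}{2}e^{\gamma^2\Gamma^2} + \gamma^4 p_t g^2 T \, e^{2\gamma^2\Gamma g\sqrt{T}},
\end{equation*}
since the right-hand side is exactly $\exp(h(t)-h(t-1))$ modulo the elementary bound $1+x\leq e^x$.

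To control this conditional MGF I would apply Cauchy--Schwarz, writing the left-hand side as $\sqrt{\bE[e^{2\gamma^2\langle M_{t-1},\vv_t\rangle}\mid\mathcal{F}_{t-1}]}\cdot\sqrt{\bE[e^{\gamma^2\|\vv_t\|^2}\mid\mathcal{F}_{t-1}]}$. For the first factor, the variable $X:=2\gamma^2\langle M_{t-1},\vv_t\rangle$ is conditionally mean-zero with $|X|\leq 2\gamma^2\Gamma g\sqrt{T}$ on $\cB_{t-1}$ and $\bE[X^2\mid\mathcal{F}_{t-1}] = 4\gamma^4 M_{t-1}^{\top}\Sigma_t M_{t-1} \leq 4\gamma^4 p_t g^2 T$; the elementary inequality $e^x - 1 - x \leq x^2 e^{|x|}$ then yields $\bE[e^X\mid\mathcal{F}_{t-1}] \leq 1 + 4\gamma^4 p_t g^2 T\, e^{2\gamma^2\Gamma g\sqrt{T}}$. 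For the second factor, $Y:=\gamma^2\|\vv_t\|^2\in[0,\gamma^2\Gamma^2]$ is nonnegative with $\bE[Y\mid\mathcal{F}_{t-1}] = \gamma^2 \tr(\Sigma_t)\leq \gamma^2 q_t$, so $e^x - 1 \leq x e^x$ gives $\bE[e^Y\mid\mathcal{F}_{t-1}] \leq 1 + \gamma^2 q_t\, e^{\gamma^2\Gamma^2}$. Combining via $\sqrt{(1+a)(1+b)} \leq 1 + \tfrac12(a+b) + \tfrac14 ab$ and absorbing the (higher-order) cross term into the stated exponentials produces the required bound.

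The principal obstacle is tracking constants so that the factor $4$ generated by the $2X$ in Cauchy--Schwarz cancels exactly against the factor $\tfrac12$ from $\sqrt{1+\cdot}$, recovering the stated coefficients $\tfrac{\gamma^2 q_t}{2}$ and $\gamma^4 p_t g^2 T$; the small leftover $ab$ term should be absorbable into $e^{\gamma^2\Gamma^2}$ and $e^{2\gamma^2\Gamma g\sqrt{T}}$ because these are monotone in their exponents. The event $\cB_{t-1}$ enters only through $\|M_{t-1}\|\leq g\sqrt T$, used both to bound the range $|X|$ and its conditional variance; this is precisely why the indicator $\mathbbm{1}(\cB_t)$ appears in the supermartingale, rather than just $\mathbbm{1}(\cA_t)$. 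If precisely matching the bound as written in the statement proves delicate, a secondary plan is to retain the indicator $\mathbbm{1}(\cA_t)$ inside the expectation to directly localize $\|M_t\|$ and re-derive a possibly adjusted (but equivalent) form of $h(t)$.
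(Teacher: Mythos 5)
Your overall route is the same as the paper's: reduce via the Gaussian moment generating function to $\exp(\gamma^2\|M_t\|^2/2)\mathbbm{1}(\cB_t)$, peel off the $\cF_{t-1}$-measurable factor using $\|M_t\|^2=\|M_{t-1}\|^2+2\langle M_{t-1},\vv_t\rangle+\|\vv_t\|^2$ and $\cB_t\subseteq\cB_{t-1}$, and bound the one-step conditional exponential moment using $\|M_{t-1}\|\le g\sqrt{T}$ on $\cB_{t-1}$, $\|\vv_t\|\le\Gamma$, $\bE[\vv_t\mid\cF_{t-1}]=0$, and the bounds $q_t,p_t$ on $\tr(\Sigma_t)$ and $\|\Sigma_t\|$. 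The only genuine divergence is the splitting device: you use Cauchy--Schwarz, $\bE e^{X/2+Y/2}\cdot e^{X/2+Y/2}$ style, i.e.\ $\bE[e^{x+y}]\le\sqrt{\bE e^{2x}}\sqrt{\bE e^{2y}}$, whereas the paper uses the convexity bound $e^{x+y}\le\tfrac12 e^{2x}+\tfrac12 e^{2y}$ and then a power-series (Bernstein-type) estimate giving $\bE[e^{2\gamma^2\langle\vv_t,M_{t-1}\rangle}\mid\cF_{t-1}]\le 1+2\gamma^4 p_t\|M_{t-1}\|^2 e^{2\gamma^2\Gamma\|M_{t-1}\|}$.

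The concrete problem with your version is the constant matching, and your stated hope for it is arithmetically wrong: the variance factor $4\gamma^4 p_t g^2T$ from the doubled exponent, multiplied by the $\tfrac12$ from $\sqrt{1+u}\le 1+u/2$, gives coefficient $2$, not $1$, on the $\gamma^4 p_t g^2 T e^{2\gamma^2\Gamma g\sqrt{T}}$ term; and the leftover cross term $\tfrac14 a'b'$ is a positive additive contribution that cannot be ``absorbed into the exponentials'' for arbitrary $\gamma$ (the theorem places no smallness condition on $\gamma$ at this stage). So as written your argument proves the supermartingale property only for a strictly larger $h(t)$, not the one in the statement. This is harmless for the paper's downstream use (where $h(T)$ is only used up to universal constants and $\gamma$ is eventually taken small), and your own fallback of redefining $h$ is legitimate, but it does not prove the theorem verbatim. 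The fix is simply to replace Cauchy--Schwarz by the additive split $e^{x+y}\le\tfrac12 e^{2x}+\tfrac12 e^{2y}$: the two halves then contribute $\tfrac{\gamma^2 q_t}{2}e^{\gamma^2\Gamma^2}$ and $\gamma^4 p_t g^2 T e^{2\gamma^2\Gamma g\sqrt{T}}$ exactly, with no cross term, recovering the stated $h(t)$; your individual one-sided moment bounds (the $e^x-1\le xe^x$ bound for $\|\vv_t\|^2$ and the mean-zero second-order bound for the inner product) are otherwise sound.
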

\begin{proof}
Let $\Sigma_t := \bE[\vv_t\vv_t^{\intercal}|\cF_{t-1}]$ and $\nu_t  := \|\Sigma_t\|$. First, consider $\bE_{\theta \sim \pi }\exp(\gamma \dotp{M_t}{\theta})$. By the properties of the Gaussians, we must have almost surely:
\begin{equation}
    \bE_{\theta \sim \pi }\exp(\gamma \dotp{M_t}{\theta})\mathbbm{1}(\cB_t) = \exp(\tfrac{\gamma^2\|M_t\|^2}{2})\mathbbm{1}(\cB_t)
\end{equation}

Using the fact that $\|M_t\|^2 = \|\vv_t\|^2 + 2\langle \vv_t,M_{t-1}\rangle + \|M_{t-1}\|^2$, we have:

\begin{align}
&\bE\left[ \exp(\tfrac{\gamma^2\|M_t\|^2}{2})\mathbbm{1}(\cB_t)\biggr|\mathcal{F}_{t-1}\right] = \bE\left[ \exp(\tfrac{\gamma^2\|M_{t-1}\|^2}{2} + \tfrac{\gamma^2 \|\vv_t\|^2}{2} + \gamma^2 \dotp{\vv_t}{M_{t-1}})\mathbbm{1}(\cB_t)\right] \nonumber \\
&= \bE\left[ \exp( \tfrac{\gamma^2 \|\vv_t\|^2}{2} + \gamma^2 \dotp{\vv_t}{M_{t-1}})\mathbbm{1}(\cA_t)\biggr|\mathcal{F}_{t-1}\right] \exp(\tfrac{\gamma^2\|M_{t-1}\|^2}{2})\mathbbm{1}(\cB_{t-1})\label{eq:recursion_0}
\end{align}

We will now bound the quantity: $\bE\left[ \exp( \frac{\gamma^2 \|\vv_t\|^2}{2} + \gamma^2 \dotp{\vv_t}{M_{t-1}})\mathbbm{1}(\cA_t)\biggr|\mathcal{F}_{t-1}\right]$. Using the convexity of $x \to \exp(x)$, we conclude:
\begin{align}
    &\bE\left[ \exp( \frac{\gamma^2 \|\vv_t\|^2}{2} + \gamma^2 \dotp{\vv_t}{M_{t-1}})\mathbbm{1}(\cA_t)\biggr|\mathcal{F}_{t-1}\right] \nonumber \\ &\leq \bE\left[ \frac{1}{2}\exp( \gamma^2 \|\vv_t\|^2)\mathbbm{1}(\cA_t) + \frac{1}{2}\exp(2\gamma^2 \dotp{\vv_t}{M_{t-1}})\mathbbm{1}(\cA_t)\biggr|\mathcal{F}_{t-1}\right] \nonumber \\
    &\leq \frac{1}{2}\left[1+\gamma^2 \tr(\Sigma_t)\exp(\gamma^2\Gamma^2)\right] + \bE\left[  \frac{1}{2}\exp(2\gamma^2 \dotp{\vv_t}{M_{t-1}})\mathbbm{1}(\cA_t)\biggr|\mathcal{F}_{t-1}\right] \label{eq:recursion_1}
\end{align}

In the second step, we have used the fact that 
$\exp( \gamma^2 \|\vv_t\|^2)\mathbbm{1}(\cA_t) \leq 1 + \gamma^2\|\vv_t\|^2 \exp(\gamma^2 \Gamma^2)$ almost surely using the power series expansion of the $\exp()$ function. Using the power series expansion of $\exp(x)$, we have: 

\begin{align}
    &\bE\left[  \exp(2\gamma^2 \dotp{\vv_t}{M_{t-1}})\mathbbm{1}(\cA_t)\biggr|\mathcal{F}_{t-1}\right] \leq \bE\left[  \exp(2\gamma^2 \dotp{\vv_t}{M_{t-1}})\biggr|\mathcal{F}_{t-1}\right] \nonumber \\
    &=  1 + 2\gamma^2 \bE[\dotp{\vv_t}{M_{t-1}}\bigr|\cF_{t-1} ]+ \sum_{k\geq 2}\frac{2^k \gamma^{2k}}{k!} \bE[( \dotp{\vv_t}{M_{t-1}})^k\bigr|\cF_{t-1}] \nonumber \\
    &\leq 1 + \sum_{k\geq 2}\frac{2^k \gamma^{2k}}{k!} \bE[( \dotp{\vv_t}{M_{t-1}})^2 \Gamma^{k-2} \|M_{t-1}\|^{k-2}\bigr|\cF_{t-1}] \nonumber \\
    &\leq 1 + \sum_{k\geq 2}\frac{2^k \gamma^{2k}}{k!}\dotp{ M_{t-1}}{\Sigma_t M_{t-1}}  \Gamma^{k-2} \|M_{t-1}\|^{k-2} \nonumber \\
    &\leq 1 + \sum_{k\geq 2}\frac{2^k \gamma^{2k}}{k!}\nu_t  \Gamma^{k-2} \|M_{t-1}\|^{k} \leq 1 + 2\gamma^4\nu_t\|M_{t-1}\|^2\exp(2\gamma^2\|M_{t-1}\|\Gamma) \label{eq:ip_exp_moment_bd}
\end{align}
Here, $\nu_t = \|\Sigma_t\|_{\mathsf{op}}$
In the second step we have used the fact that $\bE[\vv_t|\cF_{t-1}] = 0$ and the fact that $\dotp{\vv_t}{M_{t-1}} \leq \Gamma \|M_{t-1}\|$ almost surely. Plugging Equation~\eqref{eq:ip_exp_moment_bd} into Equation~\eqref{eq:recursion_1}, we conclude:

\begin{align}
    &\bE\left[ \exp( \frac{\gamma^2 \|\vv_t\|^2}{2} + \gamma^2 \dotp{\vv_t}{M_{t-1}})\mathbbm{1}(\cA_t)\biggr|\mathcal{F}_{t-1}\right] \nonumber \\ 
    &\leq 1+\frac{\gamma^2}{2} \tr(\Sigma_t)\exp(\gamma^2\Gamma^2) + \gamma^4\nu_t\|M_{t-1}\|^2\exp(2\gamma^2\Gamma\|M_{t-1}\|) \label{eq:recursion_2}
\end{align}
Using Equation~\eqref{eq:recursion_2} and that under the event $\cB_{t-1}$ we must have $\|M_{t-1}\| \leq g\sqrt{T}$, we conclude:

\begin{align}&\bE\left[ \exp(\tfrac{\gamma^2\|M_t\|^2}{2})\mathbbm{1}(\cB_t)\biggr|\mathcal{F}_{t-1}\right] \nonumber \\
&\leq  \left(1+\frac{\gamma^2}{2} q_t\exp(\gamma^2\Gamma^2) + \gamma^4p_t g^2 T\exp(2\gamma^2\Gamma g\sqrt{T})\right) \exp(\tfrac{\gamma^2\|M_{t-1}\|^2}{2})\mathbbm{1}(\cB_{t-1})\nonumber \\
&= \exp(h(t)-h(t-1))\exp\left(\gamma^2\frac{\|M_{t-1}\|^2}{2}\right)\mathbbm{1}(\cB_{t-1})
\label{eq:recursion_3}
\end{align}

Therefore, by induction, we conclude the statement of the theorem. 

\end{proof}

\begin{theorem}\label{thm:moment_bound}
For any stopping time $H$, 
\begin{equation}\bE_{M} \bE_{\theta \sim \pi }\exp(\gamma \dotp{M_
{\min(H,T)}}{\theta})\mathbbm{1}(\cB_T)\leq 
\exp(h(T)) \label{eq:exp_moment_bound} \end{equation}
\end{theorem}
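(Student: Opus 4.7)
\textbf{Proof Proposal for Theorem~\ref{thm:moment_bound}.} The plan is to derive the bound as an immediate consequence of Theorem~\ref{thm:sup_mart} via the optional stopping theorem. Define the process
\begin{equation*}
Z_t := \bE_{\theta \sim \pi}\exp\bigl(\gamma \langle M_t,\theta\rangle - h(t)\bigr)\mathbbm{1}(\cB_t),
\end{equation*}
which Theorem~\ref{thm:sup_mart} identifies as a nonnegative $(\mathcal{F}_t)$-supermartingale. The key observations I will use are: (i) $Z_0 = 1$, since $M_0 = 0$, $h(0) = 0$ (empty sum), and $\cB_0$ is the full sample space; (ii) $\min(H,T)$ is a bounded stopping time; and (iii) the map $t \mapsto h(t)$ is non-decreasing because each increment $\log\bigl(1 + \tfrac{\gamma^2}{2}q_t\exp(\gamma^2\Gamma^2) + \gamma^4 p_t g^2 T\exp(2\gamma^2\Gamma g\sqrt{T})\bigr)$ is nonnegative.

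Combining (i) and (ii), the optional stopping theorem applied to the supermartingale $Z_t$ at the bounded stopping time $\tau := \min(H,T)$ yields $\bE[Z_\tau] \leq Z_0 = 1$, i.e.,
\begin{equation*}
\bE_M \bE_{\theta \sim \pi}\exp\bigl(\gamma\langle M_\tau,\theta\rangle - h(\tau)\bigr)\mathbbm{1}(\cB_\tau) \leq 1.
\end{equation*}
Next, since $\tau \leq T$ and the events $\cB_t$ are decreasing in $t$, we have $\cB_T \subseteq \cB_\tau$, hence $\mathbbm{1}(\cB_T) \leq \mathbbm{1}(\cB_\tau)$. By (iii), $h(\tau) \leq h(T)$ on the event $\cB_\tau$, so $\exp(h(\tau)) \leq \exp(h(T))$. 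Multiplying the previous inequality by $\exp(h(T))$ and using these two pointwise bounds gives
\begin{equation*}
\bE_M \bE_{\theta \sim \pi}\exp\bigl(\gamma \langle M_\tau,\theta\rangle\bigr)\mathbbm{1}(\cB_T) \leq \exp(h(T))\,\bE[Z_\tau] \leq \exp(h(T)),
\end{equation*}
which is the desired inequality.

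There is no substantive obstacle: the proof is essentially a one-line deduction once Theorem~\ref{thm:sup_mart} is in hand. The only subtleties to verify carefully are the initialization $Z_0 = 1$, monotonicity of $h$, and the containment $\cB_T \subseteq \cB_\tau$ that lets us replace $\mathbbm{1}(\cB_T)$ by $\mathbbm{1}(\cB_\tau)$ before invoking the supermartingale bound. Applicability of optional stopping is automatic since $\tau$ is bounded by the deterministic constant $T$.
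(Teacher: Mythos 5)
Your proposal is correct and follows essentially the same route as the paper: invoke the supermartingale of Theorem~\ref{thm:sup_mart}, stop it at $\min(H,T)$ via optional stopping to get expectation at most $1$, and then use monotonicity of $h$ together with $\cB_T \subseteq \cB_{\min(H,T)}$ to pass to the stated bound. Your write-up is in fact a bit more careful than the paper's (which compresses the pointwise bounds and the expectation over $M$ into one display), but the argument is identical in substance.
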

Where $h(T) = \sum_{t=1}^{T}\log\left(1+\frac{\gamma^2q_t}{2}\exp(\gamma^2\Gamma^2) + \gamma^4 p_t g^2 T\exp(2\gamma^2\Gamma g\sqrt{T})\right)$

\begin{proof}
    From Theorem~\ref{thm:sup_mart} and the optional stopping theorem, we conclude that the following quantity is a super-martingale:
$$M_{t}^{\mathsf{\exp}}:=\bE_{\theta \sim \pi }\exp(\gamma \dotp{M_
{\min(H,t)} }{\theta}- h(\min(H,t)))\mathbbm{1}(\cB_{\min(H,t)})$$

Therefore, we have:

$$\bE_{\theta \sim \pi }\exp(\gamma \dotp{M_
{\min(H,T)} }{\theta}-h(T))\mathbbm{1}(\cB_{T}) \leq M_{T}^{\mathsf{exp}} \leq \bE M_0^{\mathsf{exp}} = 1$$
\end{proof}

Combining Theorem~\ref{thm:moment_bound} and Equation~\eqref{eq:donsker_varadhan}, we conclude that the following inequality holds with probability at-least $1-\delta$ when conditioned on $\cB_T$:

\begin{align*}
    \sup_{\rho \in \cM_1(\bR^d)} \bE_{\theta \sim \rho} \gamma \dotp{M_{\min(T,H)}}{\theta} - \KL{\rho}{\pi} \leq  h(T) + \log(\tfrac{1}{\delta\bP(\cB_T)})
\end{align*}


In the RHS of the inequality above, we replace the supremum over $\cM_1$ with the supremum over the set of all probability distributions $\{\cN(\alpha\xi,\vI) \text{ such that } \xi \in \cS^{d-1}, \alpha \geq 0\}$. We note that $\KL{\cN(\alpha\xi,\vI)}{\pi} = \frac{\alpha^2}{2}$ to conclude that the following inequality holds with probability at-least $1-\delta$  when conditioned on $\cB_T$:

$$\sup_{\alpha >0}\gamma \alpha \|M_{\min(H,T)}\| -\frac{\alpha^2}{2} \leq h(T) + \log(\tfrac{1}{\delta\bP(\cB_T)})
 $$
 
That is:
 
$$  \|M_{\min(H,T)}\| \leq \sqrt{\frac{2h(T) + 2\log(\tfrac{1}{\delta\bP(\cB_T)})}{\gamma^2}} $$

Now, note that by definition, 
\begin{align}
\frac{h(T)}{T} &= \frac{1}{T}\sum_{t=1}^{T}\log\left(1+\frac{\gamma^2}{2} q_t\exp(\gamma^2\Gamma^2) + \gamma^4 p_t g^2 T\exp(2\gamma^2\Gamma g\sqrt{T})\right) 
\nonumber \\
&\leq 
\frac{\gamma^2}{2} \bar{q}\exp(\gamma^2\Gamma^2) + \gamma^4\bar{p} g^2 T\exp(2\gamma^2\Gamma g\sqrt{T})
\end{align}

 Therefore, whenever:
$\gamma \leq \min\left(\tfrac{1}{\Gamma}, \frac{1}{2\sqrt{\Gamma g\sqrt{T}}}\right)$, we note with probability at-least $1-\delta$ conditioned on the event $\cB_T$:

$$\|M_{\min(H,T)}\| \lesssim \sqrt{ T\bar{q}+ \gamma^2\bar{p} g^2 T^2 + \tfrac{1}{\gamma^2 }\log\left(\tfrac{1}{\delta\bP(\cB_T)}\right)}
 $$

We therefore state the following theorem:
\begin{theorem}\label{thm:iteration} 
Suppose $\delta,\delta_1 \in (0,\tfrac{1}{2})$. If $M_t$ satisfies $(g,T,\delta)$ uniform concentration for some $\delta < \tfrac{1}{2}$. Then $M_t$ also satisfies $(g^{\prime},T,\delta+\delta_1)$ concentration, where 
$$(g^{\prime})^2 = C\left[\bar{q} + \gamma^2\bar{p} g^2 T + \frac{\log(\tfrac{1}{\delta_1})}{\gamma^2 T}\right] $$ for any $\gamma \leq \min\left(\tfrac{1}{\Gamma}, \frac{1}{2\sqrt{\Gamma g\sqrt{T}}}\right)$.

Additionally, suppose $g \geq c_0 \frac{\Gamma}{\sqrt{T}} $ for some fixed constant $c_0 > 0$, then we have for some constant $C_{\mathsf{iter}}(c_0)$:
$$(g^{\prime})^2 = C_{\mathsf{iter}}(c_0)[\bar{q} + g\left(\tfrac{\bar{p} \sqrt{T}}{\Gamma} + \tfrac{\Gamma}{\sqrt{T}} \log(\tfrac{1}{\delta_1})\right)]$$

\end{theorem}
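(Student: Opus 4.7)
\textbf{Proof Plan for Theorem~\ref{thm:iteration}.}

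The plan is to combine the PAC-Bayesian bound established immediately before the theorem (via Donsker--Varadhan and Theorem~\ref{thm:moment_bound}) with an appropriately chosen stopping time. Recall that the preceding display showed: for any stopping time $H$ and any $\gamma \leq \min(\tfrac{1}{\Gamma},\tfrac{1}{2\sqrt{\Gamma g\sqrt{T}}})$, conditioned on $\cB_T$, with probability at least $1-\delta_1$,
\begin{equation}\label{eq:plan-pb}
\|M_{\min(H,T)}\| \;\lesssim\; \sqrt{T\bar q + \gamma^2 \bar p\, g^2 T^2 + \tfrac{1}{\gamma^2}\log\!\tfrac{1}{\delta_1 \bP(\cB_T)}}.
\end{equation}
First I would fix $g'$ so that the right-hand side of~\eqref{eq:plan-pb} is at most $g'\sqrt{T}$; matching the three terms yields exactly $(g')^2 = C\bigl[\bar q + \gamma^2 \bar p g^2 T + \tfrac{\log(1/\delta_1)}{\gamma^2 T}\bigr]$, after absorbing the factor $\log(1/\bP(\cB_T)) \leq \log 2$ (using the hypothesis $\bP(\cB_T^c)\leq \delta \leq \tfrac12$, so $\bP(\cB_T)\geq \tfrac12$) into the constant $C$.

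Next, define the stopping time $H := \inf\{t \leq T : \|M_t\| > g'\sqrt{T}\}$, with $H = T+1$ if no such $t$ exists. On the good event in~\eqref{eq:plan-pb} and on $\cB_T$, we have $\|M_{\min(H,T)}\| \leq g'\sqrt{T}$; but if $H \leq T$ then by definition $\|M_H\| > g'\sqrt{T}$, a contradiction. Hence $H = T+1$, i.e.\ $\sup_{t\leq T}\|M_t\| \leq g'\sqrt{T}$. A union bound with $\bP(\cB_T^c)\leq \delta$ then gives
\[
\bP\!\bigl(\sup_{t\leq T}\|M_t\| > g'\sqrt{T}\bigr) \;\leq\; \bP(\cB_T^c) + \bP\!\bigl(\{\sup_t\|M_t\|>g'\sqrt{T}\}\cap \cB_T\bigr) \leq \delta + \delta_1,
\]
establishing the first (unoptimized) form of the theorem.

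For the second part, I would optimize the choice of $\gamma$ subject to the admissibility constraint. When $g \geq c_0\,\Gamma/\sqrt{T}$, we have $\Gamma \leq g\sqrt{T}/c_0$, so the binding constraint is $\gamma^2 \leq \tfrac{1}{4\Gamma g\sqrt{T}}$. Taking $\gamma^2 = \tfrac{1}{4\Gamma g\sqrt{T}}$ (valid provided $c_0$ is not too small; otherwise the constant $C_{\mathsf{iter}}(c_0)$ absorbs the slack), the middle term becomes $\gamma^2 \bar p g^2 T = \tfrac{\bar p g \sqrt{T}}{4\Gamma}$ and the tail term becomes $\tfrac{\log(1/\delta_1)}{\gamma^2 T} = \tfrac{4\Gamma g \log(1/\delta_1)}{\sqrt{T}}$, which together give
\[
(g')^2 \;\leq\; C_{\mathsf{iter}}(c_0)\Bigl[\bar q + g\Bigl(\tfrac{\bar p \sqrt{T}}{\Gamma} + \tfrac{\Gamma}{\sqrt{T}}\log\tfrac{1}{\delta_1}\Bigr)\Bigr],
\]
as claimed.

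\textbf{Main obstacle.} The technical heavy lifting (building the exponential supermartingale in Theorem~\ref{thm:sup_mart}, evaluating the Gaussian PAC-Bayes integral, and optimizing over the family $\{\cN(\alpha\xi,\vI)\}$) has already been carried out in the paragraphs leading up to the theorem statement. What remains is a careful stopping-time/union-bound argument; the only subtle point is justifying that the factor $\log(1/(\delta_1\bP(\cB_T)))$ can be replaced by $\log(1/\delta_1)$ uniformly, which relies on the hypothesis $\delta < \tfrac12$ to guarantee $\bP(\cB_T) \geq \tfrac12$. Everything else is algebraic manipulation of the two competing terms in $\gamma$.
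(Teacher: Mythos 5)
Your proposal is correct and follows essentially the same route as the paper's proof: the conditional PAC--Bayes bound for an arbitrary stopping time, a stopping time at the first threshold crossing to upgrade to a supremum bound, the union bound with $\bP(\cB_T^{\complement})\leq\delta$ (using $\delta\leq\tfrac12$ to absorb $\log(1/\bP(\cB_T))$ into the constant), and the choice $\gamma\asymp 1/\sqrt{\Gamma g\sqrt{T}}$ under $g\geq c_0\Gamma/\sqrt{T}$ for the second form. The only cosmetic difference is that the paper picks $\gamma=c_1(c_0)/\sqrt{\Gamma g\sqrt{T}}$ with $c_1(c_0)$ small enough to satisfy both constraints at once, whereas you saturate $\gamma^2=\tfrac{1}{4\Gamma g\sqrt{T}}$ and absorb the small-$c_0$ case into $C_{\mathsf{iter}}(c_0)$; these are equivalent.
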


\begin{proof}
  Since $\delta \leq \tfrac{1}{2}$, we conclude that $\bP(\cB_T) \geq \tfrac{1}{2}$.  Given that $M_t$ satisfies $(g,T,\delta)$ uniform concentration. We conclude from the discussion above that for some universal constant $C$ and any $\gamma \leq \min\left(\tfrac{1}{\Gamma}, \frac{1}{2\sqrt{\Gamma g\sqrt{T}}}\right)$, we have:
$$\sup_{H}\bP(\|M_{\min(H,T)}\|^2 \geq  C [T\bar{q}+ \gamma^2\bar{p} g^2 T^2 + \tfrac{1}{\gamma^2 }\log\left(\tfrac{1}{\delta_1}\right)]\bigr|\cB_T) \leq \delta_1$$

Picking $H$ to be the stopping time given by $H = \inf\{t \geq 0 : \|M_t\|^2 \geq C [T\bar{q}+ \gamma^2\bar{p} g^2 T^2 + \tfrac{1}{\gamma^2 }\log\left(\tfrac{1}{\delta_1}\right)] \}$ where $C$ is the same constant as in the equation above, we conclude:

$$\bP(\sup_{t\leq T}\|M_{t}\|^2 \geq  C [T\bar{q}+ \gamma^2\bar{p} g^2 T^2 + \tfrac{1}{\gamma^2 }\log\left(\tfrac{1}{\delta_1}\right)]\bigr|\cB_T) \leq \delta_1$$ 

Only in this proof, call the event $\mathcal{G} := \{\sup_{t\leq T}\|M_{t}\|^2 \geq  C [T\bar{q}+ \gamma^2\bar{p} g^2 T^2 + \tfrac{1}{\gamma^2 }\log\left(\tfrac{1}{\delta_1}\right)]\}$. We have:

$$ \bP(\mathcal{G}) = \bP(\mathcal{G}\cap \cB_T ) + \bP(\mathcal{G}\cap \cB_T^{\complement}) \leq \bP(\mathcal{G}|\cB_T ) + \bP(\cB_T^{\complement}) \leq \delta_1 + \delta$$
Whenever $g \geq c_0 \frac{\Gamma}{\sqrt{T}}$, we can pick $\lambda = \tfrac{c_1(c_0)}{\sqrt{\Gamma g\sqrt{T}}}$ and conclude the result. 

\end{proof}

We now state consider Lemma 11 from \cite{agarwal2021online}.

\begin{lemma}\label{lem:hyper_contract}
Suppose $\alpha,\beta \leq 0$ with $\alpha + \beta > 0$. Consider the function $f:\mathbb{R}^{+}\to \mathbb{R}^{+}$ given by $f(u) = \alpha + \beta \sqrt{u}$. Then, $f$ has the unique fixed point: $u^{*} := \left(\frac{\beta + \sqrt{\beta^2+4\alpha}}{2}\right)^2$. For $t\in \mathbb{N}$, denoting $f^{(t)}$ to be the $t$ fold composition of $f$ with itself, we have for any $u\in \mathbb{R}^{+}$:

$$|f^{(t)}(u) - u^{*}| \leq \beta^{(2-\tfrac{1}{2^{t-1}})}|u-u^{*}|^{\tfrac{1}{2^{t}}}\,.$$
\end{lemma}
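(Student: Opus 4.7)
\textbf{Proof proposal for Lemma~\ref{lem:hyper_contract}.}

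The plan is to first locate the fixed point by reducing to a quadratic, and then to establish the iterated bound via a single-step contraction estimate of the form $|f(u)-u^{*}| \le \beta \sqrt{|u-u^{*}|}$, composed $t$ times by induction. Throughout, I will read the hypothesis as $\alpha,\beta \ge 0$ with $\alpha+\beta>0$ (the statement's ``$\le 0$'' must be a typo, since otherwise $f$ does not map $\mathbb{R}^{+}$ into itself and $\sqrt{\beta^{2}+4\alpha}$ need not be real).

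First I would solve for the fixed point. Writing $u=\alpha+\beta\sqrt{u}$ and substituting $v=\sqrt{u}\ge 0$, the equation $v^{2}-\beta v-\alpha=0$ has a unique nonnegative root $v^{*}=\tfrac{\beta+\sqrt{\beta^{2}+4\alpha}}{2}$, so $u^{*}=(v^{*})^{2}$ is the unique fixed point in $\mathbb{R}^{+}$. The condition $\alpha+\beta>0$ guarantees $v^{*}>0$ and hence $u^{*}>0$, which will be used implicitly so that the bounds below are not vacuous.

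The crux of the argument is the one-step estimate. Subtracting the fixed-point identity $u^{*}=\alpha+\beta\sqrt{u^{*}}$ from $f(u)=\alpha+\beta\sqrt{u}$ yields
\[
f(u)-u^{*} \;=\; \beta\bigl(\sqrt{u}-\sqrt{u^{*}}\bigr).
\]
Applying the elementary inequality $|\sqrt{a}-\sqrt{b}|\le\sqrt{|a-b|}$ for $a,b\ge 0$ (which follows from $(\sqrt{a}-\sqrt{b})^{2}\le|a-b|$) gives the base contraction
\[
|f(u)-u^{*}| \;\le\; \beta\,|u-u^{*}|^{1/2}.
\]
This matches the claimed bound at $t=1$, since $\beta^{2-1/2^{0}}=\beta$.

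Finally I would close by induction on $t$. Assuming $|f^{(t)}(u)-u^{*}|\le \beta^{2-1/2^{t-1}}|u-u^{*}|^{1/2^{t}}$, apply the one-step estimate to $f^{(t)}(u)$ in place of $u$:
\[
|f^{(t+1)}(u)-u^{*}|
\;\le\; \beta\,|f^{(t)}(u)-u^{*}|^{1/2}
\;\le\; \beta\cdot\beta^{1-1/2^{t}}|u-u^{*}|^{1/2^{t+1}}
\;=\; \beta^{2-1/2^{t}}|u-u^{*}|^{1/2^{t+1}},
\]
which is exactly the bound at step $t+1$. There is no real obstacle here; the only delicate point is bookkeeping the exponent $2-1/2^{t-1}$ across the inductive step, and checking that the single-step inequality $|\sqrt{a}-\sqrt{b}|\le\sqrt{|a-b|}$ is tight enough to avoid dimension-dependent losses (it is, because it is sharp when one of $a,b$ is zero). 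No other structure of $f$ is used.
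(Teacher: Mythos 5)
Your proof is correct, and in fact it supplies something the paper itself does not: the paper simply imports this statement as Lemma 11 of \citet{agarwal2021online} and gives no argument, so your one-step contraction $|f(u)-u^{*}| = \beta\,|\sqrt{u}-\sqrt{u^{*}}| \le \beta\,|u-u^{*}|^{1/2}$ followed by induction is a clean self-contained derivation consistent with how the cited source argues. Your reading of the hypothesis as $\alpha,\beta \ge 0$ is the right one (as stated, $\alpha,\beta\le 0$ contradicts $\alpha+\beta>0$), the elementary inequality $|\sqrt{a}-\sqrt{b}|\le\sqrt{|a-b|}$ is valid, and the exponent bookkeeping $(2-1/2^{t-1})/2 = 1-1/2^{t}$ checks out. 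One minor caveat: uniqueness of the fixed point as you argue it requires $\alpha>0$; when $\alpha=0$ and $\beta>0$ the map $f(u)=\beta\sqrt{u}$ also fixes $u=0$, so $u^{*}=\beta^{2}$ is only the unique \emph{positive} fixed point in that degenerate case. This does not affect the contraction estimate, which uses only the identity $u^{*}=\alpha+\beta\sqrt{u^{*}}$ and not uniqueness, so the bound — the only part of the lemma actually used in the proof of Theorem~\ref{thm:final_bound} — stands as you proved it.
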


We are now ready to prove the main theorem~\ref{thm:final_bound}

\begin{proof}[Proof of Theorem~\ref{thm:final_bound}]
It is sufficient to show that there exists $K  = \log \Theta(\log (\Gamma T d\log(\tfrac{1}{\delta}))))$
such that $M_t$ obeys $(g,T,\delta)$ uniform concentration where $g = C\max(\tfrac{\Gamma}{\sqrt{T}}, \bar{q} + \frac{\bar{p}\sqrt{T}}{\Gamma} + \frac{\Gamma}{\sqrt{T}}\log(\tfrac{K}{\delta}))$

Let $K \in \mathbb{N}$ be any fixed integer. By Theorem~\ref{thm:first_bound}, we conclude that the martingale $M_t$ is $(g_0(\frac{\delta}{K}),T,\tfrac{\delta}{K})$ uniformly concentrated. Fix some $c_0 > 0$ and $C_{\mathsf{iter}}(c_0)$ be as in Theorem~\ref{thm:iteration}.   

Define the sequence $g_{i} := \sqrt{C_{\mathsf{iter}}(c_0)\bar{q}} + \sqrt{C_{\mathsf{iter}}(c_0)g_{i-1} G}$ where $G = \frac{\bar{p}\sqrt{T}}{\Gamma} + \frac{\Gamma}{\sqrt{T}}\log(\tfrac{K}{\delta})$.

If $g_0 \leq c_0\frac{\Gamma}{\sqrt{T}}$, then the statement of the theorem follows. Suppose there exists $K_1 \leq K-1$ such that $g_{K_1}\leq \frac{c_0 \Gamma}{\sqrt{T}}$ and suppose that it is the first such integer. If $K_1 = 0$, the statement of the theorem follows from  $(g_0(\frac{\delta}{K}),T,\tfrac{\delta}{K})$ uniform concentration of $M_t$. Suppose $1 \leq K_1 \leq K-1$. Then, $\min(g_0,\dots,g_{K_1-1}) \geq c_0 \frac{\Gamma}{\sqrt{T}}$. Then, by Theorem~\ref{thm:iteration}, the fact that $\sqrt{x+y}\leq \sqrt{x} + \sqrt{y}$ and induction, we conclude that $M_t$ obeys $(g_{i},T,\tfrac{(i+1)\delta}{K})$ for every $i\leq K_1$. Thus we conclude the statement of the theorem.

Suppose such a $K_1$ does not exist. Then, $\min(g_0,\dots,g_{K-1}) \geq c_0 \frac{\Gamma}{\sqrt{T}}$. Then, by Theorem~\ref{thm:iteration}, the fact that $\sqrt{x+y}\leq \sqrt{x} + \sqrt{y}$ and induction, we conclude that $M_t$ obeys $(g_{i},T,\tfrac{(i+1)\delta}{K})$ for every $i\leq K-1$. Therefore, it obeys $(g_K,T,\delta)$ uniform concentration.

Consider the function $f$ in Lemma~\ref{lem:hyper_contract} with $\alpha = \sqrt{C_{\mathsf{iter}}(c_0) \bar{q}}$ and $\beta = \sqrt{C_{\mathsf{iter}}(c_0)G}$ and let the corresponding fixed point be denoted by $g^{*}$. It is easy to show that the fixed point $g^{*} \lesssim \sqrt{\bar{q}} + G$. After $K$ iterations, we must have:

$$g_K \leq g^* + (C_{\mathsf{iter}}(c_0)G^{1-\frac{1}{2^{K}}})|g_0 - g^{*}|^{\frac{1}{2^{K}}} \lesssim g^* + (G^{1-\frac{1}{2^{K}}})|g_0|^{\frac{1}{2^{K}}} $$

We can show that picking $K  = \log \Theta(\log((1+\frac{\sqrt{\bar{q}T}}{\Gamma})\log d))$, and the bound on $\Gamma$, we conclude the result. 
\end{proof}

\subsection{Proof of Theorem \ref{thm:quad_variation_Conc}}

\begin{proof}[Proof of Theorem~\ref{thm:quad_variation_Conc}]
Recall that $\Sigma_t := \bE[\vv_t\vv_t^{\intercal}|\cF_{t-1}]$, $\nu_t := \|\Sigma_t\|$ and $N_t := \|M_t\|^2 - \sum_{s=1}^{t}\|\vv_t\|^2$. Note that $\nu_t \leq p_t$ and $\tr(\Sigma_t)\leq p_t$ almost surely.

Let $\gamma \in \bR$. Define $h_N(t) := \sum_{s=1}^{t} \log\left(1 + 4\gamma^2 p_s g^2 T \exp(2|\gamma| \Gamma g\sqrt{T} ) \right)$ with empty sum denoting $0$. We first show that $N_t^{\mathsf{exp}} = \exp(\gamma N_t - h_N (t))\mathbbm{1}(\cB_T)$ is a super martingale with respect to the filtration $\cF_t$ for $0\leq t\leq T$. For $T \geq t > 1$, we have:
\begin{align}
    &\bE[\exp(\gamma N_t)\mathbbm{1}(\cB_{t})|\cF_{t-1}] = \exp(\gamma N_{t-1})\mathbbm{1}(\cB_{t-1})\bE[ \exp(2\gamma \langle \vv_t,M_{t-1}\rangle)\mathbbm{1}(\cB_{t})|\cF_{t-1}] \nonumber \\
    &\leq \exp(\gamma N_{t-1})\mathbbm{1}(\cB_{t-1})\bE[ \sum_{k =0}^{\infty}\frac{1}{k!}2^k\gamma^k \langle \vv_t,M_{t-1}\rangle^{k}\mathbbm{1}(\cB_{t-1})|\cF_{t-1}] \nonumber \\
    &=  \exp(\gamma N_{t-1})\mathbbm{1}(\cB_{t-1})\bE[ \mathbbm{1}(\cB_{t-1})+ \sum_{k =2}^{\infty}\frac{1}{k!}2^k\gamma^k \langle \vv_t,M_{t-1}\rangle^{k}\mathbbm{1}(\cB_{t-1})|\cF_{t-1}]  \nonumber \\
    &\leq \exp(\gamma N_{t-1})\mathbbm{1}(\cB_{t-1})\bE[ 1 + \sum_{k =2}^{\infty}\frac{1}{k!}2^k|\gamma|^k \langle \vv_t,M_{t-1}\rangle^{2}\Gamma^{k-2} \|M_{t-1}\|^{k-2}\mathbbm{1}(\cB_{t-1})|\cF_{t-1}] \nonumber\\
     &\leq \exp(\gamma N_{t-1})\mathbbm{1}(\cB_{t-1})\bE[ 1 + 4\gamma^2 \langle \vv_t,M_{t-1}\rangle^{2} \exp(2|\gamma| \Gamma \|M_{t-1}\|)\mathbbm{1}(\cB_{t-1})|\cF_{t-1}] \nonumber\\
     &\leq \exp(\gamma N_{t-1})\mathbbm{1}(\cB_{t-1})\bE[ 1 + 4\gamma^2 \nu_t \|M_{t-1}\|^2 \exp(2|\gamma| \Gamma \|M_{t-1}\|)\mathbbm{1}(\cB_{t-1})|\cF_{t-1}] \nonumber \\
     &\leq \exp(\gamma N_{t-1})\mathbbm{1}(\cB_{t-1})\left(1 + 4\gamma^2 \nu_t g^2 T \exp(2|\gamma| \Gamma g\sqrt{T} ) \right) \nonumber \\
&= \exp(\gamma N_{t-1} + h_N(t)-h_N(t-1))\mathbbm{1}(\cB_{t-1})
\end{align}

This shows that $N_t^{\mathsf{exp}}$ is a super-martingale. Using the fact that $N_1^{\mathsf{exp}}  = 1$ almost surely, the optional stopping theorem and the Chernoff bound, we conclude that for any stopping time $H$, we have for any $\alpha,\gamma >0$

\begin{align}
    \bP(\{N_{\min(T,H)} > \alpha \}\cap \cB_T) &\leq \bE[ \exp(\gamma N_{\min(T,H)} -\gamma \alpha)\mathbbm{1}(\cB_{T})] \nonumber \\
&\leq \bE[ \exp(\gamma N_{\min(T,H)} -\gamma \alpha)\mathbbm{1}(\cB_{\min(T,H)})] \nonumber \\
&\leq \bE[N^{\mathsf{exp}}_{\min(T,H)}]\exp(h_N(T)-\gamma\alpha) \nonumber \\
&\leq \exp(h_N(T)-\gamma\alpha) 
\end{align}

Taking $\gamma = \frac{1}{2\Gamma g\sqrt{T}}$ allows us to conclude:

$$\bP(\{N_{\min(T,H)}> \Gamma Cg\sqrt{T} \log(\tfrac{2}{\delta}) + \tfrac{C\nu g T^{3/2}}{\Gamma}\}\cap \cB_T) \leq \frac{\delta}{2}$$ 

Let $\alpha = \Gamma Cg\sqrt{T} \log(\tfrac{2}{\delta}) + \frac{C\nu g T^{3/2}}{\Gamma}$ and take $H$ to be the stopping time $\min(\inf_t\{t > 0: N_t > \alpha\},T)$ where infimum of an empty set is taken to be infinity. We note that $\{\sup_{t \leq T}N_{t} > \alpha\} = \{N_{\min(T,H)} > \alpha\} $. We thus conclude:

$$\bP(\{\sup_{t\leq T}N_{t}> \Gamma Cg\sqrt{T} \log(\tfrac{2}{\delta}) + \tfrac{C\nu g T^{3/2}}{\Gamma}\}\cap \cB_T) \leq \frac{\delta}{2}$$

Taking $\gamma$ negative gives the analogous proof for $N_t < -\alpha$.

\end{proof}

\subsection{Proof of Corollary \ref{cor:pac-bayes-quad-variation}}
\label{prf:pac-bayes-quad-variation-proof}
\begin{proof}
    Consider the set $S = \{\mathsf{UP}(t): 0\leq t \leq T\}$. The, $|S| \leq \log_2(T) + 1$. 
By Corollary~\ref{cor:sq_sum_Conc}, we have for any $t_0 \in S$, the following is true with probability $1-\frac{\delta}{1+\log_2(T)}$

$$\sum_{s=1}^{t_0}\|\vv_s\|^2 \leq t_0 g^2(t_0,\frac{\delta}{3(1+\log_2(T))})$$

Therefore, by union bound of the above event over every $t_0 \in S$, we have with probability $1-\delta$:
$$\sup_{t_0 \in S}\sum_{s=1}^{t_0}\|\vv_s\|^2 \leq t_0 g^2(t_0,\frac{\delta}{3(1+\log_2(T))}) \leq 0$$

Now, note that $\sum_{s=1}^{t}\|\vv_s\|^2 \leq \sum_{s=1}^{\mathsf{UP}(t)}\|\vv_s\|^2 $ almost surely for every $t \in [T]$ since $t \leq \mathsf{UP}(t)$. Therefore, we conclude that with probability at-least $1-\delta$, the following holds for all $ t \in [T]$ simultaneously:

$$\sum_{s=1}^{t}\|\vv_s\|^2 \leq g^2\left(\mathsf{UP}(t),\tfrac{\delta}{3(1+\log_2(T))}\right)\mathsf{UP}(t) $$

Using the definition of $g(,)$ from Theorem~\ref{thm:final_bound}, we conclude the result. 
\end{proof}

\section{Applications to Streaming Heavy Tailed Statistical Estimation}
\subsection{Streaming Heavy Tailed Mean Estimation : Proof of Corollary \ref{cor:mean_estimation}}
\label{prf:mean-estimation-cor}
\begin{proof}
Recall that for this problem, $\Xi = \cC$, $\bE_{\xi \sim P}[\xi] = \vm \in \cC$ and $\Cov[\xi] \preceq \Sigma$. Consider the following quadratic loss function $f : \cC \to \bR$:
\begin{align*}
    f(\vx;\xi) = \tfrac{1}{2} \|\vx - \xi\|^2, \qquad \xi \sim P
\end{align*}
The associated population risk function $F$ is given by
\begin{align*}
    F(\vx) &= \frac{1}{2}\cdot \bE_{\xi \sim P} \left[\|\vx - \xi\|^2\right] = F(\vx) = \frac{1}{2}\|\vx - \vm\|^2 + \Tr(\Cov_{\xi \sim P}[\xi])
\end{align*}
Note that $F$ is $L$-smooth and $\mu$-strongly convex with $L = \mu = 1$. Thus, $\kappa = 1$. Furthermore, $\vm$ is the unique minimizer of $F$. Hence, solving the streaming heavy tailed mean estimation problem is equivalent to solving the \bref{eqn:SCO} problem for $F$. To this end, we consider the following stochastic gradient oracle:
\begin{align*}
    g(\vx;\xi) &= \vx - \xi
\end{align*}
It is easy to see that $\bE_{\vy}[g(\vx;\xi)] = \nabla F(\vx)$, i.e., the stochastic gradient estimate is unbiased. The associated stochastic gradient noise $\vn(\vx;\xi)$ is given by
\begin{align*}
    \vn(\vx;\xi) &= \nabla F(\vx) - \nabla f_\vy (\vx) = \vy - \vm
\end{align*}
We now note that
\begin{align*}
    \Sigma(\vx) = \bE[\vn(\vx;\xi) \vn(\vx;\xi)^T] = \bE[(\vy - \vm)(\vy - \vm)^T] = \Tr(\Cov_{\xi \sim P}[\xi]) \preceq \Sigma
\end{align*}
Hence, we note that the \bref{as:second_moment} assumption is satisfied. Hence, the result follows by an application of Theorem \ref{thm:SGDcl-smooth-str-cvx}
\end{proof}
\subsection{Streaming Heavy Tailed Linear Regression : Proof of Corollary \ref{cor:regresssion}}
\label{prf:regression-cor}
We use $\theta \in \cC$ to denote the parameter of $F$. Recall from Section \ref{sec:linear-regression-application} that $\Xi = \bR^d \times \bR$, and given a target parameter $\theta^* \in \cC$,  $P$ defines the following linear model: \small
\begin{align*}
    \vx \sim Q, \ \bE[\vx] = 0, \ \bE[\vx \vx^T] = \Sigma \succ 0; \qquad y = \dotp{\vx}{\theta^*} + \epsilon, \ \bE[\epsilon | \vx] = 0, \ \bE[\epsilon^2 | \vx] \leq \sigma^2
\end{align*}\normalsize
In addition, we make the following bounded $4^{\mathsf{th}}$ moment asumption on the covariates $\vx$
\small
\begin{align*}
    \bE[\dotp{\vx}{\vv}^4] \leq C_4 (\bE[\dotp{\vx}{\vv}^2])^2\qquad \forall \ \vv \in \bR^d
\end{align*}\normalsize
for some numerical constant $C_4 \geq 1$. Recall that the sample loss function is given by:
\begin{align*}
    f(\theta;\vx,\vy) &= \frac{1}{2} \left(\dotp{\theta}{\vx} - \vy\right)^2 = \frac{1}{2} \left(\dotp{\theta - \theta^*}{\vx} - \epsilon\right)^2
\end{align*}
Using the fact that $\bE[\epsilon | \vx] = 0$, $\bE[\vx] = 0$ and $\bE[\vx \vx^T] = \Sigma$
\begin{align*}
F(\theta) &= \frac{1}{2}  (\theta - \theta^*)^T \bE[\vx \vx^T] (\theta - \theta^*) + \bE[\epsilon^2] \\
&= \frac{1}{2} (\theta - \theta^*)^T \Sigma (\theta - \theta^*) + \bE[\epsilon^2]
\end{align*}
We note that $\bE[\epsilon^2] \leq \sigma^2$ as per our assumption hence $F$ is well defined. Furthermore. 
\begin{align*}
    \nabla F(\theta) &= \Sigma(\theta - \theta^*) \\
    \nabla^2 F(\theta) &= \Sigma
\end{align*}
Thus, the population risk $F$ is $L$-smooth and $\mu$-strongly convex with $L = \| \Sigma \|_2$ and $\mu = \lambda_{\min}(\Sigma)$, i.e., $\kappa = \tfrac{\|\Sigma\|_2}{\lambda_{\min}(\Sigma)}$. Furthermore, the unique minimizer of $F$ is $\theta^*$. Hence, $\kappa = \tfrac{\|\Sigma\|_2}{\lambda_{\min}(\Sigma)}$ the linear regression task of estimating $\theta^*$ is equivalent to solving \ref{eqn:SCO} for the above objective.

The associated stochastic gradient oracle $g(\theta;\vx,\vy)$ at any $\theta \in \cC$ is given by:
\begin{align*}
    g(\theta;\vx,\vy) &= \nabla f(\theta;\vx,\vy) = \vx(\dotp{\theta}{\vx} - \vy) = \vx \left(\dotp{\theta - \theta^*}{\vx} - \epsilon\right) \\
    &= \vx \vx^T (\theta - \theta^*) - \vx \epsilon
\end{align*}
We first show that $g(\theta;\vx,\vy))$ is indeed an unbiased estimate of $\nabla F(\theta)$
\begin{align*}
    \bE[g(\theta;\vx,\vy)] &= \bE[\vx \vx^T] (\theta - \theta^*) - \bE[\vx \bE[\epsilon | \vx]] = \Sigma(\theta - \theta^*) = \nabla F(\theta)
\end{align*}
The associated stochastic gradient noise $\vn(\theta;\vx,\vy)(\theta)$ is given by
\begin{align*}
    \vn(\theta;\vx,\vy)(\theta) &= g(\theta;\vx,\vy)(\theta) - \nabla F(\vx) \\
    &= \left(\vx \vx^T - \Sigma\right)\left(\theta - \theta^*\right) - \vx \epsilon
\end{align*}
$\Sigma(\theta) = \bE[\vn(\theta;\vx,\vy) \vn(\theta;\vx,\vy)]$. For convenience, we use $\vM = \vx \vx^T - \Sigma$ and $\vd_\theta = \theta - \theta^*$ and note that $\vM$ is symmetric. It follows that:
\begin{align*}
    \Sigma(\theta) &= \bE\left[\left(\vM \vd_\theta - \vx \epsilon\right) \left(\vM \vd_\theta - \vx \epsilon\right)^T\right] \\
    &= \bE\left[\vM \vd_\theta \vd_\theta^T \vM \right] + \bE\left[\vx \vx^T \cdot \bE\left[\epsilon^2 | \vx\right]\right] - \bE[\vx \vd_\theta^T \vM \cdot \bE[\epsilon | \vx]] - \bE[\vM \vd_\theta \vx^T \cdot \bE[\epsilon | \vx]] \\
    &\preceq \bE\left[\vM \vd_\theta \vd_\theta^T \vM \right] + \sigma^2 \Sigma
\end{align*}
where we use the fact that $\bE[\epsilon | \vx] = 0, \bE[\epsilon^2 | \vx] \leq \sigma^2$ and $\bE[\vx \vx^T] = \Sigma$.

We shall now upper bound $\|\Sigma(\theta)\|_2$. To do so, we define $\vA(\theta) = \bE\left[\vM \vd_\theta \vd_\theta^T \vM \right]$ and note that $\vA(\theta)$ is a PSD matrix since for any $\vv \in \bR^d$, $\vv^T \vA(\theta) \vv = \bE\left[(\vv^T \vM \vd_\theta)^2\right] \geq 0$. Without loss of generality, we assume $\theta \neq \theta^*$ and observe that 
\begin{align*}
    \sup_{\|\vv\|=1} \bE[\vv^T \vA(\theta) \vv] &= \sup_{\|\vv\|=1} \bE[\dotp{\vd_\theta}{\vM \vv}^2] \\
    &= \|\vd_\theta\|^2 \sup_{\|\vv\|=1} \bE[\dotp{\tfrac{\vd_\theta}{\|\vd_\theta\|}}{\vM \vv}^2] \\
    &\leq \|\vd_\theta\|^2\sup_{\|\vv\|=1, \|\vw\| = 1} \bE[\dotp{\vw}{\vM \vv}^2] \\
    &= \|\vd_\theta\|^2\sup_{\|\vv\|=1, \|\vw\| = 1} \bE[\left(\vw^T \left(\vx \vx^T - \Sigma\right) \vv\right)^2] \\
    &\leq \|\vd_\theta\|^2\sup_{\|\vv\|=1, \|\vw\| = 1} \bE\left[\left(\dotp{\vw}{\vx} \cdot \dotp{\vv}{\vx} - \vw^T \Sigma \vv\right)^2\right] \\
    &\leq \|\vd_\theta\|^2\sup_{\|\vv\|=1, \|\vw\| = 1} 2\left(\vw^T \Sigma \vv\right)^2 + 2\bE\left[\dotp{\vw}{\vx}^2 \dotp{\vv}{\vx}^2\right] \\
    &\leq 2\|\vd_\theta\|^2\left(\|\Sigma\|_2^2 +  \sup_{\|\vv\|=1, \|\vw\| = 1}\sqrt{\bE[\dotp{\vw}{\vx}^4]}\sqrt{\bE[\dotp{\vv}{\vx}^4]}\right) \\
    &\leq 2\|\vd_\theta\|^2\left(\|\Sigma\|_2^2 + C_4 \sup_{\|\vv\|=1, \|\vw\| = 1}\bE[\dotp{\vw}{\vx}^2]\cdot \bE[\dotp{\vv}{\vx}^2]\right) \\
    &\leq 2\|\vd_\theta\|^2\left(\|\Sigma\|_2^2 + C_4 \sup_{\|\vw\| = 1} \vw^T \Sigma \vw \ \cdot \sup_{\|\vv\|=1} \vv^T \Sigma \vv\right) \\
    &\leq \|\vd_\theta\|^2 \cdot 2\|\Sigma\|^2(C_4 + 1)
\end{align*}
where we use the fourth moment assumption on the covariates in the eighth step. Note that the above bound also holds when $\theta = \theta^*$ since in that case $\vA(\theta) = 0$ and $\vd_\theta = 0$. It follows that
\begin{align*}
    \|\Sigma(\theta)\| &\leq \|A(\theta)\| + \sigma^2 \|\Sigma\| \\
    &\leq 2 (C_4 + 1) \|\Sigma\|^2 \|\theta - \theta^*\|^2 + \sigma^2 \|\Sigma\|
\end{align*}
We shall now derive an upper bound for $\Tr(\Sigma(\theta))$ as follows:
\begin{align*}
    \Tr(\Sigma(\theta)) &= \bE[\|\vn(\theta;\vx,\vy)\|^2] \\
    &= \bE\left[\| \vM \vd_\theta - \vx \epsilon \|^2\right] \\
    &= \bE[\| \vM \vd_\theta \|^2] - 2 \bE[\dotp{\vM \vd_\theta}{\vx} \bE[\epsilon | \vx]] + \bE[\|\vx\|^2 \bE[\epsilon^2 | \vx]] \\
    &\leq \bE[\| \vM \vd_\theta \|^2] + \sigma^2 \Tr(\Sigma) 
\end{align*}
We now control $\bE[\| \vM \vd_\theta \|^2]$. Note that $\bE[\| \vM \vd_\theta \|^2] = 0$ if $\theta = \theta^*$ so we shall now consider the case when $\theta \neq \theta^*$. To this end, let $\ve_1, \dots, \ve_d$ be an orthonormal basis of $\bR^d$ such that  $\ve_1 = \tfrac{\vd_\theta}{\|\vd_\theta\|}$.

For the remainder of the proof, we use $\Sigma_{ij}$ to denote $\Sigma_{ij} = \ve^T_i \Sigma \ve_j$ where $i, j \in [d]$, which implies that $\Tr(\Sigma) = \sum_{i=1}^{d} \Sigma_{ii}$. We also note that for any two symmetric matrices $\vB, \vC$, $(\vB - \vC)^2 \preceq 2 \vB^2 + 2 \vC^2$. Hence,
\begin{align*}
    \bE[\|\vM \vd_\theta\|^2] &= \|\vd_\theta\|^2 \bE\left[\|\vM \ve_1\|^2\right] \\
    &= \|\vd_\theta\|^2\bE\left[\ve^T_1 (\Sigma - \vx \vx^T)^2 \ve_1\right] \\
    &\leq 2 \|\vd_\theta\|^2 \bE\left[ \ve^T_1 \left(\Sigma^2 +(\vx \vx^T)^2\right) \ve_1\right] \\
    &\leq 2 \|\vd_\theta\|^2 \left(\ve^T_1 \Sigma^2 \ve_1 + \bE\left[\dotp{\ve_1}{\vx}^2 \| \vx\|^2\right] \right) \\
    &\leq 2 \|\vd_\theta\|^2 \left(\|\Sigma^2\| + \bE\left[\dotp{\ve_1}{\vx}^2 \sum_{i=1}^{d} \dotp{\ve_i}{\vx}^2\right] \right) \\
    &\leq 2 \|\vd_\theta\|^2 \left(\|\Sigma\|^2 + \bE\left[\dotp{\ve_1}{\vx}^4\right] + \sum_{i=2}^{d} \bE[\dotp{\ve_1}{\vx}^2 \dotp{\ve_i}{\vx}^2]\right) \\
    &\leq 2 \|\vd_\theta\|^2 \left(\|\Sigma\|^2 + \bE\left[\dotp{\ve_1}{\vx}^4\right] + \sum_{i=2}^{d} \sqrt{\bE\left[\dotp{\ve_1}{\vx}^4\right]\bE\left[\dotp{\ve_i}{\vx}^4\right]}\right) \\
    &\leq 2 \|\vd_\theta\|^2 \left(\|\Sigma\|^2 + C_4\bE\left[\dotp{\ve_1}{\vx}^2\right]^2 + C_4\sum_{i=2}^{d} {\bE\left[\dotp{\ve_1}{\vx}^2\right]\bE\left[\dotp{\ve_i}{\vx}^2\right]}\right) \\
    &\leq 2 \|\vd_\theta\|^2 \left(\|\Sigma\|^2 + C_4\sum_{i=1}^{d} {\bE\left[\dotp{\ve_1}{\vx}^2\right]\bE\left[\dotp{\ve_i}{\vx}^2\right]}\right) \\
    &\leq 2 \|\vd_\theta\|^2 \left(\|\Sigma\|^2 + C_4(\ve^T_1 \Sigma \ve_1)\sum_{i=1}^{d} (\ve^T_i \Sigma \ve_i)\right) \\
    &\leq 2 \|\vd_\theta\|^2 \left(\|\Sigma\|^2 + C_4 (\ve^T_1 \Sigma \ve_1)\sum_{i=1}^{d} \Sigma_{ii}\right) \\
    &\leq 2 \|\vd_\theta\|^2 \left(\|\Sigma\| \Tr(\Sigma) + C_4 \|\Sigma\| \Tr(\Sigma)\right) \\
    &\leq  2(C_4 + 1)\|\Sigma\|_2\Tr(\Sigma) \|\vd_\theta\|^2 
\end{align*}
Clearly, the above bound holds even when $\theta = \theta^*$. Hence, we infer that
\begin{align*}
    \Tr(\Sigma(\theta)) &\leq 2 (C_4 + 1) \|\Sigma\|_2 \Tr(\Sigma) \| \theta - \theta^* \|^2 + \sigma^2 \Tr(\Sigma)
\end{align*}
From these bounds, we can conclude the following
\begin{align*}
    \|\Sigma(\theta)\| &\leq 2 (C_4 + 1) \|\Sigma\|_2^2 \|\theta - \theta^*\|^2 + \sigma^2 \|\Sigma\| \\
    \Tr(\Sigma(\theta)) &\leq \frac{\Tr(\Sigma)}{\|\Sigma\|_2} \left[2 (C_4 + 1) \|\Sigma\|_2^2 \|\theta - \theta^*\|^2 + \sigma^2 \|\Sigma\|\right]
\end{align*}
Thus, the stochastic gradient oracle satisfies Assumption \ref{as:second_moment_generalized} with $\alpha = 2(C_4 + 1) \|\Sigma\|^2_2$, $\beta = \sigma^2 \|\Sigma\|$ and $\deff = \nicefrac{\Tr(\Sigma)}{\|\Sigma\|}$. Hence, the result follows by an application of Theorem \ref{thm:SGDcl-smooth-str-cvx-gen}
\subsection{Heavy Tailed Streaming Logistic Regression : Proof of Corollary \ref{cor:logistic-regression}}
\label{prf:logistic-regression-cor}
Recall from Section \ref{sec:lad-regression-application} that $\Xi = \bR^d \times \{0, 1\}$ and $P$ denotes the following linear-logistic model:
\begin{align*}
    \vx \sim Q, \ \bE[\vx] = 0, \ \bE[\vx \vx^T] \preceq \Sigma; \qquad y \sim \mathsf{Bernoulli}(\phi(\dotp{\theta^*}{\vx}))
\end{align*}
where $\phi(t) = (1 + e^{-t})^{-1}$. The covariates $\vx$ are heavy tailed, with only bounded second moments.

The sample-level loss is given by the negative log likelihood of $y | \vx$ as follows:
\begin{align*}
    f(\theta;\vx,y) = \ln(1 + \exp(\dotp{\vx}{\theta})) - y \dotp{\vx}{\theta}
\end{align*}
The associated population loss and stochastic gradient oracle is given by
\begin{align*}
    F(\theta) &= \bE_{\vx, y \sim P}\left[\ln(1 + \exp(\dotp{\vx}{\theta})) - y \dotp{\vx}{\theta}\right] \\
    g(\theta; \vx, \vy) &= \phi(\dotp{\vx}{\theta})\vx- y \vx
\end{align*}
We now compute the gradient and the Hessian of $F$
\begin{align*}
    \nabla F(\theta) &= \bE\left[\frac{\exp(\dotp{\vx}{\theta})}{1 + \exp(\dotp{\vx}{\theta})} \cdot \vx - \phi(\dotp{\vx}{\theta^*})\vx\right] \\
    &= \bE\left[\left(\phi(\dotp{\vx}{\theta}) - \phi(\dotp{\vx}{\theta^*})\right)\vx\right]  \\
    \nabla^2 F(\theta) &= \bE[\phi^{\prime}(\dotp{\vx}{\theta})\vx \vx^T] \\
    &= \bE[\phi(\dotp{\vx}{\theta})(1 - \phi(\dotp{\vx}{\theta}))\vx \vx^T]
\end{align*}
Since $0 \leq \phi(t) \leq 1$ for every $t \in \bR$, we note that $0 \preceq \nabla^2 F(\theta) \preceq \bE[\vx \vx^T] \preceq \Sigma$ (as $E[\vx] = 0$).  Hence, $F$ is convex and $L$ smooth with $L = \|\Sigma\|_2$. Furthermore, since $\nabla F(\theta^*) = 0$ and $F$ is convex, we conclude that $\theta^*$ is a minimizer of $F$. 

It is easy to see that $\bE\left[g(\theta;\vx,y)\right] = \bE\left[\left(\phi(\dotp{\vx}{\theta}) - \phi(\dotp{\vx}{\theta^*})\right)\vx\right] = \nabla F(\theta)$, i.e., the stochastic gradient is unbiased. Let $\vn(\theta;{\vx, y})$ denote the stochastic gradient noise, i.e.,:
\begin{align*}
    \vn(\theta;{\vx, y}) &= g(\theta;\vx,y) - \nabla F(\theta) \\
    &= \phi(\dotp{\vx}{\theta})\vx - \bE\left[\phi(\dotp{\vx}{\theta})\vx\right] + \bE[\phi(\dotp{\vx}{\theta^*})\vx] - y \vx 
\end{align*}
We shall now control the stochastic gradient covariance $\Sigma(\theta) = \bE[ \vn(\theta;{\vx, y})\vn(\theta;{\vx, y})^T]$. To this end, we define $\va_{\vx}(\theta)$ and $\vc_{\vx,y}(\theta)$ as follows:
\begin{align*}
    \va_{\vx}(\theta) &= \phi(\dotp{\vx}{\theta})\vx - \bE\left[\phi(\dotp{\vx}{\theta})\vx\right] \\
    \vc_{\vx,y}(\theta) &= \bE[\phi(\dotp{\vx}{\theta^*})\vx] - y \vx
\end{align*}
We note that $\bE\left[\vc_{\vx, y}(\theta) | \vx \right] = 0$ and $\bE[\va_\vx(\theta)] = 0$. Since $ \vn_{\vx, y}(\theta) = \va_{\vx}(\theta) + \vb_{\vx, y}(\theta)$, it follows that:
\begin{align*}
    \Sigma(\theta) = = \bE[ \vn(\theta;{\vx, y})\vn(\theta;{\vx, y})^T] &= \bE\left[\va_{\vx}(\theta) \va_{\vx}(\theta)^T\right] + \bE\left[\vc_{\vx,y}(\theta) \vc_{\vx,y}(\theta)^T\right] 
\end{align*}
We now control each of the terms in the RHS as follows:
\begin{align*}
    \bE\left[\va_{\vx}(\theta) \va_{\vx}(\theta)^T\right] &= \bE[\phi(\dotp{\vx}{\theta})^2\vx \vx^T] - \bE\left[\phi(\dotp{\vx}{\theta})\vx\right]\bE\left[\phi(\dotp{\vx}{\theta})\vx\right]^T \\
    &\preceq \bE[\phi(\dotp{\vx}{\theta})^2\vx \vx^T] \\
    &\preceq \bE[\vx \vx^T] \preceq \Sigma
\end{align*}
where we use the fact that $\phi(t) \leq 1$. Similarly,
\begin{align*}
    \bE\left[\vc_{\vx,y}(\theta) \vc_{\vx,y}(\theta)^T\right] &= \bE[y^2 \vx \vx^T] - \bE[\phi(\dotp{\vx}{\theta^*})\vx]\bE[\phi(\dotp{\vx}{\theta^*})\vx]^T \\
    &\preceq \bE[\phi(\dotp{\vx}{\theta^*}) \vx \vx^T] \\
    &\preceq \bE[\vx \vx^T] \preceq \Sigma
\end{align*}
where we use the fact that $\bE[y^2 | \vx] = \phi(\dotp{\vx}{\theta^*}) \leq 1$. It follows that 
\begin{align*}
    \Sigma(\theta) \preceq 2 \Sigma
\end{align*}
Thus, the stochastic gradient oracle satisfies the \bref{as:second_moment} assumption. Hence, the stochastic gradient oracle satisfies the \ref{as:second_moment} assumption. Thus, the following result, which is a formal version of Corollary \ref{cor:logistic-regression}, is implied by Theorem \ref{thm:SGDcl-smooth-cvx-full}
\begin{corollary}[Heavy Tailed Logistic Regression]
\label{cor:logistic-regression-full}
Under the stochastic subgradient oracle described above, realized using $N \gtrsim \ln(\ln(d))$ i.i.d samples from $P$, the average iterate of Algorithm \ref{alg:SGDcl}, when run under the parameter settings of Theorem \ref{thm:SGDcl-lip-cvx} satisfies the following with probability at least $1 - \delta$:
\begin{align*}
    F(\hat{\theta}_N) - F(\theta^*) &\lesssim  D_1\sqrt{\frac{\Tr(\Sigma) + \sqrt{\|\Sigma\|_2}\left(\sqrt{\Tr(\Sigma)} + \|\Sigma\|_2 D_1\right)\ln(\nicefrac{\ln(N)}{\delta})}{N}} + \frac{\|\Sigma\|_2D^2_1}{N} \\
    &+ \frac{D^2_1 \ln(\nicefrac{\ln(N)}{\delta})}{N} \sqrt{\|\Sigma\|_2\Tr(\Sigma) + \|\Sigma\|^3 D^2_1} + \frac{\|\Sigma\|_2^{\nicefrac{5}{4}} D^3_1 \ln(\nicefrac{\ln(N)}{\delta})^{\nicefrac{3}{2}}}{N^{\nicefrac{3}{2}}} \left(\Tr(\Sigma) + \|\Sigma\|_2^2 D^2_1\right)^{\nicefrac{1}{4}}
\end{align*}
\end{corollary}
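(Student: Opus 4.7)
The plan is to reduce the claim to a direct application of Theorem~\ref{thm:SGDcl-smooth-cvx-full} by verifying that (i) the population loss $F$ is convex and $L$-smooth with $L = \|\Sigma\|_2$, and (ii) the stochastic gradient oracle $g(\theta;\vx,y) = \phi(\dotp{\vx}{\theta})\vx - y\vx$ is unbiased and satisfies the bounded second moment assumption~\ref{as:second_moment} with covariance bound proportional to $\Sigma$. Once these structural facts are in place, the corollary follows by substituting $L = \|\Sigma\|_2$ into the generic smooth convex rate of Theorem~\ref{thm:SGDcl-smooth-cvx-full}.

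First I would differentiate under the expectation to compute $\nabla F(\theta) = \bE[(\phi(\dotp{\vx}{\theta}) - \phi(\dotp{\vx}{\theta^*}))\vx]$ and $\nabla^2 F(\theta) = \bE[\phi(\dotp{\vx}{\theta})(1-\phi(\dotp{\vx}{\theta}))\vx\vx^T]$. Since $0 \leq \phi(1-\phi) \leq 1$, one has $0 \preceq \nabla^2 F(\theta) \preceq \bE[\vx\vx^T] \preceq \Sigma$, establishing convexity and smoothness with $L = \|\Sigma\|_2$; moreover $\nabla F(\theta^*) = 0$ so $\theta^*$ is a minimizer. Unbiasedness $\bE[g(\theta;\vx,y)] = \nabla F(\theta)$ follows from iterating the conditional expectation over $y \mid \vx$, using $\bE[y\mid\vx] = \phi(\dotp{\vx}{\theta^*})$.

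The key calculation is to bound $\Sigma(\theta) = \Cov[g(\theta;\vx,y)]$. I would decompose the centered gradient as $\va_{\vx}(\theta) + \vc_{\vx,y}(\theta)$ where $\va_{\vx}(\theta) = \phi(\dotp{\vx}{\theta})\vx - \bE[\phi(\dotp{\vx}{\theta})\vx]$ captures the randomness in $\vx$ and $\vc_{\vx,y}(\theta) = \bE[\phi(\dotp{\vx}{\theta^*})\vx] - y\vx$ captures the residual Bernoulli noise. The tower property gives $\bE[\vc_{\vx,y}(\theta) \mid \vx] = 0$, so the cross terms in $\Sigma(\theta)$ vanish. Each of the two covariance pieces is bounded above by $\Sigma$: for $\va_{\vx}(\theta)$ by dropping the outer product correction and using $\phi^2 \leq 1$; for $\vc_{\vx,y}(\theta)$ by using $\bE[y^2\mid\vx] = \phi(\dotp{\vx}{\theta^*}) \leq 1$. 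Thus $\Sigma(\theta) \preceq 2\Sigma$, uniformly over $\theta \in \cC$, so assumption~\ref{as:second_moment} holds with covariance bound $2\Sigma$.

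With these verifications in hand, the remainder is book-keeping: invoke Theorem~\ref{thm:SGDcl-smooth-cvx-full} with $L = \|\Sigma\|_2$ and covariance proxy $2\Sigma$ (absorbing the constant $2$ into $\lesssim$), and read off the four terms of the corollary by substituting $L = \|\Sigma\|_2$ into $LD_1^2/N$, $D_1\sqrt{(\Tr(\Sigma)+\sqrt{\|\Sigma\|_2}(\sqrt{\Tr(\Sigma)}+LD_1)\ln(\nicefrac{\ln N}{\delta}))/N}$, and the two lower-order $o_N$ terms. No obstacle of substance arises: the interesting concentration work is already packaged inside Theorem~\ref{thm:SGDcl-smooth-cvx-full}, and the only step requiring care is the covariance decomposition above, where one must remember to use the tower property before applying $\phi,\phi^2 \leq 1$ so that the heavy-tailed covariates contribute only through a single $\vx\vx^T$ factor.
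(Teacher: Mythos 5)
Your proposal is correct and follows essentially the same route as the paper's proof: compute $\nabla^2 F(\theta) \preceq \Sigma$ to get convexity and $L$-smoothness with $L = \|\Sigma\|_2$, decompose the gradient noise into the covariate part $\va_{\vx}(\theta)$ and the conditionally-centered Bernoulli part $\vc_{\vx,y}(\theta)$, bound each covariance piece by $\Sigma$ (so $\Sigma(\theta) \preceq 2\Sigma$), and then invoke Theorem~\ref{thm:SGDcl-smooth-cvx-full} with $L = \|\Sigma\|_2$. No gaps; the only cosmetic difference is that you make the absorption of the factor $2$ into $\lesssim$ explicit, which the paper leaves implicit.
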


\subsection{Proof of Corollary \ref{cor:lad-regression}}
\label{prf:lad-regression-cor}

Recall from Section \ref{sec:lad-regression-application} that $\Xi = \bR^d \times \bR$ and given a target parameter $\theta^* \in \cC$,  $P$ defines the following linear model:
\begin{align*}
    \vx \sim Q, \ \bE[\vx] = 0, \ \bE[\vx \vx^T] \preceq \Sigma; \qquad y = \dotp{\vx}{\theta^*} + \epsilon, \ \mathsf{Median}(\epsilon | \vx) = 0
\end{align*}
We allow both the covariate $\vx$ and target $y$ to be heavy tailed, assuming only bounded second moments for $\vx$. We do not assume any moment bounds on $\epsilon | \vx$. The Least Absolute Deviation (LAD) Regression problem involves estimating $\theta$ by solving \bref{eqn:SCO} with the following sample loss 
\begin{align*}
    f(\theta;\vx, y) = |\dotp{\vx}{\theta} - y|
\end{align*}

The associated population risk and one possible realization of a stochastic subgradient oracle is given by:
\begin{align*}
F(\theta) &= \bE\left[|\dotp{\theta - \theta^*}{\vx} - \epsilon|\right] \\
g(\theta;\vx,\vy) &= \sgn(\dotp{\theta}{\vx} - \vy)\vx
\end{align*}
where $\sgn(t) = \tfrac{t}{\|t\|}$ for $t \neq 0$ and $\sgn(0) = 0$.
We note that for every $(\vx, \vy) \in \bR^d \times \bR$, $f(\theta;\vx, y)$ is a convex function in $\theta$, and thus, the population risk $F$ is a convex function, whose subgradient is given by:
\begin{align*}
    \partial F(\theta) &= \bE\left[\sgn\left(\dotp{\theta - \theta^*}{\vx} - \epsilon\right)\vx\right]
\end{align*}
We now show that $F$ is a Lipschitz function by bounding $\partial F(\theta)$ as follows:
\begin{align*}
    \| \partial F(\theta) \| &= \|\bE\left[\sgn\left(\dotp{\theta - \theta^*}{\vx} - \epsilon\right)\vx\right] \| \\
    &\leq \bE\left[|\sgn\left(\dotp{\theta - \theta^*}{\vx} - \epsilon\right)|\cdot \|\vx\|\right] \\
    &\leq \sqrt{\bE\left[\|\vx\|^2\right]} \\
    &\leq \sqrt{\Tr(\Sigma)}
\end{align*}
where the second step follows from Jensen's inequality, the third step uses the fact that $|\sgn(t)| \leq 1$ and applies the Cauchy Schwarz inequality. Hence, $F$ is $G$-Lipschitz with $G = \sqrt{\Tr(\Sigma)}$. We now show that $\partial F(\theta^*) = 0$ which would imply that $\theta^*$ is a minimizer of $F$ (as $F$ is convex)
\begin{align*}
    \nabla F(\theta^*) = \bE\left[\sgn(\epsilon)\vx\right] = \bE\left[\vx \cdot \bE\left[\sgn(\epsilon) | \vx\right]\right] = 0
\end{align*}
where we use the fact that $\bE[\sgn(\epsilon) | \vx] = 0$, because $\epsilon | \vx$ is a continuous random variable with zero median.

For the stochastic gradient oracle described above, the associated stochastic gradient noise $\vn(\theta;\vx, y)$ and its covariance $\Sigma(\theta)$ are given as follows:
\begin{align*}
    \vn(\theta;\vx, y) &= \sgn(\dotp{\theta - \theta^*}{\vx} - \epsilon)\vx  - \bE\left[\sgn(\dotp{\theta - \theta^*}{\vx} - \epsilon)\vx \right] \\
    \Sigma(\theta) &= \bE\left[\sgn(\dotp{\theta - \theta^*}{\vx} - \epsilon)^2\vx \vx^T\right] - \bE\left[\sgn(\dotp{\theta - \theta^*}{\vx} - \epsilon)\vx \right]\bE\left[\sgn(\dotp{\theta - \theta^*}{\vx} - \epsilon)\vx \right]^T \\
    &\preceq \bE\left[\sgn(\dotp{\theta - \theta^*}{\vx} - \epsilon)^2\vx \vx^T\right] \\
    &\preceq \bE\left[\vx \vx^T\right] \preceq \Sigma
\end{align*}
Hence, the stochastic gradient oracle satisfies the \ref{as:second_moment} assumption. Thus, the following result, which is a formal version of Corollary \ref{cor:lad-regression}, is implied by Theorem \ref{thm:SGDcl-lip-cvx-full}
\begin{corollary}[Heavy Tailed LAD Regression]
\label{cor:lad-regression-full}    
\begin{align*}
    F(\hat{\theta}_N) - F(\theta^*) &\lesssim  D_1\sqrt{\frac{\Tr(\Sigma) + \sqrt{\|\Sigma\|_2 \Tr(\Sigma)}\ln(\nicefrac{\ln(N)}{\delta})}{N}} + \frac{D_1 \Tr(\Sigma) \ln(\nicefrac{\ln(N)}{\delta})}{N\sqrt{\|\Sigma\|_2}} + \frac{D_1 \Tr(\Sigma)^{\nicefrac{5}{4}} \ln(\nicefrac{\ln(N)}{\delta})^{\nicefrac{3}{2}}}{N^{\nicefrac{3}{2}} \|\Sigma\|^{\nicefrac{3}{4}}} 
\end{align*}
\end{corollary}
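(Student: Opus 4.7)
The plan is to invoke Theorem~\ref{thm:SGDcl-lip-cvx-full} directly, since the paragraph preceding the corollary already establishes every hypothesis of that theorem for the LAD regression objective. Concretely, the setup shows: (i) $F$ is convex, as an expectation of convex sample losses $f(\theta;\vx,y) = |\langle \vx, \theta\rangle - y|$; (ii) $\|\partial F(\theta)\| \leq \sqrt{\Tr(\Sigma)}$ by Jensen and Cauchy-Schwarz together with $|\sgn(\cdot)| \leq 1$, so $F$ is $G$-Lipschitz with $G = \sqrt{\Tr(\Sigma)}$; (iii) $\theta^*$ is a minimizer of $F$, via $\partial F(\theta^*) = \bE[\vx \cdot \bE[\sgn(\epsilon)\mid \vx]] = 0$, which exploits $\mathsf{Median}(\epsilon\mid\vx) = 0$; and (iv) the stochastic subgradient noise satisfies $\Sigma(\theta) \preceq \bE[\sgn(\cdot)^2 \vx\vx^T] \preceq \Sigma$, verifying Assumption~\bref{as:second_moment}.

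With these conditions in hand, I would apply Theorem~\ref{thm:SGDcl-lip-cvx-full} with the prescribed step-size and clipping level and with $G = \sqrt{\Tr(\Sigma)}$. The resulting bound has four terms, and the simplification rests on two elementary identities: $\sqrt{\Tr(\Sigma)} + G = 2\sqrt{\Tr(\Sigma)}$ and $\Tr(\Sigma) + G^2 = 2\Tr(\Sigma)$. Under these substitutions, the leading statistical term becomes $D_1\sqrt{(\Tr(\Sigma) + \sqrt{\|\Sigma\|_2\Tr(\Sigma)}\ln(\ln(N)/\delta))/N}$; the initialization term $D_1 G/\sqrt{N} = D_1\sqrt{\Tr(\Sigma)/N}$ is absorbed into it; the third term collapses to $D_1\Tr(\Sigma)\ln(\ln(N)/\delta)/(N\sqrt{\|\Sigma\|_2})$; and the fourth term becomes $D_1\Tr(\Sigma)^{5/4}\ln(\ln(N)/\delta)^{3/2}/(N^{3/2}\|\Sigma\|^{3/4})$, matching the claim verbatim up to the hidden universal constant.

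There is no real obstacle: the heavy analytical machinery, namely the bias control via Lemma~\ref{lem:clipped-moment-control}, the quadratic-variation control via the iterative PAC-Bayes refinement of Corollary~\ref{cor:pac-bayes-quad-variation}, the iterate boundedness of Lemma~\ref{lem:sgdcl-lip-cvx-itr-bound}, and the Freedman-type control on $\sum_s \langle \tilvv_s,\vd_s\rangle$, is fully encapsulated inside the proof of Theorem~\ref{thm:SGDcl-lip-cvx-full}. The only conceptually delicate point worth flagging is that the model assumes a zero-\emph{median} rather than zero-\emph{mean} noise, which is exactly what makes $\theta^*$ a minimizer of the $\ell_1$ population risk and underlies the oracle's unbiasedness: we make no moment assumptions on $\epsilon\mid\vx$ at all, and this is precisely the source of robustness to arbitrarily heavy-tailed response noise. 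Everything else is routine bookkeeping of the four algebraic simplifications above.
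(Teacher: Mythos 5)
Your proposal is correct and follows the paper's own proof essentially verbatim: verify convexity, $G$-Lipschitzness with $G = \sqrt{\Tr(\Sigma)}$, that $\theta^*$ minimizes $F$ via the zero-median condition, and $\Sigma(\theta) \preceq \Sigma$, then invoke Theorem~\ref{thm:SGDcl-lip-cvx-full} and simplify using $\sqrt{\Tr(\Sigma)} + G = 2\sqrt{\Tr(\Sigma)}$ and $\Tr(\Sigma) + G^2 = 2\Tr(\Sigma)$, absorbing the $D_1 G/\sqrt{N}$ term into the leading one. The only slight imprecision is attributing the oracle's unbiasedness to the zero-median assumption — unbiasedness holds by construction since $g$ is a subgradient of the sample loss, while the median condition is what yields $\partial F(\theta^*) = 0$ — but your item (iii) already states this correctly, so there is no gap.
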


\section*{NeurIPS Paper Checklist}

\begin{enumerate}

\item {\bf Claims}
    \item[] Question: Do the main claims made in the abstract and introduction accurately reflect the paper's contributions and scope?
    \item[] Answer: \answerYes{} 
    \item[] Justification: We provide complete mathematical proofs of the claims.
    \item[] Guidelines:
    \begin{itemize}
        \item The answer NA means that the abstract and introduction do not include the claims made in the paper.
        \item The abstract and/or introduction should clearly state the claims made, including the contributions made in the paper and important assumptions and limitations. A No or NA answer to this question will not be perceived well by the reviewers. 
        \item The claims made should match theoretical and experimental results, and reflect how much the results can be expected to generalize to other settings. 
        \item It is fine to include aspirational goals as motivation as long as it is clear that these goals are not attained by the paper. 
    \end{itemize}

\item {\bf Limitations}
    \item[] Question: Does the paper discuss the limitations of the work performed by the authors?
    \item[] Answer: \answerYes 
    \item[] Justification: 
    \item[] Guidelines:
    \begin{itemize}
        \item The answer NA means that the paper has no limitation while the answer No means that the paper has limitations, but those are not discussed in the paper. 
        \item The authors are encouraged to create a separate "Limitations" section in their paper.
        \item The paper should point out any strong assumptions and how robust the results are to violations of these assumptions (e.g., independence assumptions, noiseless settings, model well-specification, asymptotic approximations only holding locally). The authors should reflect on how these assumptions might be violated in practice and what the implications would be.
        \item The authors should reflect on the scope of the claims made, e.g., if the approach was only tested on a few datasets or with a few runs. In general, empirical results often depend on implicit assumptions, which should be articulated.
        \item The authors should reflect on the factors that influence the performance of the approach. For example, a facial recognition algorithm may perform poorly when image resolution is low or images are taken in low lighting. Or a speech-to-text system might not be used reliably to provide closed captions for online lectures because it fails to handle technical jargon.
        \item The authors should discuss the computational efficiency of the proposed algorithms and how they scale with dataset size.
        \item If applicable, the authors should discuss possible limitations of their approach to address problems of privacy and fairness.
        \item While the authors might fear that complete honesty about limitations might be used by reviewers as grounds for rejection, a worse outcome might be that reviewers discover limitations that aren't acknowledged in the paper. The authors should use their best judgment and recognize that individual actions in favor of transparency play an important role in developing norms that preserve the integrity of the community. Reviewers will be specifically instructed to not penalize honesty concerning limitations.
    \end{itemize}

\item {\bf Theory Assumptions and Proofs}
    \item[] Question: For each theoretical result, does the paper provide the full set of assumptions and a complete (and correct) proof?
    \item[] Answer: \answerYes{} 
    \item[] Justification: 
    \item[] Guidelines:
    \begin{itemize}
        \item The answer NA means that the paper does not include theoretical results. 
        \item All the theorems, formulas, and proofs in the paper should be numbered and cross-referenced.
        \item All assumptions should be clearly stated or referenced in the statement of any theorems.
        \item The proofs can either appear in the main paper or the supplemental material, but if they appear in the supplemental material, the authors are encouraged to provide a short proof sketch to provide intuition. 
        \item Inversely, any informal proof provided in the core of the paper should be complemented by formal proofs provided in appendix or supplemental material.
        \item Theorems and Lemmas that the proof relies upon should be properly referenced. 
    \end{itemize}

    \item {\bf Experimental Result Reproducibility}
    \item[] Question: Does the paper fully disclose all the information needed to reproduce the main experimental results of the paper to the extent that it affects the main claims and/or conclusions of the paper (regardless of whether the code and data are provided or not)?
    \item[] Answer: \answerNA{} 
    \item[] Justification: The paper is purely theoretical.
    \item[] Guidelines:
    \begin{itemize}
        \item The answer NA means that the paper does not include experiments.
        \item If the paper includes experiments, a No answer to this question will not be perceived well by the reviewers: Making the paper reproducible is important, regardless of whether the code and data are provided or not.
        \item If the contribution is a dataset and/or model, the authors should describe the steps taken to make their results reproducible or verifiable. 
        \item Depending on the contribution, reproducibility can be accomplished in various ways. For example, if the contribution is a novel architecture, describing the architecture fully might suffice, or if the contribution is a specific model and empirical evaluation, it may be necessary to either make it possible for others to replicate the model with the same dataset, or provide access to the model. In general. releasing code and data is often one good way to accomplish this, but reproducibility can also be provided via detailed instructions for how to replicate the results, access to a hosted model (e.g., in the case of a large language model), releasing of a model checkpoint, or other means that are appropriate to the research performed.
        \item While NeurIPS does not require releasing code, the conference does require all submissions to provide some reasonable avenue for reproducibility, which may depend on the nature of the contribution. For example
        \begin{enumerate}
            \item If the contribution is primarily a new algorithm, the paper should make it clear how to reproduce that algorithm.
            \item If the contribution is primarily a new model architecture, the paper should describe the architecture clearly and fully.
            \item If the contribution is a new model (e.g., a large language model), then there should either be a way to access this model for reproducing the results or a way to reproduce the model (e.g., with an open-source dataset or instructions for how to construct the dataset).
            \item We recognize that reproducibility may be tricky in some cases, in which case authors are welcome to describe the particular way they provide for reproducibility. In the case of closed-source models, it may be that access to the model is limited in some way (e.g., to registered users), but it should be possible for other researchers to have some path to reproducing or verifying the results.
        \end{enumerate}
    \end{itemize}

\item {\bf Open access to data and code}
    \item[] Question: Does the paper provide open access to the data and code, with sufficient instructions to faithfully reproduce the main experimental results, as described in supplemental material?
    \item[] Answer: \answerNA{} 
    \item[] Justification: Paper do note include experiments requiring code.
    \item[] Guidelines:
    \begin{itemize}
        \item The answer NA means that paper does not include experiments requiring code.
        \item Please see the NeurIPS code and data submission guidelines (\url{https://nips.cc/public/guides/CodeSubmissionPolicy}) for more details.
        \item While we encourage the release of code and data, we understand that this might not be possible, so “No” is an acceptable answer. Papers cannot be rejected simply for not including code, unless this is central to the contribution (e.g., for a new open-source benchmark).
        \item The instructions should contain the exact command and environment needed to run to reproduce the results. See the NeurIPS code and data submission guidelines (\url{https://nips.cc/public/guides/CodeSubmissionPolicy}) for more details.
        \item The authors should provide instructions on data access and preparation, including how to access the raw data, preprocessed data, intermediate data, and generated data, etc.
        \item The authors should provide scripts to reproduce all experimental results for the new proposed method and baselines. If only a subset of experiments are reproducible, they should state which ones are omitted from the script and why.
        \item At submission time, to preserve anonymity, the authors should release anonymized versions (if applicable).
        \item Providing as much information as possible in supplemental material (appended to the paper) is recommended, but including URLs to data and code is permitted.
    \end{itemize}

\item {\bf Experimental Setting/Details}
    \item[] Question: Does the paper specify all the training and test details (e.g., data splits, hyperparameters, how they were chosen, type of optimizer, etc.) necessary to understand the results?
    \item[] Answer: \answerNA{} 
    \item[] Justification: 
    \item[] Guidelines:
    \begin{itemize}
        \item The answer NA means that the paper does not include experiments.
        \item The experimental setting should be presented in the core of the paper to a level of detail that is necessary to appreciate the results and make sense of them.
        \item The full details can be provided either with the code, in appendix, or as supplemental material.
    \end{itemize}

\item {\bf Experiment Statistical Significance}
    \item[] Question: Does the paper report error bars suitably and correctly defined or other appropriate information about the statistical significance of the experiments?
    \item[] Answer: \answerNA{} 
    \item[] Justification: 
    \item[] Guidelines:
    \begin{itemize}
        \item The answer NA means that the paper does not include experiments.
        \item The authors should answer "Yes" if the results are accompanied by error bars, confidence intervals, or statistical significance tests, at least for the experiments that support the main claims of the paper.
        \item The factors of variability that the error bars are capturing should be clearly stated (for example, train/test split, initialization, random drawing of some parameter, or overall run with given experimental conditions).
        \item The method for calculating the error bars should be explained (closed form formula, call to a library function, bootstrap, etc.)
        \item The assumptions made should be given (e.g., Normally distributed errors).
        \item It should be clear whether the error bar is the standard deviation or the standard error of the mean.
        \item It is OK to report 1-sigma error bars, but one should state it. The authors should preferably report a 2-sigma error bar than state that they have a 96\% CI, if the hypothesis of Normality of errors is not verified.
        \item For asymmetric distributions, the authors should be careful not to show in tables or figures symmetric error bars that would yield results that are out of range (e.g. negative error rates).
        \item If error bars are reported in tables or plots, The authors should explain in the text how they were calculated and reference the corresponding figures or tables in the text.
    \end{itemize}

\item {\bf Experiments Compute Resources}
    \item[] Question: For each experiment, does the paper provide sufficient information on the computer resources (type of compute workers, memory, time of execution) needed to reproduce the experiments?
    \item[] Answer: \answerNA{} 
    \item[] Justification: 
    \item[] Guidelines:
    \begin{itemize}
        \item The answer NA means that the paper does not include experiments.
        \item The paper should indicate the type of compute workers CPU or GPU, internal cluster, or cloud provider, including relevant memory and storage.
        \item The paper should provide the amount of compute required for each of the individual experimental runs as well as estimate the total compute. 
        \item The paper should disclose whether the full research project required more compute than the experiments reported in the paper (e.g., preliminary or failed experiments that didn't make it into the paper). 
    \end{itemize}
    
\item {\bf Code Of Ethics}
    \item[] Question: Does the research conducted in the paper conform, in every respect, with the NeurIPS Code of Ethics \url{https://neurips.cc/public/EthicsGuidelines}?
    \item[] Answer: \answerYes{} 
    \item[] Justification: 
    \item[] Guidelines:
    \begin{itemize}
        \item The answer NA means that the authors have not reviewed the NeurIPS Code of Ethics.
        \item If the authors answer No, they should explain the special circumstances that require a deviation from the Code of Ethics.
        \item The authors should make sure to preserve anonymity (e.g., if there is a special consideration due to laws or regulations in their jurisdiction).
    \end{itemize}

\item {\bf Broader Impacts}
    \item[] Question: Does the paper discuss both potential positive societal impacts and negative societal impacts of the work performed?
    \item[] Answer: \answerNA{} 
    \item[] Justification: The paper is purely theoretical and we do foresee any societal impact of this work. 
    \item[] Guidelines:
    \begin{itemize}
        \item The answer NA means that there is no societal impact of the work performed.
        \item If the authors answer NA or No, they should explain why their work has no societal impact or why the paper does not address societal impact.
        \item Examples of negative societal impacts include potential malicious or unintended uses (e.g., disinformation, generating fake profiles, surveillance), fairness considerations (e.g., deployment of technologies that could make decisions that unfairly impact specific groups), privacy considerations, and security considerations.
        \item The conference expects that many papers will be foundational research and not tied to particular applications, let alone deployments. However, if there is a direct path to any negative applications, the authors should point it out. For example, it is legitimate to point out that an improvement in the quality of generative models could be used to generate deepfakes for disinformation. On the other hand, it is not needed to point out that a generic algorithm for optimizing neural networks could enable people to train models that generate Deepfakes faster.
        \item The authors should consider possible harms that could arise when the technology is being used as intended and functioning correctly, harms that could arise when the technology is being used as intended but gives incorrect results, and harms following from (intentional or unintentional) misuse of the technology.
        \item If there are negative societal impacts, the authors could also discuss possible mitigation strategies (e.g., gated release of models, providing defenses in addition to attacks, mechanisms for monitoring misuse, mechanisms to monitor how a system learns from feedback over time, improving the efficiency and accessibility of ML).
    \end{itemize}
    
\item {\bf Safeguards}
    \item[] Question: Does the paper describe safeguards that have been put in place for responsible release of data or models that have a high risk for misuse (e.g., pretrained language models, image generators, or scraped datasets)?
    \item[] Answer: \answerNA{} 
    \item[] Justification: purely theoretical work.
    \item[] Guidelines:
    \begin{itemize}
        \item The answer NA means that the paper poses no such risks.
        \item Released models that have a high risk for misuse or dual-use should be released with necessary safeguards to allow for controlled use of the model, for example by requiring that users adhere to usage guidelines or restrictions to access the model or implementing safety filters. 
        \item Datasets that have been scraped from the Internet could pose safety risks. The authors should describe how they avoided releasing unsafe images.
        \item We recognize that providing effective safeguards is challenging, and many papers do not require this, but we encourage authors to take this into account and make a best faith effort.
    \end{itemize}

\item {\bf Licenses for existing assets}
    \item[] Question: Are the creators or original owners of assets (e.g., code, data, models), used in the paper, properly credited and are the license and terms of use explicitly mentioned and properly respected?
    \item[] Answer: \answerNA{} 
    \item[] Justification: 
    \item[] Guidelines:
    \begin{itemize}
        \item The answer NA means that the paper does not use existing assets.
        \item The authors should cite the original paper that produced the code package or dataset.
        \item The authors should state which version of the asset is used and, if possible, include a URL.
        \item The name of the license (e.g., CC-BY 4.0) should be included for each asset.
        \item For scraped data from a particular source (e.g., website), the copyright and terms of service of that source should be provided.
        \item If assets are released, the license, copyright information, and terms of use in the package should be provided. For popular datasets, \url{paperswithcode.com/datasets} has curated licenses for some datasets. Their licensing guide can help determine the license of a dataset.
        \item For existing datasets that are re-packaged, both the original license and the license of the derived asset (if it has changed) should be provided.
        \item If this information is not available online, the authors are encouraged to reach out to the asset's creators.
    \end{itemize}

\item {\bf New Assets}
    \item[] Question: Are new assets introduced in the paper well documented and is the documentation provided alongside the assets?
    \item[] Answer: \answerNA{} 
    \item[] Justification: 
    \item[] Guidelines:
    \begin{itemize}
        \item The answer NA means that the paper does not release new assets.
        \item Researchers should communicate the details of the dataset/code/model as part of their submissions via structured templates. This includes details about training, license, limitations, etc. 
        \item The paper should discuss whether and how consent was obtained from people whose asset is used.
        \item At submission time, remember to anonymize your assets (if applicable). You can either create an anonymized URL or include an anonymized zip file.
    \end{itemize}

\item {\bf Crowdsourcing and Research with Human Subjects}
    \item[] Question: For crowdsourcing experiments and research with human subjects, does the paper include the full text of instructions given to participants and screenshots, if applicable, as well as details about compensation (if any)? 
    \item[] Answer: \answerNA{} 
    \item[] Justification: 
    \item[] Guidelines:
    \begin{itemize}
        \item The answer NA means that the paper does not involve crowdsourcing nor research with human subjects.
        \item Including this information in the supplemental material is fine, but if the main contribution of the paper involves human subjects, then as much detail as possible should be included in the main paper. 
        \item According to the NeurIPS Code of Ethics, workers involved in data collection, curation, or other labor should be paid at least the minimum wage in the country of the data collector. 
    \end{itemize}

\item {\bf Institutional Review Board (IRB) Approvals or Equivalent for Research with Human Subjects}
    \item[] Question: Does the paper describe potential risks incurred by study participants, whether such risks were disclosed to the subjects, and whether Institutional Review Board (IRB) approvals (or an equivalent approval/review based on the requirements of your country or institution) were obtained?
    \item[] Answer: \answerNA{} 
    \item[] Justification: 
    \item[] Guidelines:
    \begin{itemize}
        \item The answer NA means that the paper does not involve crowdsourcing nor research with human subjects.
        \item Depending on the country in which research is conducted, IRB approval (or equivalent) may be required for any human subjects research. If you obtained IRB approval, you should clearly state this in the paper. 
        \item We recognize that the procedures for this may vary significantly between institutions and locations, and we expect authors to adhere to the NeurIPS Code of Ethics and the guidelines for their institution. 
        \item For initial submissions, do not include any information that would break anonymity (if applicable), such as the institution conducting the review.
    \end{itemize}

\end{enumerate}

\end{document}